\definecolor{Ourcolor}{HTML}{007580}
\definecolor{Regkernelcolor}{HTML}{CE4A7E}
\definecolor{MIcolor}{HTML}{755139}
\definecolor{LAFTRcolor}{HTML}{4A274F}
\definecolor{MMDcolor}{HTML}{921416}
\definecolor{sIPMcolor}{HTML}{F9D142}
\definecolor{Trianglecolor}{HTML}{3B1877}
\definecolor{Epanecolor}{HTML}{DA5A2A}
\def\eqref#1{equation~\ref{#1}}
\def\1{\bm{1}}
\DeclareMathAlphabet{\mathsfit}{\encodingdefault}{\sfdefault}{m}{sl}
\SetMathAlphabet{\mathsfit}{bold}{\encodingdefault}{\sfdefault}{bx}{n}
\DeclareMathOperator*{\argmin}{arg\,min}
\theoremstyle{plain}
\newtheorem{theorem}{Theorem}
\newtheorem{proposition}[theorem]{Proposition}
\newtheorem{lemma}[theorem]{Lemma}
\newtheorem{example}{Example}
\theoremstyle{definition}
\newtheorem{assumption}{Assumption}
\newtheorem{remark}{Remark}
\def\cV{{\cal V}}
\def\cF{{\cal F}}
\def\cX{{\cal X}}
\def\cZ{{\cal Z}}
\def\hat{\widehat}
\newcommand{\mbP}{\mathbb{P}}
\newcommand{\bz}{{\bf z}}
\newcommand{\bc}{\begin{center}}
\newcommand{\ec}{\end{center}}
\newcommand{\be}{\begin{equation}}
\newcommand{\ee}{\end{equation}}
\newcommand{\ba}{\begin{array}}
\newcommand{\ea}{\end{array}}
\newcommand{\bean}{\begin{eqnarray*}}
\newcommand{\eean}{\end{eqnarray*}}
\newcommand{\bea}{\begin{eqnarray}}
\newcommand{\eea}{\end{eqnarray}}
\newcommand{\ben}{\begin{enumerate}}
\newcommand{\een}{\end{enumerate}}
\newcommand{\bed}{\begin{itemize}}
\newcommand{\eed}{\end{itemize}}
\begin{document}

\title{Fair Representation Learning for Continuous Sensitive Attributes using Expectation of Integral Probability Metrics}

\author{Insung~Kong,
Kunwoong~Kim,
Yongdai~Kim

\thanks{I. Kong is with the Department of Applied Mathematics, University of Twente, Enschede, Netherlands and the Department of Statistics, Seoul National University, Seoul, South Korea, (e-mail: ggong369@snu.ac.kr)}
\thanks{K. Kim is with the Department of Statistics, Seoul National University, Seoul, South Korea, (e-mail: kwkim.online@gmail.com)}
\thanks{Y. Kim is with the Department of Statistics, Seoul National University, Seoul, South Korea, (e-mail: ydkim0903@gmail.com)}

\thanks{(Insung Kong and Kunwoong Kim are co-first authors.) (Corresponding author: Yongdai Kim.)
}}

\maketitle

\begin{abstract}
AI fairness, also known as algorithmic fairness, aims to ensure that algorithms operate without bias or discrimination towards any individual or group.
Among various AI algorithms, the Fair Representation Learning (FRL) approach has gained significant interest in recent years. 
However, existing FRL algorithms have a limitation: they are primarily designed for categorical sensitive attributes and thus cannot be applied to continuous sensitive attributes, such as age or income.
In this paper, we propose an FRL algorithm for continuous sensitive attributes.
First, we introduce a measure called the Expectation of Integral Probability Metrics (EIPM) to assess the fairness level of representation space for continuous sensitive attributes.
We demonstrate that if the distribution of the representation has a low EIPM value, then any prediction head constructed on the top of the representation become fair, regardless of the selection of the prediction head.
Furthermore, EIPM possesses a distinguished advantage in that it can be accurately estimated using our proposed estimator with finite samples.
Based on these properties, we propose a new FRL algorithm called Fair Representation using EIPM with MMD (FREM). 
Experimental evidences show that FREM outperforms other baseline methods.
\end{abstract}

\begin{IEEEkeywords}
Fairness, Representation Learning, Integral Probability Metric
\end{IEEEkeywords}

\section{Introduction}
\label{intro}

\IEEEPARstart{A}{I} fairness, which is often referred to as algorithmic fairness in AI, is a widespread research area for ensuring social fairness in AI decision-making.
The basic philosophy of AI fairness is to fairly treat groups pre-defined by a given sensitive attribute (e.g., man vs. woman), which is called \textit{group fairness} \cite{calders2009building, feldman2015certifying, zafar2017fairness, donini2018empirical, pmlr-v80-agarwal18a}.
A primary criterion for group fairness is Demographic Parity (DP), which enforces that an AI model should not discriminate against different demographic groups in terms of predictions or decision-makings.

Among various research efforts for group fairness, Fair Representation Learning (FRL) has received significant attention recently
\cite{pmlr-v28-zemel13, louizos2015variational, 9aa5ba8a091248d597ff7cf0173da151_2016, calmon2017optimized, Madras2018LearningAF, Quadrianto_2019_CVPR, feng2019learning, pmlr-v89-song19a, sarhan2020fairness, ruoss2020learning, gupta2021controllable, pmlr-v130-gitiaux21a, zeng2021fair, kim2022learning, shui2022fair, oh2022learning, guo2022learning, pmlr-v202-jovanovic23a, deka2023mmd, shen2023fair}.
Fair representation, referred to as a feature vector whose distribution is aligned across protected groups, is obtained by feeding the input data into an encoder (i.e., feature extractor).
Once the fair representation is obtained, it is expected that any prediction head constructed on the top of it, where the fair representation serves as an input, will achieve a certain level of fairness.
See Fig. \ref{fig:illustration1} for the general illustration of FRL for binary sensitive attributes.

\begin{figure}[t]
    \centering
    \includegraphics[width=0.33\textwidth]{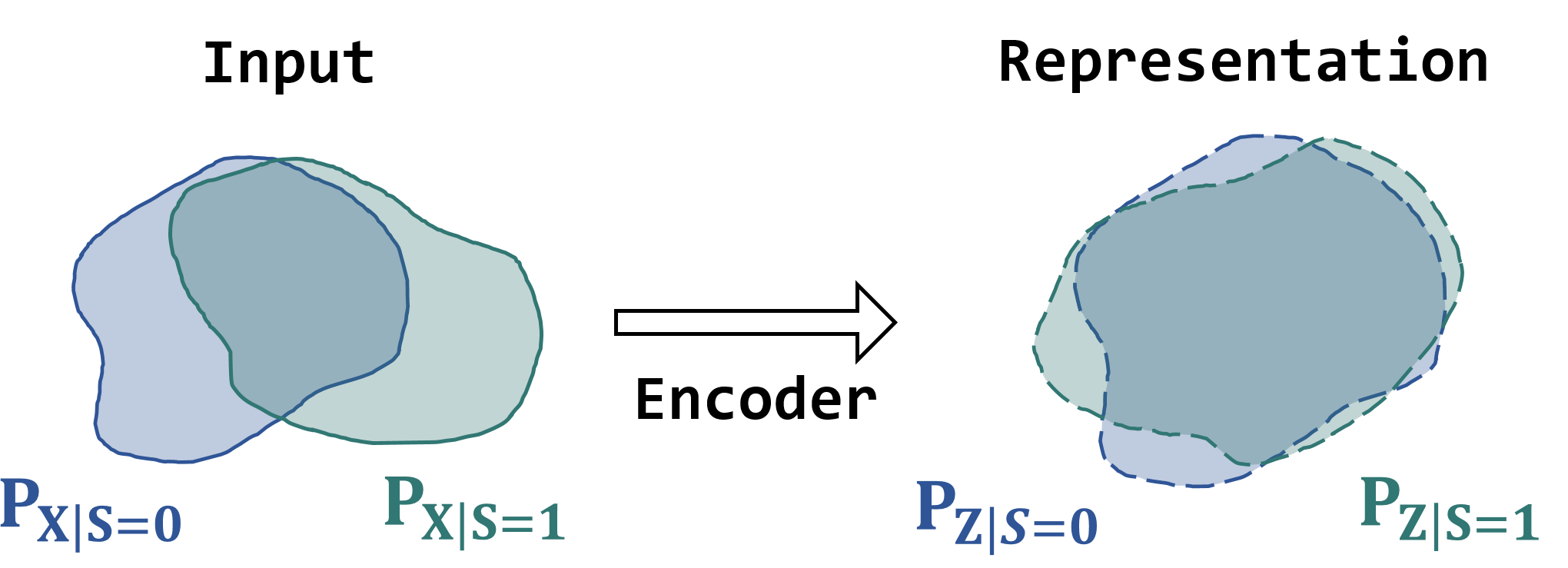}
    \caption{\textbf{A framework diagram of FRL for binary sensitive attributes.}}
    \label{fig:illustration1}
\end{figure}

\begin{figure}[t]
    \centering
    \includegraphics[width=0.33\textwidth]{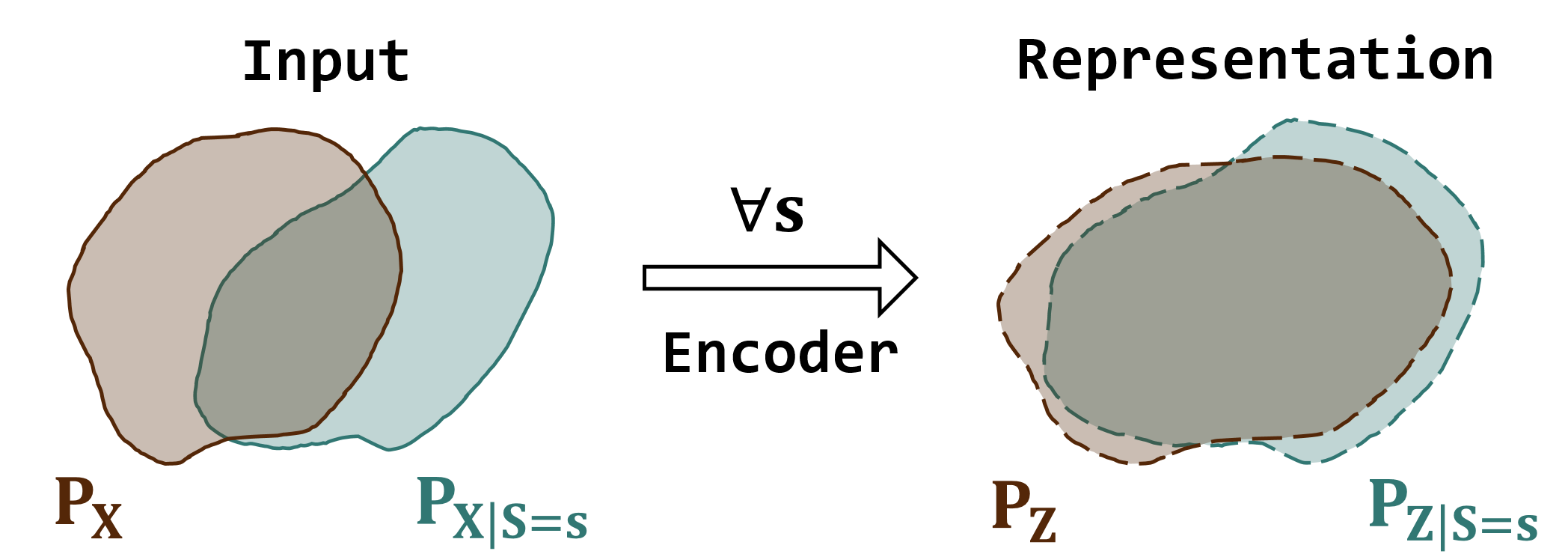}
    \caption{\textbf{A framework diagram of FRL for continuous sensitive attributes.}}
    \label{fig:illustration2}
    \vskip -0.2in
\end{figure}

In many cases, continuous sensitive attributes such as age, income and weight are frequently observed.
However, most fair algorithms have primarily focused on binary or categorical sensitive attributes, such as gender or race.
These algorithms could be applied to continuous sensitive attributes by transferring continuous sensitive attributes to categorical one by binning.
However, the optimal selection of the bins would not be easy and the performance of the algorithms could be significantly affected by this selection \cite{mary2019fairness}.
To address this critical issue, there have been endeavors to develop fair algorithms for continuous sensitive attributes  \cite{mary2019fairness, grari2021fairness, jiang2021generalized, pmlr-v202-giuliani23a}.

The aim of this paper is to develop an FRL algorithm for continuous sensitive attributes (Fig. \ref{fig:illustration2}).
We emphasize that existing FRL algorithms are limited to categorical sensitive attributes \cite{pmlr-v28-zemel13, louizos2015variational, 9aa5ba8a091248d597ff7cf0173da151_2016, calmon2017optimized, Madras2018LearningAF, Quadrianto_2019_CVPR, pmlr-v89-song19a, ruoss2020learning, gupta2021controllable,  pmlr-v130-gitiaux21a, zeng2021fair, kim2022learning,  shui2022fair, oh2022learning, guo2022learning, pmlr-v202-jovanovic23a, deka2023mmd, shen2023fair}, though these algorithms can be applied continuous sensitive attributes by binning. 
A key technical difficulty in learning a fair representation with continuous sensitive attributes
lies on estimating the conditional distribution of a representation vector given the sensitive attribute, which
is needed to measure the level of fairness of a given representation vector.
When the sensitive attribute is continuous, estimation of the conditional distribution 
cannot be done simply by using the empirical distribution for each value of sensitive attribute because 
at most one observation exists for each value in the training data.

To develop an FRL algorithm for continuous sensitive attributes,
we first introduce a new metric designed to quantify the disparity in the conditional and marginal distributions.
Specifically, we propose to use the expectation of Integral Probability Metric (IPM) between the conditional and marginal distributions of a given representation.
We refer to this metric as the \textbf{E}xpectation value of \textbf{IPM}s (EIPM).
EIPM has a desirable property that if the distribution of a representation has a low EIPM value, then the predictions of any head built over the representation become fair, regardless of the selection of the head.

To use EIPM in FRL, we need to estimate it based on training data.
For this purpose, we devise a weighted empirical distribution of the representation using the kernel smoothing technique to propose an EIPM estimator. 
We provide theoretical justifications including the asymptotic convergence rate of our proposed EIPM estimator. 

An important contribution of this paper is to develop a new technique to derive the convergence rate of the kernel smoothed EIPM estimator.
Note that the standard technique for theoretical study of kernel smoothed estimators
(e.g., Nadaraya-Watson estimator)
is to calculate the bias and variance of the corresponding estimator with respect to the sample size and bandwidth.
Since EIPM involves the {\it sup} operation in its definition which is nonlinear, the calculation of the bias and variance is not an easy task.
We develop a new and novel technique to derive the convergence rate of the kernel smoothed EIPM estimator.

Based on the EIPM and its estimation, we propose a new fair representation learning algorithm called \textbf{F}air \textbf{R}epresentation using \textbf{E}IP\textbf{M} (FREM).
Experimental results confirm that FREM outperforms various baseline methods that could be categorized into: (1) regularization methods that directly learn fair prediction models \cite{jiang2021generalized, mary2019fairness} and 
(2) variants of FRL methods for categorical sensitive attributes\cite{10.1145/3278721.3278779, Madras2018LearningAF, kim2022learning, deka2023mmd}.

Our contributions are categorized as follows.
\begin{itemize}
    \item We introduce a new fairness measure called EIPM, which has desirable properties to be used for FRL.
    \item We propose an estimator for EIPM having desirable statistical properties.
    \item Based on the EIPM, we develop a new Fair Representation Learning (FRL) algorithm for continuous sensitive attributes, called FREM.
    \item Experiments demonstrate that FREM outperforms existing state-of-the-art methods in terms of the fairness-prediction trade-off.
\end{itemize}

\section{Preliminary}
\label{not_pre}

\subsection{Notation}
Let $\mathbb{R}$ and $\mathbb{N}$ be the sets of real numbers and natural numbers, respectively.
We denote $\mathbb{R}^{+}_0 := \{ x \in \mathbb{R} : x \geq 0\}$ and $\mathbb{N}_0 := \{0\} \cup \mathbb{N}$.
Let $[N] := \{ 1, \ldots, N \}$ for $N \in \mathbb{N}.$
A capital letter denotes a random variable, and a vector is denoted by a bold letter.
Let  $\bm{X} \in \mathcal{X} \subset \mathbb{R}^{d}$ and $S \in \mathcal{S} \subset \mathbb{R}$ be the non-sensitive random input vector and sensitive random variable.
Let $Y \in \mathcal{Y}$ be the output variable, which can be a binary or numerical variable. 
Also let $\bm{Z} := h(\bm{X})$ be the representation of an input vector obtained by a given encoding function $h:\cX \to\cZ \subset \mathbb{R}^{m}$.  
For readability, we will interchangeably use $\bm{Z}$ and $h(\bm{X})$ unless there is any confusion.
We denote $\mbP_{\bm{X}}$, $\mbP_S$ and $\mbP_{\bm{Z}} (= \mbP_{h(\bm{X})})$ as the distribution of $\bm{X}$, $S$ and $\bm{Z}$, respectively.
Also, we denote $\mbP_{\bm{X},S}$ as the joint distribution of $(\bm{X},S)$, and $\mbP_{\bm{Z}|S=s} (= \mbP_{h(\bm{X})|S=s})$ as the conditional distribution of $\bm{Z}$ on $S=s$. 
We denote $f: \cZ \to \mathcal{Y}$ be a prediction head built on the representation space $\cZ$, and $g = f \circ h: \mathcal{X} \rightarrow \mathcal{Y}$ as a full prediction function.

For a distribution $\mbP$ and given $\bm{Z}_1, \dots, \bm{Z}_{n} \overset{i.i.d.}{\sim} \mbP$, the empirical distribution of $\mbP$ is defined by $\hat{\mbP} := \frac{1}{n} \sum_{i=1}^{n} \delta(\bm{Z}_i)$, where $\delta(\cdot)$ is the Dirac delta function.
For given measures $\mu^0$ and $\mu^1$, $\mu^0 \otimes \mu^1$ denotes the product measure of $\mu^0$ and $\mu^1$. Also, we write $\mu^0 << \mu^1$ if $\mu^0$ is dominated by $\mu^1$.
For a real valued function $v : \mathcal{Z} \to \mathbb{R}$, we denote $||v||_{\infty} := \sup_{\bm{z} \in \mathcal{Z}} |v(\bm{z})|$ as the infinite norm of $v$.
For $\epsilon>0$ and a set of functions $\mathcal{V}$, 
we denote $\mathcal{N}(\epsilon, \mathcal{V}, || \cdot ||_{\infty})$
as the smallest number of $\epsilon$-cover of $\mathcal{V}$ with respect to the infinite norm.

\subsection{Fair prediction models}

We say that a prediction model is fair when certain statistics regarding to the prediction (e.g., the proportion of being positive for classification and the mean prediction for regression) for each protected group are similar. 
To learn fair prediction models, we have to choose two things - fairness measure and learning algorithm.

\textbf{Fairness measures}
Let $\phi : \mathbb{R} \to \mathbb{R}$ be a measurable function.
For a given prediction model $g : \cX \to \mathcal{Y}$, the Demographic Parity (DP) for a binary sensitive attribute $S \in \{0,1\}$ is defined as
$$
\Delta \texttt{DP}_{\phi}(g) = 
\left\vert \mathbb{E}_{\bm{X}} \left( \phi \circ g(\bm{X}) | S = 1 \right) - \mathbb{E}_{\bm{X}} \left( \phi \circ g(\bm{X})  | S = 0 \right) \right\vert.
$$
Various fairness measures can be represented by choosing $\phi.$
For example of the binary classification, $\phi (w) = \mathbbm{1}(w \ge 0)$ corresponding to the original DP measure \cite{calders2009building, barocas2016big} and $\phi (w) = w$ leads to the mean DP \cite{Madras2018LearningAF, chuang2021fair}.

When the sensitive attribute is multinary, i.e., $S \in [C]$ with $C > 2$, the definition of DP is modified to 
the difference w.r.t. demographic parity (DDP) \cite{cho2020fair}, which is defined as
$$
\Delta \texttt{DDP}_{\phi}(g) = 
\sum_{s \in [C]} \left\vert \mathbb{E}_{\bm{X}} \left( \phi \circ g(\bm{X}) | S = s \right) - \mathbb{E}_{\bm{X}} \left( \phi \circ g(\bm{X}) \right) \right\vert.
$$

Note that both DP and DDP are not applicable to the case of continuous sensitive attributes.
To devise a fair learning algorithm for continuous sensitive attributes, \cite{mary2019fairness} and \cite{grari2021fairness} propose to estimate the Hirschfeld-Gebelein-Rényi (HGR) maximal correlation coefficient between $S$ and $g(\bm{X})$.
However, this measure 
is computationally involved so that its accurate estimation for continuous sensitive attributes is not known \cite{jiang2021generalized, pmlr-v202-giuliani23a}.
To mitigate this issue, as an alternative, \cite{jiang2021generalized} proposes Generalized Demographic Parity (GDP), which is defined as
$$
\Delta \textup{\texttt{GDP}}_{\phi}(g) =  \mathbb{E}_{S} \Big| \mathbb{E}_{\bm{X} } ( \phi \circ g(\bm{X})  | S ) - \mathbb{E}_{\bm{X}} ( \phi \circ g(\bm{X}) ) \Big|,
$$
which can be estimated by the standard kernel smoothing technique \cite{nadaraya1964estimating, watson1964smooth}.
Note that GDP shares a similar concept with DDP in the sense that it examines the difference between the conditional expectation of the prediction value with respect to the sensitive attribute and the marginal expectation of the prediction value.

As \cite{jiang2021generalized} did, we consider the identity function for $\phi$ in this paper, and we drop the subscript $\phi$
in $\Delta \texttt{GDP}_\phi$ unless there is any confusion. That is, we let
$$
\Delta \texttt{GDP} (g) =  \mathbb{E}_{S} \Big| \mathbb{E}_{\bm{X} } ( g(\bm{X})  | S ) - \mathbb{E}_{\bm{X}} ( g(\bm{X}) ) \Big|.
$$

\textbf{Learning algorithms}
Various approaches have been proposed to obtain fair prediction models.
In general, a given model $g$ is said to be fair when it has a small value of $\Delta (g)$ for a pre-specified fairness measure $\Delta.$
Existing algorithms for finding such fair models can be categorized into three groups:
(i) pre-, (ii) post-, and (iii) in-processing.

The pre-processing algorithms are to remove unfair biases in the training data before learning prediction models, and use the debiased training data to learn prediction models \cite{kamiran2012data, pmlr-v28-zemel13, feldman2015certifying, calmon2017optimized, webster2018mind, xu2018fairgan, pmlr-v97-creager19a, quadrianto2019discovering}.
The post-processing methods try to transform unfair prediction models to be fair \cite{corbett2017algorithmic, wei20a, pmlr-v115-jiang20a}.

The in-processing approach has been mostly explored among the three, which attempts to find an accurate model 
among fair prediction models.
For binary sensitive attributes, various algorithms \cite{goh2016satisfying, zafar2017fairness, donini2018empirical, chuang2021fair} have been suggested. 
For continuous sensitive attributes, \cite{mary2019fairness, grari2021fairness, jiang2021generalized} have proposed fairness constraints under which fair models are learned by minimizing given objective functions.

\subsection{Fair Representation Learning}

FRL aims to build a fair representation whose distributions for each protected group are similar.
Then, the learned fair representation could be used as new data for downstream tasks such as
constructing prediction models \cite{pmlr-v28-zemel13, Madras2018LearningAF, Quadrianto_2019_CVPR, kim2022learning, deka2023mmd}.
Since the representation is fair, any prediction model built upon the top of the representation would be fair.
A main theme of FRL is to choose a metric to measure a similarity between the distributions of each protected group.

\textbf{Binary sensitive attribute case}
For a binary sensitive attribute $S \in \{0,1\}$, FRL aims to find an encoding function $h$ such that 
\begin{align}
    \mbP_{h(\bm{X}) | S=0} \approx \mbP_{h(\bm{X}) | S=1}. \label{FRL_concept_binary}
\end{align}
Then,
$
\mathbb{E}_{\bm{X}} \left( f \circ h(\bm{X}) | S = 1 \right) \approx \mathbb{E}_{\bm{X}} \left( f \circ h(\bm{X})  | S = 0 \right)
$
holds for any prediction head $f$. 
Several FRL algorithms have been introduced in an extensive amount of literature
\cite{pmlr-v28-zemel13, louizos2015variational, 9aa5ba8a091248d597ff7cf0173da151_2016, calmon2017optimized, Madras2018LearningAF, Quadrianto_2019_CVPR, pmlr-v89-song19a, ruoss2020learning, gupta2021controllable,  pmlr-v130-gitiaux21a, zeng2021fair, kim2022learning,  shui2022fair, oh2022learning, guo2022learning, pmlr-v202-jovanovic23a, deka2023mmd, shen2023fair}. 

The key of FRL is the choice of a deviance measure that quantifies the dissimilarity between the two distributions.
Once the deviance measure is chosen, a fair representation is constructed by minimzing the deviance measure
in the learning phase.
Examples of the deviance measure are Kullback-Leibler (KL) divergence \cite{10.1007/978-3-030-61527-7_38}, Jensen-Shannon (JS) divergence \cite{9aa5ba8a091248d597ff7cf0173da151_2016, Madras2018LearningAF}, 
Integral Probability Metric (IPM) \cite{lee2022fast, deka2023mmd, kim2022learning}, etc.

Among these various possible choices, IPM has been received much attention in recent works partly because
it does not require the existence of the density.
Let $\cV$ be a set of discriminators  from $\cZ$ to $\mathbb{R}$, where $||v||_{\infty} \leq 1$ holds for every $v \in \cV$. 
The IPM (with respect to $\mathcal{V}$) for given two distributions $\mbP^0$ and $\mbP^1$ is defined as
$$\textup{IPM}_{\cV}(\mbP^0,\mbP^1) := \sup_{v\in \cV} \left| \int v(\bm{z}) (d\mbP^0(\bm{z})-d\mbP^1(\bm{z}))\right|.$$
When $\cV$ is the $1$-Lipschitz function space\footnote{A given function $v$ defined on $\cZ$ is a Lipschitz function with the Lipschitz constant $L$ if
$|v(\bz_1)-v(\bz_2)| \le L \|\bz_1-\bz_2\|$ for all $\bz_1,\bz_2\in \cZ,$
where $\|\cdot\|$ is certain norm defined on $\cZ.$}, the IPM becomes the well-known Wasserstein distance \cite{KR:58}. 
The IPM has been popularly used in various applications including the generative model and distributional robustness analysis \cite{10.5555/3305381.3305404, husain2020distributional}.

\textbf{Continuous sensitive attribute case}
For sensitive attributes, there exist infinitely many $s \in \mathcal{S}$ as well as infinitely many conditional distributions of $h(\bm{X})$ given $S=s$ (i.e., $\mbP_{h(\bm{X}) | S=s}$).
Following the concepts of DDP and GDP, FRL aims to ensure that each conditional distribution is closely similar to the marginal distribution of $h(\bm{X})$.
That is, instead of (\ref{FRL_concept_binary}), the goal is to find an encoding function $h$ such that 
\begin{align}
    \mbP_{h(\bm{X}) | S=s} \approx \mbP_{h(\bm{X})}, \forall s \in \mathcal{S}. \label{FRL_concept_con}
\end{align}
However, since $S$ is a continuous variable, there exists at most one sample such that $S_i = s$ for each $s \in \mathcal{S}$.
This fact makes estimating the conditional distribution $\mbP_{h(\bm{X}) | S=s}$ very difficult, and therefore quantifying the similarity between the two distributions in (\ref{FRL_concept_con}) also becomes challenging.
Learning fair representation on continuous sensitive attributes poses a significant challenge, and to the best of our knowledge, there is no existing work for FRL for continuous sensitive attributes without binning.

In the next section, we propose a new quantity to measure the level of fairness in representation when the sensitive attribute is continuous.


\section{FREM: A fair representation learning algorithm for continuous sensitive attributes}

In this section, we develop an FRL algorithm for continuous sensitive attributes.
In Section \ref{subsec_defEIPM}, we define a new fairness measure of a given representation with respect to a continuous sensitive attribute, called the \textbf{E}xpectation of \textbf{IPM}s (EIPM) and
provide a relation between EIPM and the fairness of 
a prediction model built upon the fair representation.
Then, we propose an estimator of EIPM using the weighted empirical distribution in Section \ref{subsec_defEIPM_est}, 
and develop an FRL algorithm for sensitive attributes based
on the estimated EIPM in Section \ref{sec_MMD}.
An extension of the FRL algorithm for equal opportunity is discussed in Section \ref{subsec_eo}.

EIPM is an extension of IPM for continuous sensitive attributes. 
It would be possible to consider other deviances such as the KL (Kullback-Leibler) and JS (Jensen-Shannon) divergences rather than IPM. 
However, in this paper, we focus on IPM since we succeed in developing a computationally feasible and theoretically sound estimator of EIPM.  
Apparently, it would not be easy to modify the KL and JS divergences to be easily estimable for continuous sensitive attributes since they require the estimation of the conditional density instead of the conditional distribution.

\subsection{The Expectation of IPMs (EIPM)} \label{subsec_defEIPM}

For a continuous sensitive attribute $S$, the IPMs between $\mbP_{\bm{Z}| S=s}$ and $\mbP_{\bm{Z}}$ vary across $s \in \mathcal{S}$.  
Due to this reason, we use the \textbf{E}xpectation value of \textbf{IPM}s (EIPM) for the deviance measure between the conditional distributions and the marginal distribution.
That is, we use
\begin{align}
    \textup{EIPM}_\mathcal{V} (\bm{Z} ; S) := \mathbb{E}_{S} \left[ \textup{IPM}_\mathcal{V} (\mbP_{\bm{Z}|S} ,\mbP_{\bm{Z}}) \right]. \label{def_EIPM}
\end{align}

EIPM possesses several desirable properties for FRL. 
First, we give a basic property that the EIPM value being zero guarantees perfect fairness. 
\begin{theorem}[Perfect fairness] \label{pro_basic}
    Assume that a set of discriminator $\mathcal{V}$ is large enough for $\textup{IPM}_\mathcal{V}$ to be a metric on the space of probabilities on $\mathcal{Z}$. 
    Then, $\textup{EIPM}_\mathcal{V} (\bm{Z}; S) = 0$ implies $\Delta \textup{\texttt{GDP}}( f \circ h ) = 0$ for any (bounded) prediction head $f$.
\end{theorem}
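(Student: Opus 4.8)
The plan is to reduce the claim to the separation property of the metric $\textup{IPM}_\mathcal{V}$ and then push the resulting equality of conditional and marginal laws through an arbitrary bounded head. First I would note that $\textup{IPM}_\mathcal{V}(\mbP^0,\mbP^1)\ge 0$ for all probability measures $\mbP^0,\mbP^1$ on $\mathcal{Z}$, since it is a supremum of nonnegative quantities. Hence the integrand in \eqref{def_EIPM} is a nonnegative (and, implicitly in the well-posedness of the definition, measurable) function of $s$, so $\textup{EIPM}_\mathcal{V}(\bm{Z};S)=\mathbb{E}_S[\textup{IPM}_\mathcal{V}(\mbP_{\bm{Z}|S},\mbP_{\bm{Z}})]=0$ forces $\textup{IPM}_\mathcal{V}(\mbP_{\bm{Z}|S=s},\mbP_{\bm{Z}})=0$ for $\mbP_S$-almost every $s\in\mathcal{S}$.

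Next I would invoke the hypothesis that $\mathcal{V}$ is rich enough for $\textup{IPM}_\mathcal{V}$ to be a metric on the space of probability measures on $\mathcal{Z}$: a metric vanishes only between identical points, so the previous statement upgrades to $\mbP_{\bm{Z}|S=s}=\mbP_{\bm{Z}}$ as probability measures on $\mathcal{Z}$, again for $\mbP_S$-almost every $s$. The key point is that this is an identity of \emph{measures}, not merely an equality of the integrals of discriminators in $\mathcal{V}$; this is exactly what lets us accommodate a head $f$ that need not be normalized to lie in $\mathcal{V}$.

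Finally, fix any bounded measurable prediction head $f:\mathcal{Z}\to\mathcal{Y}$. By the defining property of the regular conditional distribution $\mbP_{\bm{Z}|S=s}=\mbP_{h(\bm{X})|S=s}$, we have $\mathbb{E}_{\bm{X}}\bigl(f\circ h(\bm{X})\mid S=s\bigr)=\int_{\mathcal{Z}} f(\bm{z})\,d\mbP_{\bm{Z}|S=s}(\bm{z})$ for $\mbP_S$-a.e.\ $s$, and by the step above this equals $\int_{\mathcal{Z}} f(\bm{z})\,d\mbP_{\bm{Z}}(\bm{z})=\mathbb{E}_{\bm{X}}\bigl(f\circ h(\bm{X})\bigr)$ for $\mbP_S$-a.e.\ $s$, with boundedness of $f$ ensuring all integrals are finite. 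Hence the integrand $\bigl|\mathbb{E}_{\bm{X}}(f\circ h(\bm{X})\mid S)-\mathbb{E}_{\bm{X}}(f\circ h(\bm{X}))\bigr|$ defining $\Delta\texttt{GDP}(f\circ h)$ is $0$ $\mbP_S$-almost surely, so taking expectation over $S$ yields $\Delta\texttt{GDP}(f\circ h)=0$.

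I do not anticipate a substantive obstacle: the only points requiring care are the measurability of $s\mapsto\textup{IPM}_\mathcal{V}(\mbP_{\bm{Z}|S=s},\mbP_{\bm{Z}})$ (implicit in the statement that EIPM is well defined) and the passage from "equality tested against every $v\in\mathcal{V}$" to "equality of measures," which is precisely where the metric assumption on $\textup{IPM}_\mathcal{V}$ enters.
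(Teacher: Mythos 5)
Your proposal is correct and follows essentially the same route as the paper's proof: nonnegativity of the IPM plus a zero expectation gives $\textup{IPM}_\mathcal{V}(\mbP_{\bm{Z}|S},\mbP_{\bm{Z}})=0$ $\mbP_S$-a.s., the metric assumption upgrades this to equality of measures, and then the equality of conditional and marginal laws makes the integrand in $\Delta\texttt{GDP}$ vanish a.s.\ for any bounded head $f$. Your version is marginally more explicit about measurability and about why the conclusion holds for heads not in $\mathcal{V}$, but there is no substantive difference from the paper's argument.
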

The assumption regarding $\mathcal{V}$ in Theorem \ref{pro_basic} is a standard one for the IPM, which is satisfied by most of the commonly used discriminator sets \cite{muller1997integral}.
Since there always exists a trade-off between the level of fairness and the prediction performance of the model, we are more interested in achieving a certain level of fairness instead of perfect fairness. 
Under this context, as proved in Theorem \ref{GDP_upper} below, an important property of EIPM is that the level of GDP of any given prediction head can be controlled by the level of EIPM as long as the prediction head is included in a properly defined function class. This result indicates that the fair representation learned by FREM can be used
as new input data for various downstream tasks requiring fairness.

\begin{theorem}[Controlling the level of fairness by EIPM]\label{GDP_upper}
    Let $\bm{Z}=h(\bm{X})$ be a representation corresponding to an encoding function $h$.
    For a class $\mathcal{V}$ of discriminators and a class $\mathcal{F}$ of prediction heads, we have the following results.
    \begin{enumerate}
        \item Let $\kappa := \sup_{f \in \mathcal{F}} \inf_{v \in \mathcal{V}} || f - v ||_{\infty}$. 
        Then for any prediction head $f \in \mathcal{F}$, we have
    $$\Delta \textup{\texttt{GDP}}(f \circ h) \leq \textup{EIPM}_{\mathcal{V}} (\bm{Z}; S) + 2\kappa.$$
        \item Assume there exists an increasing concave function $\xi : [0,\infty) \to [0,\infty)$ such that $\lim_{r \downarrow 0} \xi(r) = 0$ and $\textup{IPM}_{\mathcal{F}}(\mbP^0, \mbP^1) \leq \xi(\textup{IPM}_{\mathcal{V}}(\mbP^0, \mbP^1))$ for any two probability measures $\mbP^0$ and $\mbP^1$.  
        Then for any prediction head $f \in \mathcal{F}$, we have
        $$\Delta \textup{\texttt{GDP}}(f \circ h) \leq \xi(\textup{EIPM}_{\mathcal{V}} (\bm{Z}; S)).$$
    \end{enumerate}
\end{theorem}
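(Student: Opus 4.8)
The plan is to reduce the theorem to elementary manipulations of the definitions of $\Delta\texttt{GDP}$, $\textup{IPM}_\mathcal{V}$, and $\textup{EIPM}_\mathcal{V}$. For part (1), fix any prediction head $f\in\mathcal{F}$ and any $v\in\mathcal{V}$. For a fixed value $s$ of the sensitive attribute, write
\[
\Big|\mathbb{E}_{\bm{X}}(f\circ h(\bm{X})\mid S=s)-\mathbb{E}_{\bm{X}}(f\circ h(\bm{X}))\Big|
=\Big|\int f\,d\mbP_{\bm{Z}\mid S=s}-\int f\,d\mbP_{\bm{Z}}\Big|,
\]
and then add and subtract $\int v\,d\mbP_{\bm{Z}\mid S=s}$ and $\int v\,d\mbP_{\bm{Z}}$. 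The triangle inequality splits this into three pieces: a term $|\int v\,d\mbP_{\bm{Z}\mid S=s}-\int v\,d\mbP_{\bm{Z}}|\le \textup{IPM}_\mathcal{V}(\mbP_{\bm{Z}\mid S=s},\mbP_{\bm{Z}})$ since $v\in\mathcal{V}$, plus two approximation terms each bounded by $\|f-v\|_\infty$. Taking the infimum over $v\in\mathcal{V}$ gives the bound $\textup{IPM}_\mathcal{V}(\mbP_{\bm{Z}\mid S=s},\mbP_{\bm{Z}})+2\inf_{v\in\mathcal{V}}\|f-v\|_\infty\le \textup{IPM}_\mathcal{V}(\mbP_{\bm{Z}\mid S=s},\mbP_{\bm{Z}})+2\kappa$ for each $s$. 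Finally I take $\mathbb{E}_S$ of both sides; the left side becomes $\Delta\texttt{GDP}(f\circ h)$ and the right side becomes $\textup{EIPM}_\mathcal{V}(\bm{Z};S)+2\kappa$, as claimed.

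For part (2), the starting observation is that $|\int f\,d\mbP_{\bm{Z}\mid S=s}-\int f\,d\mbP_{\bm{Z}}|\le \textup{IPM}_\mathcal{F}(\mbP_{\bm{Z}\mid S=s},\mbP_{\bm{Z}})$ directly from the definition of $\textup{IPM}_\mathcal{F}$ (note one needs $f\in\mathcal{F}$; strictly $\textup{IPM}_\mathcal{F}$ requires $-f\in\mathcal{F}$ too, but the absolute value inside the $\sup$ handles this, or one appeals to $\mathcal{F}$ being symmetric — worth a remark). Applying the hypothesis $\textup{IPM}_\mathcal{F}(\mbP^0,\mbP^1)\le\xi(\textup{IPM}_\mathcal{V}(\mbP^0,\mbP^1))$ with $\mbP^0=\mbP_{\bm{Z}\mid S=s}$, $\mbP^1=\mbP_{\bm{Z}}$ gives a pointwise-in-$s$ bound $\xi(\textup{IPM}_\mathcal{V}(\mbP_{\bm{Z}\mid S=s},\mbP_{\bm{Z}}))$. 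Then take $\mathbb{E}_S$ and invoke Jensen's inequality: since $\xi$ is concave, $\mathbb{E}_S[\xi(\textup{IPM}_\mathcal{V}(\mbP_{\bm{Z}\mid S},\mbP_{\bm{Z}}))]\le\xi(\mathbb{E}_S[\textup{IPM}_\mathcal{V}(\mbP_{\bm{Z}\mid S},\mbP_{\bm{Z}})])=\xi(\textup{EIPM}_\mathcal{V}(\bm{Z};S))$. The monotonicity of $\xi$ is what lets us pass from the pointwise bound to the expectation before applying Jensen, and the condition $\lim_{r\downarrow0}\xi(r)=0$ is what makes the bound meaningful (vanishing EIPM forces vanishing GDP), though it is not strictly needed for the inequality itself.

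I do not anticipate a serious obstacle here — both parts are short $\varepsilon$-pushing arguments — but the one point requiring care is measurability: one must check that $s\mapsto\textup{IPM}_\mathcal{V}(\mbP_{\bm{Z}\mid S=s},\mbP_{\bm{Z}})$ is measurable so that $\mathbb{E}_S$ of it is well-defined, and similarly for the composition with $\xi$ (continuity of $\xi$ handles the latter once the former is settled). This is typically taken for granted, and I would either cite it or note that it follows from $\mathcal{V}$ being separable in $\|\cdot\|_\infty$ (so the $\sup$ is a countable $\sup$ of measurable functions of $s$). The other minor care point, already flagged, is the implicit symmetry assumption on $\mathcal{F}$ in part (2); I would state that $\textup{IPM}_\mathcal{F}$ is understood with the absolute value inside, matching the definition of $\textup{IPM}_\mathcal{V}$ given earlier, so no genuine symmetry assumption is needed.
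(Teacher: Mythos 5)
Your proof is correct and follows essentially the same route as the paper's: add-and-subtract a discriminator for part (1), then absorb the approximation error into $2\kappa$; bound pointwise by $\textup{IPM}_\mathcal{F}$ and invoke concavity for part (2). Two small points worth flagging. First, your handling of part (1) via an infimum over $v$ is slightly cleaner than the paper's, which asserts the existence of a $v\in\mathcal{V}$ with $\|f-v\|_\infty\le\kappa$; strictly speaking the infimum defining $\kappa$ need not be attained, so either one passes to a minimizing sequence or, as you do, carries the infimum along — your phrasing sidesteps this nuisance. Second, and more substantively, the final step of part (2) in the paper is attributed to ``the Cauchy inequality,'' which is a mislabel: what is actually used is Jensen's inequality for the concave function $\xi$, exactly as you identify. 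Your observations about measurability of $s\mapsto\textup{IPM}_\mathcal{V}(\mbP_{\bm{Z}\mid S=s},\mbP_{\bm{Z}})$ and about the absolute value in the IPM definition making a symmetry assumption on $\mathcal{F}$ unnecessary are both correct and are points the paper passes over silently.
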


\noindent Note that if $\mathcal{V}$ is sufficiently large so that $\mathcal{F} \subseteq \mathcal{V}$, we directly obtain $\Delta \textup{\texttt{GDP}}(f \circ h) \leq \textup{EIPM}_{\mathcal{V}} (\bm{Z}, S)$.
There exists, however, an interesting example of $\mathcal{V}$ such that $\mathcal{V}$ is fairly small (e.g., $\mathcal{V} \subset \mathcal{F}$) but one of assumptions in Theorem \ref{GDP_upper} holds. 
We provide certain representative examples of $\mathcal{V}$ below.   
\begin{example}[H\"older smooth functions]
    Let $\mathcal{F}$ be the $\beta$-Hölder function class\footnote{$\beta$-Hölder norm is defined by $||f||_{\mathcal{H}^\beta} :=  \sum_{\bm{\alpha} : |\bm{\alpha}|_1 \leq \beta} \left\|\partial^{\bm{\alpha}} f\right\|_{\infty} 
+\sum_{\bm{\alpha}:|\bm{\alpha}|_1 =\lfloor\beta\rfloor} \sup _{\bm{z}_1 \ne \bm{z}_2}
\frac{\left|\partial^{\bm{\alpha}} f(\bm{z}_1)-\partial^{\bm{\alpha}} f(\bm{z}_2)\right|}{|\bm{z}_1-\bm{z}_2|_{\infty}^{\beta-\lfloor\beta\rfloor}}$. $\beta$-Hölder class is the set of bounded $\beta$-Hölder norm.}. 
    For any sufficiently large $M$, consider the DNN class $\mathcal{V}$ with $\text{depth} \propto \log_2 M$ and $\text{width} \propto M^d$. Then, we have $\kappa \propto 1/M^{2 \beta}$ \cite{kohler2021rate}.
\end{example}

\begin{example}[Lipschitz continuous functions]
     Let $\mathcal{V}$ and $\mathcal{F}$ be the Lipschitz function class with Lipschitz constant $1$ and $L$, respectively. Then, we have $\xi(r) = Lr$.
\end{example}

\begin{remark}
GDP is the discrepancy between the conditional expectation and the marginal expectation of model output, whereas EIPM is the difference between the conditional distribution and the marginal distribution of representation.
The biggest challenge in measuring the fairness level in the representation with respect to continuous sensitive attributes is that, simply matching the expectations of the distributions in the representation space does not guarantee the fairness of the final prediction model (even in terms of GDP).
Of course, making the conditional distributions be similar is more difficult than making the conditional expectations similar, in particular for continuous sensitive attributes because there are infinitely many conditional distributions should be considered simultaneously.
\end{remark}

\subsection{Estimation of EIPM}
\label{subsec_defEIPM_est}

For a binary sensitive attribute,
when we do not know the population distributions 
$\mbP_{\bm{Z}|S=0}$ and $\mbP_{\bm{Z}|S=1}$ 
but we observe random samples
$\bm{Z}^{0}_1, \dots, \bm{Z}^{0}_{n_0} \sim \mbP_{\bm{Z}|S=0}$ and $\bm{Z}^{1}_1, \dots, \bm{Z}^{1}_{n_1} \sim \mbP_{\bm{Z}|S=1}$,
$\textup{IPM}_{\cV}(\mbP_{\bm{Z}|S=0},\mbP_{\bm{Z}|S=1})$ can be easily estimated by $\textup{IPM}_{\cV}(\hat{\mbP}_{\bm{Z}|S=0},\hat{\mbP}_{\bm{Z}|S=1})$,
where $\hat{\mbP}_{\bm{Z}|S=s}, s\in \{0,1\}$ are the empirical distributions of $\mbP_{\bm{Z}|S=s}, s\in \{0,1\},$  respectively.
That is
 $\textup{IPM}_{\cV}(\hat{\mbP}_{\bm{Z}|S=0},\hat{\mbP}_{\bm{Z}|S=1})=\sup_{v\in \cV} \left| \frac{1}{n_0}\sum_{i=1}^{n_0} v(\bm{Z}^{0}_{i}) - \frac{1}{n_1}\sum_{i=1}^{n_1} v(\bm{Z}^{1}_{i}) \right|.$ 
 
When $S$ is a multinary categorical variable, 
a natural estimator of the EIPM
\begin{align*}
    \textup{EIPM}_\mathcal{V} (\bm{Z} ; S) =  \int_{s \in \mathcal{S}}  \textup{IPM}_\mathcal{V} \left(\mbP_{\bm{Z}|S=s} ,\mbP_{\bm{Z}}\right) \mathbb{P}_S (ds)
\end{align*}
is
\begin{align}
    \hat{\textup{EIPM}}_\mathcal{V}^{\textup{cat}} (\bm{Z}; S) :=& \int_{s \in \mathcal{S}}  \textup{IPM}_\mathcal{V} \left(\hat{\mbP}^{\textup{cat}}_{\bm{Z}|S=s} ,\hat{\mbP}_{\bm{Z}}\right) \hat{\mathbb{P}}_S (ds), \label{discrete_empdis}    
\end{align}
where ‘cat’ in the superscript is a short for 'categorical',
\begin{align*}
    \hat{\mbP}^{\textup{cat}}_{\bm{Z}|S=s}  := \frac{1}{|\{j : S_j = s\}|}\sum_{j : S_j = s}\delta(\bm{Z}_j),
\end{align*}
$\hat{\mbP}_{\bm{Z}} := \frac{1}{n} \sum_{j=1}^n \delta(\bm{Z}_j)$
and $\hat{\mbP}_{S} := \frac{1}{n} \sum_{i=1}^n \delta(S_i)$. 

In case of continuous sensitive attributes, however, estimating $\mathbb{P}_{\bm{Z}|S=s}$ with $\hat{\mbP}^{\textup{cat}}_{\bm{Z}|S=s}$ would  not be appropriate.
It is because $\hat{\mbP}^{\textup{cat}}_{\bm{Z}|S=s}$ is not well-defined when there exists
no $S_j$ equal to $s.$ 
Moreover, $|\{j : S_j = s\}|$ is at most 1 and thus 
$\hat{\mbP}^{\textup{cat}}_{\bm{Z}|S=s}$ is not even statistically consistent. 
In general, estimation of the conditional distribution
is challenging and smoothing techniques are typically employed.  
The aim of this subsection is to propose a consistent estimator of EIPM for sensitive attributes by use of a new kernel smoothing technique.

To present the new smoothing technique, we first let $K_{\gamma} : \mathcal{S} \times \mathcal{S} \to \mathbb{R}$ be a kernel function on $\mathcal{S}$ with bandwidth $\gamma,$
which measures similarity between any pair of sensitive attribute $s, s' \in \mathcal{S}$.
We assume that $K_{\gamma}$ satisfies the following assumption.
\begin{assumption} \label{assum_kernel}
    There exists a function $k : \mathbb{R} \to \mathbb{R}^{+}_0$ such that $K_{\gamma}(s, s') = k\left( \frac{s - s'}{\gamma} \right)$, $||k||_{\infty} < \infty$, $\int k(s)ds = 1$, $\int s^2k(s)ds < \infty$ and $k(s) = k(-s)$ for every $s, s' \in \mathcal{S}$.
\end{assumption}
A popular choice of the kernel is Radial Basis Function (RBF) kernel, defined by
$K_{\gamma}(s, s') := \frac{1}{\sqrt{2 \pi}}\exp \left( - \frac{ (s-s')^{2} }{ 2 \gamma^{2} } \right).$
Any kernel satisfying Assumption \ref{assum_kernel} can be used (e.g., triangle, Epanechnikov) and we experimentally compare the three kernels in Section \ref{exp:vary_kernel}.

Then, to estimate $\mbP_{\bm{Z}|S=S_i},$ we propose to use
$$\hat{\mbP}^{(-i), \gamma}_{\bm{Z}|S=S_i}:= \sum_{j \neq i} \hat{w}_{\gamma}(j; i) \delta(\bm{Z}_j)$$ for each $i \in \{1,\dots,n\}.$
where the weights $\hat{w}_{\gamma}(1 ; i)$,$\dots$, $\hat{w}_{\gamma}(n ; i)$ are defined by
\begin{align*}
    \hat{w}_{\gamma}(j ; i) := \frac{K_{\gamma}(S_j, S_i)}{\sum_{j \neq i} K_{\gamma}(S_j, S_i)}, && j \in \{ 1, \ldots, n \}.
\end{align*}
Note that for $i \in \{1,\dots,n\}$, $\sum_{j \neq i} \hat{w}_{\gamma}(j ; i)=1$ and the index $j$ with $S_j \approx S_i$ has a larger value of $\hat{w}_{\gamma}(j ; i).$
In other words, $\hat{\mbP}^{(-i), \gamma}_{\bm{Z}|S=S_i}$ is a weighted empirical distribution on $\mathcal{Z}$, which gives more weights for samples closer to $S_i$.
Similarly, to estimate $\mbP_{\bm{Z}},$ we use $\hat{\mbP}^{(-i)}_{\bm{Z}} := \frac{1}{n-1} \sum_{j \neq i} \delta(\bm{Z}_j).$ 
Then, the IPM between $\hat{\mbP}^{(-i), \gamma}_{\bm{Z}|S=S_i}$ and $\hat{\mbP}^{(-i)}_{\bm{Z}}$ for $i \in \{1,\dots,n\}$ becomes
\begin{align*}
    \textup{IPM}_\mathcal{V} \left(\hat{\mbP}^{(-i), \gamma}_{\bm{Z}|S=S_i} ,\hat{\mbP}^{(-i)}_{\bm{Z}}\right) = \sup_{v \in \mathcal{V}} 
\left| \sum_{j \neq i} \left(\hat{w}_\gamma (j;i) - \frac{1}{n-1}\right)v(\bm{Z}_j)\right|. 
\end{align*}
The resulting proposed EIPM estimator is given as
\begin{align}
    \hat{\textup{EIPM}}^{\gamma}_{\mathcal{V}} (\bm{Z}; S) := \frac{1}{n} \sum_{i=1}^n  \textup{IPM}_{\mathcal{V}} \left(\hat{\mbP}^{(-i), \gamma}_{\bm{Z}|S=S_i} ,\hat{\mbP}^{(-i)}_{\bm{Z}}\right). \label{eq_sampled_EIPM}
\end{align}
As introduced above, we exclude the $i$th sample in the estimate of $\mathbb{P}_{\bm{Z}|S=S_{i}}$ for technical simplicity; this convention makes theoretical studies of the EIPM estimator be easier.

One may raise a question that the weighted empirical distribution
$\hat{\mbP}^{(-i), \gamma}_{\bm{Z}|S=S_i}$ is similar to the Nadaraya–Watson conditional density estimator \cite{de2003conditional}.
However, it differs in that it applies the kernel smoothing to $S$ only, while using a sum of Dirac delta functions for $\bm{Z}$. 
We apply this approach because EIPM depends on the conditional expectation of $\bm{Z}$ instead of the conditional density, which makes the EIPM estimator be free from the curse of dimensionality (with respect to the dimension of $\bm{Z}$).
See Appendix \ref{Appen:syn_multi} for a numerical discussion of this claim.

\begin{remark}
One may consider using the `expectation of KL divergence' $(\mathbb{E}_{S} \left[ \textup{KL}(\mbP_{\bm{Z}|S} ,\mbP_{\bm{Z}}) \right])$ or similar measures, instead of EIPM.
However, the technique we develop (using
weighted empirical distribution) cannot be applied to KL divergence-based methods.
Note that $\textup{KL} (\hat{\mbP}^{(-i), \gamma}_{\bm{Z}|S=S_i} ,\hat{\mbP}^{(-i)}_{\bm{Z}}) = \sum_{j=1}^n \hat{w}_\gamma (j;i) \log( (n-1)\hat{w}_\gamma (j;i))$ does not depend on $\{\bm{Z}_i\}_{i=1}^n$, which means that this value is quite different from $\textup{KL}(\mbP_{\bm{Z}|S=S_i} ,\mbP_{\bm{Z}})$.
\end{remark}

\subsection{A choice of the discriminator}

For feasible estimation of EIPM in (\ref{eq_sampled_EIPM}), a careful selection of the set of discriminators (i.e., $\mathcal{V}$) should be done.
There are several candidates of the set of discriminators for IPM
including the 1-Lipschitz function class \cite{KR:58}, the parametric family proposed by \cite{kim2022learning} and the RKHS unit ball \cite{gretton2012kernel}, which correspond to the Wasserstein distance, the sigmoid IPM and the Maximum Mean Discrepancy (MMD), respectively.
Straightforward application of these discriminators to EIPM would face computational difficulties.
In particular, any set of discriminator that requires a numerical maximization to calculate the IPM value would be
prohibited for EIPM since $n$ many maximizations should be done to calculate the EIPM value.
The computations of Wasserstein distance and sigmoid IPM require such sets of discriminators while the MMD has a closed-form solution and thus we can learn $h$ and $f$ without the adversarial learning (i.e., numerical maximization to compute the IPM value). 
Thus, we choose the MMD as the IPM in our proposed FRL algorithm.

To explain more details of the MMD, let $\kappa : \mathcal{Z} \times \mathcal{Z} \to \mathbb{R}$ be a positive definite kernel function on $\mathcal{Z}$. 
For the Reproducing Kernel Hilbert Space (RKHS) $(\mathcal{V}_{\kappa}(\mathcal{Z}), ||\cdot||_{\mathcal{V}_{\kappa}(\mathcal{Z})})$ corresponding to $\kappa$, we consider the unit ball in the RKHS 
$\mathcal{V}_{\kappa, 1}=\{v \in \mathcal{V}_{\kappa}(\mathcal{Z}): ||v||_{\mathcal{V}_{\kappa}(\mathcal{Z})} \le 1\}$
for the set of discriminator used for EIPM.
The IPM employing this set of discriminators is referred to as the MMD, which is widely used across various domains \cite{li2017mmd, lee2022fast, pmlr-v202-kong23d}.

We are now ready to introduce the closed-form formula of our proposed estimator based on MMD, denoted by $\hat{\textup{EIPM}}^{\gamma}_{\mathcal{V}_{\kappa, 1}} (\bm{Z}; S),$ which is given in the following proposition. 

\begin{proposition} \label{pro_estimator_derivation}
    For given $\gamma>0$, $h \in \mathcal{H}$, $\{\bm{X}_i , S_i\}_{i=1}^n$ and $\bm{Z}_i = h(\bm{X}_i)$,  $\hat{\textup{EIPM}}^{\gamma}_{\mathcal{V}_{\kappa, 1}} (\bm{Z}; S)$ is given as
    \begin{align*}
    \hat{\textup{EIPM}}^{\gamma}_{\mathcal{V}_{\kappa, 1}} (\bm{Z}; S) = \frac{1}{n} \sum_{i=1}^n \left[ \sum_{j,k \neq i} [A_{\gamma}]_{i,j} [A_{\gamma}]_{i,k}  \kappa(\bm{Z}_j, \bm{Z}_k) 
    \right]^{\frac{1}{2}},
    \end{align*}
    where $A_{\gamma}$ is the $n \times n$ matrix defined by
    $$[A_{\gamma}]_{i,j} = \frac{K_{\gamma}(S_i , S_j)}{\sum_{j \neq i} K_{\gamma}(S_i , S_j) } - \frac{1}{n-1}.$$
\end{proposition}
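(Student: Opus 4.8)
The plan is to reduce the claim to the standard characterization of the Maximum Mean Discrepancy as the RKHS distance between kernel mean embeddings, and then to evaluate that distance explicitly for the finitely supported signed measure occurring in each summand of $\hat{\textup{EIPM}}^{\gamma}_{\mathcal V_{\kappa,1}}$. Since the whole estimator is an average over $i$ of inner IPM terms, it suffices to establish the closed form for a fixed $i$ and then sum.

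First I would rewrite the inner IPM. From the definitions of $\hat w_\gamma(\cdot\,;i)$ and $\hat{\mbP}^{(-i)}_{\bm Z}$, the difference of distributions $\hat{\mbP}^{(-i),\gamma}_{\bm Z|S=S_i} - \hat{\mbP}^{(-i)}_{\bm Z}$ is the signed measure $\sum_{j \neq i}[A_\gamma]_{i,j}\,\delta(\bm Z_j)$, because $\hat w_\gamma(j;i) - \tfrac{1}{n-1} = [A_\gamma]_{i,j}$; note also that its total mass vanishes since both weight sequences sum to one over $j \neq i$. Hence the inner term equals $\sup_{\|v\|_{\mathcal V_\kappa} \le 1}\big|\sum_{j \neq i}[A_\gamma]_{i,j}\,v(\bm Z_j)\big|$, which is exactly the display preceding the proposition.

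Next I would introduce the empirical kernel mean embedding $\Phi_i := \sum_{j \neq i}[A_\gamma]_{i,j}\,\kappa(\bm Z_j,\cdot)$, a bona fide element of $\mathcal V_\kappa(\mathcal Z)$ as a finite linear combination of feature maps. By the reproducing property $v(\bm Z_j) = \langle v, \kappa(\bm Z_j,\cdot)\rangle_{\mathcal V_\kappa}$, so by linearity $\sum_{j \neq i}[A_\gamma]_{i,j}\,v(\bm Z_j) = \langle v, \Phi_i\rangle_{\mathcal V_\kappa}$ for all $v$. Taking the supremum over the unit ball and applying Cauchy--Schwarz --- with equality at $v = \Phi_i/\|\Phi_i\|_{\mathcal V_\kappa}$ if $\Phi_i \ne 0$, and all quantities zero otherwise --- gives $\sup_{\|v\|_{\mathcal V_\kappa}\le1}|\langle v,\Phi_i\rangle_{\mathcal V_\kappa}| = \|\Phi_i\|_{\mathcal V_\kappa}$. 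Expanding the squared norm and using the reproducing property once more yields $\|\Phi_i\|_{\mathcal V_\kappa}^2 = \sum_{j,k\neq i}[A_\gamma]_{i,j}[A_\gamma]_{i,k}\,\kappa(\bm Z_j,\bm Z_k)$. Substituting this for each $i$ into $\hat{\textup{EIPM}}^{\gamma}_{\mathcal V_{\kappa,1}}(\bm Z;S) = \tfrac1n\sum_{i=1}^n \textup{IPM}_{\mathcal V_{\kappa,1}}(\hat{\mbP}^{(-i),\gamma}_{\bm Z|S=S_i},\hat{\mbP}^{(-i)}_{\bm Z})$ delivers the stated formula.

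I expect no genuine difficulty here: the argument is a direct application of the reproducing property together with Cauchy--Schwarz, and every object in sight is a finite linear combination of feature maps, so no integrability, measurability, or convergence issues arise. The only points worth a sentence of care are that $\Phi_i$ indeed lies in the RKHS (immediate from finiteness of the sum) and the degenerate case $\Phi_i = 0$ in the Cauchy--Schwarz step. Alternatively the inner computation can simply be cited as the standard kernel mean embedding characterization of MMD, but writing it out keeps the proposition self-contained.
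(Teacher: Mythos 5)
Your proof is correct, but it takes a cleaner route than the paper's. The paper invokes Lemma~6 of Gretton et al.\ (2012), which expresses $\textup{IPM}_{\mathcal V_{\kappa,1}}(\mbP^U,\mbP^T)^2$ as $\mathbb{E}[\kappa(U,U')]+\mathbb{E}[\kappa(T,T')]-2\mathbb{E}[\kappa(U,T)]$; plugging in the two weighted empirical distributions produces three double sums, which the paper then recombines algebraically (using symmetry of $\kappa$ to split $-2\hat w_\gamma(j;i)/(n-1)$ into two pieces) so as to factor the coefficients into $(\hat w_\gamma(j;i)-\tfrac{1}{n-1})(\hat w_\gamma(k;i)-\tfrac{1}{n-1})=[A_\gamma]_{i,j}[A_\gamma]_{i,k}$. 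You instead observe from the start that the relevant object is the single signed measure $\sum_{j\neq i}[A_\gamma]_{i,j}\delta(\bm Z_j)$, form its kernel mean embedding $\Phi_i$, and compute $\|\Phi_i\|_{\mathcal V_\kappa}$ by the reproducing property and Cauchy--Schwarz. That bypasses both the citation and the algebraic completion-of-the-square; it is in effect the proof of the cited Gretton et al.\ lemma specialized to a finitely supported signed measure of total mass zero. Either approach is fine; yours is more self-contained and makes the quadratic form appear immediately, at the modest cost of re-deriving a standard fact, whereas the paper's is shorter given the citation but requires the reader to verify the term recombination. Your handling of the degenerate case $\Phi_i=0$ is a minor point the paper does not mention and does not need to, since both sides are then manifestly zero.
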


Note that it is theoretically well-known that many RKHSs including the Gaussian and Laplace RKHS can encompass or approximate a wide range of function \cite{steinwart2001influence, steinwart2008support, gretton2012kernel}.
Hence, using $\mathcal{V}_{\kappa, 1}$ as the set of discriminator for EIPM ensures low GDP values for a wide range of prediction heads, including nonlinear ones,
Also, ${\mathcal{V}_{\kappa,1}}$ usually has a model complexity similar to that of a parametric family, which 
result in good finite sample performances of the estimated EIPM 
(see details in Section \ref{sec_tho}).

\subsection{Learning a fair representation for continous sensitive attributes} \label{sec_MMD}

The aim of this subsection is to choose a good encoder $h$ among those satisfying $\textup{EIPM}_{\mathcal{V}} (h(\bm{X}), S) \leq \delta$ for a pre-specified $\delta$.
There are two approaches to achieve this goal: supervised and unsupervised.
For unsupervised FRL, Auto-Encoder is typically used for a learning framework \cite{Madras2018LearningAF, kim2022learning}.
On the other hand, the supervised approach learns the encoder and prediction head simultaneously, provided that observations of the output $Y$ are available.
In these days, supervised FRL is more popular partly because supervised pre-trained models can be successively transferred to various downstream tasks (e.g., GPT).
Moreover, recent FRL algorithms such as LAFTR \cite{Madras2018LearningAF} and sIPM-LFR \cite{kim2022learning} also considered supervised learning for their numerical studies.
Thus, we focus on the supervised learning since it is more popular and widely used \cite{Quadrianto_2019_CVPR,gupta2021controllable,ruoss2020learning,oh2022learning,guo2022learning,deka2023mmd}.

For supervised FRL, we learn a fair representation by solving
\begin{align}
    \argmin_{h \in \mathcal{H}, f \in \mathcal{F}} & \mathcal{L}_{\textup{sup}}(f \circ h) && \textup{ s.t. } \textup{EIPM}_{\mathcal{V}} (h(\bm{X}); S) \leq \delta, \label{argmin_1} 
\end{align}
where $\mathcal{L}_{\textup{sup}}$ is a given supervised risk such as the cross-entropy  for classification or MSE (Mean Squared Error) for regression.
That is, the algorithm finds a good encoder $h$ among those satisfying the fairness constraint by minimizing the supervised risk with respect to $h$ and $f$ jointly.

In practice, however, the value of $\textup{EIPM}_{\mathcal{V}}$ in (\ref{argmin_1}) is not available. 
A simple remedy is to replace it by its estimator 
$\hat{\textup{EIPM}}^{\gamma}_{\mathcal{V}_{\kappa, 1}}$ (provided in Proposition \ref{pro_estimator_derivation}) to find the solution of  
\begin{align}
    \argmin_{h \in \mathcal{H}, f \in \mathcal{F}} & \mathcal{L}_{\textup{sup}}(f \circ h) && \textup{ s.t. } \hat{\textup{EIPM}}^{\gamma}_{\mathcal{V}_{\kappa, 1}} (h(\bm{X}), S) \leq \delta \label{argmin_2} 
\end{align}
with a properly chosen bandwidth $\gamma$.
Based on the foregoing discussions, we arrive at a new algorithm of FRL for continuous sensitive attributes.
Specifically, we solve the Lagrangian dual problem of our objective in (\ref{argmin_2}). 
That is, we solve
\begin{align}
    \argmin_{h \in \mathcal{H}, f \in \mathcal{F}}  \mathcal{L}_{\textup{sup}}(f \circ h) 
    + \lambda \hat{\textup{EIPM}}^{\gamma}_{\mathcal{V}_{\kappa, 1}} (h(\bm{X}), S),
\end{align}
where the multiplier $\lambda \geq 0$ is a hyper-parameter controlling the relative magnitude of the fairness constraint.
We call this proposed algorithm as the
\textbf{F}air \textbf{R}epresentation using \textbf{E}IP\textbf{M} (FREM), which is summarized in Algorithm \ref{alg:eipm} of Appendix.

\subsection{Extension to Equal Opportunity} \label{subsec_eo}

Equal Opportunity (EO) \cite{hardt2016equality} is another important group fairness notion, besides DP.
The FREM algorithm can be modified easily for Generalized Equal Opportunity (GEO), which is defined as
\begin{equation}\label{def:deltageo}
    \Delta \texttt{GEO} := \mathbb{E}_{S} \Big| \mathbb{E}_{\bm{X} } ( g(\bm{X})  | S, Y = 1 ) - \mathbb{E}_{\bm{X}} ( g(\bm{X}) | Y = 1 ) \Big|.
\end{equation}
Instead of (\ref{def_EIPM}), we consider
$$\textup{EIPM}_\mathcal{V} (\bm{Z}; S | Y=1) := \mathbb{E}_{S} \left[ \textup{IPM}_\mathcal{V} (\mbP_{\bm{Z}|S, Y=1} ,\mbP_{\bm{Z}|Y=1}) \right],$$
and we estimate it by $ \hat{\textup{EIPM}}^{\gamma}_{\mathcal{V}_{\kappa, 1}} (\bm{Z}; S | Y=1) := $
\begin{align*}
    \frac{1}{n_1} \sum_{i : Y_i=1}  \textup{IPM}_{\mathcal{V}_{\kappa, 1}} \left(\hat{\mbP}^{(-i), \gamma}_{\bm{Z}|S=S_i, Y=1} ,
    \hat{\mbP}^{(-i)}_{\bm{Z}|Y=1}\right),
\end{align*}
where $n_1:= |\{i : Y_i = 1\}|$ and $\hat{\mbP}^{(-i), \gamma}_{\bm{Z}|S=S_i, Y=1}$ 
is defined as
$$\hat{\mbP}^{(-i), \gamma}_{\bm{Z}|S=S_i, Y=1}:= \sum_{j \neq i} \frac{K_{\gamma}(S_i, S_j) \mathbb{I}(Y_j = 1)}{\sum_{j \neq i} K_{\gamma}(S_i, S_j) \mathbb{I}(Y_j = 1)} \delta(Z_i)$$
and $\hat{\mbP}^{(-i)}_{\bm{Z}|Y=1} := \frac{1}{n_1 - 1}\sum_{j \neq i} \mathbb{I}(Y_j = 1)\delta(\bm{Z}_j)$.
Then, all the theorems and algorithms discussed in the previous subsections can be modified accordingly without much hamper.
Details are provided in Appendix \ref{App_EO}.

\section{Theoretical analysis} \label{sec_tho}

We study theoretical properties of the estimated EIPM given in (\ref{eq_sampled_EIPM}) 
which in turn provides theoretical guarantees of the fairness level of the learned fair representation by FREM.

\subsection{Convergence rate of the estimated EIPM}
\label{subsec__est_conv}

In this subsection, we derive the convergence rate of
$\hat{\textup{EIPM}}^{\gamma}_{\mathcal{V}}(\bm{Z}; S)$ as the sample size increases.
Even though the EIPM estimator looks similar to the Nadaraya–Watson estimator, the $\sup$ operation in the definition of EIPM makes EIPM be nonlinear with respect to the conditional distribution of $\bm{Z}$ given $S$ and thus standard techniques to study the Nadaraya–Watson estimator (e.g., calculation of the bias and variance) are not directly applicable. 
To resolve this issue, we develop novel techniques to verify several asymptotic properties of the EIPM estimator. 
We assume the following mild regularity conditions.
\begin{assumption} \label{assum_ps}
    $\mbP_{S}$ admits a density $p(s)$ with respect to Lebesgue measure $\mu_{S}$ on $\mathcal{S}$. Also, there exist $0<L_p<U_p<\infty$ such that $L_p < p(s) < U_p$ on $s \in \mathcal{S}$.    
\end{assumption}

\begin{assumption} \label{assum_pzs}
Suppose that there exists a $\sigma$-finite measure $\mu_{\bm{X}}$ on $\mathcal{X}$
such that $\mbP_{\bm{X},S} << \mu_{\bm{X}} \otimes \mu_{S},$ where $\mu_{S}$ is the Lebesgue measure on $\mathcal{S}.$
    We denote $p(\bm{x},s) := \frac{d\mbP_{\bm{X},S}}{d(\mu_{\bm{X}} \otimes  \mu_{S})}(\bm{x},s)$ as the Radon-Nikodym derivative of $\mbP_{\bm{X},S}$ with respect to $\mu_{\bm{X}} \otimes  \mu_{S}$.     
    For every $\bm{x} \in \mathcal{X}$, $p(\bm{x},s)$ is twice differentiable with respect to $s$ and has a bounded second derivative.
\end{assumption}
Assumption \ref{assum_ps} implies that $S$ admits a bounded density function (w.r.t. Lebesgue measure).
This is a very mild one, because $\mathcal{S}$ is usually a bounded set.
Assumption \ref{assum_pzs} is about the smoothness of the joint density function with respect to $S.$
Note that there is no smoothness condition on $\bm{X}.$

\begin{assumption} \label{assum_gamma}
    The bandwidth of the kernel
    satisfies $\gamma_n \to 0$ and $n \gamma_n \to \infty$ as $n \to \infty$. 
\end{assumption}
Assumption $\ref{assum_gamma}$ is necessary for convergence of kernel estimators \cite{rosenblatt1969conditional}.
This assumption implies that a smaller bandwidth should be used as the number of samples increases, but the rate of decrease should not be too rapid. The following theorem is the main result of this paper.

\begin{theorem}[Convergence of proposed estimator] \label{EIPM_conv}
Let $h$ be a bounded measurable encoder.
Suppose that Assumption \ref{assum_kernel}, \ref{assum_ps}, \ref{assum_pzs} and \ref{assum_gamma} hold.
Then, for 
$$
\epsilon_n = \gamma_n^2 + \frac{\log n}{\sqrt{n \gamma_n}} \left( 1 + \log \mathcal{N} \left(\sqrt{\frac{\gamma_n}{n}}, \mathcal{V}, ||\cdot||_{\infty}\right) \right)^{\frac{1}{2}},
$$
we have
\begin{align*}
    \left|\hat{\textup{EIPM}}^{\gamma_n}_{\mathcal{V}} (\bm{Z} ; S) - \textup{EIPM}_{\mathcal{V}} (\bm{Z} ; S) \right| 
    < c \epsilon_n
\end{align*}
for sufficiently large $n$ with probability at least $1-\frac{4}{n}$, where $c$ is the constant not depending on $n$ and $m$. 
\end{theorem}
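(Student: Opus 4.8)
The plan is to decompose the error $|\hat{\textup{EIPM}}^{\gamma_n}_{\mathcal{V}}(\bm{Z};S) - \textup{EIPM}_{\mathcal{V}}(\bm{Z};S)|$ into a \emph{bias} part and a \emph{stochastic} part, but---crucially---\emph{not} at the level of the kernel-smoothed conditional distribution itself (which is what one does for Nadaraya--Watson), but rather at the level of the functionals $v \mapsto \mathbb{E}[v(\bm{Z})\mid S=s]$. Because each $v\in\mathcal{V}$ is bounded (by $1$) and measurable, the map $s\mapsto \mathbb{E}[v(\bm{Z})\mid S=s] = \int v(h(\bm{x}))\,p(\bm{x},s)\,d\mu_{\bm{X}}(\bm{x}) / p(s)$ inherits, uniformly in $v$, the $C^2$-in-$s$ regularity from Assumption~\ref{assum_pzs} (the second $s$-derivative is bounded by $\|v\|_\infty$ times a fixed constant coming from the bound on $\partial_s^2 p(\bm{x},s)$ and Assumption~\ref{assum_ps}). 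This is the structural observation that lets us sidestep the curse of dimensionality in $m$: we never need to control a density estimate on $\mathcal{Z}$.

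First I would fix a single discriminator $v\in\mathcal{V}$ and a single index $i$, and analyze $W_i(v):=\sum_{j\neq i}\hat{w}_\gamma(j;i)v(\bm{Z}_j) - \frac{1}{n-1}\sum_{j\neq i}v(\bm{Z}_j)$, which is a Nadaraya--Watson-type estimate of $\mathbb{E}[v(\bm{Z})\mid S=S_i] - \mathbb{E}[v(\bm{Z})]$. Conditioning on $S_1,\dots,S_n$, standard kernel-smoothing arguments (Taylor expansion of $s\mapsto\mathbb{E}[v(\bm{Z})\mid S=s]$ to second order, using Assumptions~\ref{assum_kernel}, \ref{assum_ps}, \ref{assum_pzs}) give a bias of order $\gamma_n^2$ and, since $v$ is bounded by $1$, a Bernstein/Hoeffding-type fluctuation of order $1/\sqrt{n\gamma_n}$ around the conditional mean; the normalization $\sum_{j\neq i}K_\gamma(S_j,S_i)\asymp (n-1)\gamma_n p(S_i)$ holds uniformly over $i$ with high probability by a separate concentration argument on the $S_j$'s alone. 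The key technical device for passing from a \emph{fixed} $v$ to the \emph{supremum} over $v\in\mathcal{V}$---which is where $\textup{IPM}_{\mathcal{V}}(\hat{\mbP}^{(-i),\gamma}_{\bm{Z}\mid S=S_i},\hat{\mbP}^{(-i)}_{\bm{Z}})=\sup_{v\in\mathcal{V}}|W_i(v)|$ enters---is a chaining / covering-number bound: discretize $\mathcal{V}$ at scale $\sqrt{\gamma_n/n}$ in $\|\cdot\|_\infty$, apply the pointwise concentration and a union bound over the $\mathcal{N}(\sqrt{\gamma_n/n},\mathcal{V},\|\cdot\|_\infty)$-many centers, and control the discretization residual trivially since replacing $v$ by an $\epsilon$-close $v'$ changes $W_i(v)$ by at most $2\epsilon$. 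This yields, uniformly in $i$, $|\,\textup{IPM}_{\mathcal{V}}(\hat{\mbP}^{(-i),\gamma}_{\bm{Z}\mid S=S_i},\hat{\mbP}^{(-i)}_{\bm{Z}}) - \textup{IPM}_{\mathcal{V}}(\mbP_{\bm{Z}\mid S=S_i},\mbP_{\bm{Z}})\,| \lesssim \epsilon_n$ with the stated probability (the $\log n$ factor and the $1/n$ failure probabilities come from the union bound over $i\in[n]$ and from converting the sub-exponential tail into a high-probability statement).

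Then I would average over $i$: $\hat{\textup{EIPM}}^{\gamma_n}_{\mathcal{V}}(\bm{Z};S) = \frac{1}{n}\sum_i \textup{IPM}_{\mathcal{V}}(\hat{\mbP}^{(-i),\gamma}_{\bm{Z}\mid S=S_i},\hat{\mbP}^{(-i)}_{\bm{Z}})$, and by the triangle inequality the per-$i$ bound immediately gives $|\hat{\textup{EIPM}}^{\gamma_n}_{\mathcal{V}}(\bm{Z};S) - \frac{1}{n}\sum_i \textup{IPM}_{\mathcal{V}}(\mbP_{\bm{Z}\mid S=S_i},\mbP_{\bm{Z}})| \lesssim \epsilon_n$. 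It remains to compare $\frac{1}{n}\sum_i \textup{IPM}_{\mathcal{V}}(\mbP_{\bm{Z}\mid S=S_i},\mbP_{\bm{Z}})$, which is an i.i.d.\ average of the bounded (by $2$) random variable $s\mapsto\textup{IPM}_{\mathcal{V}}(\mbP_{\bm{Z}\mid S=s},\mbP_{\bm{Z}})$ evaluated at $S_i$, with its expectation $\textup{EIPM}_{\mathcal{V}}(\bm{Z};S)$; a one-line Hoeffding bound gives an $O(\sqrt{\log n / n})$ deviation, which is absorbed into $c\epsilon_n$ since $\sqrt{\log n/n} = o(\log n / \sqrt{n\gamma_n})$ under Assumption~\ref{assum_gamma}. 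Collecting the bias term $\gamma_n^2$, the uniform stochastic term $\frac{\log n}{\sqrt{n\gamma_n}}(1+\log\mathcal{N}(\sqrt{\gamma_n/n},\mathcal{V},\|\cdot\|_\infty))^{1/2}$, and the averaging term gives the claimed $\epsilon_n$.

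The main obstacle I anticipate is making the uniform-in-$(i,v)$ control rigorous and getting the \emph{exact} exponents and logarithmic factors in $\epsilon_n$ to come out right: one must carefully handle the random denominators $\sum_{j\neq i}K_\gamma(S_j,S_i)$ (they are not independent across $i$, and they appear inside the weights of \emph{every} $v$ simultaneously), couple the $S$-concentration event with the $\bm{Z}$-concentration event, and choose the covering radius $\sqrt{\gamma_n/n}$ so that the discretization error is of the same order as the stochastic fluctuation rather than dominating it. A secondary subtlety is the nonlinearity of $\sup_v|\cdot|$: it is handled by the elementary inequality $|\sup_v|a_v| - \sup_v|b_v|| \le \sup_v|a_v - b_v|$, which reduces everything to a uniform bound on $W_i(v) - (\mathbb{E}[v(\bm{Z})\mid S=S_i]-\mathbb{E}[v(\bm{Z})])$, but one has to be careful that this uniform bound really does hold on a single high-probability event (not a $v$-dependent one), which is precisely what the covering argument delivers.
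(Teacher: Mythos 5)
Your proposal follows essentially the same route as the paper's proof: fix $i$ and reduce the nonlinearity via $|\sup_v|a_v|-\sup_v|b_v||\le\sup_v|a_v-b_v|$, decompose the Nadaraya--Watson-type functional $W_i(v)$ into a $\gamma_n^2$ bias term (via Taylor expansion of $s\mapsto\mathbb{E}[v(\bm{Z})\mid S=s]$ under Assumptions~\ref{assum_ps}--\ref{assum_pzs}), a stochastic term controlled by a Bernstein bound plus a union bound over an $\sqrt{\gamma_n/n}$-cover of $\mathcal{V}$, and a separate concentration bound on the random denominator $\sum_{j\ne i}K_\gamma(S_j,S_i)$; then union over $i\in[n]$ and finish with a Hoeffding bound for the i.i.d. average of $\textup{IPM}_\mathcal{V}(\mbP_{\bm{Z}\mid S=S_i},\mbP_{\bm{Z}})$ against its expectation. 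The only cosmetic difference is that you propose conditioning on $S_1,\dots,S_n$ before concentrating in $\bm{Z}$, whereas the paper re-normalizes the denominator to $(n-1)\mathbb{E}_S K_{\gamma_n}(S,s)$ and bounds the resulting ratio; both are standard ways to decouple the random weights and lead to the same $\epsilon_n$ and failure probability $4/n$.
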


Theorem \ref{EIPM_conv} provides the statistical convergence rate of the EIPM estimator with respect to the sample size $n.$
If $\mathcal{V}$ consists of a single function,
the error rate becomes a well-known upper bound of the root mean square error of the Nadaraya–Watson non-parametric regression estimator \cite{nadaraya1964estimating, watson1964smooth, rosenblatt1969conditional}.
However, as the size of the discriminator set becomes larger, the error rate becomes slower, which is consistent with the known property when estimating the IPM with finite samples \cite{sriperumbudur2012empirical}.
Note that the model complexity of $\mathcal{V}_{\kappa, 1}$ is much smaller than that of most other sets of discriminators (e.g., the Lipschitz function class), which results in a faster convergence rate.

On the contrary, one can raise a concern regarding the curse of dimensionality when estimating EIPM on high-dimensional representations, which is one of the common challenges associated with kernel estimation methods \cite{bengio2005curse}.
However, the convergence rate does not depend on the dimension of $\bm{Z}$ and hence we are able to handle representations of high dimension.
This is because our estimator is devised to estimate the conditional expectation directly by employing the kernel method only for a sensitive attribute.

\begin{remark}\label{rmk:bandwidth}
    Similar to other kernel methods, the convergence rate of our estimator depends on $\gamma_n$.
    Up to a logarithmic factor, the optimal $\gamma_n$ for given $\mathcal{V}$ is 
    $$\gamma_n^{opt} \propto \left( \frac{1}{n} \log \mathcal{N} \left( \sqrt{\frac{1}{n}}, \mathcal{V}, ||\cdot||_{\infty} \right) \right)^{1/5},$$ 
    which yields
    $$\epsilon_n^{opt} = \frac{\log n}{n^{2/5}} \left( 1 + \log \mathcal{N} \left(\sqrt{\frac{1}{n}}, \mathcal{V}, ||\cdot||_{\infty} \right) \right)^{2/5}.$$ 
\end{remark}

\begin{figure*}[h]
    \centering
    \includegraphics[scale=0.26]{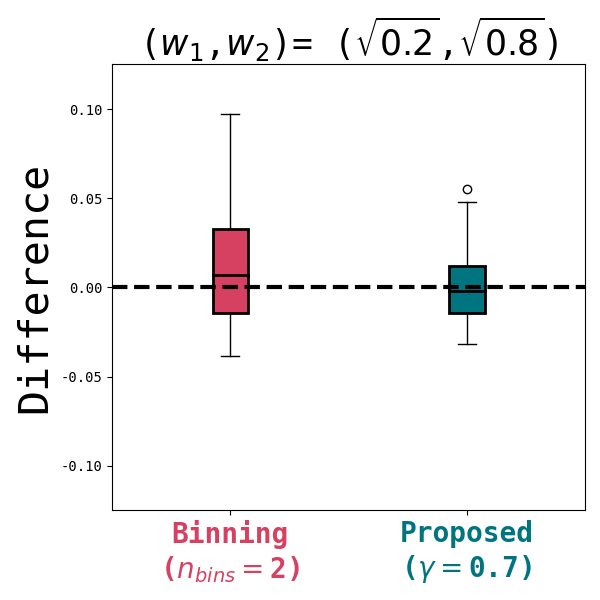}
    \includegraphics[scale=0.26]{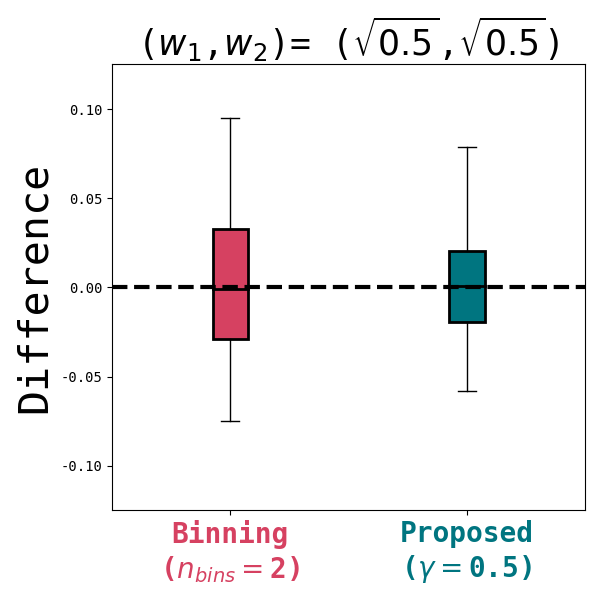}
    \includegraphics[scale=0.26]{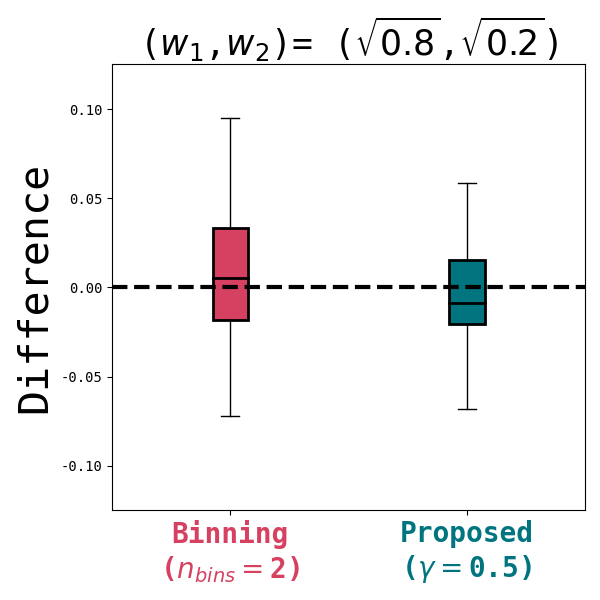}
    \caption{
    \textbf{Simulation results}:
    Box plots of the differences between the true EIPM and the two estimators (the best binning estimator and the best proposed estimator).
    It is clear that our proposed estimator is more accurate. 
    \textbf{(Left)} $w_{1} = \sqrt{0.2}, w_{2} = \sqrt{0.8}.$
    \textbf{(Center)} $w_{1} = \sqrt{0.5}, w_{2} = \sqrt{0.5}.$
    \textbf{(Right)} $w_{1} = \sqrt{0.8}, w_{2} = \sqrt{0.2}.$
    }
    \label{fig:synthetic1}
    \vskip -0.2in
\end{figure*}

\subsection{Theoretical guarantees for the fairness level of the estimated fair representation}

A statistical question is whether the two constraints in (\ref{argmin_1}) and (\ref{argmin_2}) become similar as $n$ increases.
Note that Theorem \ref{EIPM_conv} itself does not guarantee the convergence since the result holds for a fixed $h.$
The following theorem ensures that the two constraints are asymptotically equivalent.
\begin{theorem}[Asymptotically equivalence of the constraints] \label{thm_encoder_set}
    Let $\mathcal{H}$ and $\mathcal{V}$ be a set of encoders and the set of discriminators, respectively, where the elements of $\mathcal{V}$ are Lipschitz  with the Lipschitz constant $L>0$.
    For $\delta>0$, we define 
    $$\mathcal{H}_{\mathcal{V}}(\delta) := \{ h \in \mathcal{H} : \textup{EIPM}_{\mathcal{V}} (h(\bm{X}); S) \leq \delta\}$$
    and 
    $$\hat{\mathcal{H}}^{\gamma}_{\mathcal{V}}(\delta) := \{ h \in \mathcal{H} : \hat{\textup{EIPM}}^{\gamma}_{\mathcal{V}} (h(\bm{X}); S) \leq \delta\}$$
    as the set of encoders whose representation spaces satisfy the fairness constraints defined by $\textup{EIPM}_{\mathcal{V}}$ and $\hat{\textup{EIPM}}^{\gamma}_{\mathcal{V}}$, respectively.
    Suppose that Assumption \ref{assum_kernel}, \ref{assum_ps}, \ref{assum_pzs} and \ref{assum_gamma} hold.
    Then, for every $\delta>0$ and 
    \begin{align*}
        \epsilon_n = \gamma_n^2 + \frac{\log n}{\sqrt{n \gamma_n}} \Bigg( 1 &+ \log \mathcal{N} \left(\frac{1}{2}\sqrt{\frac{\gamma_n}{n}}, \mathcal{V}, ||\cdot||_{\infty}\right) \\
        &+ \log \mathcal{N} \left(\frac{1}{2L}\sqrt{\frac{\gamma_n}{n}}, \mathcal{H}, ||\cdot||_{\infty}\right)
        \Bigg)^{\frac{1}{2}},
    \end{align*}
    we have
    \begin{align*}        
    \mathcal{H}_{\mathcal{V}}(\delta- c\epsilon_n) 
    \subseteq \hat{\mathcal{H}}^{\gamma_n}_{\mathcal{V}}(\delta) 
    \subseteq  \mathcal{H}_{\mathcal{V}}(\delta + c\epsilon_n)
    \end{align*}
    for sufficiently large $n$ with probability at least $1-\frac{4}{n}$, where $c$ is a constant not depending on $n$, $m$ and $\delta$.
\end{theorem}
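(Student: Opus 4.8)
The plan is to reduce the two set inclusions to a single \emph{uniform-in-$h$} strengthening of Theorem~\ref{EIPM_conv}, and to obtain that strengthening by re-running the proof of Theorem~\ref{EIPM_conv} with the discriminator class enlarged so as to absorb $\mathcal{H}$. First I would observe that it suffices to prove that, for all sufficiently large $n$, with probability at least $1-\frac{4}{n}$,
\begin{align*}
\sup_{h \in \mathcal{H}} \left| \hat{\textup{EIPM}}^{\gamma_n}_{\mathcal{V}} (h(\bm{X}); S) - \textup{EIPM}_{\mathcal{V}} (h(\bm{X}); S) \right| < c\,\epsilon_n ,
\end{align*}
with $\epsilon_n$ as in the statement and $c$ not depending on $n$, $m$ or $\delta$. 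On this event both inclusions are immediate: if $\textup{EIPM}_{\mathcal{V}}(h(\bm{X});S)\le \delta-c\epsilon_n$ then $\hat{\textup{EIPM}}^{\gamma_n}_{\mathcal{V}}(h(\bm{X});S)<\delta$, which gives $\mathcal{H}_{\mathcal{V}}(\delta-c\epsilon_n)\subseteq \hat{\mathcal{H}}^{\gamma_n}_{\mathcal{V}}(\delta)$; and if $\hat{\textup{EIPM}}^{\gamma_n}_{\mathcal{V}}(h(\bm{X});S)\le\delta$ then $\textup{EIPM}_{\mathcal{V}}(h(\bm{X});S)<\delta+c\epsilon_n$, which gives $\hat{\mathcal{H}}^{\gamma_n}_{\mathcal{V}}(\delta)\subseteq\mathcal{H}_{\mathcal{V}}(\delta+c\epsilon_n)$. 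Since $\delta$ enters the two encoder sets only through these thresholds, $c$ is automatically free of $\delta$.

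For the uniform bound I would exploit that both $\textup{EIPM}_{\mathcal{V}}(h(\bm{X});S)$ and $\hat{\textup{EIPM}}^{\gamma_n}_{\mathcal{V}}(h(\bm{X});S)$ depend on the encoder only through the composed maps $\bm{x}\mapsto v(h(\bm{x}))$, $v\in\mathcal{V}$. Because every $v\in\mathcal{V}$ is $L$-Lipschitz, the governing index class is $\mathcal{G}:=\{v\circ h : v\in\mathcal{V},\, h\in\mathcal{H}\}$, and the bound $\|v\circ h-v'\circ h'\|_\infty\le L\|h-h'\|_\infty+\|v-v'\|_\infty$ yields the factorization
\begin{align*}
\mathcal{N}(\eta,\mathcal{G},||\cdot||_\infty)\ \le\ \mathcal{N}\!\left(\tfrac{\eta}{2},\mathcal{V},||\cdot||_\infty\right)\cdot\mathcal{N}\!\left(\tfrac{\eta}{2L},\mathcal{H},||\cdot||_\infty\right).
\end{align*}
The proof of Theorem~\ref{EIPM_conv} splits the error into (i) a deterministic smoothing-bias term of order $\gamma_n^2$ that uses only $\|v\|_\infty\le 1$ together with the bounded second $s$-derivative of $p(\bm{x},s)$ from Assumption~\ref{assum_pzs}, and is therefore already uniform over $h\in\mathcal{H}$; and (ii) a stochastic term controlled by a maximal inequality over an $||\cdot||_\infty$-net of $\mathcal{V}$ at resolution $\sqrt{\gamma_n/n}$, which enters through the factor $(1+\log\mathcal{N}(\sqrt{\gamma_n/n},\mathcal{V},||\cdot||_\infty))^{1/2}$. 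I would re-run step (ii) with the joint net of $\mathcal{V}\times\mathcal{H}$ in place of the net of $\mathcal{V}$, using the Lipschitz-in-$h$ stability $|\textup{IPM}_{\mathcal{V}}(\hat{\mbP}^{(-i),\gamma}_{h(\bm{X})|S=S_i},\hat{\mbP}^{(-i)}_{h(\bm{X})})-\textup{IPM}_{\mathcal{V}}(\hat{\mbP}^{(-i),\gamma}_{h'(\bm{X})|S=S_i},\hat{\mbP}^{(-i)}_{h'(\bm{X})})|\le 2L\|h-h'\|_\infty$ — which follows from $\sum_{j\neq i}|\hat{w}_\gamma(j;i)-\tfrac{1}{n-1}|\le 2$ — together with its population analogue, to interpolate from the net to all of $\mathcal{H}$. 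This replaces $\log\mathcal{N}(\sqrt{\gamma_n/n},\mathcal{V},||\cdot||_\infty)$ by $\log\mathcal{N}(\sqrt{\gamma_n/n},\mathcal{G},||\cdot||_\infty)$, which by the factorization above is at most $\log\mathcal{N}(\tfrac12\sqrt{\gamma_n/n},\mathcal{V},||\cdot||_\infty)+\log\mathcal{N}(\tfrac1{2L}\sqrt{\gamma_n/n},\mathcal{H},||\cdot||_\infty)$ — exactly the combination in the stated $\epsilon_n$. The remaining hypotheses (Assumptions~\ref{assum_kernel}, \ref{assum_ps} and \ref{assum_gamma}) enter precisely as in Theorem~\ref{EIPM_conv}.

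The step I expect to be the main obstacle is ensuring the failure probability stays $\frac{4}{n}$ rather than being multiplied by the covering number of $\mathcal{H}$: the cardinality of the joint net must enter only \emph{logarithmically} and \emph{inside} the rate, through the $\sqrt{1+\log\mathcal{N}}$ factor. This requires the concentration step of Theorem~\ref{EIPM_conv} to be organized as a single union bound over the joint net with deviation level scaling like $\sqrt{\log\mathcal{N}}$ — which is exactly the form in which it is proved there — rather than as a separate invocation of Theorem~\ref{EIPM_conv} for each element of a net of $\mathcal{H}$. Everything else, namely the uniformity of the $\gamma_n^2$ bias over $h$ and the $\delta$-independence of all constants, is then routine.
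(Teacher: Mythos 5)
Your proposal matches the paper's proof closely: reducing the two inclusions to a uniform-in-$h$ version of Theorem~\ref{EIPM_conv}, enlarging the discriminator class to $\mathcal{V}\circ\mathcal{H}$, and controlling its covering number via the Lipschitz factorization $\mathcal{N}(\eta,\mathcal{V}\circ\mathcal{H},\|\cdot\|_\infty)\le\mathcal{N}(\eta/2,\mathcal{V},\|\cdot\|_\infty)\,\mathcal{N}(\eta/(2L),\mathcal{H},\|\cdot\|_\infty)$ is exactly Lemma~\ref{cover_composite} together with the display~\eqref{eq_suph_tmp1} in the paper's argument. The Lipschitz-in-$h$ stability you flag for interpolating from the net of $\mathcal{H}$ to all of $\mathcal{H}$ is likewise the device used in the paper's second concentration step (the Hoeffding union bound over the $\tfrac{1}{2L}\sqrt{\gamma_n/n}$-cover of $\mathcal{H}$), so the plan is sound in every respect.
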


Theorem \ref{thm_encoder_set} implies that for any $\delta>0$, $\hat{\mathcal{H}}^{\gamma}_{\mathcal{V}}(\delta)$ converges to $\mathcal{H}_{\mathcal{V}}(\delta)$, with the specified convergence rate.
Compared to Theorem \ref{EIPM_conv}, the model complexity of $\mathcal{H}$ is additionally incorporated in the convergence rate.
This is due to the necessity of uniform convergence of $\hat{\textup{EIPM}}^{\gamma}_{\mathcal{V}}(h(\bm{X}), S)$
to $\textup{EIPM}_{\mathcal{V}}(h(\bm{X}), S)$ with respect to $h\in \mathcal{H}.$
With Theorem \ref{thm_encoder_set}, we can ensures that every encoder in $\hat{\mathcal{H}}^{\gamma}_{\mathcal{V}}(\delta)$ has an EIPM value of at most $\delta + c \epsilon_n,$ and so does the estimated fair representation by FREM.

\section{Experiments}
\label{exps}

This section presents the results of numerical experiments.
In Section \ref{sec:simul}, we provide empirical evidences for 
inferior performances of the estimation of EIPM by use of the simple binning technique, which
supports that our proposed estimator is necessary.
In Section \ref{sec:realdata}, we investigate the performance of FREM 
compared with existing state-of-art algorithms
by analyzing several benchmark tabular datasets and graph datasets, respectively.
In Section \ref{sec:expimp}, we summarize the implications of the experimental studies.
In all experiments, we use the RBF kernel function with scale parameter $\sigma > 0$ for $\kappa$
(i.e., $\kappa(\bm{z}, \bm{z}^{\prime}) = \exp(-\|\bm{z}-\bm{z}^{\prime}\|^2 / 2\sigma^2 )$.

\subsection{Synthetic dataset: estimation of EIPM}\label{sec:simul}

We empirically compare our proposed estimator of EIPM with those obtained through the simple binning technique
by analyzing a synthetic dataset.
For the joint distribution of $\bm{X} = [X^{(1)}, X^{(2)}]^{\top}$ and $S,$ we consider:
$$
\begin{pmatrix}
    S \\
    X^{(1)} \\
    X^{(2)} \\
\end{pmatrix}
\sim 
N \left( 
\begin{bmatrix}
    0   \\
    0 \\
    0 \\
\end{bmatrix},
\begin{bmatrix}
    1 & \rho & 0  \\
    \rho & 1 & 0  \\
    0 & 0 & 1  \\
\end{bmatrix}
\right),
$$
where $\rho \ge 0$ is a given correlation between $X^{(1)}$ and $S.$
For the encoder function, we consider a linear functions:
$$ h(\cdot) = \bm{w}^{\top} \cdot, \bm{w} = \left[ w_{1}, w_{2} \right]^{\top}, \Vert \bm{w} \Vert_{2} = 1.$$
Our goal is to estimate $\textup{EIPM}_{\mathcal{V}_{\kappa,1}}(h(\bm{X}); S)$.
Using the fact that the marginal and conditional distributions of the representation are also Gaussian distributions, we can obtain the true EIPM value analytically, whose details are given in Appendix \ref{appen:syn_true_1}.

For the simulation, we consider the three true encoder functions corresponding to 
$$(w_1, w_2) \in \{ (\sqrt{0.2}, \sqrt{0.8}), (\sqrt{0.5}, \sqrt{0.5}), (\sqrt{0.8}, \sqrt{0.2}) \}$$
with the fixed correlation $\rho = 0.4$. 
Synthetic data of the size $n = 100$ are generated from each of the three true  probabilistic models and the proposed estimator $\hat{\textup{EIPM}}_{\mathcal{V}_{\kappa,1}}^{\gamma} (h(\bm{X}); S)$ is computed using Proposition \ref{pro_estimator_derivation}
with the bandwidth $\gamma$ selected from $\{ 0.3, 0.5, 0.7 \}.$
We also consider the binning estimator, that is, we first categorize $S$ by quantiles and then calculate $\hat{\textup{EIPM}}_{\mathcal{V}_{\kappa,1}}^{\textup{cat}} (h(\bm{X}); S)$ in equation (\ref{discrete_empdis}).
We vary the number of bins ($n_{bins}$) over $\{ 2, 3, 4 \}.$  

For each probabilistic model,
we generate synthetic datasets 100 times and obtain 100 EIPM estimates.
Fig. \ref{fig:synthetic1} displays the box plots of the 100 differences of
the estimated and true EIPM values for each probabilistic model with $n_{bins}$ and $\gamma$
selected on the test data.
The biases, MAE (Mean Absolute Error)s and RMSE (Root Mean Squared Error)s of the estimates with various $n_{\textup{bins}}$ or $\gamma$s are provided in Table \ref{tab:synthetic1} of Appendix \ref{Appen:syn_result_1}.
First of all, the results confirm that our proposed estimator dominates the binning estimator with large margins.
In addition, another interesting observation is that the bias of the binning keeps increasing as $n_{bins}$ increases. 
One would think that a large $n_{bins}$ leads over-parametrization which results in small bias but large variance.
This conjecture, however, is not valid for the binning estimator. 
This would be partly due to the non-linearity of EIPM with respect to the conditional distributions.
Additional results on various representation dimensions and number of samples are provided in Appendix \ref{Appen:syn_multi}.

\subsection{Real data analysis}\label{sec:realdata}

We compare the performance of FREM with existing state-of-the-art baselines by analyzing three benchmark real datasets
- two tabular datasets and two graph datasets. 
To save the space, we present the results only for the two tabular data in the main manuscript and defer the results for the graph data to Appendix \ref{appen:exp_results}.
Even though the graph data are more complex, the results are similar to those for the tabular data, which confirms that FREM works well regardless of data domain.

\textbf{Datasets}
(1) Classification: We use \textsc{Adult}\footnote{\url{https://archive.ics.uci.edu/ml/machine-learning-databases/adult}} and two graph datasets, \textsc{Pokec-n} and \textsc{Pokec-z}, which are constructed from Slovakia's social network called Pokec\footnote{\url{https://snap.stanford.edu/data/soc-Pokec.html}}.
In \textsc{Adult} dataset, the target label is whether the income of an individual exceeds $50\$$k or not and the continuous sensitive attribute is the age.
In \textsc{Pokec-n} and \textsc{Pokec-z} datasets, the target label is the (binarized) working field of an individual and the continuous sensitive attribute is the age.
(2) Regression: We use \textsc{Crime} dataset\footnote{\url{https://archive.ics.uci.edu/ml/datasets/communities+and+crime}}.
In \textsc{Crime} dataset, the target response is the number of crimes per population in US communities and the continuous sensitive attribute is the black group ratio of a given community. 
Details about the datasets including an example of the dataset bias are provided in Appendix \ref{appen:data}.

\textbf{Performance measures of prediction models}
To evaluate the prediction performance, we use two measures for each task.
For classification, we consider accuracy (\texttt{Acc}) and average precision (\texttt{AP})\footnote{\texttt{AP} is the area under the precision-recall curve.}.
For regression, we use the mean squared error (\texttt{MSE}) and the mean absolute error (\texttt{MAE}).

For fairness evaluation,
we mainly employ the (kernel-based) Generalized Demographic Parity ($\Delta \texttt{GDP}$, \cite{jiang2021generalized})
on the test data.
If it is exactly zero, $\hat{Y}$ is independent of $S,$ implying perfect fairness.
In addition, as alternatives to $\Delta \texttt{GDP},$ we consider two additional fairness measures: $\Delta \texttt{HGR}$ \cite{mary2019fairness} and $\texttt{MI}(\hat{Y}, S)$ \cite{MacKay2003}.
$\Delta \texttt{GDP}$ and $\Delta \texttt{HGR}$ are calculated directly following the estimators provided by \cite{jiang2021generalized} and \cite{mary2019fairness}, respectively, and
see Appendix \ref{sec:appen-fairness_measures} for computation of $\texttt{MI}(\hat{Y}, S).$

\begin{figure*}[ht]
    \centering
    \fbox{
    \includegraphics[width=0.23\textwidth]{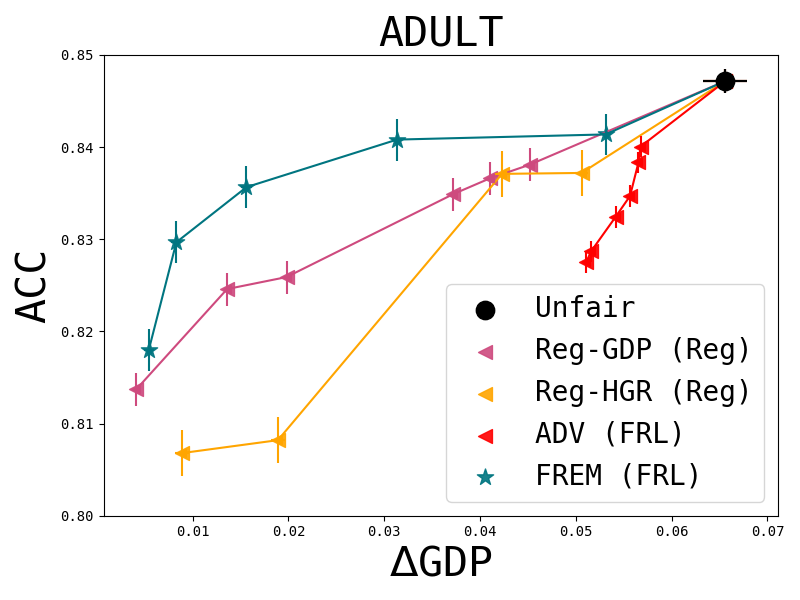}
    }
    \fbox{
    \includegraphics[width=0.23\textwidth]{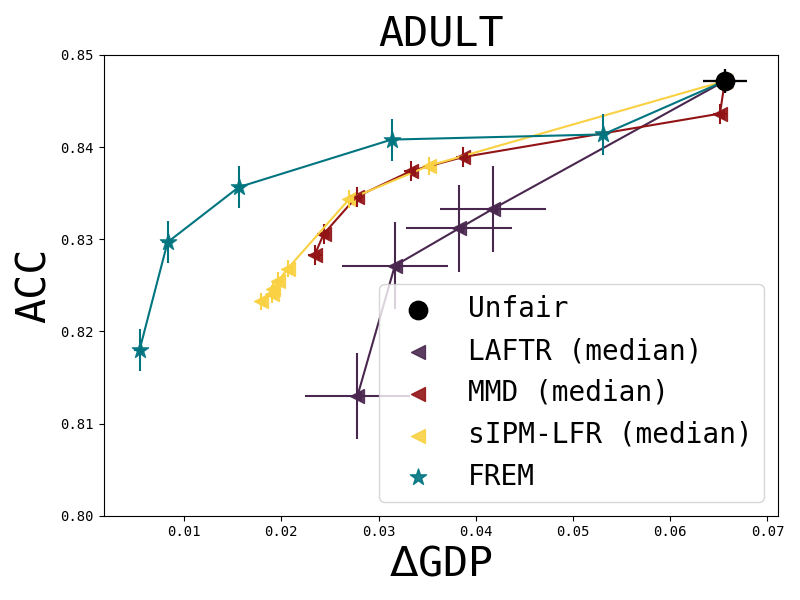}
    \includegraphics[width=0.23\textwidth]{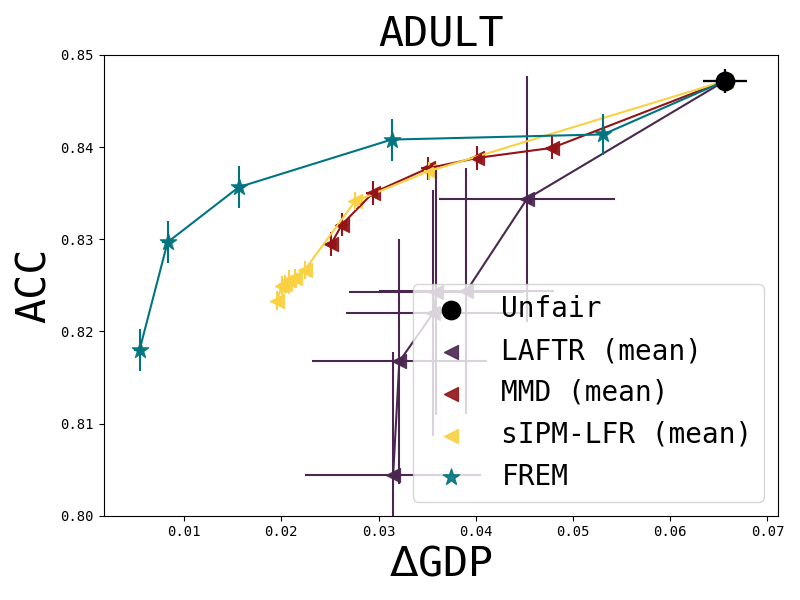}
    }
    \\~\\
    \fbox{
    \includegraphics[width=0.23\textwidth]{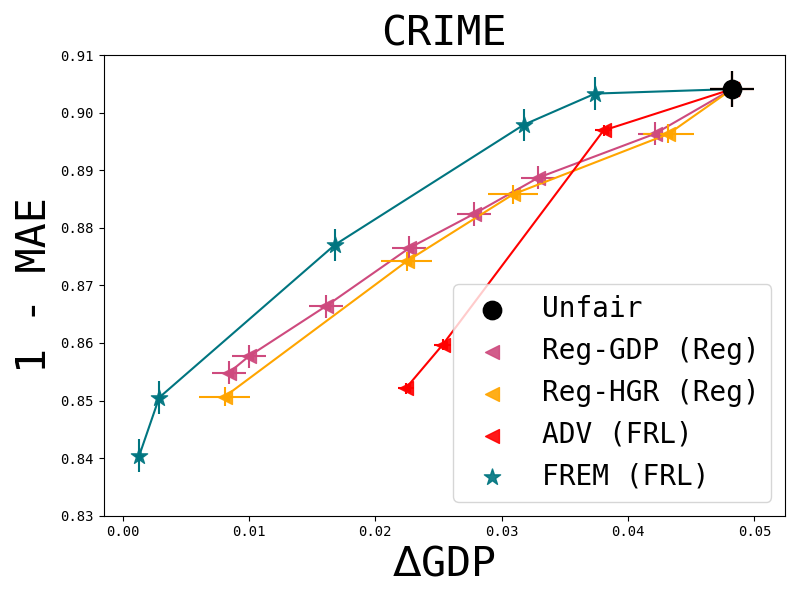}
    }
    \fbox{
    \includegraphics[width=0.23\textwidth]{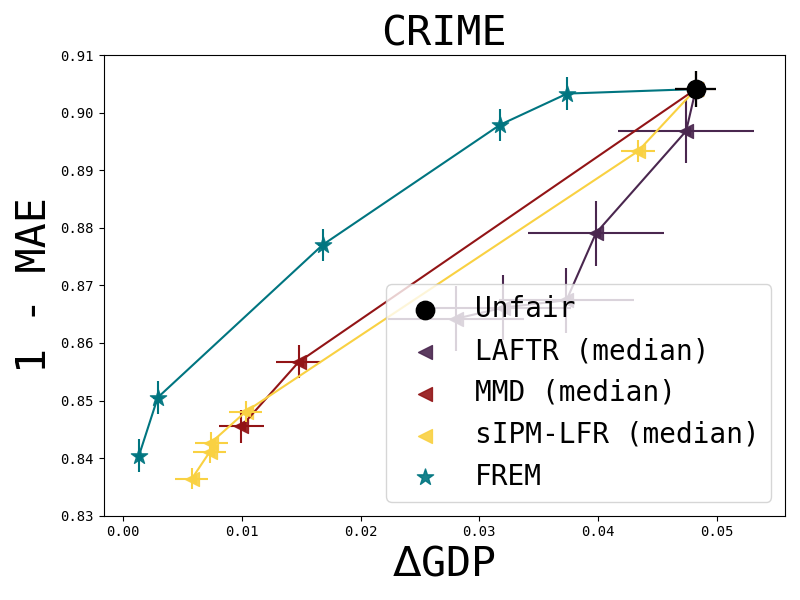}
    \includegraphics[width=0.23\textwidth]{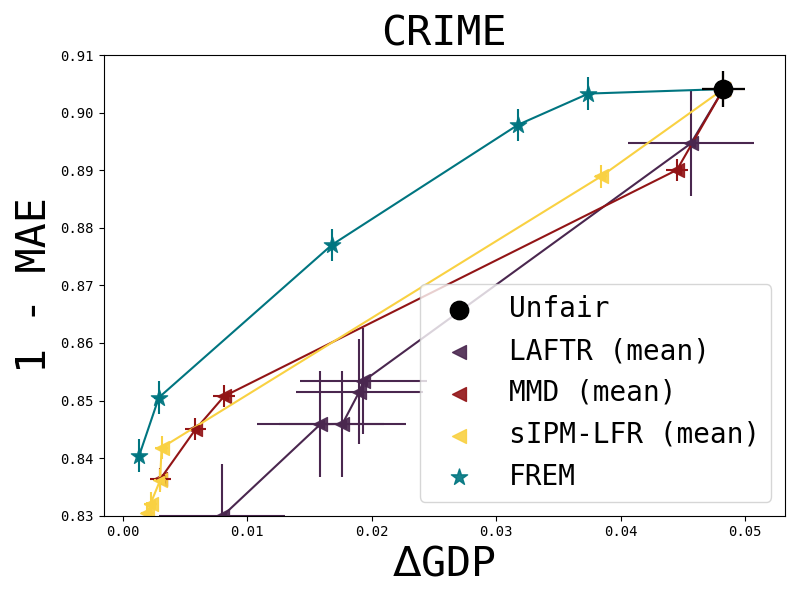}
    }
    \caption{\textbf{Demographic Parity}: Pareto-front lines for fairness-prediction trade-off.
    (Top)
    \textsc{Adult} dataset, $\Delta \texttt{GDP}$ vs. \texttt{ACC}.
    (Bottom)
    \textsc{Crime} dataset, $\Delta \texttt{GDP}$ vs. 1 - \texttt{MAE}.
    $\bullet$: Unfair,
    \textcolor{Regkernelcolor}{--$\blacktriangleleft$--}: {Reg-GDP},
    \textcolor{orange}{--$\blacktriangleleft$--}: {Reg-HGR},
    \textcolor{red}{--$\blacktriangleleft$--}: ADV,
    \textcolor{sIPMcolor}{--$\blacktriangleleft$--}: sIPM-LFR,
    \textcolor{MMDcolor}{--$\blacktriangleleft$--}: MMD,
    \textcolor{LAFTRcolor}{--$\blacktriangleleft$--}: LAFTR,
    \textcolor{Ourcolor}{--$\mathbf{\filledstar}$--}: FREM.
    }
    \label{fig:tradeoff-dp}
    \vskip -0.2in
\end{figure*}

\textbf{Implementation details}
For the encoder network $h,$ we adopt a two-layer neural network with $m=50$, the selu activation \cite{NIPS2017_5d44ee6f} and the size of nodes at the hidden layer being $50$.
For the prediction head, we use a linear layer on the $50$-dimensional representation space.
This architecture is consistent with those in the previous studies that have dealt with continuous sensitive attributes \cite{mary2019fairness, jiang2021generalized}.
For FREM, we use the RBF kernel function $\kappa$ with scale parameter $\sigma = 1.0$ after the max-min scaling of input data.

We split the entire dataset randomly into $80\% / 20\%$ for training/test datasets, and 
repeat it five times.
The average performance with the standard error is calculated on the test dataset over the five trials.
Additionally, we randomly extract $20\%$ from the training dataset for the validation dataset to select the bandwidth $\gamma.$
After selecting the bandwidth, we add the validation data back into the training data, and 
train a fair model with the FREM algorithm. 

In all cases, we use the min-max scaling to standardize $S$ to the range $[0, 1],$ and train the networks for $200$ epochs, after which we evaluate the performance of the model on the test data.
For more details with \texttt{Pytorch}-style pseudo-code, refer to Appendix \ref{appen:impl}.

\subsubsection{Comparison with existing fair algorithms for continuous sensitive attributes}\label{exp:vsregs}

For baseline methods, we consider Reg-GDP \cite{jiang2021generalized} and Reg-HGR \cite{mary2019fairness} which
are algorithms to learn fair prediction models with respect to a given continuous sensitive attribute.
These approaches employ specific regularizers that serve as a proxy of $\Delta \texttt{GDP}$ and $\Delta \texttt{HGR},$ respectively.
In addition, we consider an adversarial learning approach for continuous sensitive attributes, ADV, which is an ad-hoc modification of \cite{10.1145/3278721.3278779}.
ADV trains an encoder to make it difficult to predict $S$ from $\bm{Z}.$
Details of these algorithms are provided in Appendix \ref{appen:ADV}.

We display the Pareto-front lines for \textsc{Adult} and \textsc{Crime} datasets
to show the fairness-prediction trade-off, as depicted in the left side of Fig. \ref{fig:tradeoff-dp}.
FREM clearly outperforms all baseline methods consistently on both datasets.
In particular, superior performance of FREM over ADV suggests that theoretical soundness 
is desirable for practical purposes.
For \texttt{AP} and \texttt{MSE}, we report the results in Fig. \ref{fig:tradeoff-dp-appen} of Appendix, which still show consistent outperformance of FREM.
For the two additional fairness measures, i.e., $\Delta \texttt{HGR}$ and $\texttt{MI}(\bm{Z}, S),$ the Pareto-front lines are presented in Table \ref{tab:other} and Fig. \ref{fig:tradeoff-hgr}-\ref{fig:tradeoff-mi2} of Appendix, whose implications are similar - FREM is superior.
The results for equal opportunity can be found in Fig. \ref{fig:tradeoff-eqopp} of Appendix \ref{appen:exp_results}.


An interesting but mysterious observation from Fig. \ref{fig:tradeoff-dp} is that FREM outperforms the regularization methods (i.e., Reg-GDP and Reg-HGR).
Note that the regularization methods learn a fair prediction model directly without considering the fairness
of the representation and thus are expected to be better for prediction. 
In fact, this conjecture is true for training data.
Fig. \ref{fig:tradeoff-training} of Appendix \ref{frem_vs_gdp} shows that Reg-GDP has a lower value of the training loss than that of FREM.
Higher training performance but lower test performance of the regularization methods suggests that overfitting occurs on the tabular datasets.

In contrast, for the graph datasets, FREM and the regularization methods perform similarly
on both the test and training datasets
(see Fig. \ref{fig:graph_trade_off}-\ref{fig:graph_trade_off3} in Appendix \ref{sec:appen-graph_data} for the test datasets and Fig. \ref{fig:tradeoff-training_graph} in Appendix \ref{frem_vs_gdp} for the training datasets).
These results suggest that overfitting would not occur for the graph datasets.
Note that tabular datasets are usually simpler objects than graph datasets and thus overfitting would occur more easily for tabular datasets.

Along with prediction performance, we compare the fairness of the representations learned by FREM and the regularization methods, because the fairness of the representation
is related to the fairness of downstream tasks as proved in Theorem \ref{GDP_upper}.
As shown in Fig. \ref{fig:mi_z_s_reg} and \ref{fig:mi_z_s_reg2} in Appendix \ref{frem_vs_reg_mi_z_s}, FREM achieves better trade-offs between $\texttt{MI}(\bm{Z}, S)$ (i.e., fairness level of representation) and accuracy, outperforming the regularization methods on both tabular and graph datasets.

\subsubsection{Comparison with existing FRL methods for binary sensitive attributes}\label{exp:vsfrls}

We compare FREM with state-of-art FRL algorithms designed for binary sensitive attributes.
The baseline FRL methods for binary sensitive attributes considered in the experiments are LAFTR \cite{Madras2018LearningAF}, MMD \cite{deka2023mmd}, and sIPM-LFR \cite{kim2022learning}.
We categorize the given continuous sensitive attribute into a binary one ($0$ and $1$) by binning, and apply the three FRL methods on the 
binned binary sensitive attributes.
The purpose of this comparison is to demonstrate that FRL for binary sensitive attributes do not generalize well to continuous sensitive attributes, which indicates that fair algorithms specifically designed for continuous sensitive attributes are necessary and FREM is such a learning algorithm.
  
The right side of Fig. \ref{fig:tradeoff-dp} shows that FREM outperforms binary FRL methods with large margins. 
For binary FRL methods, we observe that $\Delta \texttt{GDP}$ in \textsc{Adult} dataset is not reduced further after a certain level of fairness. 
Even though this result is not surprising since reducing $\Delta \texttt{DP}$ does not guarantee to reduce $\Delta \texttt{GDP},$ it amply demonstrates that binary FRL algorithms are not suitable for continuous sensitive attributes.
We present the comparison results for \texttt{AP} and \texttt{MSE} in Fig. \ref{fig:tradeoff-dp-appen} of Appendix, which still shows the outperformance of FREM.

\begin{figure}[ht]
\vskip -0.05in
    \centering
    \includegraphics[scale=0.20]{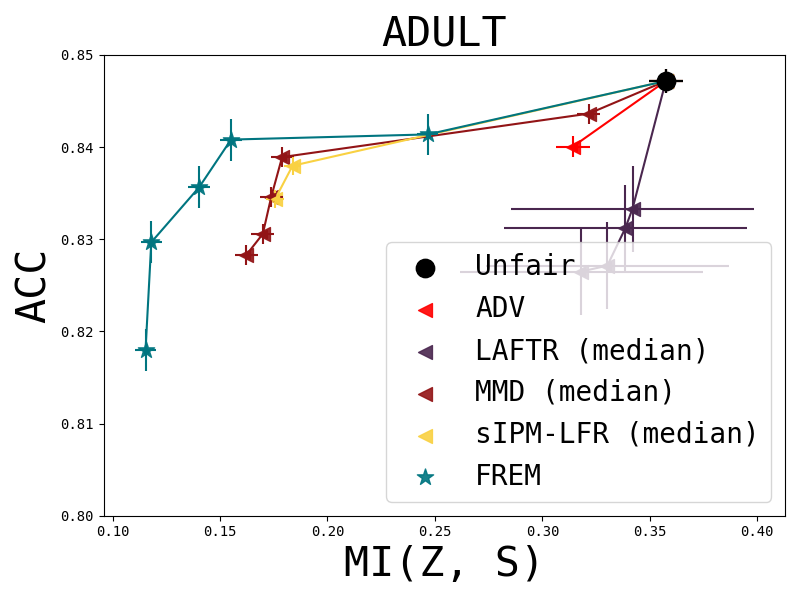}
    \includegraphics[scale=0.20]{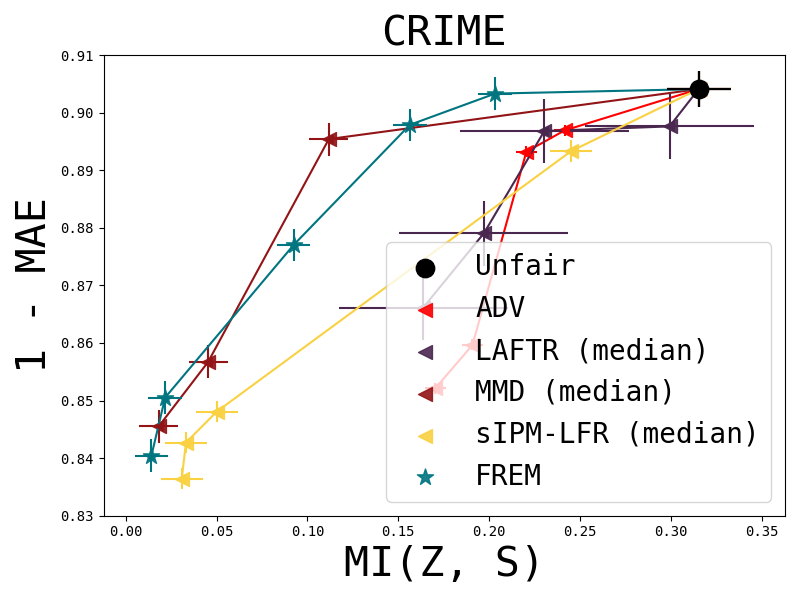}
    \caption{
    \textbf{Comparison of FRL methods in terms of fairness of learned representations}: Mutual information between $\bm{Z}$ and $S$ for five FRL methods.
    (Left) \textsc{Adult} (Right) \textsc{Crime}.
    Categorized by median for LAFTR, MMD, and sIPM-LFR.
    }
    \label{fig:compare_mi_z_s}
\end{figure}

Not only focusing on the fairness of the final prediction $\hat{Y},$ 
we also investigate how fair the learned representation $\bm{Z}$ is.
We consider the Mutual Information (MI) \cite{MacKay2003} as the measure of fairness of the learned representation.
We compare FREM with four FRL methods in terms of the trade-off between the prediction performance (e.g., \texttt{Acc}) and  fairness of the representation (i.e., $\texttt{MI}(\bm{Z}, S)$). 
Fig. \ref{fig:compare_mi_z_s} clearly shows that FREM is good at learning fair representations.

\subsubsection{Comparison with FRL methods with multinary sensitive attributes}\label{exp:vsmmds}

One may argue that using existing FRL methods with a binned multinary sensitive attribute would
improve the FRL methods with the binned binary sensitive attribute. For FRL methods involving
adversarial learning (i.e., the maximization step with respect to discriminators)
such as LAFTR \cite{Madras2018LearningAF} and sIPM-LFR \cite{kim2022learning}, however, is computationally 
demanding and unstable since multiple adversarial learnings, each of which corresponds to each category of a multinary sensitive attribute, are required. Thus, we only consider  MMD \cite{deka2023mmd} with a binned multinary sensitive attribute
as a competitor of FREM.
For a given number of bins $J,$ we first make $J$-many bins
$b_{1}, \cdots, b_{J}$ based on the corresponding quantiles.
Then, we use the fairness regularization term defined by $\sum_{j=1}^{J} \textup{MMD}(\mathbb{P}_{\bm{Z}}, \mathbb{P}_{\bm{Z}|S \in b_{j}}).$
Note that the main difference between FREM and MMD with the binned multinary sensitive attribute
is whether the kernel smoothing is used or not.

\begin{figure}[h]
    \centering
    \includegraphics[width=0.47\linewidth]{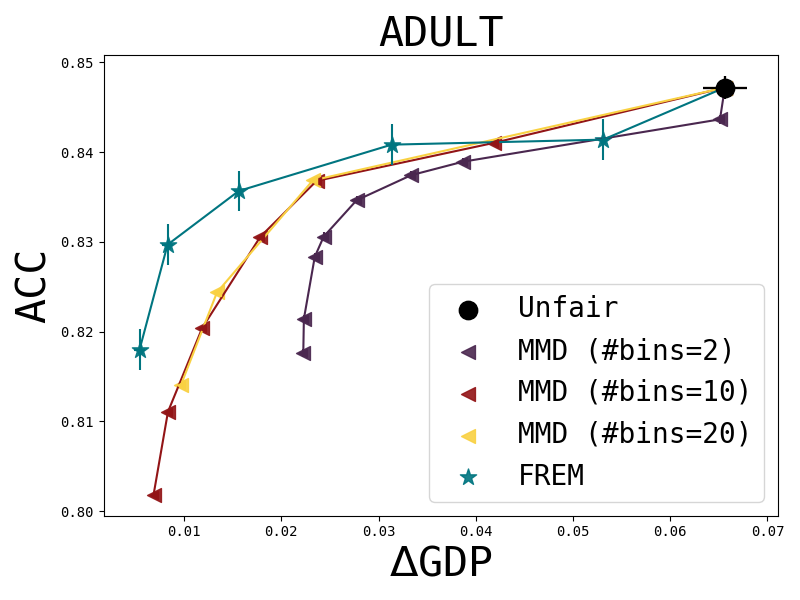}
    \includegraphics[width=0.47\linewidth]{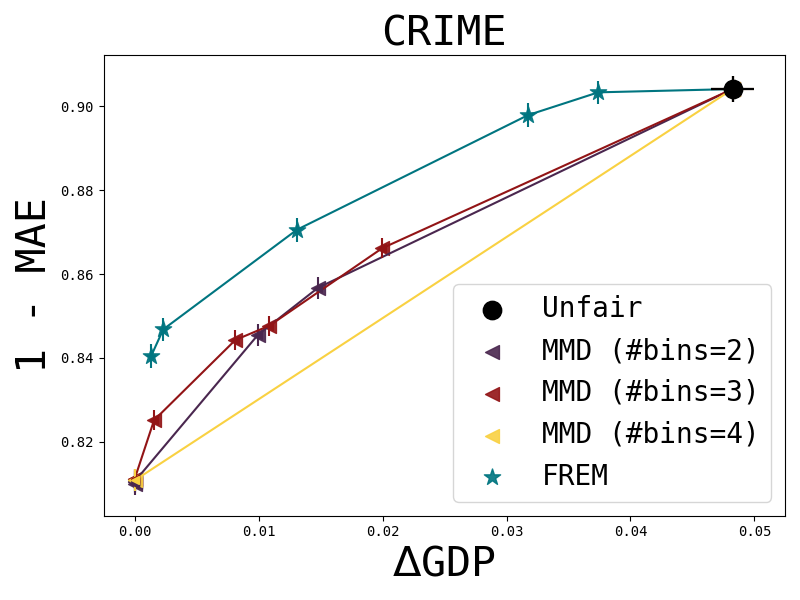}
    \caption{ {Comparison between FREM and MMD with various numbers of bins.
    (Left) \textsc{Adult}, (Right) \textsc{Crime}.}}
    \label{fig:mmds}
    \vskip -0.1in
\end{figure}

The results are presented in Fig. \ref{fig:mmds}, which show that FREM still outperforms MMD with multinary sensitive attributes.
Furthermore, the performance of MMD is not improved even when the number of bins exceed a specific value (i.e., 10 for \textsc{Adult}, 3 for \textsc{Crime}), which
implies that our proposed smoothing technique is necessary.

\subsection{Ablation studies for the choice of kernel functions}\label{exp:vary_kernel}

\textbf{(1)  Choice of kernel $K_{\gamma}$:}
As theoretically discussed in Section \ref{subsec_defEIPM_est}, any kernel function satisfying Assumption \ref{assum_kernel} can be employed for $K_{\gamma}.$
To investigate how much the choice of kernel affect finite sample performances,
we compare the three kernels: (i) RBF, (ii) Triangular, and (iii) Epanechnikov, in terms of the fairness-prediction trade-off.
The results are presented in Fig. \ref{fig:vary_kernel}, which show that the influence of the choice of kernel is minimal.
Auxillary results with other prediction measures (i.e., \texttt{AP} and \texttt{MSE}) and fairness measures (i.e., $\Delta \texttt{HGR}$ and \texttt{MI}) are given in Fig. \ref{fig:tradeoff-kernel_1} and \ref{fig:tradeoff-kernel_2} of Appendix.

\begin{figure}[ht]
    \vskip -0.1in
    \centering
    \includegraphics[scale=0.20]{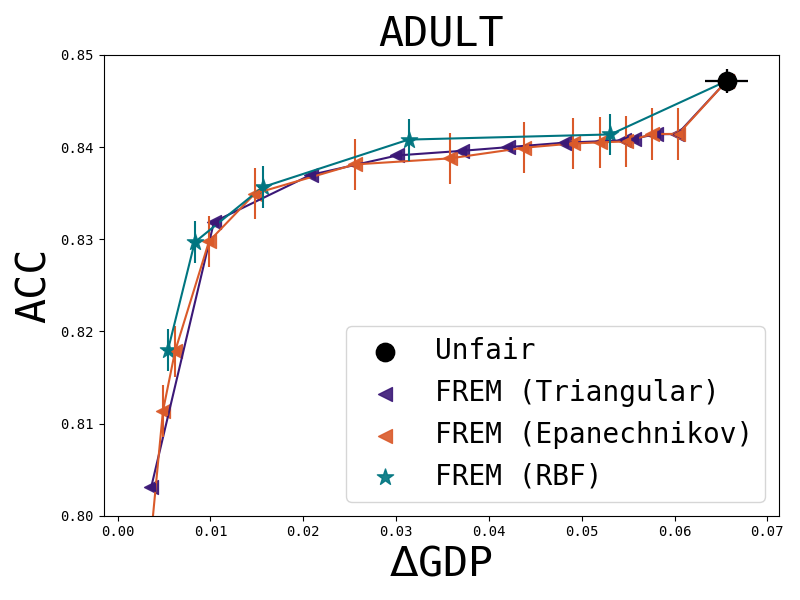}
    \includegraphics[scale=0.20]{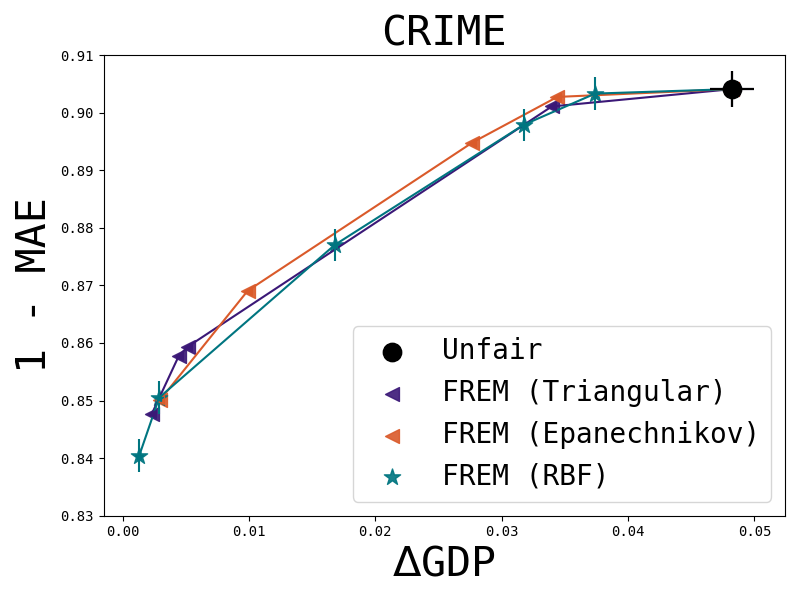}
    \caption{ \textbf{FREM with various kernels}: Comparison of kernels $K_{\gamma}.$
    (Left) \textsc{Adult} (Right) \textsc{Crime}.
    $\bullet$: Unfair,
    \textcolor{Trianglecolor}{--$\blacktriangleleft$--}: FREM with Triangular kernel,
    \textcolor{Epanecolor}{--$\blacktriangleleft$--}: FREM with Epanechnikov kernel,
    \textcolor{Ourcolor}{--$\mathbf{\filledstar}$--}: FREM with RBF kernel.
    }
    \label{fig:vary_kernel}
    \vskip -0.1in
\end{figure}

\textbf{(2) Choice of Kernel $\kappa$ in MMD:}
In addition, we analyze the impact of the choice of kernel used in MMD.
Fig. \ref{fig:tradeoff-kernel_z_1} and \ref{fig:tradeoff-kernel_z_2} of Appendix show that the three 
aforementioned kernels offer similar performances overall. 
That is, FREM is also robust to the choice of kernel in MMD.

\textbf{(3) Choice of the scale parameter $\sigma$ in MMD}
Recall that the results in Sections \ref{exp:vsregs}, \ref{exp:vsfrls}, \ref{exp:vsmmds} are obtained with fixed $\sigma=1.0$. 
To investigate the sensitivity of the FREM to the choice of $\sigma$,
we evaluate the performances of FREM with various values of $\sigma.$
Fig. \ref{fig:bandwidths_rep} of Appendix indicates that that performance of FREM is not sensitive to choice of $\sigma$ in MMD.
Particularly, the choices of $\sigma$ within an appropriate range, such as $[0.8, 1.5],$ yield similar results.

\subsection{Implications of the numerical experiments} \label{sec:expimp}

We can summarize the implications obtained from the numerical studies as follows.

\textbf{1.} The proposed EIPM estimator works well while estimators obtained by the simple binning technique
     are inferior, which implies that the smoothing technique is necessary for accurate estimation of EIPM.  
     
\textbf{2.} FREM with the estimated EIPM learns fair representations successfully. In particular, FREM dominates
      Reg-GDP and Reg-HGR which estimate fair prediction models without learning fair representations, which suggests that FRL is an useful regularization for learning fair models.

\textbf{3.} Existing FRL methods with binned sensitive attributes are not competitive to FREM, which confirms
   that the smoothing technique is a key in the success of FREM.

\textbf{4.} The performance of FREM is not sensitive to the choice of the kernels in EIPM and MMD, and hence
     can be used in practice without much difficulty.

\section{Discussion}
\label{conclusion}

There are several possible future works related to FREM.
We only consider one-dimensional continuous sensitive attributes. Extensions for multivariate continuous or mix-typed (e.g., some are categorical and others are continuous) sensitive attributes would be useful. 
For multivariate sensitive attributes, we should define a fairness measure carefully because requiring all conditional
distributions are similar would be too strong. 

Another issue is to explore other scalable sets of discriminators other than MMD for FREM.
It is known that RKHS usually includes highly smooth functions and thus all the results for FREM would be only valid for smooth
prediction models. It would be useful to construct a set of discriminators such that it includes less smooth functions 
but computation of FREM is feasible.

We develop EIPM based on IPM. We do not claim that IPM is optimal for FRL. We use IPM
mainly because the estimation of EIPM is possible and feasible. 
As discussed earlier, KL or JS divergences would be good alternatives, however, at this point we do not know how to estimate them for continuous sensitive attributes.
MI is also another potential option.
However, its finite sample version (i.e., the estimator) would be computationally difficult to be used in the training phase.
In addition, its theoretical properties are largely unknown.
Searching for other fairness measures for FRL with continuous sensitive attributes is worth pursuing.

\section*{Acknowledgments}
This work was supported by the National Research Foundation of Korea (NRF) grant funded by the Korea government (MSIT) (No. 2022R1A5A7083908)
and 
National Research Foundation of Korea (NRF) grant funded by the Korea government (MSIT) (No. 2020R1A2C3A01003550). 

\bibliography{eipm}
\bibliographystyle{IEEEtran}

\section{Biography Section}
\begin{IEEEbiography}[{\includegraphics[width=1in,height=1.25in,clip,keepaspectratio]{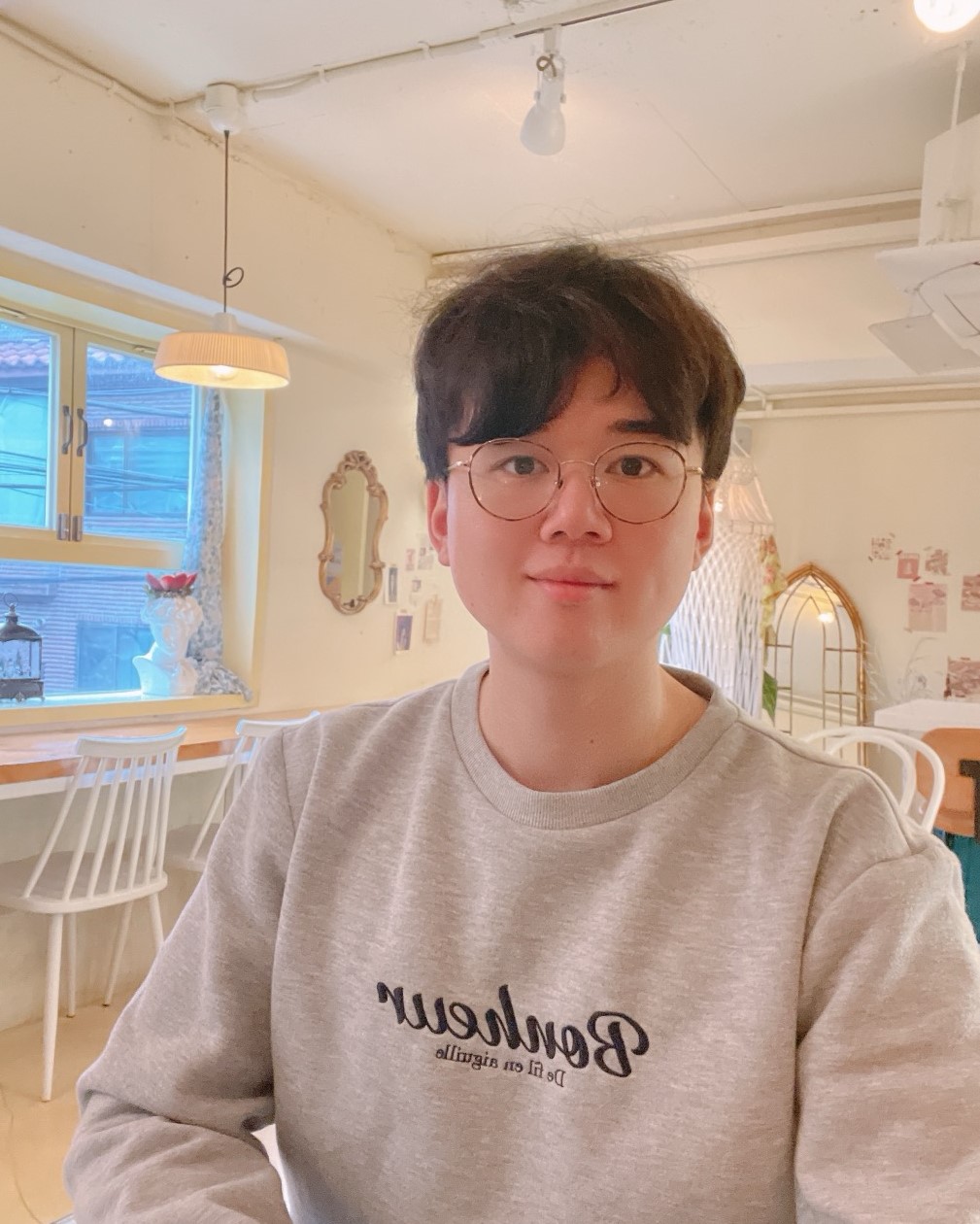}}]{Insung Kong}
received B.S. and Ph.D. degree in statistics from Seoul National University, South Korea, in  2018 and 2024, respectively. 
He is currently a postdoctoral researcher at the University of Twente, Netherlands.
His research focuses on statistical machine learning, deep learning theory, Bayesian statistics and trustworthy AI.
\end{IEEEbiography}
\vspace{-1.cm}
\begin{IEEEbiography}[{\includegraphics[width=1in,height=1.25in,clip,keepaspectratio]{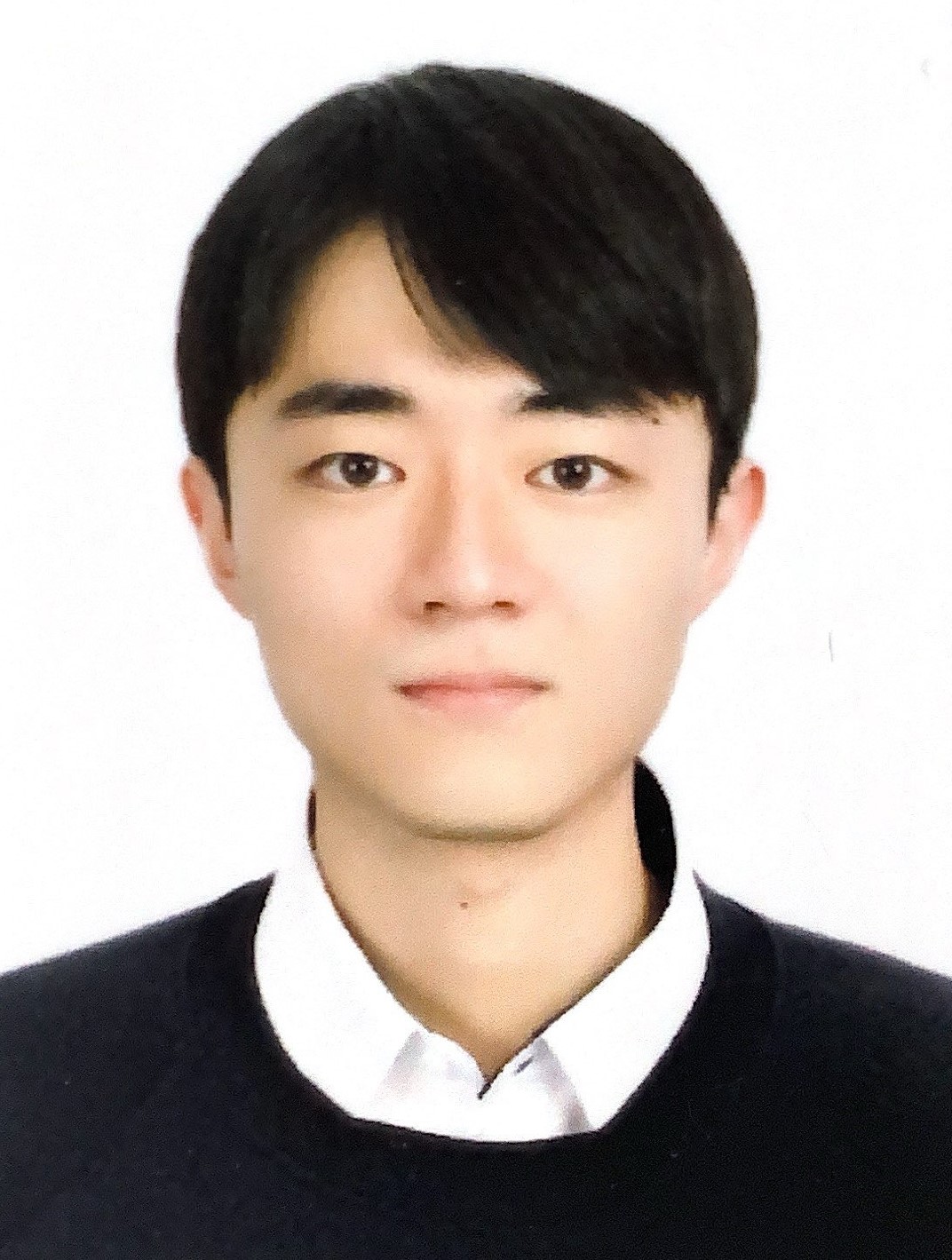}}]{Kunwoong Kim}
received B.S. a degree in energy resources engineering from Seoul National University, South Korea, in 2019. 
He is currently a Ph.D. candidate at the Department of Statistics, Seoul National University. 
His research focuses on deep representation learning and self-supervised learning.
\end{IEEEbiography}
\vspace{-1.cm}
\begin{IEEEbiography}[{\includegraphics[width=1in,height=1.25in,clip,keepaspectratio]{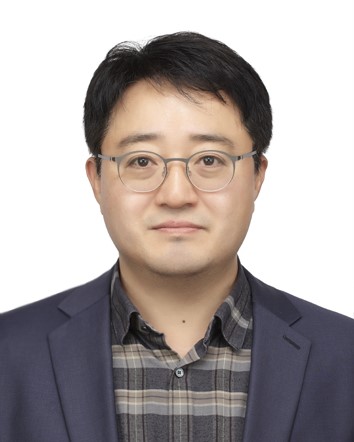}}]{Yongdai Kim} received the B.S. and M.S.
degrees in statistics from Seoul National University (SNU) in 1991 and 1993, respectively, and the Ph.D. degree of statistics from Ohio State University, Columbus, OH, U.S.A., in 1997. 
He is currently a Professor at Department of Statistics, Seoul National University. Before joining Seoul National University, he was at the National Institutes of Health (NIH), USA. 
His research area includes deep learning, machine learning and Bayesian statistics.
\end{IEEEbiography}

\vfill
\clearpage

\appendices
\onecolumn
\section{Proof for main theorems}

\numberwithin{equation}{section}
\renewcommand{\theequation}{A.\arabic{equation}}

\subsection{Additional notations and technical Lemmas}
For two positive sequences $(a_n)_{n \in \mathbb{N}}$ and $(b_n)_{n \in \mathbb{N}}$, we write $a_n = o(b_n)$ if $\lim_{n \to \infty} a_n / b_n = 0$.
We denote $\mathbb{P}_{S}$, $\mathbb{E}_{S}$ and $\mathbb{V}_{S}$ as the  probability measure, expectation and variance with respect to $S$, respectively. 
We denote
$\mathbb{P}_{\bm{X}}$, $\mathbb{E}_{\bm{X}}$, $\mathbb{P}_{\bm{Z}}$, $\mathbb{E}_{\bm{Z}}$ $\mathbb{P}_{\bm{Z},S}$, $\mathbb{E}_{\bm{Z},S}$ and $\mathbb{V}_{\bm{Z},S}$ similarly.  
We denote $\mathbb{P}^{(n)}$ as the joint probability measure of $(\bm{X}_1, S_1),\dots, (\bm{X}_n, S_n),$
where $(\bm{X}_i,S_i), i=1,\ldots,n$ are independent realizations of $(\bm{X},S).$
Also, we denote $\mathbb{P}^{(n)}_{-i}$ as the joint probability measure of $(\bm{X}_1, S_1),\dots,(\bm{X}_{i-1}, S_{i-1}),(\bm{X}_{i+1}, S_{i+1}),\dots,(\bm{X}_n, S_n)$.
For the function $k : \mathbb{R} \to \mathbb{R}$ defined on Assumption \ref{assum_kernel}, we denote $M_k := ||k||_{\infty}$ and $\kappa := \int s^2k(s)ds.$ 
Hence, we have
\begin{align*}
    \int k(s)ds &= 1 \\
    \int sk(s)ds &= 0 \\
    \int s^2k(s)ds &= \kappa.
\end{align*}

\begin{lemma}[Hoeffding inequality] \label{ineq_hoeff} 
    Let $X_1 , \dots, X_n$ be i.i.d random variables such that $a \leq X_i \leq b$ almost surely. For $M=b-a$ and for all $t>0$,
    \begin{align*}
        \mathbb{P}\left( \left| \frac{1}{n} \sum_{i=1}^n X_i - \mathbb{E}(X_1) \right| \geq \epsilon \right) \leq 2\exp \left(-\frac{2n\epsilon^2}{ M^2 } \right).
    \end{align*}    
\end{lemma}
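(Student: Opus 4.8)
The plan is to prove this by the classical Chernoff-bound argument together with the sub-Gaussian moment estimate for bounded random variables (Hoeffding's lemma). First I would treat the upper tail $\mathbb{P}\bigl(\frac{1}{n}\sum_{i=1}^n (X_i - \mathbb{E}X_i) \geq \epsilon\bigr)$. For any $\lambda > 0$, applying Markov's inequality to the exponential moment and using independence to factor the moment generating function gives
$$\mathbb{P}\left(\sum_{i=1}^n (X_i - \mathbb{E}X_i) \geq n\epsilon\right) \leq e^{-\lambda n \epsilon}\prod_{i=1}^n \mathbb{E}\left[e^{\lambda(X_i - \mathbb{E}X_i)}\right].$$

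The key auxiliary fact to establish is that for a centered random variable $Y$ taking values in an interval of length $M = b-a$, one has $\mathbb{E}[e^{\lambda Y}] \leq e^{\lambda^2 M^2/8}$. I would prove this by convexity: since $y \mapsto e^{\lambda y}$ is convex, on an interval $[\alpha,\beta]$ of length $M$ it is bounded above by its chord; taking expectations and using $\mathbb{E}Y = 0$ yields $\mathbb{E}[e^{\lambda Y}] \leq e^{\psi(\lambda)}$ where $\psi(\lambda) = \log\bigl(\tfrac{\beta e^{\lambda\alpha} - \alpha e^{\lambda\beta}}{M}\bigr)$. A direct computation shows $\psi(0) = \psi'(0) = 0$, and $\psi''(\lambda)$ equals the variance of a suitable two-point distribution supported in an interval of length $M$, hence $\psi''(\lambda) \leq M^2/4$; Taylor's theorem then gives $\psi(\lambda) \leq \lambda^2 M^2/8$.

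Combining the two displays, $\mathbb{P}\bigl(\frac{1}{n}\sum_i(X_i - \mathbb{E}X_i) \geq \epsilon\bigr) \leq e^{-\lambda n \epsilon + n\lambda^2 M^2/8}$, and minimizing the exponent over $\lambda > 0$ at $\lambda = 4\epsilon/M^2$ gives the bound $e^{-2n\epsilon^2/M^2}$. Finally I would apply the identical argument to the variables $-X_i$ to control the lower tail, and add the two one-sided bounds by a union bound to obtain the factor $2$ in the stated inequality.

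The main obstacle — more a matter of care than of genuine depth — is establishing the moment bound $\mathbb{E}[e^{\lambda Y}] \leq e^{\lambda^2 M^2/8}$ with the sharp constant $1/8$: this requires the precise second-derivative estimate $\psi'' \leq M^2/4$ rather than a cruder bound, since any weaker constant would fail to reproduce the exponent $2n\epsilon^2/M^2$ claimed in the lemma.
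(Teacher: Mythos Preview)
Your proof is correct and is the standard textbook argument for Hoeffding's inequality. Note, however, that the paper does not actually supply a proof of this lemma: it is stated without proof as a classical concentration inequality and simply invoked later in the proofs of Theorems~\ref{EIPM_conv} and~\ref{thm_encoder_set}. So there is no ``paper's own proof'' to compare against; you have supplied a valid proof where the paper relies on citation to a well-known result.
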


\begin{lemma}[Bernstein inequality] \label{ineq_bern} 
    Let $X_1 , \dots, X_n$ be i.i.d random variables such that $a \leq X_i \leq b$ almost surely.
    For $M=b-a$ and for all $t>0$,
    \begin{align*}
        \mathbb{P}\left( \left| \frac{1}{n} \sum_{i=1}^n X_i - \mathbb{E}(X_1) \right| \geq \epsilon \right) \leq 2\exp \left(-\frac{n\epsilon^2}{ 2\mathbb{V}(X_1) + \frac{1}{3} M \epsilon } \right).
    \end{align*}    
\end{lemma}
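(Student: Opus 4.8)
The stated bound is the classical Bernstein inequality, so the plan is to run the standard Chernoff (exponential Markov) argument, the key point being that the exponential moment is controlled via the variance $\mathbb{V}(X_1)$ rather than merely via the range $M$ (which is all that Hoeffding's Lemma~\ref{ineq_hoeff} exploits). First I would center the summands: set $Y_i := X_i - \mathbb{E}(X_1)$, so that $Y_1,\dots,Y_n$ are i.i.d.\ with $\mathbb{E}(Y_1)=0$, $\mathbb{V}(Y_1)=\mathbb{V}(X_1)$, and $|Y_i|\le M$ almost surely (the $Y_i$ lie in an interval of length $M$ containing $0$). By a union bound it suffices to prove $\mathbb{P}\!\left(\frac{1}{n}\sum_{i=1}^n Y_i\ge\epsilon\right)\le\exp\!\left(-\dfrac{n\epsilon^2}{2\mathbb{V}(X_1)+\tfrac13 M\epsilon}\right)$; applying the same estimate to $-Y_i$ handles the lower tail and yields the prefactor $2$.

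For the one-sided bound, for each $\lambda>0$ Markov's inequality applied to $\exp\!\left(\lambda\sum_{i=1}^n Y_i\right)$, together with independence of the $Y_i$, gives $\mathbb{P}\!\left(\sum_{i=1}^n Y_i\ge n\epsilon\right)\le e^{-\lambda n\epsilon}\,\mathbb{E}\!\left[e^{\lambda Y_1}\right]^{n}$. The crux is the moment generating function estimate: expanding $e^{\lambda Y_1}$ as a power series, using $\mathbb{E}(Y_1)=0$, the crude moment bound $\left|\mathbb{E}(Y_1^k)\right|\le M^{k-2}\mathbb{V}(Y_1)$ for $k\ge 2$, and $k!\ge 2\cdot 3^{\,k-2}$, one sums the resulting geometric series to obtain, for $0<\lambda<3/M$,
\[
\mathbb{E}\!\left[e^{\lambda Y_1}\right]\ \le\ \exp\!\left(\frac{\lambda^2\,\mathbb{V}(X_1)/2}{1-\lambda M/3}\right).
\]
Plugging this into the Chernoff bound and minimizing the exponent $-\lambda n\epsilon+\dfrac{n\lambda^2\mathbb{V}(X_1)/2}{1-\lambda M/3}$ over the admissible range of $\lambda$ (the minimizer being of the form $\lambda\propto\epsilon/(\mathbb{V}(X_1)+M\epsilon)$) produces an exponent of the claimed form, which completes the argument.

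Since this is a textbook inequality there is no genuine obstacle here; the only step that requires a little care is the moment generating function bound --- in particular pinning down the constant $\tfrac13$ and the admissible interval for $\lambda$ --- and one could instead simply cite a standard reference on concentration of measure. I expect this lemma to be used later in the appendix exactly where Hoeffding's range-based control would be wasteful: the kernel-smoothed summands arising in the analysis of $\hat{\textup{EIPM}}^{\gamma}_{\mathcal{V}}(\bm{Z};S)$ have variance shrinking with the bandwidth $\gamma_n$ even though their range stays of constant order, so that Bernstein's variance-aware tail is what delivers the sharp $1/\sqrt{n\gamma_n}$ factor in the rate $\epsilon_n$ of Theorem~\ref{EIPM_conv} rather than the weaker rate a Hoeffding-only argument would give.
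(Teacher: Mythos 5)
Your approach is the standard Chernoff argument for Bernstein's inequality (centering, power-series bound on the moment generating function via $|\mathbb{E}(Y_1^k)|\le M^{k-2}\mathbb{V}(Y_1)$ and $k!\ge 2\cdot 3^{k-2}$, then optimizing $\lambda$), and the paper does not give a proof of Lemma~\ref{ineq_bern} at all --- it is invoked as a classical fact, exactly as you suggest one could do. So there is no conflict of approach, and your reading of where the variance-sensitive bound matters downstream (the shrinking variance of the kernel-weighted terms in the proof of Theorem~\ref{EIPM_conv}, as opposed to their constant range) is exactly right.

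One small point of precision you glossed over: if you carry your argument through to the end --- plugging $\lambda=\epsilon/\bigl(\mathbb{V}(X_1)+M\epsilon/3\bigr)$ into $-n\lambda\epsilon+\dfrac{n\lambda^2\mathbb{V}(X_1)/2}{1-\lambda M/3}$ --- the exponent you actually obtain is $-\dfrac{n\epsilon^2}{2\mathbb{V}(X_1)+\tfrac{2}{3}M\epsilon}$, not $-\dfrac{n\epsilon^2}{2\mathbb{V}(X_1)+\tfrac{1}{3}M\epsilon}$ as written in the lemma. The classical Bernstein constant is $\tfrac{2}{3}$; the $\tfrac{1}{3}$ in the paper's statement is a (harmless) typo that would give a strictly tighter bound than the method can deliver. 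Since the lemma is used only for rate calculations in the appendix, the discrepancy has no effect on the paper's results, but your proof sketch as written would establish the $\tfrac{2}{3}$ version, so you should either note the discrepancy or adjust the claimed constant.
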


\subsection{Proof for Theorem \ref{pro_basic}}
\begin{proof}
    From $\textup{EIPM}_\mathcal{V} (\bm{Z}; S) = \mathbb{E}_{S} \left[ \textup{IPM}_\mathcal{V} (\mbP_{\bm{Z}|S} ,\mbP_{\bm{Z}}) \right] = 0$, we have $\textup{IPM}_\mathcal{V} (\mbP_{\bm{Z}|S} ,\mbP_{\bm{Z}}) = 0$ almost surely (with respect to probability of $S$).
Since $\textup{IPM}_\mathcal{V}$ is a metric on the probability space of $\mathcal{Z}$, we have   
$\mbP_{\bm{Z}|S} \equiv \mbP_{\bm{Z}}$ almost surely.
Hence, for any bounded prediction head $f$, we get
\begin{align*}
    \Delta \textup{\texttt{GDP}}(f \circ h) =&  \mathbb{E}_{S} \Big| \mathbb{E}_{\bm{Z} } ( f(\bm{Z})  | S ) - \mathbb{E}_{\bm{Z}} ( f(\bm{Z}) ) \Big| \\
    \leq & \int_{S} \int_{\bm{Z}} \left|f(\bm{Z}) 
    \left(d\mbP_{\bm{Z}|S} - d\mbP_{\bm{Z}}\right) \right| d\mbP_S \\
    = & 0.
\end{align*}
\end{proof}

\subsection{Proof for Theorem \ref{GDP_upper}}
\begin{proof}
1) For any $f \in \mathcal{F}$, there exists $v \in \mathcal{V}$ such that 
$|| f - v ||_{\infty} \leq \kappa$. Then,
    \begin{align*}
     \Delta \textup{\texttt{GDP}}(f \circ h) 
     & =  \mathbb{E}_{S} \ \left| \mathbb{E}_{\bm{Z}}( f(\bm{Z})  | S) - \mathbb{E}_{\bm{Z}} ( f(\bm{Z}) ) \right| \\
    & =  \mathbb{E}_{S } \left| \int f(\bm{z}) d{\mbP}_{\bm{Z}|S}(\bm{z}) - \int f(\bm{z}) d \mbP_{\bm{Z}}(\bm{z})   \right| \\
     & \leq \mathbb{E}_{S } \left| \int v(\bm{z}) d{\mbP}_{\bm{Z}|S}(\bm{z}) - \int v(\bm{z}) d \mbP_{\bm{Z}}(\bm{z})   \right| + 2\kappa \\
    & \leq  \mathbb{E}_{S} \sup_{v \in \mathcal{V}} \left|  
    \int v(\bm{z}) d \mbP_{\bm{Z}|S}(\bm{z}) - \int v(\bm{z}) d \mbP_{\bm{Z}}(z)   \right| + 2\kappa \\
    & =  \textup{EIPM}_\mathcal{V} (\bm{Z} ; S) + 2\kappa.
\end{align*}

2) For any $f \in \mathcal{F}$,
    \begin{align*}
     \Delta \textup{\texttt{GDP}}(f \circ h) 
     & =  \mathbb{E}_{S} \ \left| \mathbb{E}_{\bm{Z}}( f(\bm{Z})  | S) - \mathbb{E}_{\bm{Z}} ( f(\bm{Z}) ) \right| \\
    & =  \mathbb{E}_{S } \left| \int f(\bm{z}) d{\mbP}_{\bm{Z}|S}(\bm{z}) - \int f(\bm{z}) d \mbP_{\bm{Z}}(\bm{z})   \right| \\
    & \leq  \mathbb{E}_{S} \sup_{f \in \mathcal{F}} \left|  
    \int f(\bm{z}) d \mbP_{\bm{Z}|S}(\bm{z}) - \int f(\bm{z}) d \mbP_{\bm{Z}}(z)   \right|  \\
    & =  \mathbb{E}_{S} \textup{IPM}_{\cF}(\mbP_{\bm{Z}|S} ,\mbP_{\bm{Z}})  \\
    & \leq  \mathbb{E}_{S} \xi(\textup{IPM}_{\cV}(\mbP_{\bm{Z}|S} ,\mbP_{\bm{Z}}))  \\ 
    & \leq  \xi(\textup{EIPM}_\mathcal{V} (\bm{Z} ; S)),
\end{align*}
where the last inequality holds by the Cauchy inequality.
\end{proof}

\subsection{Proof for Proposition \ref{pro_estimator_derivation}}
\begin{proof}
    Lemma 6 of \cite{gretton2012kernel} states that for independent random variables $U, U^{\prime} \sim \mbP^U$
    and for independent random variables $T, T^{\prime} \sim \mbP^T$, 
$$\textup{IPM}_{\mathcal{V}_{\kappa,1}}(\mbP^{U},\mbP^{T}) = \sqrt{\mathbb{E} \Big[ \kappa (U, U^{\prime}) + \kappa (T, T^{\prime}) - 2 \kappa (U, T)\Big]},$$
where the expectation is respect to $U$, $U^{\prime}$, $T$ and $T^{\prime}$.
Hence, for $\sigma>0$ and $\gamma>0$, we obtain
\begin{align*}
    \hat{\textup{EIPM}}^{\gamma}_{\mathcal{V}_{\kappa, 1}} (\bm{Z}; S)
    &= \frac{1}{n} \sum_{i=1}^n  \textup{IPM}_{\mathcal{V}_{\kappa,1}} \left(\hat{\mbP}^{(-i), \gamma}_{\bm{Z}|S=S_i} ,\hat{\mbP}^{(-i)}_{\bm{Z}}\right) \\
    &= \frac{1}{n} \sum_{i=1}^n \left[ 
    \sum_{j,k \neq i} \hat{w}_\gamma (j;i) \hat{w}_\gamma (k;i) \kappa(\bm{Z}_j, \bm{Z}_k) 
    + \sum_{j,k \neq i} \frac{1}{(n-1)^2} \kappa(\bm{Z}_j, \bm{Z}_k) 
    - 2 \sum_{j,k \neq i} \frac{1}{n-1} \hat{w}_\gamma (j;i) \kappa(\bm{Z}_j, \bm{Z}_k) \right]^{\frac{1}{2}} \\
    &= \frac{1}{n} \sum_{i=1}^n \left[ 
    \sum_{j,k \neq i} \left( \hat{w}_\gamma (j;i) \hat{w}_\gamma (k;i) 
    +\frac{1}{(n-1)^2}  
    - \frac{2}{n-1} \hat{w}_\gamma (j;i) \right)  \kappa(\bm{Z}_j, \bm{Z}_k) \right]^{\frac{1}{2}} \\
    &= \frac{1}{n} \sum_{i=1}^n \left[ 
    \sum_{j,k \neq i} \left( \hat{w}_\gamma (j;i) \hat{w}_\gamma (k;i) 
    +\frac{1}{(n-1)^2}  
    - \frac{1}{n-1} \hat{w}_\gamma (j;i) - \frac{1}{n-1} \hat{w}_\gamma (k;i) \right)  \kappa(\bm{Z}_j, \bm{Z}_k) \right]^{\frac{1}{2}} \\
    &= \frac{1}{n} \sum_{i=1}^n \left[ 
    \sum_{j,k \neq i} \left( \hat{w}_\gamma (j;i) - \frac{1}{n-1} \right) 
    \left( \hat{w}_\gamma (k;i) - \frac{1}{n-1} \right)  
    \kappa(\bm{Z}_j, \bm{Z}_k) \right]^{\frac{1}{2}} \\
    &= \frac{1}{n} \sum_{i=1}^n \left[ \sum_{j,k \neq i} [A_{\gamma}]_{i,j} [A_{\gamma}]_{i,k}  \kappa(\bm{Z}_j, \bm{Z}_k) 
    \right]^{\frac{1}{2}},
\end{align*}
where we use the fact that $\kappa(\bm{Z}_j, \bm{Z}_k) = \kappa(\bm{Z}_k, \bm{Z}_j)$ for the fourth equality.
\end{proof}

\subsection{Proof for Theorem \ref{EIPM_conv}} \label{sec_proof_eipmconv}

\begin{lemma} \label{lemma_aboutp}
Under Assumptions \ref{assum_kernel}, \ref{assum_ps}, \ref{assum_pzs} and \ref{assum_gamma},
    \begin{enumerate}
        \item $p(s)$ is twice differentiable and has bounded second derivative.
        
        \item For any $s \in \mathcal{S}$, we have
        $$\frac{1}{\gamma_n} \mathbb{E}_{S} (K_{\gamma_n}(S, s)) = p(s) + \frac{\kappa p''(s)}{2}\gamma_n^2 + o(\gamma_n^2)$$
        and        
        $$\frac{1}{\gamma_n } \mathbb{E}_{S} \left[ K_{\gamma_n}(S , s)^2\right] \leq  p(s) + \frac{\kappa p''(s)}{2}\gamma_n^2 + o(\gamma_n^2) $$
        for sufficiently large $n$.
        
        \item For any $s \in \mathcal{S}$, we have
        \begin{align*}
        \left| \frac{1}{\gamma_n} \frac{1}{n-1} \sum_{j \neq i} K_{\gamma_n}(S_j , s) - p(s) \right| 
        \leq \frac{1}{\sqrt{n \gamma_n}} \log(n) + \kappa|p''(s)|\gamma_n^2
   \end{align*}
   for sufficiently large $n$ with probability at least $1-\frac{1}{n^2}$. 
    \end{enumerate}
\end{lemma}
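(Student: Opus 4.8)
The plan is to prove the three items in order: items 1 and 2 are the deterministic analytic ingredients, and item 3 combines the bias bound from item 2 with a concentration inequality. \emph{Item 1.} Writing $p(s)=\int_{\mathcal{X}}p(\bm{x},s)\,d\mu_{\bm{X}}(\bm{x})$, I would differentiate twice under the integral sign. Assumption \ref{assum_pzs} supplies that $s\mapsto p(\bm{x},s)$ is $C^2$ with a second derivative bounded uniformly in $\bm{x}$, while $p(\bm{x},\cdot)$ is $\mu_{\bm{X}}$-integrable; a Taylor/mean-value argument dominating the first and second difference quotients then licenses the interchange, yielding $p''(s)=\int_{\mathcal{X}}\partial_s^2 p(\bm{x},s)\,d\mu_{\bm{X}}(\bm{x})$ and $\|p''\|_\infty<\infty$. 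This is routine and I would dispatch it quickly.

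\emph{Item 2.} For the first assertion, the substitution $u=(t-s)/\gamma_n$ turns $\tfrac{1}{\gamma_n}\mathbb{E}_S[K_{\gamma_n}(S,s)]=\tfrac{1}{\gamma_n}\int k\!\big(\tfrac{t-s}{\gamma_n}\big)p(t)\,dt$ into $\int k(u)\,p(s+\gamma_n u)\,du$. Expanding $p(s+\gamma_n u)=p(s)+\gamma_n u\,p'(s)+\tfrac12\gamma_n^2 u^2 p''(s)+o(\gamma_n^2 u^2)$ (valid by item 1) and invoking $\int k=1$, $\int u\,k(u)\,du=0$, $\int u^2 k(u)\,du=\kappa<\infty$ from Assumption \ref{assum_kernel}, plus dominated convergence (the remainder is controlled by $\tfrac12\gamma_n^2 u^2\|p''\|_\infty k(u)$), gives $p(s)+\tfrac{\kappa p''(s)}{2}\gamma_n^2+o(\gamma_n^2)$; in any case the deviation from $p(s)$ is at most $\tfrac12\kappa\|p''\|_\infty\gamma_n^2$, which is all item 3 uses. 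For the second assertion, $\tfrac{1}{\gamma_n}\mathbb{E}_S[K_{\gamma_n}(S,s)^2]=\int k(u)^2 p(s+\gamma_n u)\,du\le\int k(u)\,p(s+\gamma_n u)\,du$ once $k\le 1$ is used (true for the RBF, triangular and Epanechnikov kernels considered; otherwise a factor $M_k$ appears), and the first assertion concludes.

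\emph{Item 3.} I would split
\[
\Big|\tfrac{1}{\gamma_n}\tfrac{1}{n-1}\!\sum_{j\ne i}K_{\gamma_n}(S_j,s)-p(s)\Big|\le\Big|\tfrac{1}{n-1}\!\sum_{j\ne i}X_j-\mathbb{E}(X_1)\Big|+\Big|\tfrac{1}{\gamma_n}\mathbb{E}_S[K_{\gamma_n}(S,s)]-p(s)\Big|,
\]
with $X_j:=\tfrac{1}{\gamma_n}K_{\gamma_n}(S_j,s)$; for fixed $i$ and $s$ these are $n-1$ i.i.d.\ copies of $\tfrac{1}{\gamma_n}K_{\gamma_n}(S,s)$, nonnegative, bounded by $M_k/\gamma_n$, and by item 2 they satisfy $\mathbb{V}(X_1)\le\mathbb{E}(X_1^2)=\tfrac{1}{\gamma_n}\cdot\tfrac{1}{\gamma_n}\mathbb{E}_S[K_{\gamma_n}(S,s)^2]\le C/\gamma_n$ for a constant $C$ and $n$ large. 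Item 2 also bounds the second (bias) term by $\kappa|p''(s)|\gamma_n^2$ for $n$ large. Applying Bernstein's inequality (Lemma \ref{ineq_bern}) to the $X_j$, the probability that the first term exceeds $\epsilon$ is at most $2\exp\!\big(-\tfrac{(n-1)\gamma_n\epsilon^2}{2C+\frac13 M_k\epsilon}\big)$. Taking $\epsilon=\epsilon_n\asymp\tfrac{\log n}{\sqrt{n\gamma_n}}$ makes $(n-1)\gamma_n\epsilon_n^2\asymp\log^2 n$, so in the variance-dominated regime the exponent is $\gtrsim\log^2 n$ and in the range-dominated regime it is $\gtrsim(n-1)\gamma_n\epsilon_n\asymp\sqrt{n\gamma_n}\,\log n$; since $n\gamma_n\to\infty$ (Assumption \ref{assum_gamma}), in both regimes the exponent exceeds $2\log n$ for $n$ large, so this probability is $\le\tfrac{1}{n^2}$ up to adjusting the absolute constant in $\epsilon_n$. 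Adding the bias bound then proves the claimed inequality on an event of probability at least $1-\tfrac{1}{n^2}$.

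\emph{Main obstacle.} The only delicate step is the Bernstein estimate in item 3: both $\mathbb{V}(X_1)$ and the sup bound on $X_1$ blow up like $1/\gamma_n$, so one must verify that $(n-1)\gamma_n\epsilon_n^2$ dominates $\log n$ in \emph{both} the variance- and range-dominated branches of the Bernstein denominator, which is exactly where $n\gamma_n\to\infty$ enters. A secondary nuisance is pinning down the exact leading constant $\kappa p''(s)/2$ in item 2 when $p''$ is only assumed bounded (and the mild boundary effects when $s$ lies within $O(\gamma_n)$ of $\partial\mathcal{S}$); since only the $O(\gamma_n^2)$ magnitude is needed downstream, this can be absorbed into the remainder.
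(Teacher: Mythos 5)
Your proposal is correct and matches the paper's proof in all essentials: item 1 via differentiation under the integral sign using Assumption 3, item 2 via the substitution $u=(t-s)/\gamma_n$ and a second-order Taylor expansion exploiting the kernel moments, and item 3 via a bias/fluctuation split with Bernstein's inequality applied to $X_j=\gamma_n^{-1}K_{\gamma_n}(S_j,s)$ together with the variance bound $\mathbb{V}(X_1)\lesssim 1/\gamma_n$ from item 2. Your explicit discussion of the variance-dominated versus range-dominated branches of the Bernstein denominator, and your remark that $k\le 1$ (or else a factor $M_k$) is what licenses $\mathbb{E}[K^2]\le\mathbb{E}[K]$, simply make transparent two details the paper treats implicitly.
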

\begin{proof}
    \begin{enumerate}
        \item We have
    \begin{align*}
        \left| p''(s) \right| 
        =& \left| \int_{\bm{x} \in \mathcal{X}} \frac{\partial^2}{\partial s^2} p(\bm{x}, s) d\mu_{\bm{X}}(\bm{x}) \right|  \\
        \leq & \int_{\bm{x} \in \mathcal{X}} \left|\frac{\partial^2}{\partial s^2} p(\bm{x}, s) \right| d\mu_{\bm{X}}(\bm{x}),    
    \end{align*}
    which implies that $p(s)$ is twice differentiable and has a bounded second derivative by Assumption \ref{assum_pzs}.
    
        \item By applying Taylor expansion to $p(s+t\gamma_n)$, we obtain
        \begin{align*}
        \frac{1}{\gamma_n } \mathbb{E}_{S} (K_{\gamma_n}(S, s))
        & =  \mathbb{E}_{S} \left[ \frac{1}{\gamma_n } k\left(\frac{S - s}{\gamma_n}\right) \right] \nonumber \\
        & =  \int \frac{1}{\gamma_n } k\left(\frac{u - s}{\gamma_n}\right) p(u) du \nonumber \\
        & = \int k(t) p(s+t\gamma_n) dt \nonumber \\
        & = \int k(t) \left[ p(s) + t\gamma_n p'(s) + (t\gamma_n)^2 \frac{p''(s)}{2}  \right] dt  + o(\gamma_n^2)  \nonumber \\
        & = p(s) + \frac{\kappa p''(s)}{2}\gamma_n^2 + o(\gamma_n^2).
    \end{align*}
        Similarly, we have
        \begin{align*}
        \frac{1}{\gamma_n } \mathbb{E}_{S} \left[ K_{\gamma_n}(S , s)^2\right] 
        & = \frac{1}{\gamma_n}  \mathbb{E}_{S} \left( k\left(\frac{S - s}{\gamma_n}\right)^2 \right) \\
        & \leq \frac{1}{\gamma_n} \mathbb{E}_{S} \left( k\left(\frac{S - s}{\gamma_n}\right) \right) \\
        & \leq  
        \int \frac{1}{\gamma_n} k\left(\frac{u - s}{\gamma_n}\right) p(u) du \\ 
        & = p(s) + \frac{\kappa p''(s)}{2}\gamma_n^2 + o(\gamma_n^2). 
    \end{align*} 
    
        \item Define $\hat{p}_n(s)$ as
        $$\hat{p}_n(s) := \frac{1}{\gamma_n} \frac{1}{n-1} \sum_{j \neq i} K_{\gamma_n}(S_j , s).$$        
        From the fact $0 \leq \frac{1}{\gamma_n} K_{\gamma_n}(S_j , s) \leq \frac{M_k}{\gamma_n}$, we have
    \begin{align*}
        &\mbP\Bigg( \left|\hat{p}_n(s) - \mathbb{E}_{S} \left[\frac{1}{\gamma_n} K_{\gamma_n}(S , s)\right] \right| 
        > \frac{2 M_k }{3(n-1)\gamma_n}\log(2n^2) + \sqrt{ \mathbb{V}_{S} \left[\frac{1}{\gamma_n} K_{\gamma_n}(S , s)\right] \frac{4\log(2n^2)}{n-1}}
        \Bigg) 
        \leq \frac{1}{n^2}
    \end{align*}
    by the Bernstein's inequality (Lemma \ref{ineq_bern}). Also, we have
    $$\mathbb{E}_{S} \left[\frac{1}{\gamma_n} K_{\gamma_n}(S , s)\right] = p(s) + \frac{\kappa p''(s)}{2}\gamma_n^2 + o(\gamma_n^2)$$
    and
    \begin{align*}
        \mathbb{V}_{S} \left[\frac{1}{\gamma_n} K_{\gamma_n}(S , s)\right]
        & = \frac{1}{\gamma_n^2 } \mathbb{V}_{S} \left[ K_{\gamma_n}(S , s)\right] \nonumber \\
        & \leq \frac{1}{\gamma_n^2 } \mathbb{V}_{S} \left[ K_{\gamma_n}(S , s)^2 \right] \nonumber \\
        & \leq \frac{1}{\gamma_n } \left(p(s) + \frac{\kappa p''(s)}{2}\gamma_n^2 + o(\gamma_n^2)\right). 
    \end{align*}     
    To sum up, we have
   \begin{align*}
   \left| \frac{1}{\gamma_n} \frac{1}{n-1} \sum_{j \neq i} K_{\gamma_n}(S_j , s) - p(s) \right| 
   = & | \hat{p}_n(s) - p(s) | \\
        \leq& \left| \hat{p}_n(s) - \mathbb{E}_{S} \left[\frac{1}{\gamma_n} K_{\gamma_n}(S , s)\right] \right| 
        + \left| \mathbb{E}_{S} \left[\frac{1}{\gamma_n } K_{\gamma_n}(S , s)\right] - p(s) \right| \nonumber \\
        \leq &
        \frac{1}{\sqrt{n \gamma_n}} \log(n) + \kappa |p''(s)|\gamma_n^2
   \end{align*}
   for sufficiently large $n$ with probability at least $1-\frac{1}{n^2}$.
    \end{enumerate}
\end{proof}

\begin{lemma} \label{lemma_aboutm}
    For a given encoder function $h$ and a given real-valued function $v$ such that $||v||_{\infty} \leq 1$, 
    we define $m(s)$ for $s \in \mathcal{S}$ as $$m(s) := \mathbb{E}_{\bm{X}} (v \circ h(\bm{X})|S=s).$$
    Under Assumptions \ref{assum_kernel}, \ref{assum_ps}, \ref{assum_pzs} and \ref{assum_gamma},
    \begin{enumerate}
        \item $m(s)$ is twice differentiable and has a bounded second derivative.
        \item For any $s \in \mathcal{S}$, we have 
        \begin{align*}
            \frac{1}{\gamma_n} 
            \mathbb{E}_{\bm{Z},S} (  K_{\gamma_n}(S, s) v(\bm{Z}))
            = m(s)p(s) + \kappa \gamma_n^2 \left( \frac{m''(s)p(s)}{2} + \frac{m(s)p''(s)}{2} + m'(s)p'(s)  \right) + o(\gamma_n^2).
   \end{align*}
    \end{enumerate}
\end{lemma}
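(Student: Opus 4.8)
The plan is to mirror the proof of Lemma~\ref{lemma_aboutp}, but to work with the \emph{unnormalised} conditional mean $q(s):=m(s)\,p(s)=\int_{\mathcal{X}} v(h(\bm{x}))\,p(\bm{x},s)\,d\mu_{\bm{X}}(\bm{x})$, and to read ``bounded second derivative'' in Assumption~\ref{assum_pzs} as $\sup_{s}\int_{\mathcal{X}}|\partial_s^2 p(\bm{x},s)|\,d\mu_{\bm{X}}(\bm{x})<\infty$ (the same reading already used in the proof of Lemma~\ref{lemma_aboutp}(1)).

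For part 1, I would first show $q\in C^2$ with $q,q',q''$ all bounded on $\mathcal{S}$: since $|v\circ h|\le 1$ and, for each $\bm{x}$, $p(\bm{x},\cdot)$ is twice differentiable with $\partial_s^2 p(\bm{x},\cdot)$ dominated (and hence, by integrating on the compact $\mathcal{S}$ from a reference point, $\partial_s p(\bm{x},\cdot)$ dominated as well) by an integrable function of $\bm{x}$, differentiation under the integral sign applies. Combining this with Lemma~\ref{lemma_aboutp}(1) (so that $p\in C^2$ with $p,p',p''$ bounded, the bound on $p'$ again coming from integration on $\mathcal{S}$) and Assumption~\ref{assum_ps} ($p\ge L_p>0$), the quotient $m=q/p$ is twice differentiable and, by the quotient rule, $m''$ is a fixed rational expression in $q,q',q'',p,p',p''$ and $1/p$, hence bounded.

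For part 2, I would first condition on $S$: since $v(\bm{Z})=v(h(\bm{X}))$ and $K_{\gamma_n}(S,s)$ is $\sigma(S)$-measurable, the tower property gives
$$\frac{1}{\gamma_n}\mathbb{E}_{\bm{Z},S}\bigl(K_{\gamma_n}(S,s)v(\bm{Z})\bigr)=\frac{1}{\gamma_n}\mathbb{E}_{S}\bigl[K_{\gamma_n}(S,s)\,\mathbb{E}_{\bm{X}}(v\circ h(\bm{X})\mid S)\bigr]=\frac{1}{\gamma_n}\mathbb{E}_{S}\bigl[K_{\gamma_n}(S,s)\,m(S)\bigr].$$
Writing this as $\frac{1}{\gamma_n}\int k\!\left(\frac{u-s}{\gamma_n}\right)q(u)\,du$ and substituting $u=s+t\gamma_n$, exactly as in Lemma~\ref{lemma_aboutp}(2), turns it into $\int k(t)\,q(s+t\gamma_n)\,dt$. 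A second-order Taylor expansion of $q$ around $s$, together with the moment identities $\int k=1$, $\int t\,k(t)\,dt=0$, $\int t^2 k(t)\,dt=\kappa$, yields $q(s)+\tfrac{\kappa}{2}\gamma_n^2 q''(s)+o(\gamma_n^2)$, and substituting $q''=(mp)''=m''p+2m'p'+mp''$ gives exactly the claimed expression.

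There is no deep difficulty here --- this is a routine Nadaraya--Watson-type bias expansion --- but two steps deserve care. The first is the licence to differentiate under the integral sign in part 1, which is precisely where the integrable-domination reading of Assumption~\ref{assum_pzs} is needed. The second, which I expect to be the main bookkeeping obstacle, is verifying that the Taylor remainder contributes only $o(\gamma_n^2)$: one bounds $|q(s+t\gamma_n)-q(s)-t\gamma_n q'(s)-\tfrac12(t\gamma_n)^2 q''(s)|\le \tfrac12(t\gamma_n)^2\,\omega(|t|\gamma_n)$, where $\omega$ is the (uniform) modulus of continuity of $q''$ on $\mathcal{S}$, and then sends $\int k(t)\,t^2\,\omega(|t|\gamma_n)\,dt\to 0$ by dominated convergence against the integrable majorant $2\|q''\|_{\infty}\,t^2 k(t)$, whose integrability is guaranteed by $\int s^2 k(s)\,ds=\kappa<\infty$.
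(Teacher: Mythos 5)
Your proof is correct and follows essentially the same path as the paper's: reduce via the tower property to $\frac{1}{\gamma_n}\mathbb{E}_S[K_{\gamma_n}(S,s)\,m(S)]$, substitute $u=s+t\gamma_n$, Taylor-expand to second order, and integrate against $k$ using the moment identities; the only cosmetic difference is that you Taylor-expand $q=mp$ once and then unpack $q''=m''p+2m'p'+mp''$, whereas the paper Taylor-expands $m$ and $p$ separately and multiplies the two expansions. Your extra care on two points — noting that $A',p'$ must also be bounded (which the paper uses implicitly when invoking the quotient rule for $m''$) and supplying a modulus-of-continuity/dominated-convergence argument for the $o(\gamma_n^2)$ remainder — tightens small gaps the paper leaves to the reader, and does not change the structure of the argument.
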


\begin{proof}
\begin{enumerate}
    \item We have
\begin{align*}
    m(s) & =\int_{\bm{x} \in \mathcal{X}} v(h(\bm{X})) d\mbP_{\bm{X}|S=s} \\
    & = \frac{\int_{\bm{x} \in \mathcal{X}} v(h(\bm{X})) p(\bm{x}, s) d\mu_{\bm{X}}(\bm{x})}
    {\int_{\bm{x} \in \mathcal{X}} p(\bm{x}, s) d\mu_{\bm{X}}(\bm{x})} \\
    & = \frac{\int_{\bm{x} \in \mathcal{X}} v(h(\bm{X})) p(\bm{x}, s) d\mu_{\bm{X}}(\bm{x})}
    {p(s)}.
\end{align*}
We denote $A(s) := \int_{\bm{x} \in \mathcal{X}} v(h(\bm{X})) p(\bm{x}, s) d\mu_{\bm{X}}(\bm{x})$.
From
\begin{align*}
    \left| A''(s) \right| 
    =& \left| \int_{\bm{x} \in \mathcal{X}} v(h(\bm{X})) \frac{\partial^2}{\partial s^2}p(\bm{x}, s) d\mu_{\bm{X}}(\bm{x}) \right| \\
    \leq &  \int_{\bm{x} \in \mathcal{X}} \left| v(h(\bm{X})) \frac{\partial^2}{\partial s^2}p(\bm{x}, s) \right| d\mu_{\bm{X}}(\bm{x})  \\
    \leq & \int_{\bm{x} \in \mathcal{X}} \left| \frac{\partial^2}{\partial s^2}p(\bm{x}, s) \right| d\mu_{\bm{X}}(\bm{x})
\end{align*}
and Assumption \ref{assum_pzs}, $A(s)$ is twice differentiable and has a bounded second derivative.
Hence, 
\begin{align*}
    m''(s) = & \left(\frac{A(s)}{p(s)}\right)^{''} \\
    = \  & \frac{A''(s) p(s) - A(s)p''(s)}{p(s)^4} 
     - \frac{2p(s)(A'(s)p(s)-A(s)p'(s))}{p(s)^4}
\end{align*}
is also bounded by Assumption \ref{assum_ps}. 

\item By applying Taylor expansion to $m(s + t\gamma_n)$ and $p(s + t\gamma_n)$, we obtain
    \begin{align*}
        & \frac{1}{\gamma_n } \mathbb{E}_{\bm{Z},S} (  K_{\gamma_n}(S, s) v(\bm{Z})) \\
        & = \frac{1}{\gamma_n } \mathbb{E}_{S} (  K_{\gamma_n}(S, s) \mathbb{E}_{Z}(v(\bm{Z})|S)) \\
        & = \frac{1}{\gamma_n } \mathbb{E}_{S} (K_{\gamma_n}(S, s) m(S) )  \\
        &= \mathbb{E}_{S} \left[ \frac{1}{\gamma_n} k\left( \frac{S - s}{\gamma_n} \right)   m(S) \right] \\
        & = \int  \frac{1}{\gamma_n} k\left( \frac{u - s}{\gamma_n} \right)   m(u) p(u) du \\
        & = \int  k(t)  m(s + t\gamma_n) p(s + t\gamma_n) dt \\
        & = \int  k(t) \left( m(s) + t\gamma_n m'(s) + (t\gamma_n)^2 \frac{m''(s)}{2}  \right) \left( p(s) + t\gamma_n p'(s) + (t\gamma_n)^2 \frac{p''(s)}{2}  \right) dt + o(\gamma_n^2)  \\
        & = m(s)p(s) + \kappa \gamma_n^2 \left( \frac{m''(s)p(s)}{2} + \frac{m(s)p''(s)}{2} + m'(s)p'(s)  \right) 
        + o(\gamma_n^2). 
    \end{align*}     
\end{enumerate}
\end{proof}

\subsubsection{Proof for Theorem \ref{EIPM_conv}}
\begin{proof}
    For given $i \in \{1,\dots,n\}$ and given $S_i = s$, we have
    \begin{align}
        & \left |\textup{IPM}_{\mathcal{V}} (\hat{\mbP}^{(-i), \gamma_n}_{\bm{Z}|S=s} ,\hat{\mbP}^{(-i)}_{\bm{Z}}) - \textup{IPM}_{\mathcal{V}} (\mbP_{\bm{Z}|S=s} ,\mbP_{\bm{Z}}) \right| \nonumber \\
        & = \Bigg| \sup_{v \in \mathcal{V}} \left| \sum_{j \neq i}  \hat{w}_{\gamma_n}(j;i) v(\bm{Z}_j) -  \frac{1}{n-1} \sum_{j \neq i} v(\bm{Z}_j) \right| - \sup_{v \in \mathcal{V}} \Big| \mathbb{E}_{\bm{Z}}(v(\bm{Z})|S=s) -  \mathbb{E}_{\bm{Z}} v(\bm{Z}) \Big|
        \Bigg|  \nonumber \\
        & \leq  \sup_{v \in \mathcal{V}} \Bigg| \sum_{j \neq i}  \hat{w}_{\gamma_n}(j;i) v(\bm{Z}_j) -  \frac{1}{n-1} \sum_{j \neq i} v(\bm{Z}_j) - \mathbb{E}_{\bm{Z}} (v(\bm{Z})|S=s) + \mathbb{E}_{\bm{Z}} v(\bm{Z}) \Bigg| \label{for_suph_tmp1} \\
        & \leq  \sup_{v \in \mathcal{V}} \Bigg| \sum_{j \neq i}   \frac{K_{\gamma_n}(S_j, s)}{\sum_{j \neq i} K_{\gamma_n}(S_j, s)} v(\bm{Z}_j)
        - \frac{\sum_{j \neq i}  K_{\gamma_n}(S_j, s) v(\bm{Z}_j)}{ (n-1)\mathbb{E}_{S} (K_{\gamma_n}(S, s))}
        \Bigg| \label{thm_prob_1} \\
         & \quad + \sup_{v \in \mathcal{V}} \Bigg| \frac{\sum_{j \neq i}  K_{\gamma_n}(S_j, s) v(\bm{Z}_j)}{ (n-1)\mathbb{E}_{S} (K_{\gamma_n}(S, s))}
        - \frac{\mathbb{E}_{\bm{Z},S} (  K_{\gamma_n}(S, s) v(\bm{Z}) )}{\mathbb{E}_{S} (K_{\gamma_n}(S, s))}
        \Bigg| \label{thm_prob_2} \\
         & \quad + \sup_{v \in \mathcal{V}} \Bigg| \frac{\mathbb{E}_{\bm{Z},S} (  K_{\gamma_n}(S, s) v(\bm{Z})  )}{\mathbb{E}_{S} (K_{\gamma_n}(S, s))}
        - \mathbb{E}_{\bm{Z}}(v(\bm{Z})|S=s)
        \Bigg| \label{thm_prob_3} \\
        & \quad + \sup_{v \in \mathcal{V}} \left| \frac{1}{n-1} \sum_{j \neq i} v({\bm{Z}}_j) - \mathbb{E}_{\bm{Z}} v(\bm{Z}) \right|, \label{thm_prob_4}     
    \end{align}
    where the first and the second inequality hold  by the fact that $|\sup |a| - \sup |b|| \leq \sup|a-b|$ and $\sup |a+b+c+d| \leq \sup |a| + \sup |b| + \sup |c| + \sup |d|$, respectively.    
    For 
    $\epsilon_n = \gamma_n^2 + \frac{\log n}{\sqrt{n \gamma_n}} \left(1 + \log \mathcal{N} \left(\sqrt{\frac{\gamma_n}{n}}, \mathcal{V}, ||\cdot||_{\infty}\right) \right)^{\frac{1}{2}}$,
    we will show that there exist positive constants $c_1 , c_2, c_3$ and $c_4$ that do not depend on $n$, $m$ and $s$ such that  
    \begin{align*}
        \mathbb{P}^{(n)}_{-i} \left( (\ref{thm_prob_1}) > c_1 \epsilon_n \right) \leq & \ \frac{1}{n^2},\\
        \mathbb{P}^{(n)}_{-i} \left( (\ref{thm_prob_2})  > c_2 \epsilon_n \right) \leq & \ \frac{1}{n^2}, \\
        \mathbb{P}^{(n)}_{-i} \left( (\ref{thm_prob_3}) > c_3 \epsilon_n \right) = & \ 0, \\
        \mathbb{P}^{(n)}_{-i} \left( (\ref{thm_prob_4}) > c_4 \epsilon_n \right) \leq & \ \frac{1}{n^2}        
    \end{align*}
    for all sufficiently large $n$.

\textbf{Bound for (\ref{thm_prob_1})} : 
For sufficiently large $n$ with probability at least $1-\frac{1}{n^2}$, we have
\begin{align}
    (\ref{thm_prob_1}) &= \sup_{v \in \mathcal{V}} \Bigg| \sum_{j \neq i}   \frac{K_{\gamma_n}(S_j, s)}{\sum_{j \neq i} K_{\gamma_n}(S_j, s)} v(\bm{Z}_j)
        - \frac{\sum_{j \neq i}  K_{\gamma_n}(S_j, s) v(\bm{Z}_j)}{ (n-1)\mathbb{E}_{S} (K_{\gamma_n}(S, s))}
        \Bigg| \nonumber \\
    & = \left| 1 - \frac{\sum_{j \neq i} K_{\gamma_n}(S_j, s)}{(n-1)\mathbb{E}_{S} (K_{\gamma_n}(S, s))} \right| \cdot
    \sup_{v \in \mathcal{V}} \left| \frac{\sum_{j \neq i}   K_{\gamma_n}(S_j, s) v(\bm{Z}_j)}{\sum_{j \neq i} K_{\gamma_n}(S_j, s)}  \right| \nonumber \\
    & \leq \left| 1 - \frac{\sum_{j \neq i} K_{\gamma_n}(S_j, s)}{(n-1)\mathbb{E}_{S} (K_{\gamma_n}(S, s))} \right| \nonumber \\
    & = \left|\frac{\frac{1}{\gamma_n } \mathbb{E}_{S} (K_{\gamma_n}(S, s)) - 
    \frac{1}{\gamma_n } \frac{1}{n-1} \sum_{j \neq i} K_{\gamma_n}(S_j, s)
    }{ \frac{1}{\gamma_n } \mathbb{E}_{S} (K_{\gamma_n}(S, s))}\right| \nonumber \\
    & \leq \left|\frac{ \frac{\kappa |p''(s)|}{2}\gamma_n^2 + o(\gamma_n^2) + \frac{1}{\sqrt{n \gamma_n}} \log(n) + \kappa |p''(s)|\gamma_n^2
    }{p(s) + \frac{\kappa p''(s)}{2}\gamma_n^2 + o(\gamma_n^2)}\right| \nonumber \\
    & \leq \frac{2}{L_p} \cdot \left( \frac{1}{\sqrt{n \gamma_n}} \log(n) + 2 \kappa |p''(s)|\gamma_n^2 \right) \nonumber \\
    & \leq c_1 \epsilon_n , \label{thm_prob_bound1}
\end{align}
     where the first inequality holds by $||v||_{\infty} \leq 1$ and the second inequality holds by Lemma \ref{lemma_aboutp}.

\textbf{Bound for (\ref{thm_prob_2})} : we have
\begin{align*}
    (\ref{thm_prob_2}) =& \sup_{v \in \mathcal{V}} \Bigg| \frac{\sum_{j \neq i}  K_{\gamma_n}(S_j, s) v(\bm{Z}_j)}{ (n-1)\mathbb{E}_{S} (K_{\gamma_n}(S, s))}
        - \frac{\mathbb{E}_{\bm{Z},S} (  K_{\gamma_n}(S, s) v(\bm{Z}) )}{\mathbb{E}_{S} (K_{\gamma_n}(S, s))}
        \Bigg| \\
        =& \frac{1}{\mathbb{E}_{S} (K_{\gamma_n}(S, s))} \sup_{v \in \mathcal{V}} \Bigg| \frac{1}{n-1} \sum_{j \neq i}  K_{\gamma_n}(S_j, s) v(\bm{Z}_j)
        - \mathbb{E}_{\bm{Z},S} (  K_{\gamma_n}(S, s) v(\bm{Z}) )
        \Bigg| \\
        \leq & \frac{2 }{L_p \gamma_n } \sup_{v \in \mathcal{V}} \Bigg| \frac{1}{n-1} \sum_{j \neq i}  K_{\gamma_n}(S_j, s) v(\bm{Z}_j)
        - \mathbb{E}_{\bm{Z},S} (  K_{\gamma_n}(S, s) v(\bm{Z}) )
        \Bigg|, 
\end{align*}
where the inequality holds by Lemma \ref{lemma_aboutp}.
Let $M := \mathcal{N}(\sqrt{\frac{\gamma_n}{n}}, \mathcal{V}, ||\cdot||_{\infty})$, 
and let $v_1 , \dots, v_M$ be the centers of $\sqrt{\frac{\gamma_n}{n}}$-cover of $\mathcal{V}$ (They do not need to be in $\mathcal{V}$, but we assume that their infinite norm is bounded by 1).
Since $\frac{\epsilon_n}{M_k} > \frac{1}{\sqrt{n \gamma_n}}$ implies $\frac{\gamma_n \epsilon_n}{M_k}  > \sqrt{\frac{\gamma_n}{n}}$, for every $v \in \mathcal{V}$ there exists $m \in \{1,\dots,M\}$ such that $||v- v_m ||_{\infty} \leq \frac{\gamma_n \epsilon_n}{M_k} $ and hence 
\begin{align*}
    \left|\frac{1}{n-1} \sum_{j \neq i}  K_{\gamma_n}(S_j, s) v(\bm{Z}_j) - 
\frac{1}{n-1} \sum_{j \neq i}  K_{\gamma_n}(S_j, s) v_m(\bm{Z}_j) \right| \leq& \gamma_n \epsilon_n 
\end{align*}
and
\begin{align*}
    \left|\mathbb{E}_{\bm{Z},S} (  K_{\gamma_n}(S, s) v(\bm{Z}) ) - 
\mathbb{E}_{\bm{Z},S} (  K_{\gamma_n}(S, s) v_m(\bm{Z}) ) \right| \leq& \gamma_n \epsilon_n 
\end{align*}
hold.
Also, from Lemma \ref{lemma_aboutp} we have
\begin{align*}
    \mathbb{V}_{\bm{Z},S} (  K_{\gamma_n}(S, s) v_m(\bm{Z}) )
    & \leq \mathbb{E}_{\bm{Z},S} (  K_{\gamma_n}(S, s)^2 v_m(\bm{Z})^2 )\\
    & \leq \mathbb{E}_{\bm{Z},S} (  K_{\gamma_n}(S, s)^2 )\\
    & \leq 2 \gamma_n p(s). 
\end{align*}
To sum up, by choosing $c_2 := \frac{6}{L_p}$ we obtain
\begin{align}
    \mathbb{P}^{(n)}_{-i} \left( (\ref{thm_prob_2}) > c_2 \epsilon_n \right) 
    \leq & \mathbb{P}^{(n)}_{-i} \left( \sup_{v \in \mathcal{V}} \Bigg| \frac{1}{n-1} \sum_{j \neq i}  K_{\gamma_n}(S_j, s) v(\bm{Z}_j)
        - \mathbb{E}_{\bm{Z},S} (  K_{\gamma_n}(S, s) v(\bm{Z}) )
        \Bigg|
        > 3 \gamma_n \epsilon_n \right) \nonumber \\
    \leq & \mathbb{P}^{(n)}_{-i} \left( \bigcup_{m=1}^M \left\{ \Bigg| \frac{1}{n-1} \sum_{j \neq i}  K_{\gamma_n}(S_j, s) v_m(\bm{Z}_j)
        - \mathbb{E}_{\bm{Z},S} (  K_{\gamma_n}(S, s) v_m(\bm{Z}) )
        \Bigg|
        > \gamma_n \epsilon_n \right\} \right) \nonumber \\
    \leq & \sum_{m=1}^M 
    \mathbb{P}^{(n)}_{-i} \left( \Bigg| \frac{1}{n-1} \sum_{j \neq i}  K_{\gamma_n}(S_j, s) v_m(\bm{Z}_j)
        - \mathbb{E}_{\bm{Z},S} (  K_{\gamma_n}(S, s) v_m(\bm{Z}) )
        \Bigg|
        > \gamma_n \epsilon_n \right) \nonumber\\
    \leq & 2M \exp \left(-\frac{n \gamma_n^2 \epsilon_n^2   }{4 p(s) \gamma_n + \frac{2}{3} M_k \gamma_n \epsilon_n } \right) \nonumber\\
    \leq & 2M \exp \left(-\frac{1}{5 U_p} n \gamma_n \epsilon_n^2  \right) \nonumber \\
    \leq & \frac{1}{n^2}, \label{thm_prob_bound2}
\end{align}
where we use the Bernstein inequality (Lemma \ref{ineq_bern}) for the fourth inequality
and use the fact $\log M = \log \mathcal{N}(\sqrt{\frac{\gamma_n}{n}}, \mathcal{V}, ||\cdot||_{\infty}) \leq \frac{n \epsilon_n^2 \gamma_n}{\log n}$ for the last inequality.

\textbf{Bound for (\ref{thm_prob_3})} :
By Lemma \ref{lemma_aboutp} and Lemma \ref{lemma_aboutm}, 
there exist positive constants $c_{3,1}$ and $c_{3,2}$ not depending on $n$, $m$ and $s$ such that
\begin{align*}
    \left|\frac{1}{\gamma_n} \mathbb{E}_{S} (K_{\gamma_n}(S, s)) - p(s) \right| \leq c_{3,1} \gamma_n^2 
\end{align*}
and
\begin{align*}
    \left| \frac{1}{\gamma_n } \mathbb{E}_{\bm{Z},S} (  K_{\gamma_n}(S, s) v(\bm{Z})) - m(s)p(s) \right|
    \leq c_{3,2} \gamma_n^2
\end{align*}
hold. Hence, there exists a constant $c_3 > 0$ not depending on $n$, $m$ and $s$ such that
\begin{align}
    (\ref{thm_prob_3}) = & \sup_{v \in \mathcal{V}} \Bigg| \frac{\mathbb{E}_{\bm{Z},S} (  K_{\gamma_n}(S, s) v(\bm{Z})  )}{\mathbb{E}_{S} (K_{\gamma_n}(S, s))}
    - \mathbb{E}_{\bm{Z}}(v(\bm{Z})|S=s) \Bigg| \nonumber \\
    = & \sup_{v \in \mathcal{V}} \Bigg| \frac{\mathbb{E}_{\bm{Z},S} (  K_{\gamma_n}(S, s) v(\bm{Z})  )}{\mathbb{E}_{S} (K_{\gamma_n}(S, s))}
    - m(s) \Bigg| \nonumber \\
    \leq & c_3 \gamma_n^2.
    \label{thm_prob_bound3}
\end{align}
\textbf{Bound for (\ref{thm_prob_4})} : 
Let $M := \mathcal{N}(\sqrt{\frac{\gamma_n}{n}}, \mathcal{V}, ||\cdot||_{\infty})$, 
and let $v_1 , \dots, v_M$ be the centers of $\sqrt{\frac{\gamma_n}{n}}$-cover of $\mathcal{V}$ (They do not need to be in $\mathcal{V}$, but we assume that their infinite norm is bounded by 1).
Since $\epsilon_n  > \sqrt{\frac{\gamma_n}{n}}$, for every $v \in \mathcal{V}$ there exists $m \in \{1,\dots,M\}$ such that $||v- v_m ||_{\infty} \leq \epsilon_n $.
Hence, we obtain
\begin{align}
    \mathbb{P}^{(n)}_{-i} \left( (\ref{thm_prob_4}) > 3 \epsilon_n \right) 
    = & \mathbb{P}^{(n)}_{-i} \left( \sup_{v \in \mathcal{V}} \left| \frac{1}{n-1} \sum_{j \neq i} v({\bm{Z}}_j) - \mathbb{E}_{\bm{Z}} v(\bm{Z}) \right|
        > 3 \epsilon_n \right) \nonumber \\
    \leq & \mathbb{P}^{(n)}_{-i} \left( \bigcup_{m=1}^M \left\{ \Bigg| \frac{1}{n-1} \sum_{j \neq i}  v_m(\bm{Z}_j)
        - \mathbb{E}_{\bm{Z}}  v_m(\bm{Z}) 
        \Bigg|
        > \epsilon_n \right\} \right) \nonumber \\
    \leq & \sum_{m=1}^M 
    \mathbb{P}^{(n)}_{-i} \left( \Bigg| \frac{1}{n-1} \sum_{j \neq i}  v_m(\bm{Z}_j)
        - \mathbb{E}_{\bm{Z}} v_m(\bm{Z} )
        \Bigg|
        > \epsilon_n \right) \nonumber \\
    \leq & 2M \exp \left(-\frac{n \epsilon_n^2 }{2} \right) \nonumber \\
    \leq & \frac{1}{n^2}, \label{thm_prob_bound4}
\end{align}
where we use the Hoeffding inequality (Lemma \ref{ineq_hoeff}) for the third inequality.
   
    In conclusion, by (\ref{thm_prob_bound1}), (\ref{thm_prob_bound2}), (\ref{thm_prob_bound3}) and (\ref{thm_prob_bound4}), we have
    \begin{align}        
         \mathbb{P}^{(n)}_{-i} \Bigg( \left|\textup{IPM}_{\mathcal{V}} (\hat{\mbP}^{(-i), \gamma_n}_{\bm{Z}|S=s} ,\hat{\mbP}^{(-i)}_{\bm{Z}}) - \textup{IPM}_{\mathcal{V}} (\mbP_{\bm{Z}|S=s} ,\mbP_{\bm{Z}}) \right|        
        > c' \epsilon_n \Bigg)
         \leq \frac{3}{n^2}  \label{for_suph_tmp2}
    \end{align}
    for every $s \in \mathcal{S}$, where $c'$ is the constant not depending on $n$, $m$ and $s$.  
    Hence, we get
    \begin{align}
        \left|\hat{\textup{EIPM}}^{\gamma_n}_{\mathcal{V}} (\bm{Z} ; S)
        - \frac{1}{n} \sum_{i=1}^n \left[ \textup{IPM}_\mathcal{V} (\mbP_{\bm{Z}|S = S_i} ,\mbP_{\bm{Z}}) \right] \right| \nonumber
        = & \left| \frac{1}{n} \sum_{i=1}^n \textup{IPM}_{\mathcal{V}} (\hat{\mbP}^{(-i), \gamma_n}_{\bm{Z}|S=S_i} ,\hat{\mbP}^{(-i)}_{\bm{Z}})
        - \frac{1}{n} \sum_{i=1}^n \left[ \textup{IPM}_\mathcal{V} (\mbP_{\bm{Z}|S = S_i} ,\mbP_{\bm{Z}}) \right] \right| \nonumber\\
        \leq & \frac{1}{n} \sum_{i=1}^n \left|  \textup{IPM}_{\mathcal{V}} (\hat{\mbP}^{(-i), \gamma_n}_{\bm{Z}|S=S_i} ,\hat{\mbP}^{(-i)}_{\bm{Z}})
        -  \left[ \textup{IPM}_\mathcal{V} (\mbP_{\bm{Z}|S = S_i} ,\mbP_{\bm{Z}}) \right] \right| \nonumber \\
        \leq & c' \epsilon_n \label{thm_prob_bound5}
    \end{align}
    for sufficiently large n with probability at least $1-\frac{3}{n}$.
    Finally, since $0 \leq \textup{IPM}_\mathcal{V} (\mbP_{\bm{Z}|S = S_i} ,\mbP_{\bm{Z}}) \leq 1$ , we have
    \begin{align}
        \mathbb{P}^{(n)} \Bigg( \Bigg| \frac{1}{n} \sum_{i=1}^n \left[ \textup{IPM}_\mathcal{V} (\mbP_{\bm{Z}|S = S_i} ,\mbP_{\bm{Z}}) \right]
        - \mathbb{E}_{S} \left[ \textup{IPM}_\mathcal{V} (\mbP_{\bm{Z}|S} ,\mbP_{\bm{Z}}) \right] \Bigg| > \sqrt{\frac{\log (2n)}{2n}} \Bigg)
         \leq \frac{1}{n} \label{thm_prob_bound6}.
    \end{align}
    by the Hoeffding's inequality (Lemma \ref{ineq_hoeff}).    
    By (\ref{thm_prob_bound5}) and (\ref{thm_prob_bound6}), we obtain the assertion.
\end{proof}

\subsection{Lemma and Proof for Theorem \ref{thm_encoder_set}} \label{sec_proof_encoderconv}
\begin{lemma} \label{cover_composite}
    Consider the sets of functions $\mathcal{H} \subseteq \{ h : \mathcal{X} \to \mathcal{Z} \}$ and $\mathcal{V} \subseteq \{ v : \mathcal{Z} \to \mathcal{Y} \}$.
    Assume that the elements of $\mathcal{V}$ are Lipschitz functions with a Lipschitz constant $L>0$.
    For every $\epsilon_1 >0$ and $\epsilon_2 >0$, we have
    $$\mathcal{N}(\epsilon_1 + L\epsilon_2, \mathcal{V} \circ \mathcal{H}, ||\cdot||_{\infty}) 
    \leq \mathcal{N}(\epsilon_1 , \mathcal{V}, ||\cdot||_{\infty})
    \mathcal{N}(\epsilon_2 , \mathcal{H}, ||\cdot||_{\infty}) 
    .$$
\end{lemma}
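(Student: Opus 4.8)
The plan is to exhibit an explicit product cover of $\mathcal{V} \circ \mathcal{H}$ built from minimal covers of the two factors, and then bound the composite sup-norm approximation error by a two-term triangle inequality. First I would fix a minimal $\epsilon_1$-cover $\{v_1, \dots, v_N\}$ of $\mathcal{V}$, chosen inside $\mathcal{V}$, so that $N = \mathcal{N}(\epsilon_1, \mathcal{V}, ||\cdot||_{\infty})$ and each $v_a$ is $L$-Lipschitz; and a minimal $\epsilon_2$-cover $\{h_1, \dots, h_M\}$ of $\mathcal{H}$, so that $M = \mathcal{N}(\epsilon_2, \mathcal{H}, ||\cdot||_{\infty})$. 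The candidate cover of the composition class is the set $\{\, v_a \circ h_b : a \in [N],\, b \in [M] \,\}$, whose cardinality is at most $NM$; it then remains only to check that this set is an $(\epsilon_1 + L\epsilon_2)$-cover of $\mathcal{V} \circ \mathcal{H}$ with respect to $||\cdot||_{\infty}$.

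To verify this, given an arbitrary element $v \circ h$ of $\mathcal{V} \circ \mathcal{H}$, I would pick indices $a, b$ with $||v - v_a||_{\infty} \le \epsilon_1$ and $||h - h_b||_{\infty} \le \epsilon_2$. Then for every $\bm{x} \in \mathcal{X}$,
\begin{align*}
|v(h(\bm{x})) - v_a(h_b(\bm{x}))|
\le |v(h(\bm{x})) - v_a(h(\bm{x}))| + |v_a(h(\bm{x})) - v_a(h_b(\bm{x}))|.
\end{align*}
The first term is at most $||v - v_a||_{\infty} \le \epsilon_1$, obtained by evaluating the sup-norm bound at the point $h(\bm{x}) \in \mathcal{Z}$; the second term is at most $L\, ||h(\bm{x}) - h_b(\bm{x})|| \le L\epsilon_2$ by the $L$-Lipschitz property of $v_a$. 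Taking the supremum over $\bm{x} \in \mathcal{X}$ gives $||v \circ h - v_a \circ h_b||_{\infty} \le \epsilon_1 + L\epsilon_2$, so the product set is indeed an $(\epsilon_1 + L\epsilon_2)$-cover, and the bound on $\mathcal{N}(\epsilon_1 + L\epsilon_2, \mathcal{V} \circ \mathcal{H}, ||\cdot||_{\infty})$ follows from its cardinality estimate.

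The argument is essentially routine, and there is no serious obstacle; the only point that needs attention is the middle term of the decomposition, where the Lipschitz constant of $\mathcal{V}$ must be applied to the cover center $v_a$ rather than to $v$ itself. This is precisely why it is convenient to choose the $\epsilon_1$-cover of $\mathcal{V}$ among the elements of $\mathcal{V}$, so that each $v_a$ is automatically $L$-Lipschitz, and it is the only place in the proof where the Lipschitz hypothesis on $\mathcal{V}$ is used.
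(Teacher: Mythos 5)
Your proof is correct and uses the same product-cover strategy as the paper, but with a different split in the triangle inequality: you interpolate through $v_a \circ h$ (swapping $v$ for its cover center first, then $h$), whereas the paper interpolates through $v \circ h_{m_2}$ (swapping $h$ first, then $v$). The two decompositions are symmetric, but they are not equivalent in one respect that you correctly identified: your version applies the Lipschitz bound to the \emph{cover center} $v_a$, so you must ensure the $\epsilon_1$-cover of $\mathcal{V}$ is internal (i.e.\ the centers $v_a$ lie in $\mathcal{V}$) in order for $v_a$ to be $L$-Lipschitz. The paper's ordering applies the Lipschitz bound to the original function $v \in \mathcal{V}$, which is always $L$-Lipschitz by hypothesis, so it never needs the cover centers to belong to $\mathcal{V}$ at all. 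If the covering number $\mathcal{N}(\cdot,\mathcal{V},\|\cdot\|_\infty)$ is allowed to use external centers (the more common convention), your argument as stated carries a small extra cost: one would either have to justify that an internal minimal cover exists, or pass from an external $\epsilon_1$-cover to an internal $2\epsilon_1$-cover, slightly degrading the radius. The paper's ordering sidesteps this entirely, which is the one concrete advantage of its version; otherwise the two arguments are of equal length and difficulty.
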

\begin{proof}
    Let $M_{1} := \mathcal{N}(\epsilon_1, \mathcal{V}, ||\cdot||_{\infty})$, 
and let $v_1 , \dots, v_{M_{1}}$ be the centers of $\epsilon_1$-cover of $\mathcal{V}$. 
    Similarly, we define $M_{2} := \mathcal{N}(\epsilon_2, \mathcal{H}, ||\cdot||_{\infty})$, and let $h_1 , \dots, h_{M_2}$ be the centers of $\epsilon_2$-cover of $\mathcal{H}$.
    For any $v \in \mathcal{V}$ and $h \in \mathcal{H}$, there exist $m_1 \in \{1,\dots,M_1 \}$ and $m_2 \in \{1,\dots,M_2 \}$  
    such that $||v - v_{m_1}||_{\infty} \leq \epsilon_1$
    and $||h - h_{m_2}||_{\infty} \leq \epsilon_2$.
    By the definition of infinite norm, we have
    $$||(v \circ h_{m_2}) - (v_{m_1} \circ h_{m_2})||_{\infty} \leq ||v - v_{m_1}||_{\infty}.$$
    Also, by Lipschitz condition we have 
    $$||(v \circ h) - (v \circ h_{m_2})||_{\infty} \leq L ||h - h_{m_2}||_{\infty}.$$
    To sum up, we obtain
    \begin{align*}
        ||(v \circ h) - (v_{m_1} \circ h_{m_2})||_{\infty} 
        \leq & ||(v \circ h) - (v \circ h_{m_2})||_{\infty} 
        + ||(v \circ h_{m_2}) - (v_{m_1} \circ h_{m_2})||_{\infty} \\
        \leq & L ||h - h_{m_2}||_{\infty} + ||v - v_{m_1}||_{\infty}  \\
        \leq & \epsilon_1 + L\epsilon_2
    \end{align*}    
    This implies that balls with centered at $(v_1 \circ h_1), (v_1 \circ h_2), \dots, (v_{m_1} \circ h_{m_2})$ with radius $\epsilon_1 + L\epsilon_2$ can cover $\mathcal{V} \circ \mathcal{H}$.     
\end{proof}

\subsubsection{Proof for Theorem \ref{thm_encoder_set}}
\begin{proof}
    For given $i \in \{1,\dots,n\}$ and given $S_i = s$, we have
    \begin{align}
        & \sup_{h \in \mathcal{H}} \left |\textup{IPM}_{\mathcal{V}} (\hat{\mbP}^{(-i), \gamma_n}_{\bm{Z}|S=s} ,\hat{\mbP}^{(-i)}_{\bm{Z}}) - \textup{IPM}_{\mathcal{V}} (\mbP_{\bm{Z}|S=s} ,\mbP_{\bm{Z}}) \right| \nonumber \\
        & = \sup_{h \in \mathcal{H}} \Bigg| \sup_{v \in \mathcal{V}} \left| \sum_{j \neq i}  \hat{w}_{\gamma_n}(j;i) v(\bm{Z}_j) -  \frac{1}{n-1} \sum_{j \neq i} v(\bm{Z}_j) \right| - \sup_{v \in \mathcal{V}} \Big| \mathbb{E}_{\bm{Z}}(v(\bm{Z})|S=s) -  \mathbb{E}_{\bm{Z}} v(\bm{Z}) \Big|
        \Bigg|  \nonumber \\
        & \leq  \sup_{h \in \mathcal{H}} \sup_{v \in \mathcal{V}} \Bigg| \sum_{j \neq i}  \hat{w}_{\gamma_n}(j;i) v(\bm{Z}_j) -  \frac{1}{n-1} \sum_{j \neq i} v(\bm{Z}_j) - \mathbb{E}_{\bm{Z}} (v(\bm{Z})|S=s) + \mathbb{E}_{\bm{Z}} v(\bm{Z}) \Bigg| \nonumber \\  
        & =  \sup_{h \in \mathcal{H}} \sup_{v \in \mathcal{V}} \Bigg| \sum_{j \neq i}  \hat{w}_{\gamma_n}(j;i) v(h(\bm{X}_j)) -  \frac{1}{n-1} \sum_{j \neq i} v(h(\bm{X}_j)) - \mathbb{E}_{\bm{X}} (v(h(\bm{X}))|S=s) + \mathbb{E}_{\bm{X}} v(h(\bm{X}_j)) \Bigg| \nonumber \\
        & =  \sup_{g \in \mathcal{V} \circ \mathcal{H}} 
        \Bigg| \sum_{j \neq i}  \hat{w}_{\gamma_n}(j;i) g(\bm{X}_j) -  \frac{1}{n-1} \sum_{j \neq i} g(\bm{X}_j) - \mathbb{E}_{\bm{X}} g(\bm{X})|S=s) + \mathbb{E}_{\bm{X}} g(\bm{X}_j) \Bigg|, \label{eq_suph_tmp1}       
    \end{align}    
    where the inequality holds by the fact that $|\sup |a| - \sup |b|| \leq \sup|a-b|$.
    Since (\ref{eq_suph_tmp1}) has the same form as the expression of (\ref{for_suph_tmp1}), we can follow the proof of Theorem \ref{EIPM_conv} to obtain a similar result as (\ref{for_suph_tmp2}). In other words, we obtain
    \begin{align*}        
         \mathbb{P}^{(n)}_{-i} \Bigg( \sup_{h \in \mathcal{H}} \left|\textup{IPM}_{\mathcal{V}} (\hat{\mbP}^{(-i), \gamma_n}_{\bm{Z}|S=s} ,\hat{\mbP}^{(-i)}_{\bm{Z}}) - \textup{IPM}_{\mathcal{V}} (\mbP_{\bm{Z}|S=s} ,\mbP_{\bm{Z}}) \right|        
        > c' \epsilon'_n \Bigg)
         \leq \frac{3}{n^2}  
    \end{align*}
    for every $s \in \mathcal{S}$, where
    $$\epsilon'_n := \gamma_n^2 + \frac{\log n}{\sqrt{n \gamma_n}} \left[1 + \log \mathcal{N} \left(\sqrt{\frac{\gamma_n}{n}}, \mathcal{V} \circ \mathcal{H}, ||\cdot||_{\infty}\right) \right]^{\frac{1}{2}}$$
    and $c'$ is the constant not depending on $n$, $m$ and $s$.
    Also, using Lemma \ref{cover_composite}, we obtain
    \begin{align*}
        \epsilon'_n & = \gamma_n^2 + \frac{\log n}{\sqrt{n \gamma_n}} \left(1 + \log \mathcal{N} \left(\sqrt{\frac{\gamma_n}{n}}, \mathcal{V} \circ \mathcal{H}, ||\cdot||_{\infty}\right) \right)^{\frac{1}{2}} \\
        & \leq \gamma_n^2 + \frac{\log n}{\sqrt{n \gamma_n}} \left[1 + \log \mathcal{N} \left(\frac{1}{2}\sqrt{\frac{\gamma_n}{n}}, \mathcal{V}, ||\cdot||_{\infty}\right)
        + \log \mathcal{N} \left(\frac{1}{2L}\sqrt{\frac{\gamma_n}{n}}, \mathcal{H}, ||\cdot||_{\infty}\right)
        \right]^{\frac{1}{2}} \\
        & =: \epsilon_n.
    \end{align*}
    Hence, we get
    \begin{align}
        \sup_{h \in \mathcal{H}} \left|\hat{\textup{EIPM}}^{\gamma_n}_{\mathcal{V}} (\bm{Z} ; S)
        - \frac{1}{n} \sum_{i=1}^n \left[ \textup{IPM}_\mathcal{V} (\mbP_{\bm{Z}|S = S_i} ,\mbP_{\bm{Z}}) \right] \right| \nonumber
        = & \sup_{h \in \mathcal{H}} \left| \frac{1}{n} \sum_{i=1}^n \textup{IPM}_{\mathcal{V}} (\hat{\mbP}^{(-i), \gamma_n}_{\bm{Z}|S=S_i} ,\hat{\mbP}^{(-i)}_{\bm{Z}})
        - \frac{1}{n} \sum_{i=1}^n \left[ \textup{IPM}_\mathcal{V} (\mbP_{\bm{Z}|S = S_i} ,\mbP_{\bm{Z}}) \right] \right| \nonumber\\
        \leq & \frac{1}{n} \sum_{i=1}^n \sup_{h \in \mathcal{H}} \left|  \textup{IPM}_{\mathcal{V}} (\hat{\mbP}^{(-i), \gamma_n}_{\bm{Z}|S=S_i} ,\hat{\mbP}^{(-i)}_{\bm{Z}})
        -  \left[ \textup{IPM}_\mathcal{V} (\mbP_{\bm{Z}|S = S_i} ,\mbP_{\bm{Z}}) \right] \right| \nonumber \\
        \leq & c' \epsilon'_n \nonumber \\
        \leq & c' \epsilon_n \label{thm_prob_bound5_tmp}
    \end{align}
    for sufficiently large n with probability at least $1-\frac{3}{n}$.

Let $M := \mathcal{N}(\frac{1}{2L} \sqrt{\frac{\gamma_n}{n}}, \mathcal{H}, ||\cdot||_{\infty})$, 
and let $h_1 , \dots, h_M$ be the centers of $\frac{1}{2L} \sqrt{\frac{\gamma_n}{n}}$-cover of $\mathcal{H}$
(They do not need to be in $\mathcal{H}$).
Since $\frac{\epsilon_n}{4L}  > \frac{1}{2L} \sqrt{\frac{\gamma_n}{n}}$, for every $h \in \mathcal{H}$ there exists $m \in \{1,\dots,M\}$ such that $||h - h_m ||_{\infty} \leq \frac{\epsilon_n}{4L}$.
For any $s \in \mathcal{S}$, we have
    \begin{align*}
        & \left| \textup{IPM}_\mathcal{V} (\mbP_{h(\bm{X})|S = s} ,\mbP_{h(\bm{X})}) - \textup{IPM}_\mathcal{V} (\mbP_{h_m(\bm{X})|S = s} ,\mbP_{h_m(\bm{X})}) \right| \\
        & = \left| \sup_{v \in \mathcal{V}} \left| \mathbb{E}_{\bm{X}}(v \circ h(\bm{X})|S = s) - \mathbb{E}_{\bm{X}}(v \circ h(\bm{X}))  \right| - \sup_{v \in \mathcal{V}} \left| \mathbb{E}_{\bm{X}}(v \circ h_m(\bm{X})|S = s) - \mathbb{E}_{\bm{X}}(v \circ h_m(\bm{X})) \right| \right| \\
        & \leq  \sup_{v \in \mathcal{V}} \Bigg| \mathbb{E}_{\bm{X}}(v \circ h(\bm{X})|S = s) - \mathbb{E}_{\bm{X}}(v \circ h(\bm{X}))  - \mathbb{E}_{\bm{X}}(v \circ h_m(\bm{X})|S = s) +\mathbb{E}_{\bm{X}}(v \circ h_m(\bm{X}))  \Bigg| \\
        & \leq  \sup_{v \in \mathcal{V}} \Bigg| \mathbb{E}_{\bm{X}}(v \circ h(\bm{X})|S = s) - \mathbb{E}_{\bm{X}}(v \circ h_m(\bm{X})|S = s)  \Bigg|  + \sup_{v \in \mathcal{V}} \Bigg| \mathbb{E}_{\bm{X}}(v \circ h(\bm{X}))  - \mathbb{E}_{\bm{X}}(v \circ h_m(\bm{X}))  \Bigg| \\
        & \leq L \cdot \frac{\epsilon_n}{4L} + L \cdot \frac{\epsilon_n}{4L} = \frac{\epsilon_n}{2},
    \end{align*}
    where the first inequality holds by the fact that $|\sup |a| - \sup |b|| \leq \sup|a-b|$ and the third inequality uses the Lipschitz condition. Using this result, we get
    \begin{align*}
        & \Bigg| \frac{1}{n} \sum_{i=1}^n \left[ \textup{IPM}_\mathcal{V} (\mbP_{h(\bm{X})|S = S_i} ,\mbP_{h(\bm{X})}) \right]
        - \mathbb{E}_{S} \left[ \textup{IPM}_\mathcal{V} (\mbP_{h(\bm{X})|S} ,\mbP_{h(\bm{X})}) \right] \Bigg|\\
        & \qquad -  \Bigg| \frac{1}{n} \sum_{i=1}^n \left[ \textup{IPM}_\mathcal{V} (\mbP_{h_m (\bm{X})|S = S_i} ,\mbP_{h_m (\bm{X})}) \right]
        - \mathbb{E}_{S} \left[ \textup{IPM}_\mathcal{V} (\mbP_{h_m (\bm{X})|S} ,\mbP_{h_m (\bm{X})}) \right] \Bigg| \\
        & \leq \Bigg| \frac{1}{n} \sum_{i=1}^n \left[ \textup{IPM}_\mathcal{V} (\mbP_{h(\bm{X})|S = S_i} ,\mbP_{h(\bm{X})}) \right]
        - \frac{1}{n} \sum_{i=1}^n \left[ \textup{IPM}_\mathcal{V} (\mbP_{h_m (\bm{X})|S = S_i} ,\mbP_{h_m (\bm{X})}) \right] \Bigg|\\
        & \qquad +  \Bigg| \mathbb{E}_{S} \left[ \textup{IPM}_\mathcal{V} (\mbP_{h(\bm{X})|S} ,\mbP_{h(\bm{X})}) \right] 
        - \mathbb{E}_{S} \left[ \textup{IPM}_\mathcal{V} (\mbP_{h_m (\bm{X})|S} ,\mbP_{h_m (\bm{X})}) \right] \Bigg| \\
        & \leq \frac{\epsilon_n}{2} + \frac{\epsilon_n}{2} = \epsilon_n,
    \end{align*}
    where the first inequality holds by the fact that $|a-b| - |c-d| \leq |a-c| + |b-d|$.
    Hence, we obtain
    \begin{align}
        &\mathbb{P}^{(n)} \Bigg( \sup_{h \in \mathcal{H}} \Bigg| \frac{1}{n} \sum_{i=1}^n \left[ \textup{IPM}_\mathcal{V} (\mbP_{\bm{Z}|S = S_i} ,\mbP_{\bm{Z}}) \right]
        - \mathbb{E}_{S} \left[ \textup{IPM}_\mathcal{V} (\mbP_{\bm{Z}|S} ,\mbP_{\bm{Z}}) \right] \Bigg| > 2 \epsilon_n  \Bigg) \nonumber \\
        & \leq \mathbb{P}^{(n)} \Bigg( \bigcup_{m=1}^M \Bigg| \frac{1}{n} \sum_{i=1}^n \left[ \textup{IPM}_\mathcal{V} (\mbP_{h_m(\bm{X})|S = S_i} ,\mbP_{h_m(\bm{X})}) \right]
        - \mathbb{E}_{S} \left[ \textup{IPM}_\mathcal{V} (\mbP_{h_m(\bm{X})|S} ,\mbP_{h_m(\bm{X})}) \right] \Bigg| > \epsilon_n \Bigg) \nonumber \\    
        & \leq \sum_{m=1}^M \mathbb{P}^{(n)} \Bigg( \Bigg| \frac{1}{n} \sum_{i=1}^n \left[ \textup{IPM}_\mathcal{V} (\mbP_{h_m(\bm{X})|S = S_i} ,\mbP_{h_m(\bm{X})}) \right]
        - \mathbb{E}_{S} \left[ \textup{IPM}_\mathcal{V} (\mbP_{h_m(\bm{X})|S} ,\mbP_{h_m(\bm{X})}) \right] \Bigg| > \epsilon_n \Bigg) \nonumber \\  
        & \leq 2M \exp\left( -\frac{n \epsilon_n^2}{2} \right) \nonumber \\
         &\leq \frac{1}{n} \label{thm_prob_bound6_tmp},
    \end{align}
    where we use the Hoeffding inequality (Lemma \ref{ineq_hoeff}) for the third inequality.
    By (\ref{thm_prob_bound5_tmp}) and (\ref{thm_prob_bound6_tmp}), we obtain the assertion.
\end{proof}

\clearpage

\section{Algorithm Details} \label{App_alg}

Algorithm \ref{alg:eipm} presents the overall algorithm of FREM.

\begin{algorithm}[h]
\caption{FREM}
\label{alg:eipm}
    \begin{algorithmic}[1]
        \REQUIRE 1. Network parameters.
        \\
        $\theta$: Parameter of the representation encoder $h.$
        \\
        $\phi$: Parameter of prediction head $f.$
        \REQUIRE 2. Hyper-parameters.
        \\
        $\lambda:$ Regularization parameter.
        \\
        $\textup{lr}:$ Learning rate.
        \\
        $T$: Training epochs.
        \\
        $n_{\textup{mb}}:$ Mini-batch size.
        \\
        $\gamma$ : Radius of kernel for EIPM estimation.
        \\
        \FOR{$t = 1, \cdots, T$}
        \STATE Randomly sample a mini-batch $(\bm{x}_{i}, y_{i}, s_{i})_{i=1}^{n_{\textup{mb}}}$
        \newline
        \STATE \textbf{(Compute task loss)}
        \\
        $\mathcal{L}_{\textup{sup}}(\theta, \phi) = \frac{1}{n_{\textup{mb}}} \sum_{i=1}^{n_{\textup{mb}}} l(y_{i}, f_{\phi}(h_{\theta}(\bm{x}_{i}))) $ 
        \newline
        \STATE \textbf{(Compute EIPM)}
        \\
        Compute $n_{\textup{mb}} \times n_{\textup{mb}}$ matrix $A_{\gamma}$ by
        $$[A_{\gamma}]_{i,j} = \frac{K_{\gamma}(s_i , s_j)}{\sum_{j \neq i} K_{\gamma}(s_i , s_j) } - \frac{1}{n_{\textup{mb}}-1}$$
        \STATE Compute
        $$
        \mathcal{L}_{\textup{fair}}(\theta) = \sum_{i=1}^{n_{\textup{mb}}} \left( \sum_{j,k \neq i} [A_\gamma]_{i,j}   [A_\gamma]_{i,k} \kappa(j, k) \right)^{\frac{1}{2}}
        $$
        where $\kappa(j, k) := \kappa(h_{\theta}(\bm{x}_{j}), h_{\theta}(\bm{x}_{k})).$
        \STATE Divide by mini-batch size
        $ \mathcal{L}_{\textup{fair}}(\theta) \leftarrow \frac{1}{n_{\textup{mb}}} \mathcal{L}_{\textup{fair}}(\theta) $
        \STATE \textbf{(Total loss)}
        \\
        $\mathcal{L}(\theta, \phi) = \mathcal{L}_{\textup{sup}}(\theta, \phi) + \lambda \mathcal{L}_{\textup{fair}}(\theta)$
        \STATE \textbf{(Parameter updates)}
        \\
        $\theta \leftarrow \theta - \textup{lr} \cdot\nabla_{\theta} \mathcal{L} (\theta, \phi)
        \newline
        \phi \leftarrow \phi - \textup{lr}\cdot\nabla_{\phi} \mathcal{L} (\theta, \phi)$ 
        \ENDFOR
        \newline
        \textbf{Return} $\theta$ and $\phi$
    \end{algorithmic}
\end{algorithm}

\clearpage
Also, 
Algorithm \ref{alg:ptlike} provides a \texttt{Pytorch}-style pseudo-code for computing EIPM.

\definecolor{commentcolor}{RGB}{110,154,155}  
\newcommand{\PyComment}[1]{\ttfamily\textcolor{commentcolor}{\# #1}}  
\newcommand{\PyCode}[1]{\ttfamily\textcolor{black}{#1}}

\begin{algorithm}[h]
    \SetAlgoLined
    \PyComment{Function that outputs the kernel matrix between v and w using RBF kernel with bandwidth a.} \\
    \PyCode{def Kernel\_matrix(Xi, Xj, a):} \\
    \Indp
    \PyComment{Xi, Xj: a pair of representation vectors of size [n, m]} \\
    \PyComment{a: bandwidth for kernel} \\
    \PyCode{m = - torch.cdist(Xi, Xj, p=2)**2} \\
    \PyCode{m = m / (2 * a**2)} \\
    \PyCode{m = torch.exp(m)} \\
    \PyCode{return m} \\
    \Indm
    \PyComment{Function that outputs EIPM value between z and s using bandwidths $\sigma$ and $\gamma$.} \\
    \PyCode{def compute\_EIPM(z, s, $\sigma$, $\gamma$):} \\
    \Indp
    \PyComment{z: representation vectors of size [n, m]} \\
    \PyComment{s: sensitive attributes of size [n, 1]} \\
    \PyComment{$\sigma$: bandwidth in MMD} \\
    \PyComment{$\gamma$: bandwidth of kernel estimator} \\
    \PyComment{} \\
    \PyComment{Kernel method for s} \\
    \PyCode{A = Kernel\_matrix(s, s, $\gamma$) - torch.eye(B)} \\
    \PyCode{A = A / A.sum(dim=0)} \\
    \PyCode{A = A - 1 / (n - 1)} \\
    \PyCode{A = A.fill\_diagonal\_(0)} \\
    \PyComment{MMD} \\
    \PyCode{K = Kernel\_matrix(z, z, $\sigma$)} \\
    \PyComment{Compute EIPM} \\
    \PyCode{EIPM = torch.einsum('ij, ik' -> 'ijk', A.T, A.T)} \\
    \PyCode{EIPM = torch.sum(EIPM * K.unsqueeze(dim=0), dim=(1, 2)).sum()} \\
    \PyCode{EIPM = EIPM / n} \\
    \PyComment{} \\
    \PyCode{return EIPM}
    \caption{\texttt{PyTorch}-style pseudo-code for computing EIPM}
    \label{alg:ptlike}
\end{algorithm}

\clearpage

\section{Extension to Equal Opportunity} \label{App_EO}
\renewcommand{\theequation}{C.\arabic{equation}}

\subsection{Definition and properties of EIPM for equal opportunity}

Equal Opportunity (EO) \cite{hardt2016equality} is another important group fairness measure, besides DP.
For a binary sensitive attribute $S \in \{0,1\}$, binary output $Y \in \{0,1\}$ and given prediction model $g : \cX \to  \{0,1\}$, EO is defined as
$$
\Delta \texttt{EO}(g) = 
\left\vert \mathbb{E}_{\bm{X}} \left( g(\bm{X}) | S = 1, Y = 1 \right) - \mathbb{E}_{\bm{X}} \left( g(\bm{X})  | S = 0, Y = 1 \right) \right\vert.
$$
Considering the concept of \cite{jiang2021generalized}, it is natural to define Generalized Equal Opportunity (GEO) as
$$\Delta \textup{\texttt{GEO}}(g) =  \mathbb{E}_S \Big| \mathbb{E}_{\bm{X} } ( g(\bm{X})  | S, Y=1 ) - \mathbb{E}_{\bm{X}} ( g(\bm{X}) | Y=1 ) \Big|$$
as an extension of EO for continuous sensitive attributes.

FREM algorithm (for DP) can be modified easily for EO.
Instead of (\ref{def_EIPM}), which is the definition of EIPM for DP, we consider
\begin{align}
    \textup{EIPM}_\mathcal{V} (\bm{Z}; S | Y=1) := \mathbb{E}_{S} \left[ \textup{IPM}_\mathcal{V} (\mbP_{\bm{Z}|S, Y=1} ,\mbP_{\bm{Z}|Y=1}) \right]. \label{def_EIPM_EO}
\end{align}
First, we give a basic property  that the zero EIPM (for GEO) value guarantees perfectly fair representation. 
\begin{theorem} \label{pro_basic_EO}
    Assume that a set of discriminator $\mathcal{V}$ is large enough for $\textup{IPM}_\mathcal{V}$ to be a metric on the probability space of $\mathcal{Z}$. 
    Then, $\textup{EIPM}_\mathcal{V} (\bm{Z}; S| Y=1) = 0$ implies $\Delta \textup{\texttt{GEO}}( f \circ h ) = 0$ for any (bounded) prediction head $f$.
\end{theorem}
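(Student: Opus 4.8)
The plan is to replay the proof of Theorem~\ref{pro_basic} verbatim, carrying the conditioning event $\{Y=1\}$ through every step; the structure is essentially unchanged, so I expect no genuine obstacle, only a couple of measure-theoretic bookkeeping points. First I would observe that $\textup{IPM}_\mathcal{V}(\mbP_{\bm{Z}|S=s, Y=1}, \mbP_{\bm{Z}|Y=1}) \ge 0$ for every $s$, so the hypothesis
$$
\textup{EIPM}_\mathcal{V}(\bm{Z}; S \mid Y=1) = \mathbb{E}_{S}\left[\textup{IPM}_\mathcal{V}(\mbP_{\bm{Z}|S, Y=1}, \mbP_{\bm{Z}|Y=1})\right] = 0
$$
forces $\textup{IPM}_\mathcal{V}(\mbP_{\bm{Z}|S=s, Y=1}, \mbP_{\bm{Z}|Y=1}) = 0$ for $\mbP_S$-almost every $s \in \mathcal{S}$ (here the implicit assumption $\mathbb{P}(Y=1)>0$ is needed so that the conditional laws are well defined).

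Next I would invoke the standing assumption that $\mathcal{V}$ is rich enough that $\textup{IPM}_\mathcal{V}$ separates probability measures on $\mathcal{Z}$, i.e.\ it is a metric; this upgrades the previous display to $\mbP_{\bm{Z}|S=s, Y=1} \equiv \mbP_{\bm{Z}|Y=1}$ for $\mbP_S$-almost every $s$. Consequently $\mathbb{E}_{\bm{X}}(f\circ h(\bm{X}) \mid S=s, Y=1) = \mathbb{E}_{\bm{X}}(f\circ h(\bm{X}) \mid Y=1)$ for $\mbP_S$-almost every $s$, for any bounded head $f$ (boundedness guarantees these conditional expectations are finite).

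Finally I would plug this into the definition~(\ref{def:deltageo}) of $\Delta\texttt{GEO}$: since the outer expectation there is taken with respect to the same marginal $\mbP_S$, the $\mbP_S$-null exceptional set is irrelevant, and
$$
\Delta\texttt{GEO}(f\circ h) = \mathbb{E}_{S}\Big| \mathbb{E}_{\bm{X}}(f\circ h(\bm{X}) \mid S, Y=1) - \mathbb{E}_{\bm{X}}(f\circ h(\bm{X}) \mid Y=1) \Big| = 0,
$$
which is the claim. The only points worth flagging are the two just mentioned --- well-definedness of the conditioning ($\mathbb{P}(Y=1)>0$) and the matching of the $\mbP_S$-null set with the outer integral in $\Delta\texttt{GEO}$ --- but neither is a real difficulty; the argument is otherwise identical to that of Theorem~\ref{pro_basic}, and one could alternatively cite that theorem applied to the conditional model given $Y=1$.
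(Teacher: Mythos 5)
Your proof is correct and follows essentially the same route as the paper's own proof: zero expectation of a nonnegative IPM forces almost-sure vanishing, the metric property upgrades this to almost-sure equality of conditional laws, and the $\Delta\texttt{GEO}$ integral collapses to zero. Your additional remarks about requiring $\mathbb{P}(Y=1)>0$ and about the null set being compatible with the outer $\mbP_S$-integral are sound bookkeeping points that the paper leaves implicit.
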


\begin{proof}
    From $\textup{EIPM}_\mathcal{V} (\bm{Z}; S|Y=1) = \mathbb{E}_{S} \left[ \textup{IPM}_\mathcal{V} (\mbP_{\bm{Z}|S, Y=1} ,\mbP_{\bm{Z}|Y=1}) \right] = 0$, we have $\textup{IPM}_\mathcal{V} (\mbP_{\bm{Z}|S,Y=1} ,\mbP_{\bm{Z}|Y=1}) = 0$ almost surely (with respect to probability of $S$).
Since $\textup{IPM}_\mathcal{V}$ is a metric on the probability space of $\mathcal{Z}$, we have   
$\mbP_{\bm{Z}|S,Y=1} \equiv \mbP_{\bm{Z}|Y=1}$ almost surely.
Hence, for any bounded prediction head $f$, we get
\begin{align*}
    \Delta \textup{\texttt{GEO}}(f \circ h) =&  \mathbb{E}_{S} \Big| \mathbb{E}_{\bm{Z} } ( f(\bm{Z})  | S , Y=1 ) - \mathbb{E}_{\bm{Z}} ( f(\bm{Z}) | Y=1 ) \Big| \\
    \leq & \int_{S} \int_{\bm{Z}} \left|f(\bm{Z}) 
    \left(d\mbP_{\bm{Z}|S, Y=1} - d\mbP_{\bm{Z}|Y=1}\right) \right| d\mbP_S \\
    = & 0.
\end{align*}
\end{proof}

Another important property of (\ref{def_EIPM_EO}) is that the level of GEO of a given prediction head can be controlled by the level of EIPM (for GEO) as long as the prediction head is included in a properly defined function class.
\begin{theorem} \label{GDP_upper_EO}
    Let $\bm{Z}=h(\bm{X})$ be a representation corresponding to an encoding function $h$.
    For a class $\mathcal{V}$ of discriminators and a class $\mathcal{F}$ of prediction heads, we have the following results.
    \begin{enumerate}
        \item Let $\kappa := \sup_{f \in \mathcal{F}} \inf_{v \in \mathcal{V}} || f - v ||_{\infty}$. 
        Then for any prediction head $f \in \mathcal{F}$, we have
    $$\Delta \textup{\texttt{GEO}}(f \circ h) \leq \textup{EIPM}_{\mathcal{V}} (\bm{Z}; S| Y=1) + 2\kappa.$$
        \item Assume there exists an increasing concave function $\xi : [0,\infty) \to [0,\infty)$ such that $\lim_{r \downarrow 0} \xi(r) = 0$ and $\textup{IPM}_{\mathcal{F}}(\mbP^0, \mbP^1) \leq \xi(\textup{IPM}_{\mathcal{V}}(\mbP^0, \mbP^1))$ for any two probability measures $\mbP^0$ and $\mbP^1$.  
        Then for any prediction head $f \in \mathcal{F}$, we have
        $$\Delta \textup{\texttt{GEO}}(f \circ h) \leq \xi(\textup{EIPM}_{\mathcal{V}} (\bm{Z}; S| Y=1)).$$
    \end{enumerate}
\end{theorem}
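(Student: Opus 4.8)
The plan is to transcribe the proof of Theorem \ref{GDP_upper} essentially line by line, replacing the marginal representation law $\mbP_{\bm{Z}}$ everywhere by the conditional law $\mbP_{\bm{Z}\mid Y=1}$ and replacing $\mbP_{\bm{Z}\mid S}$ by $\mbP_{\bm{Z}\mid S,Y=1}$. The structural fact that makes this substitution legitimate with no extra work is that the outer expectation in $\Delta\textup{\texttt{GEO}}$ (definition (\ref{def:deltageo})) and the outer expectation in $\textup{EIPM}_{\mathcal{V}}(\bm{Z};S\mid Y=1)$ (definition (\ref{def_EIPM_EO})) are both taken with respect to the \emph{same} marginal distribution of $S$, so there is no re-weighting to reconcile when one passes from a bound on the integrand to a bound on its $S$-average. (Throughout one implicitly assumes $\mathbb{P}(Y=1)>0$ so that all conditionals on $\{Y=1\}$ are well defined.)

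For part 1, I would fix an arbitrary $f\in\mathcal{F}$ and, by the definition of $\kappa$, pick $v\in\mathcal{V}$ with $\|f-v\|_{\infty}\le\kappa$. Writing $\Delta\textup{\texttt{GEO}}(f\circ h)=\mathbb{E}_{S}\bigl|\int f\,d\mbP_{\bm{Z}\mid S,Y=1}-\int f\,d\mbP_{\bm{Z}\mid Y=1}\bigr|$, I would replace $f$ by $v$ inside each integral at a total cost of $2\kappa$ — using $\|f-v\|_{\infty}\le\kappa$ once for each of the two measures, both of which are probability measures — and then enlarge to the supremum over $v\in\mathcal{V}$ inside the expectation. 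This yields $\Delta\textup{\texttt{GEO}}(f\circ h)\le\mathbb{E}_{S}\,\textup{IPM}_{\mathcal{V}}(\mbP_{\bm{Z}\mid S,Y=1},\mbP_{\bm{Z}\mid Y=1})+2\kappa=\textup{EIPM}_{\mathcal{V}}(\bm{Z};S\mid Y=1)+2\kappa$, exactly as in the DP case.

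For part 2, again fixing $f\in\mathcal{F}$, I would bound $\bigl|\int f\,d\mbP_{\bm{Z}\mid S,Y=1}-\int f\,d\mbP_{\bm{Z}\mid Y=1}\bigr|\le\textup{IPM}_{\mathcal{F}}(\mbP_{\bm{Z}\mid S,Y=1},\mbP_{\bm{Z}\mid Y=1})$, apply the hypothesis $\textup{IPM}_{\mathcal{F}}\le\xi(\textup{IPM}_{\mathcal{V}})$ pointwise in $S$, take the expectation over $S$, and then pull $\xi$ outside the expectation by Jensen's inequality, which is valid because $\xi$ is concave. This gives $\Delta\textup{\texttt{GEO}}(f\circ h)\le\xi\bigl(\mathbb{E}_{S}[\textup{IPM}_{\mathcal{V}}(\mbP_{\bm{Z}\mid S,Y=1},\mbP_{\bm{Z}\mid Y=1})]\bigr)=\xi(\textup{EIPM}_{\mathcal{V}}(\bm{Z};S\mid Y=1))$.

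I do not expect a genuine obstacle here: the argument is a faithful copy of the proof of Theorem \ref{GDP_upper}. The only points that deserve a moment of care are (i) checking that the conditioning event $\{Y=1\}$ is handled consistently, so that $\mbP_{\bm{Z}\mid S,Y=1}$ and $\mbP_{\bm{Z}\mid Y=1}$ are indeed the objects appearing in $\Delta\textup{\texttt{GEO}}$ and in the EIPM for GEO (this is already built into the definitions), and (ii) recognizing that the final averaging-out step in part 2 is the same concave-function (Jensen) step invoked in the proof of Theorem \ref{GDP_upper}.
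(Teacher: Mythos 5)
Your proposal is correct and follows exactly the same approach as the paper's proof of Theorem~\ref{GDP_upper_EO}, which is indeed a verbatim transcription of the proof of Theorem~\ref{GDP_upper} with $\mbP_{\bm{Z}}$ and $\mbP_{\bm{Z}\mid S}$ replaced by their $\{Y=1\}$-conditioned counterparts. One small point in your favor: the paper attributes the final averaging-out step in part~2 to ``the Cauchy inequality,'' whereas your identification of it as Jensen's inequality for the concave function $\xi$ is the accurate name for what is being used.
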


\begin{proof}
1) For any $f \in \mathcal{F}$, there exists $v \in \mathcal{V}$ such that 
$|| f - v ||_{\infty} \leq \kappa$. Then,
    \begin{align*}
     \Delta \textup{\texttt{GEO}}(f \circ h) 
     & =  \mathbb{E}_{S} \ \left| \mathbb{E}_{\bm{Z}}( f(\bm{Z})  | S, Y=1) - \mathbb{E}_{\bm{Z}} ( f(\bm{Z})| Y=1 ) \right| \\
    & =  \mathbb{E}_{S } \left| \int f(\bm{z}) d{\mbP}_{\bm{Z}|S, Y=1}(\bm{z}) - \int f(\bm{z}) d \mbP_{\bm{Z}|Y=1}(\bm{z})   \right| \\
     & \leq \mathbb{E}_{S } \left| \int v(\bm{z}) d{\mbP}_{\bm{Z}|S, Y=1}(\bm{z}) - \int v(\bm{z}) d \mbP_{\bm{Z}|Y=1}(\bm{z})   \right| + 2\kappa \\
    & \leq  \mathbb{E}_{S} \sup_{v \in \mathcal{V}} \left|  
    \int v(\bm{z}) d \mbP_{\bm{Z}|S, Y=1}(\bm{z}) - \int v(\bm{z}) d \mbP_{\bm{Z}|Y=1}(z)   \right| + 2\kappa \\
    & =  \textup{EIPM}_\mathcal{V} (\bm{Z} ; S|Y=1) + 2\kappa.
\end{align*}

2) For any $f \in \mathcal{F}$,
    \begin{align*}
     \Delta \textup{\texttt{GEO}}(f \circ h) 
     & =  \mathbb{E}_{S} \ \left| \mathbb{E}_{\bm{Z}}( f(\bm{Z})  | S, Y=1) - \mathbb{E}_{\bm{Z}} ( f(\bm{Z}) | Y=1 ) \right| \\
    & =  \mathbb{E}_{S } \left| \int f(\bm{z}) d{\mbP}_{\bm{Z}|S, Y=1}(\bm{z}) - \int f(\bm{z}) d \mbP_{\bm{Z}|Y=1}(\bm{z})   \right| \\
    & \leq  \mathbb{E}_{S} \sup_{f \in \mathcal{F}} \left|  
    \int f(\bm{z}) d \mbP_{\bm{Z}|S, Y=1}(\bm{z}) - \int f(\bm{z}) d \mbP_{\bm{Z}|Y=1}(z)   \right|  \\
    & =  \mathbb{E}_{S} \textup{IPM}_{\cF}(\mbP_{\bm{Z}|S,Y=1} ,\mbP_{\bm{Z}|Y=1} )  \\
    & \leq  \mathbb{E}_{S} \xi(\textup{IPM}_{\cV}(\mbP_{\bm{Z}|S,Y=1} ,\mbP_{\bm{Z}|Y=1}))  \\ 
    & \leq  \xi(\textup{EIPM}_\mathcal{V} (\bm{Z} ; S | Y=1)),
\end{align*}
where the last inequality holds by the Cauchy inequality.
\end{proof}

\subsection{Estimation of EIPM for equal opportunity and its convergence analysis}
We denote $n_1 := |\{i : Y_i = 1 \}|$.
We estimate $\textup{EIPM}_\mathcal{V} (\bm{Z}; S| Y=1)$ by
\begin{align*}
    \hat{\textup{EIPM}}^{\gamma}_{\mathcal{V}} (\bm{Z}; S | Y=1) := 
    \frac{1}{n_1} \sum_{i : Y_i=1}  \textup{IPM}_\mathcal{V} \left(\hat{\mbP}^{(-i), \gamma}_{\bm{Z}|S=S_i, Y=1} ,
    \hat{\mbP}^{(-i)}_{\bm{Z}|Y=1}\right),
\end{align*}
where $\hat{\mbP}^{(-i), \gamma}_{\bm{Z}|S=S_i, Y=1}$ for $i \in \{i : Y_i = 1\}$ is defined as
$$\hat{\mbP}^{(-i), \gamma}_{\bm{Z}|S=S_i, Y=1}:= \sum_{j \neq i} \frac{K_{\gamma}(S_i, S_j) \mathbb{I}(Y_i = 1)}{\sum_{j \neq i} K_{\gamma}(S_i, S_j) \mathbb{I}(Y_i = 1)} \delta(Z_i)$$
and $\hat{\mbP}^{(-i)}_{\bm{Z}|Y=1} := \frac{1}{n_1} \sum_{j \neq i} \mathbb{I}(Y_j = 1)\delta(\bm{Z}_j)$.

We derive the convergence rate of $\hat{\textup{EIPM}}^{\gamma}_{\mathcal{V}}(\bm{Z}; S | Y=1)$ as the sample size increases under the following very mild regularity conditions.
\begin{assumption} \label{assum_ps_EO}
    $\mbP_{S|Y=1}$ admits a density $p_1(s)$ with respect to Lebesgue measure $\mu_{S}$ on $\mathcal{S}$. Also, there exist $0<L_p<U_p<\infty$ such that $L_p < p_1(s) < U_p$ on $s \in \mathcal{S}$.    
\end{assumption}

\begin{assumption} \label{assum_pzs_EO}
Suppose that there exists a $\sigma$-finite measure $\mu_{\bm{X}}$ on $\mathcal{X}$
such that $\mbP_{\bm{X},S|Y=1} << \mu_{\bm{X}} \otimes \mu_{S},$ where $\mu_{S}$ is the Lebesgue measure on $\mathcal{S}.$
    We denote $p_1(\bm{x},s) := \frac{d\mbP_{\bm{X},S|Y=1}}{d(\mu_{\bm{X}} \otimes  \mu_{S})}(\bm{x},s)$ as the Radon-Nikodym derivative of $\mbP_{\bm{X},S|Y=1}$ with respect to $\mu_{\bm{X}} \otimes  \mu_{S}$.     
    For every $\bm{x} \in \mathcal{X}$, $p_1(\bm{x},s)$ is twice differentiable with respect to $s$ and has bounded second derivative.
\end{assumption}

\begin{theorem} \label{EIPM_conv_EO}
Let $h$ be a bounded measurable encoder.
Suppose that Assumption \ref{assum_kernel}, \ref{assum_gamma}, \ref{assum_ps_EO} and \ref{assum_pzs_EO} hold.
Then, for 
$$
\epsilon_n = \gamma_n^2 + \frac{\log n_1}{\sqrt{n_1 \gamma_n}} \left( 1 + \log \mathcal{N} \left(\sqrt{\frac{\gamma_n}{n_1}}, \mathcal{V}, ||\cdot||_{\infty}\right) \right)^{\frac{1}{2}},
$$
we have
\begin{align*}
    \left|\hat{\textup{EIPM}}^{\gamma_n}_{\mathcal{V}} (\bm{Z} ; S| Y=1) - \textup{EIPM}_{\mathcal{V}} (\bm{Z} ; S| Y=1) \right| 
    < c \epsilon_n
\end{align*}
for sufficiently large $n$ with probability at least $1-\frac{4}{n_1}$, where $c$ is the constant not depending on $n$ and $m$. 
\end{theorem}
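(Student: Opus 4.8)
The plan is to reduce Theorem~\ref{EIPM_conv_EO} directly to Theorem~\ref{EIPM_conv} by conditioning on the label vector $\bm{Y} := (Y_1,\dots,Y_n)$. Write $I := \{i : Y_i = 1\}$, so that $n_1 = |I|$. Conditionally on $\bm{Y}$, the pairs $\{(\bm{X}_i,S_i) : i \in I\}$ are i.i.d.\ draws from $\mbP_{\bm{X},S|Y=1}$, and inspecting the definitions of $\hat{\mbP}^{(-i),\gamma}_{\bm{Z}|S=S_i,Y=1}$ and $\hat{\mbP}^{(-i)}_{\bm{Z}|Y=1}$ one sees that $\hat{\textup{EIPM}}^{\gamma}_{\mathcal{V}}(\bm{Z};S|Y=1)$ is \emph{exactly} the estimator of equation~(\ref{eq_sampled_EIPM}) built from these $n_1$ samples: the former is the kernel-weighted empirical distribution over $I\setminus\{i\}$ and the latter the uniform one. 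Likewise $\textup{EIPM}_{\mathcal{V}}(\bm{Z};S|Y=1)$ is precisely the population EIPM of Section~\ref{subsec_defEIPM} for the distribution $\mbP_{\bm{X},S|Y=1}$. Hence the whole statement becomes an instance of Theorem~\ref{EIPM_conv}, with $n$ replaced by $n_1$, applied on the $Y=1$ stratum.

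The hypotheses transfer cleanly: Assumptions~\ref{assum_ps_EO} and~\ref{assum_pzs_EO} are verbatim Assumptions~\ref{assum_ps} and~\ref{assum_pzs} for the conditional law $\mbP_{\bm{X},S|Y=1}$ (with $p_1$ in place of $p$), Assumption~\ref{assum_kernel} is unchanged, and the bandwidth condition $\gamma_n\to 0$, $n_1\gamma_n\to\infty$ is the analogue of Assumption~\ref{assum_gamma} on the subsample. Applying Theorem~\ref{EIPM_conv} conditionally on $\bm{Y}$ then gives, for the stated
$$\epsilon_n = \gamma_n^2 + \frac{\log n_1}{\sqrt{n_1\gamma_n}}\left(1 + \log\mathcal{N}\left(\sqrt{\tfrac{\gamma_n}{n_1}},\mathcal{V},||\cdot||_{\infty}\right)\right)^{1/2},$$
the bound $\bigl|\hat{\textup{EIPM}}^{\gamma_n}_{\mathcal{V}}(\bm{Z};S|Y=1) - \textup{EIPM}_{\mathcal{V}}(\bm{Z};S|Y=1)\bigr| < c\,\epsilon_n$ with conditional probability at least $1-4/n_1$, once $n_1$ is large enough, where $c$ does not depend on $n$, $m$ or $\bm{Y}$. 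I would not redo this step: it is the verbatim concentration-plus-covering-number argument of Section~\ref{sec_proof_eipmconv} carried out on the stratum $\{i : Y_i = 1\}$, so I would simply state the identification above and invoke Theorem~\ref{EIPM_conv}.

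The only genuinely new bookkeeping is that $n_1$ is itself random, and I expect this to be the one mildly delicate point. Since $q := \mathbb{P}(Y=1) > 0$ (which is needed for $\Delta\texttt{GEO}$ to be defined), Hoeffding's inequality (Lemma~\ref{ineq_hoeff}) gives $\mathbb{P}(n_1 < qn/2) \le 2\exp(-nq^2/2)$; on the complementary event we have $n_1 \ge qn/2$, hence $n_1\to\infty$ and $n_1\gamma_n\to\infty$, so the ``sufficiently large $n_1$'' requirement of the conditional application is met for all sufficiently large $n$. Integrating the conditional bound over $\bm{Y}$ restricted to this event, and noting that the exceptional event contributes only an exponentially small (hence negligible) correction to the probability, yields the assertion of Theorem~\ref{EIPM_conv_EO}.
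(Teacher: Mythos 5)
Your proof is correct and follows exactly the paper's approach: reduce Theorem~\ref{EIPM_conv_EO} to Theorem~\ref{EIPM_conv} by conditioning on the label vector, identifying $\hat{\textup{EIPM}}^{\gamma}_{\mathcal{V}}(\bm{Z};S|Y=1)$ with the estimator of (\ref{eq_sampled_EIPM}) computed on the $n_1$ samples of the $Y=1$ stratum, and invoking the original argument with $\mbP_{\bm{X},S|Y=1}$ in place of $\mbP_{\bm{X},S}$. Your final paragraph, which uses Hoeffding's inequality to control the randomness of $n_1$ and to verify that the ``sufficiently large'' hypothesis of Theorem~\ref{EIPM_conv} eventually holds on the relevant event, is actually more careful than the paper's own proof, which tacitly treats $n_1$ as a deterministic quantity by simply substituting $n_1$ for $n$ throughout; that extra bookkeeping is a genuine, if minor, improvement.
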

\begin{proof}
    In the proof of Theorem \ref{EIPM_conv}, if we replace $n$, $p(s)$, $p(\bm{x},s)$, $\mathbb{P}_{S}$, $\mathbb{E}_{S}$, $\mathbb{V}_{S}$, $\mathbb{P}_{\bm{X}}$, $\mathbb{E}_{\bm{X}}$, $\mathbb{P}_{\bm{Z}}$, $\mathbb{E}_{\bm{Z}}$ $\mathbb{P}_{\bm{Z},S}$, $\mathbb{E}_{\bm{Z},S}$, $\mathbb{V}_{\bm{Z},S}$, $\mathbb{P}^{(n)}$ and $\mathbb{P}^{(n)}_{-i}$  
    with
    $n_1$, $p_1(s)$, $p_1(\bm{x},s)$, $\mathbb{P}_{S|Y=1}$, $\mathbb{E}_{S|Y=1}$, $\mathbb{V}_{S|Y=1}$, $\mathbb{P}_{\bm{X}|Y=1}$, $\mathbb{E}_{\bm{X}|Y=1}$, $\mathbb{P}_{\bm{Z}|Y=1}$, $\mathbb{E}_{\bm{Z}|Y=1}$ $\mathbb{P}_{\bm{Z},S|Y=1}$, $\mathbb{E}_{\bm{Z},S|Y=1}$, $\mathbb{V}_{\bm{Z},S|Y=1}$, $\mathbb{P}^{(n_1)}$ and $\mathbb{P}^{(n_1)}_{-i}$ respectively, the proof can be done similarly as that of Theorem \ref{EIPM_conv}.
\end{proof}

\subsection{FREM for equal opportunity}

We aim to find $(h, f)$ such that
\begin{align}
    \argmin_{h \in \mathcal{H}, f \in \mathcal{F}} & \mathcal{L}_{\textup{sup}}(f \circ h) && \textup{ s.t. } \textup{EIPM}_{\mathcal{V}} (h(\bm{X}); S | Y=1) \leq \delta, \label{argmin_1_EO} 
\end{align}
where $\mathcal{L}_{\textup{sup}}$ is a certain supervised risk such as the cross-entropy loss or MSE loss.
That is, the algorithm obtains a reasonable encoder from the fairness constraint set that minimizes the task-specific risk jointly with the prediction head.
In practice, the value of $\textup{EIPM}_{\mathcal{V}}$ in (\ref{argmin_1_EO}) is not available and we replace it by its estimator 
$\hat{\textup{EIPM}}^{\gamma}_{\mathcal{V}}$ to find the solution of  
\begin{align}
    \argmin_{h \in \mathcal{H}, f \in \mathcal{F}} & \mathcal{L}_{\textup{sup}}(f \circ h) && \textup{ s.t. } \hat{\textup{EIPM}}^{\gamma}_{\mathcal{V}} (h(\bm{X}), S | Y=1) \leq \delta \label{argmin_2_EO} 
\end{align}
with properly chosen bandwidth $\gamma$.
A statistical question is whether the two constraints in (\ref{argmin_1_EO})
and (\ref{argmin_2_EO}) becomes similar as the sample size increases.
The following theorem ensures that the two constraints are asymptotically equivalent.

\begin{theorem} \label{thm_encoder_set_EO}
    Consider a set of encoders $\mathcal{H}$ and the set of discriminators $\mathcal{V}$, where the elements of $\mathcal{V}$ are Lipschitz with the Lipschitz constant $L>0$.
    For $\delta>0$, we define 
    $$\mathcal{H}_{\mathcal{V}}(\delta)^{\operatorname{EO}} := \{ h \in \mathcal{H} : \textup{EIPM}_{\mathcal{V}} (h(\bm{X}); S| Y=1) \leq \delta\}$$
    and 
    $$\hat{\mathcal{H}}^{\gamma}_{\mathcal{V}}(\delta)^{\operatorname{EO}} := \{ h \in \mathcal{H} : \hat{\textup{EIPM}}^{\gamma}_{\mathcal{V}} (h(\bm{X}); S| Y=1) \leq \delta\}$$
    as the set of encoders whose representation spaces satisfy the fairness constraints defined by $\textup{EIPM}_{\mathcal{V}}$ and $\hat{\textup{EIPM}}^{\gamma}_{\mathcal{V}}$, respectively.
    Suppose that Assumption \ref{assum_kernel}, \ref{assum_gamma}, \ref{assum_ps_EO} and \ref{assum_pzs_EO} hold.
    Then, for every $\delta>0$ and 
    \begin{align*}
        \epsilon_n = \gamma_n^2 + \frac{\log n}{\sqrt{n_1 \gamma_n}} \Bigg( 1 + \log \mathcal{N} \left(\frac{1}{2}\sqrt{\frac{\gamma_n}{n_1}}, \mathcal{V}, ||\cdot||_{\infty}\right) 
        + \log \mathcal{N} \left(\frac{1}{2L}\sqrt{\frac{\gamma_n}{n_1}}, \mathcal{H}, ||\cdot||_{\infty}\right)
        \Bigg)^{\frac{1}{2}},
    \end{align*}
    we have
    \begin{align*}        
    \hat{\mathcal{H}}^{\gamma_n}_{\mathcal{V}}(\delta- c\epsilon_n)^{\operatorname{EO}} 
    \subseteq \mathcal{H}_{\mathcal{V}}(\delta)^{\operatorname{EO}} 
    \subseteq  \hat{\mathcal{H}}^{\gamma_n}_{\mathcal{V}}(\delta + c\epsilon_n)^{\operatorname{EO}}
    \end{align*}
    for sufficiently large $n$ with probability at least $1-\frac{4}{n_1}$, where $c$ is a constant not depending on $n$ and $m$.
\end{theorem}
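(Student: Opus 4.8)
The plan is to reduce Theorem~\ref{thm_encoder_set_EO} to the conditional version of Theorem~\ref{thm_encoder_set} applied to the subsample of observations with $Y_i = 1$. First I would condition on the labels $(Y_1, \ldots, Y_n)$; on this conditioning the indices $\{i : Y_i = 1\}$ index $n_1$ i.i.d.\ draws from $\mbP_{\bm{X}, S \mid Y=1}$, and the estimator $\hat{\textup{EIPM}}^{\gamma}_{\mathcal{V}}(\bm{Z}; S \mid Y=1)$ is exactly the DP-type estimator $\hat{\textup{EIPM}}^{\gamma}_{\mathcal{V}}$ of (\ref{eq_sampled_EIPM}) computed on this subsample, with $n$ replaced by $n_1$. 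Consequently, the substitution scheme spelled out in the proof of Theorem~\ref{EIPM_conv_EO}---replace $n$, $p(s)$, $p(\bm{x},s)$, $\mathbb{P}_S$, $\mathbb{E}_S$, $\mathbb{P}^{(n)}_{-i}$, etc.\ by $n_1$, $p_1(s)$, $p_1(\bm{x},s)$, $\mathbb{P}_{S\mid Y=1}$, $\mathbb{E}_{S\mid Y=1}$, $\mathbb{P}^{(n_1)}_{-i}$, etc.---together with Assumptions~\ref{assum_ps_EO} and~\ref{assum_pzs_EO} in place of Assumptions~\ref{assum_ps} and~\ref{assum_pzs}, makes every estimate in the proof of Theorem~\ref{thm_encoder_set} go through verbatim.

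Concretely, the main step is to establish the uniform-in-$h$ bound
\begin{align*}
    \sup_{h \in \mathcal{H}} \left| \hat{\textup{EIPM}}^{\gamma_n}_{\mathcal{V}}(h(\bm{X}); S \mid Y=1) - \textup{EIPM}_{\mathcal{V}}(h(\bm{X}); S \mid Y=1) \right| \le c\, \epsilon_n
\end{align*}
with probability at least $1 - 4/n_1$. As in the proof of Theorem~\ref{thm_encoder_set}, I would rewrite $\sup_{h \in \mathcal{H}}\sup_{v \in \mathcal{V}}$ of the per-sample deviation as $\sup_{g \in \mathcal{V} \circ \mathcal{H}}$ of the corresponding quantity in $\bm{X}$, invoke the single-function decomposition used in the proof of Theorem~\ref{EIPM_conv} (now with $S$ replaced by $S\mid Y=1$), and control the resulting covering-number factor via Lemma~\ref{cover_composite}, which gives $\log \mathcal{N}(\sqrt{\gamma_n/n_1}, \mathcal{V} \circ \mathcal{H}, \|\cdot\|_\infty) \le \log \mathcal{N}(\tfrac{1}{2}\sqrt{\gamma_n/n_1}, \mathcal{V}, \|\cdot\|_\infty) + \log \mathcal{N}(\tfrac{1}{2L}\sqrt{\gamma_n/n_1}, \mathcal{H}, \|\cdot\|_\infty)$; this is exactly the $\epsilon_n$ in the statement. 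The final concentration of $\frac{1}{n_1}\sum_{i : Y_i=1} \textup{IPM}_{\mathcal{V}}(\mbP_{\bm{Z}\mid S=S_i, Y=1}, \mbP_{\bm{Z}\mid Y=1})$ around $\mathbb{E}_{S\mid Y=1}[\textup{IPM}_{\mathcal{V}}(\cdot\,,\cdot)]$, uniformly over $\mathcal{H}$, is handled by a Hoeffding bound over a $\tfrac{1}{2L}\sqrt{\gamma_n/n_1}$-net of $\mathcal{H}$ together with the Lipschitz stability $\left| \textup{IPM}_{\mathcal{V}}(\mbP_{h(\bm{X})\mid S=s,Y=1}, \mbP_{h(\bm{X})\mid Y=1}) - \textup{IPM}_{\mathcal{V}}(\mbP_{h'(\bm{X})\mid S=s,Y=1}, \mbP_{h'(\bm{X})\mid Y=1}) \right| \le 2L\|h - h'\|_\infty$, exactly as in (\ref{thm_prob_bound6_tmp}).

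Once this uniform bound is in hand, the sandwich is immediate: if $h \in \hat{\mathcal{H}}^{\gamma_n}_{\mathcal{V}}(\delta - c\epsilon_n)^{\operatorname{EO}}$ then $\textup{EIPM}_{\mathcal{V}}(h(\bm{X}); S\mid Y=1) \le \hat{\textup{EIPM}}^{\gamma_n}_{\mathcal{V}}(h(\bm{X}); S\mid Y=1) + c\epsilon_n \le \delta$, so $h \in \mathcal{H}_{\mathcal{V}}(\delta)^{\operatorname{EO}}$; and $\mathcal{H}_{\mathcal{V}}(\delta)^{\operatorname{EO}} \subseteq \hat{\mathcal{H}}^{\gamma_n}_{\mathcal{V}}(\delta + c\epsilon_n)^{\operatorname{EO}}$ follows symmetrically. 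The step I expect to be the main obstacle---really the only nontrivial point beyond bookkeeping---is handling the randomness of $n_1$: one must use $\mathbb{P}(Y=1) > 0$ to get $n_1 \ge c' n$ eventually almost surely, so that ``sufficiently large $n$'' in the conditional argument becomes ``sufficiently large $n$'' unconditionally, and so that the conditional failure probability $4/n_1$ can be reported in the claimed unconditional form; one also checks that the constant $c$ (hence $\epsilon_n$) can be taken independent of the realized $n_1$, which holds since all constants in Assumptions~\ref{assum_ps_EO}--\ref{assum_pzs_EO} are deterministic. A secondary, purely cosmetic issue is reconciling the normalization of $\hat{\mbP}^{(-i)}_{\bm{Z}\mid Y=1}$ (the factor $\tfrac{1}{n_1}$ versus $\tfrac{1}{n_1-1}$), which perturbs the estimator only by an $O(1/n_1)$ term absorbed into $c\epsilon_n$.
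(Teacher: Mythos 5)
Your proposal is correct and takes essentially the same route as the paper's proof, which is a one-line substitution reduction: replace $n$, $p(s)$, $p(\bm{x},s)$, $\mathbb{P}_S$, $\mathbb{E}_S$, $\mathbb{P}^{(n)}_{-i}$, etc.\ by their $Y=1$-conditional counterparts ($n_1$, $p_1$, $\mathbb{P}_{S\mid Y=1}$, $\mathbb{P}^{(n_1)}_{-i}$, \ldots) in the proof of Theorem~\ref{thm_encoder_set}, with Assumptions~\ref{assum_ps_EO}--\ref{assum_pzs_EO} standing in for Assumptions~\ref{assum_ps}--\ref{assum_pzs}. Your additional remarks on the randomness of $n_1$ and the $\tfrac{1}{n_1}$ vs.\ $\tfrac{1}{n_1-1}$ normalization are sound clarifications of points the paper leaves implicit rather than a different argument.
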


\begin{proof}
    In the proof of Theorem \ref{thm_encoder_set}, if we replace $n$, $p(s)$, $p(\bm{x},s)$, $\mathbb{P}_{S}$, $\mathbb{E}_{S}$, $\mathbb{V}_{S}$, $\mathbb{P}_{\bm{X}}$, $\mathbb{E}_{\bm{X}}$, $\mathbb{P}_{\bm{Z}}$, $\mathbb{E}_{\bm{Z}}$ $\mathbb{P}_{\bm{Z},S}$, $\mathbb{E}_{\bm{Z},S}$, $\mathbb{V}_{\bm{Z},S}$, $\mathbb{P}^{(n)}$ and $\mathbb{P}^{(n)}_{-i}$  
    with
    $n_1$, $p_1(s)$, $p_1(\bm{x},s)$, $\mathbb{P}_{S|Y=1}$, $\mathbb{E}_{S|Y=1}$, $\mathbb{V}_{S|Y=1}$, $\mathbb{P}_{\bm{X}|Y=1}$, $\mathbb{E}_{\bm{X}|Y=1}$, $\mathbb{P}_{\bm{Z}|Y=1}$, $\mathbb{E}_{\bm{Z}|Y=1}$ $\mathbb{P}_{\bm{Z},S|Y=1}$, $\mathbb{E}_{\bm{Z},S|Y=1}$, $\mathbb{V}_{\bm{Z},S|Y=1}$, $\mathbb{P}^{(n_1)}$ and $\mathbb{P}^{(n_1)}_{-i}$ respectively, the proof can be done similarly to that of Theorem \ref{thm_encoder_set}.
\end{proof}

There are several choices for the set of discriminators in EIPM.
For the Reproducing Kernel Hilbert Space (RKHS) $(\mathcal{V}_{\kappa}(\mathcal{Z}), ||\cdot||_{\mathcal{V}_{\kappa}(\mathcal{Z})})$ corresponding to a positive definite kernel function $\kappa$, we use the unit ball in the RKHS 
$$\mathcal{V}_{\kappa, 1}=\{v \in \mathcal{V}_{\kappa}(\mathcal{Z}): ||v||_{\mathcal{V}_{\kappa}(\mathcal{Z})} \le 1\}$$ 
for the set of discriminators used for EIPM.
The closed-form formula of $\hat{\textup{EIPM}}^{\gamma}_{\mathcal{V}_{\kappa, 1}} (\bm{Z}; S|Y=1)$ is given in the following proposition. 
\begin{proposition} \label{pro_estimator_derivation_EO}
    For given $\gamma>0$, $h \in \mathcal{H}$, $\{\bm{X}_i , S_i\}_{i=1}^n$ and $\bm{Z}_i = h(\bm{X}_i)$,  $\hat{\textup{EIPM}}^{\gamma}_{\mathcal{V}_{\kappa, 1}} (\bm{Z}; S| Y=1)$ is derived as
    \begin{align*}
    \hat{\textup{EIPM}}^{\gamma}_{\mathcal{V}_{\kappa, 1}} (\bm{Z}; S| Y=1) = \frac{1}{n_1} \sum_{i : Y_i=1} \left[ \sum_{j,k \neq i} [\tilde{A}_{\gamma}]_{i,j} [\tilde{A}_{\gamma}]_{i,k}  \kappa(\bm{Z}_j, \bm{Z}_k)
    \right]^{\frac{1}{2}},
    \end{align*}
    where $\tilde{A}_{\gamma}$ is the $n \times n$ matrix defined by
    $$[\tilde{A}_{\gamma}]_{i,j} = 
    \left( \frac{K_{\gamma}(S_i , S_j)}{\sum_{j \neq i} K_{\gamma}(S_i , S_j) \mathbb{I}(Y_j = 1) } - \frac{1}{n_1 - 1} \right) \cdot \mathbb{I}(Y_j = 1) .$$
\end{proposition}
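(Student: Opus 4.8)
The statement to prove is Proposition~\ref{pro_estimator_derivation_EO}, which is the closed-form expression for $\hat{\textup{EIPM}}^{\gamma}_{\mathcal{V}_{\kappa, 1}} (\bm{Z}; S \mid Y=1)$. The plan is to mimic the proof of Proposition~\ref{pro_estimator_derivation} almost verbatim, replacing the full sample by the subsample $\{i : Y_i = 1\}$ and replacing the weighted empirical distributions by their conditional-on-$Y=1$ analogues. The only genuine work is bookkeeping: showing that the quadratic form that arises collapses into $[\tilde A_\gamma]_{i,j}[\tilde A_\gamma]_{i,k}$ exactly as in the DP case.

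\textbf{Step 1 (Invoke the MMD identity).} As in the proof of Proposition~\ref{pro_estimator_derivation}, apply Lemma~6 of \cite{gretton2012kernel}: for independent $U,U'\sim\mbP^U$ and $T,T'\sim\mbP^T$,
$$\textup{IPM}_{\mathcal{V}_{\kappa,1}}(\mbP^{U},\mbP^{T}) = \sqrt{\mathbb{E}\bigl[\kappa(U,U') + \kappa(T,T') - 2\kappa(U,T)\bigr]}.$$
Apply this with $\mbP^U = \hat{\mbP}^{(-i), \gamma}_{\bm{Z}\mid S=S_i, Y=1}$ and $\mbP^T = \hat{\mbP}^{(-i)}_{\bm{Z}\mid Y=1}$ for each $i$ with $Y_i=1$.

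\textbf{Step 2 (Identify the weights).} Write $\tilde w_\gamma(j;i) := \frac{K_{\gamma}(S_i, S_j)\mathbb{I}(Y_j=1)}{\sum_{j\neq i} K_{\gamma}(S_i, S_j)\mathbb{I}(Y_j=1)}$, which is the mass that $\hat{\mbP}^{(-i), \gamma}_{\bm{Z}\mid S=S_i, Y=1}$ puts on $\bm{Z}_j$, and note that $\hat{\mbP}^{(-i)}_{\bm{Z}\mid Y=1}$ puts mass $\frac{\mathbb{I}(Y_j=1)}{n_1-1}$ on $\bm{Z}_j$. Plugging the three expectations into the square root from Step~1 gives a double sum over $j,k\neq i$ of
$$\Bigl(\tilde w_\gamma(j;i)\tilde w_\gamma(k;i) + \tfrac{\mathbb{I}(Y_j=1)\mathbb{I}(Y_k=1)}{(n_1-1)^2} - \tfrac{2}{n_1-1}\tilde w_\gamma(j;i)\mathbb{I}(Y_k=1)\Bigr)\kappa(\bm{Z}_j,\bm{Z}_k).$$
Using symmetry $\kappa(\bm{Z}_j,\bm{Z}_k)=\kappa(\bm{Z}_k,\bm{Z}_j)$ to symmetrize the cross term (exactly as in the fourth equality of the DP proof), the coefficient factors as $\bigl(\tilde w_\gamma(j;i) - \tfrac{\mathbb{I}(Y_j=1)}{n_1-1}\bigr)\bigl(\tilde w_\gamma(k;i) - \tfrac{\mathbb{I}(Y_k=1)}{n_1-1}\bigr)$. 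One then checks that this equals $[\tilde A_\gamma]_{i,j}[\tilde A_\gamma]_{i,k}$, using the observation that $\tilde w_\gamma(j;i)=0$ whenever $Y_j=0$ (the numerator $K_\gamma(S_i,S_j)\mathbb{I}(Y_j=1)$ vanishes), so $\tilde w_\gamma(j;i) - \tfrac{\mathbb{I}(Y_j=1)}{n_1-1} = \mathbb{I}(Y_j=1)\bigl(\tfrac{K_\gamma(S_i,S_j)}{\sum_{j\neq i}K_\gamma(S_i,S_j)\mathbb{I}(Y_j=1)} - \tfrac{1}{n_1-1}\bigr) = [\tilde A_\gamma]_{i,j}$.

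\textbf{Step 3 (Average).} Averaging over $\{i : Y_i = 1\}$ with weight $1/n_1$ yields the claimed formula. The main (and only mild) obstacle is keeping the indicator factors $\mathbb{I}(Y_j=1)$ consistently attached through the symmetrization, so that the factorization into $[\tilde A_\gamma]_{i,j}[\tilde A_\gamma]_{i,k}$ is clean; everything else is a transcription of the earlier proposition. No technical hurdle arises.
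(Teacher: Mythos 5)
Your proposal is correct and follows essentially the same approach as the paper's own proof: invoke Lemma~6 of Gretton et al.\ to turn each $\textup{IPM}_{\mathcal{V}_{\kappa,1}}$ into a quadratic form in the two empirical measures, symmetrize via $\kappa(\bm{Z}_j,\bm{Z}_k)=\kappa(\bm{Z}_k,\bm{Z}_j)$, and factor the coefficient into a product. In fact you are a bit more careful than the paper's write-up in one place: you explicitly carry the indicators $\mathbb{I}(Y_j=1)\mathbb{I}(Y_k=1)$ through the middle term coming from $\hat{\mbP}^{(-i)}_{\bm{Z}\mid Y=1}$ (the paper writes $\sum_{j,k\neq i}\tfrac{1}{(n_1-1)^2}\kappa(\bm{Z}_j,\bm{Z}_k)$ without the indicators, leaving them implicit until the final factorization), and you spell out why $\tilde{w}_\gamma(j;i) - \tfrac{\mathbb{I}(Y_j=1)}{n_1-1}$ collapses to $[\tilde A_\gamma]_{i,j}$ via $\tilde w_\gamma(j;i)=0$ whenever $Y_j=0$; these are exactly the right details to check and your handling is correct.
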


\begin{proof}
    For simplicity, we denote 
    $$\tilde{w}_{\gamma}(j ; i) := \frac{K_{\gamma}(S_j, S_i) \mathbb{I}(Y_j = 1)}{\sum_{j \neq i} K_{\gamma}(S_j, S_i) \mathbb{I}(Y_j = 1)}.$$
    
    Lemma 6 of \cite{gretton2012kernel} states that for independent random variables $U, U^{\prime} \sim \mbP^U$
    and for independent random variables $T, T^{\prime} \sim \mbP^T$, 
$$\textup{IPM}_{\mathcal{V}_{\kappa,1}}(\mbP^{U},\mbP^{T}) = \sqrt{\mathbb{E} \Big[ \kappa (U, U^{\prime}) + \kappa (T, T^{\prime}) - 2 \kappa (U, T)\Big]},$$
where the expectation is respect to $U$, $U^{\prime}$, $T$ and $T^{\prime}$.
Hence, for $\gamma>0$, we obtain
\begin{align*}
    &\hat{\textup{EIPM}}^{\gamma}_{\mathcal{V}_{\kappa, 1}} (\bm{Z}; S| Y=1) \\
    &=     \frac{1}{n_1} \sum_{i : Y_i=1}  \textup{IPM}_{\mathcal{V}_{\kappa, 1}} \left(\hat{\mbP}^{(-i), \gamma}_{\bm{Z}|S=S_i, Y=1} ,    \hat{\mbP}^{(-i)}_{\bm{Z}|Y=1}\right) \\
    &= \frac{1}{n_1} \sum_{i : Y_i=1} \left[ 
    \sum_{j,k \neq i} \tilde{w}_\gamma (j;i) \tilde{w}_\gamma (k;i) \kappa(\bm{Z}_j, \bm{Z}_k) 
    + \sum_{j,k \neq i} \frac{1}{(n_1-1)^2} \kappa(\bm{Z}_j, \bm{Z}_k) 
    - 2 \sum_{j,k \neq i} \frac{1}{n_1-1} \tilde{w}_\gamma (j;i) \kappa(\bm{Z}_j, \bm{Z}_k) \right]^{\frac{1}{2}} \\
    &= \frac{1}{n_1} \sum_{i : Y_i=1} \left[ 
    \sum_{j,k \neq i} \left( \tilde{w}_\gamma (j;i) \tilde{w}_\gamma (k;i) 
    +\frac{1}{(n_1-1)^2}  
    - \frac{2}{n_1-1} \tilde{w}_\gamma (j;i) \right)  \kappa(\bm{Z}_j, \bm{Z}_k) \right]^{\frac{1}{2}} \\
    &= \frac{1}{n_1} \sum_{i : Y_i=1} \left[ 
    \sum_{j,k \neq i} \left( \tilde{w}_\gamma (j;i) \tilde{w}_\gamma (k;i) 
    +\frac{1}{(n_1-1)^2}  
    - \frac{1}{n_1 -1} \tilde{w}_\gamma (j;i) - \frac{1}{n_1 -1} \tilde{w}_\gamma (k;i) \right)  \kappa(\bm{Z}_j, \bm{Z}_k) \right]^{\frac{1}{2}} \\
    &= \frac{1}{n_1} \sum_{i : Y_i=1} \left[ 
    \sum_{j,k \neq i} \left( \tilde{w}_\gamma (j;i) - \frac{1}{n_1 -1} \right) 
    \left( \tilde{w}_\gamma (k;i) - \frac{1}{n_1 -1} \right)  
    \kappa(\bm{Z}_j, \bm{Z}_k) \right]^{\frac{1}{2}} \\
    &= \frac{1}{n_1} \sum_{i : Y_i=1} \left[ \sum_{j,k \neq i} [\tilde{A}_{\gamma}]_{i,j} [\tilde{A}_{\gamma}]_{i,k}  \kappa(\bm{Z}_j, \bm{Z}_k) 
    \right]^{\frac{1}{2}},
\end{align*}
where we use the fact that $\kappa(\bm{Z}_j, \bm{Z}_k) = \kappa(\bm{Z}_k, \bm{Z}_j)$ for the fourth equality.
\end{proof}

Based on the foregoing discussions, the algorithm of  FREM for EO is summarized in Algorithm \ref{alg:eipm_EO}.
\begin{algorithm}[t]
\caption{FREM for EO}
\label{alg:eipm_EO}
    \begin{algorithmic}[1]
        \REQUIRE 1. Network parameters.
        \\
        $\theta$: Parameter of the representation encoder $h.$
        \\
        $\phi$: Parameter of prediction head $f.$
        \REQUIRE 2. Hyper-parameters.
        \\
        $\lambda:$ Regularization parameter.
        \\
        $\textup{lr}:$ Learning rate.
        \\
        $T$: Training epochs.
        \\
        $n_{\textup{mb}}:$ Mini-batch size.
        \\
        $\gamma$ : Radius of kernel for EIPM estimation.
        \\
        \FOR{$t = 1, \cdots, T$}
        \STATE Randomly sample a mini-batch $(\bm{x}_{i}, y_{i}, s_{i})_{i=1}^{n_{\textup{mb}}}$
        \newline
        \STATE \textbf{(Compute task loss)}
        \\
        $\mathcal{L}_{\textup{sup}}(\theta, \phi) = \frac{1}{n_{\textup{mb}}} \sum_{i=1}^{n_{\textup{mb}}} l(y_{i}, f_{\phi}(h_{\theta}(\bm{x}_{i})) $ 
        \newline
        \STATE Compute
        $$n_{\textup{mb},1} = |\{i \in [n_{\textup{mb}}] : Y_i = 1\}|$$
        \STATE \textbf{(Compute EIPM)}
        \\
        Compute $n_{\textup{mb}} \times n_{\textup{mb}}$ matrix $\tilde{A}_{\gamma}$ by
        $$[\tilde{A}_{\gamma}]_{i,j} = \left( \frac{K_{\gamma}(s_i , s_j)}{\sum_{j \neq i} K_{\gamma}(s_i , s_j) } - \frac{1}{n_{\textup{mb},1}-1} \right) \cdot \mathbb{I}(Y_j = 1)$$
        \STATE Compute
        $$
        \mathcal{L}_{\textup{fair}}(\theta) = \sum_{i : Y_i = 1} \left( \sum_{j,k \neq i} [A_\gamma]_{i,j}   [A_\gamma]_{i,k} \kappa(j, k) \right)^{\frac{1}{2}}
        $$
        where $\kappa(j, k) := \kappa(h_{\theta}(\bm{x}_{j}), h_{\theta}(\bm{x}_{k})).$
        \STATE $ \mathcal{L}_{\textup{fair}}(\theta) \leftarrow \frac{1}{ n_{\textup{mb},1} } \mathcal{L}_{\textup{fair}}(\theta) $
        \STATE \textbf{(Total loss)}
        \\
        $\mathcal{L}(\theta, \phi) = \mathcal{L}_{\textup{sup}}(\theta, \phi) + \lambda \mathcal{L}_{\textup{fair}}(\theta)$
        \STATE \textbf{(Parameter updates)}
        \\
        $\theta \leftarrow \theta - \textup{lr} \cdot\nabla_{\theta} \mathcal{L} (\theta, \phi)
        \newline
        \phi \leftarrow \phi - \textup{lr}\cdot\nabla_{\phi} \mathcal{L} (\theta, \phi)$ 
        \ENDFOR
        \newline
        \textbf{Return} $\theta$ and $\phi$
    \end{algorithmic}
\end{algorithm}

\clearpage

\section{Experiments for synthetic dataset}\label{appen:syn}
\renewcommand{\theequation}{D.\arabic{equation}}

\subsection{Calculation of the true EIPM values in Section \ref{sec:simul}} \label{appen:syn_true_1}

Since
$$
\begin{pmatrix}
    S \\
    X^{(1)} \\
    X^{(2)} \\
\end{pmatrix}
\sim 
N \left( 
\begin{bmatrix}
    0   \\
    0 \\
    0 \\
\end{bmatrix},
\begin{bmatrix}
    1 & \rho & 0  \\
    \rho & 1 & 0  \\
    0 & 0 & 1  \\
\end{bmatrix}
\right)
$$
and $\bm{Z} = w_1 X^{(1)} + w_2 X^{(2)}$, we obtain
$$ \begin{pmatrix}
    S \\
    Z \\
\end{pmatrix}
\sim 
N \left( 
\begin{bmatrix}
    0   \\
    0 \\
\end{bmatrix},
\begin{bmatrix}
    1 & w_1 \rho \\
    w_1 \rho & w_1^2 + w_2^2   \\
\end{bmatrix}
\right)
$$
and hence
$
\bm{Z} \sim \mathcal{N} \left( 0, 1 \right)
$
and
$
\bm{Z} | S = s \sim \mathcal{N} \left( w_1 \rho s, 1 - w_1^2 \rho^2 \right),
$ where we use $w_1^2 + w_2^2 = 1$.

To obtain the true EIPM value, we use the fact that
for two given Gaussian distributions $\mbP^{1} = \mathcal{N}(\mu_{1}, \sigma_{1}^{2})$ and
$\mbP^{2} = \mathcal{N}(\mu_{2}, \sigma_{2}^{2}),$
the expected kernel
$
\kappa ( \mbP^{1}, \mbP^{2} ) = 
\mathbb{E}_{X_{1} \sim \mbP^{1}, X_{2} \sim \mbP^{2}} \kappa (X_{1}, X_{2})
$
is calculated as
$$
\kappa ( \mbP^{1}, \mbP^{2} ) =
\frac{ 1 }{\sqrt{1 + \sigma_1^2 + \sigma_2^2}}
e^{ -\frac{(\mu_1 - \mu_2)^2}{2 (1 + \sigma_1^2 + \sigma_2^2) } },
$$
as shown in \cite{muandet2012learning}.
Since both $\mbP_{\bm{Z}}$ and $\mbP_{\bm{Z} | S=s}$ are both Gaussian distributions, we obtain
\begin{align*}
    \kappa ( \mbP_{\bm{Z}}, \mbP_{\bm{Z}} ) &=
\frac{ 1 }{\sqrt{3}}, \\
    \kappa ( \mbP_{\bm{Z} | S=s}, \mbP_{\bm{Z} | S=s} ) &=
\frac{ 1 }{\sqrt{3 - 2 w_1^2 \rho^2}} \\
    \kappa ( \mbP_{\bm{Z}}, \mbP_{\bm{Z} | S=s} ) &=
\frac{ 1 }{\sqrt{3 - w_1^2 \rho^2}} 
e^{ -\frac{w_1^2 \rho^2 s^2}{2 (3 - w_1^2 \rho^2) } }.
\end{align*}
Then, we can directly derive
\begin{align*}
    \textup{IPM}_{\mathcal{V}_{\kappa, 1}} \left( \mbP_{\bm{Z} | S=s}, \mbP_{\bm{Z}} \right) 
    = \sqrt{\kappa (\mbP_{\bm{Z}}, \mbP_{\bm{Z}}) + \kappa (\mbP_{\bm{Z}|S=s}, \mbP_{\bm{Z}|S=s}) + \kappa_{\bm{Z}_{s} \bm{Z}} = \kappa (\mbP_{\bm{Z}|S=s}, \mbP_{\bm{Z}})},
\end{align*}
using \cite{gretton2012kernel}.
Finally, we obtain the true EIPM value by Monte-Carlo simulation with respect to $S$. 
That is,
\begin{align}
    \mathbb{E}_{S} \left[ \textup{IPM}_{\mathcal{V}_{\kappa,1}} (\mbP_{\bm{Z}|S} ,\mbP_{\bm{Z}}) \right] \approx
\frac{1}{N} \sum_{i=1}^N \textup{IPM}_{\mathcal{V}_{\kappa,1}} \left( \mbP_{\bm{Z} | S=S_i}, \mbP_{\bm{Z}} \right) \label{monte_simul}
\end{align}
where $S_1, \dots, S_N \overset{i.i.d.}{\sim} \mathcal{N}(0,1)$ for $N = 100,000.$

\subsection{Additional results for Fig. \ref{fig:synthetic1}}
\label{Appen:syn_result_1}

In Section \ref{sec:simul}, we randomly generate synthetic datasets 100 times and obtain 100 EIPM estimates.
Table \ref{tab:synthetic1} shows the resulting biases, MAEs and RMSEs of the estimates with various $n_{\textup{bins}}$ or $\gamma$s.
This simulation results confirm that proposed estimator is more accurate and more stable than the binning estimator, consistently on various encoder functions.

\begin{table*}[h]
    \centering
    \caption{\textbf{Additional results for Fig. \ref{fig:synthetic1}.} Comparison between the proposed estimator and the binning estimator for three cases of $h:$ $(w_{1}, w_{2}) \in \{ (\sqrt{0.2}, \sqrt{0.8}), (\sqrt{0.5}, \sqrt{0.5}), (\sqrt{0.8}, \sqrt{0.2}) \}.$
    For each case, the best results (the lowest values of Bias, MAE, and RMSE) of the binning estimator (w.r.t. the number of bins $n_{\textup{bins}}$) and the proposed estimator (w.r.t. $\gamma$) are highlighted by \underline{underlining} and \textbf{bold} face, respectively.}
    \begin{tabular}{|c|c||c|c|c|c|c|c|}
        \toprule
        \multirow{2}{*}{$(w_{1}, w_{2})$} & \multirow{2}{*}{Error measure $(\times 10^{-2})$} & \multicolumn{3}{c|}{Binning} & \multicolumn{3}{c|}{Proposed $\checkmark$}
        \\
        & & $n_{\textup{bins}}=2$ & $n_{\textup{bins}}=3$ & $n_{\textup{bins}}=4$ & $\gamma = 0.3$ & $\gamma = 0.5$ & $\gamma = 0.7$
        \\
        \midrule
        \multirow{3}{*}{$(\sqrt{0.2}, \sqrt{0.8})$} & Bias & \underline{1.16} & 3.66 & 5.62 & 5.42 & 2.02 & \textbf{-0.02}
        \\
        & MAE & \underline{2.75} & 3.74 & 5.62 & 5.42 & 2.27 & \textbf{1.43}
        \\
        & RMSE & \underline{3.42} & 4.46 & 6.15 & 5.81 & 2.84 & \textbf{1.80}
        \\
        \midrule
        \multirow{3}{*}{$(\sqrt{0.5}, \sqrt{0.5})$} & Bias & \underline{0.44} & 2.70 & 3.85 & 3.59 & \textbf{0.18} & -2.07
        \\
        & MAE & \underline{3.27} & 3.63 & 4.06 & 3.72 & \textbf{2.27} & 2.67
        \\
        & RMSE & \underline{3.98} & 4.50 & 4.91 & 4.55 & \textbf{2.74} & 3.19
        \\
        \midrule
        \multirow{3}{*}{$(\sqrt{0.8}, \sqrt{0.2})$} & Bias & \underline{0.59} & 2.49 & 3.41 & 2.86 & \textbf{-0.43} & -2.98
        \\
        & MAE & \underline{3.02} & 3.28 & 3.74 & 3.29 & \textbf{2.26} & 3.22
        \\
        & RMSE & \underline{3.66} & 4.05 & 4.51 & 4.07 & \textbf{2.69} & 3.75
        \\
        \bottomrule
    \end{tabular}
    \label{tab:synthetic1}
\end{table*}

\subsection{Additional experiments for multi-dimensional representations} \label{Appen:syn_multi}

One may consider a plug-in estimator for EIPM, where the Nadaraya–Watson (NW) density estimators of $\bm{Z}$ and $\bm{Z}|S=S_i$
are plugged into EIPM.
To be more specific, for given $\{(\bm{Z}_i, S_i)\}_{i=1}^n$ and $\bm{z} \in \mathcal{Z}$, Nadaraya–Watson density estimators of $q(\bm{z})$ and $q(\bm{z}|S=s)$ are given as
$$\hat{q}^{(-i)}(\bm{z}) = \frac{ \sum_{j \neq i} K_\gamma(\bm{Z}_j , \bm{z})}{n-1}$$
and
$$\hat{q}^{(-i)}(\bm{z}|S=s) = \frac{ \sum_{j \neq i} K_\gamma((\bm{Z}_j^{\top}, S_j)^{\top} , (\bm{z}^{\top}, s)^{\top})}{\sum_{j \neq i} K_\gamma(S_j , s)},$$
respectively \cite{de2003conditional}.

A problem of this plug-in estimator is the integration in the EIPM, where the curse of dimensionality emerges.
High-dimensional numerical integration is known to be a very difficult problem, and 
naive approaches such as approximating the integration by the sum at prespecified grid points
usually fail. In this subsection, we investigate how well the EIPM estimator obtained by the NW density estimator and naive integration. 
First, for each $i \in [n]$, we can estimate $\textup{IPM}_{\mathcal{V}_{\kappa, 1}} \left(\mathbb{P}_{\bm{Z}|S=S_i} ,\mathbb{P}_{\bm{Z}} \right)$ by
\begin{align*}
    \int_{\bm{z}} \int_{\bm{z}^{\prime}} \kappa(\bm{z}, \bm{z}^{\prime}) \left(\hat{q}^{(-i)}(\bm{z}) \hat{q}^{(-i)}(\bm{z}^{\prime}) +
\hat{q}^{(-i)}(\bm{z}|S=s) \hat{q}^{(-i)}(\bm{z}^{\prime}|S=s)
- 2 \hat{q}^{(-i)}(\bm{z}) \hat{q}^{(-i)}(\bm{z}^{\prime}|S=s) \right)
d \bm{z} d \bm{z}^{\prime}.
\end{align*}
Using the importance sampling, this estimator can be approximated by
\begin{align*}
    \widehat{\textup{IPM}}^{\text{NW}}_{\mathcal{V}_{\kappa, 1}} \left(\mathbb{P}_{\bm{Z}|S=S_i} ,\mathbb{P}_{\bm{Z}} \right) = \frac{1}{R^2} \sum_{r_1 = 1}^{R} \sum_{r_2 = 1}^{R} \frac{\kappa(\bm{z}_{r_1}, \bm{z}_{r_2}^{\prime})}{\mathfrak{p}(\bm{z}_{r_1})\mathfrak{p}( \bm{z}_{r_2}^{\prime})} \Bigg(\hat{q}^{(-i)}(\bm{z}_{r_1}) \hat{q}^{(-i)}(\bm{z}_{r_2}^{\prime}) +
 \hat{q}^{(-i)}(\bm{z}_{r_1}|S=S_i) \hat{q}^{(-i)}(\bm{z}_{r_2}^{\prime}|S=S_i)& \\
 - 2 \hat{q}^{(-i)}(\bm{z}_{r_1}) \hat{q}^{(-i)}(\bm{z}_{r_2}^{\prime}|S=S_i)& \Bigg),
\end{align*}
where
$\bm{z}_1, \dots, \bm{z}_R, \bm{z}^{\prime}_1, \dots, \bm{z}^{\prime}_R$ are sampled from some proposal distribution whose density function is $\mathfrak{p}$.
Then, the plug-in estimator for EIPM is given as
\begin{align*}
    \hat{\textup{EIPM}}^{\text{NW}}_{\mathcal{V}} (\bm{Z}; S) := \frac{1}{n} \sum_{i=1}^n \widehat{\textup{IPM}}^{\text{NW}}_{\mathcal{V}_{\kappa, 1}} \left(\mathbb{P}_{\bm{Z}|S=S_i} ,\mathbb{P}_{\bm{Z}} \right).
\end{align*}

However, as is well-known, Nadaraya–Watson density estimators are known to not perform well in high dimensions \cite{bengio2005curse}.
Since we use Nadaraya–Watson density estimator for $\bm{Z}$, 
it can be expected that corresponding EIPM estimator will not perform well when the representation is high-dimensional.
To verify this claim experimentally, we consider another simulation design for multi-dimensional $\bm{Z}$.
For given correlation $\rho$, consider
$$
\begin{pmatrix}
    S \\
    Z^{(1)} \\
    ... \\
    Z^{(m)}
\end{pmatrix}
\sim 
N \left( 
\begin{bmatrix}
    0   \\
    0 \\
    ... \\
    0
\end{bmatrix},
\begin{bmatrix}
    1 & \rho/\sqrt{m} & ...  & \rho/\sqrt{m} \\
    \rho/\sqrt{m} & 1/m & ...  & 0 \\
    ... & ... & ... & ... \\
    \rho/\sqrt{m} & 0 & ... & 1/m
\end{bmatrix}
\right).
$$
Note that the covariance matrix is positive definite if and only if $0 \leq \rho < 1 / \sqrt{m}$.
Then, we obtain
$$S \sim N\left(0, 1 \right),\qquad \bm{Z} \sim N\left(\bm{0}_m,\frac{1}{m} I_m \right)$$
and
$$\bm{Z}|S=s \sim N\left( \frac{1}{\sqrt{m}} \rho s \cdot \bm{1}_m, \frac{1}{m}I_m - \frac{1}{m} \rho^2 \bm{1}_m \bm{1}_m^{\top}\right).$$
By \cite{muandet2012learning} and Sherman-Morrison formula, we get
\begin{align*}
    \kappa ( \mbP_{\bm{Z}}, \mbP_{\bm{Z}} ) = &
    \frac{1}{\left| \frac{1}{m} I_m + \frac{1}{m} I_m + I_m \right|^{\frac{1}{2}}}
    = \frac{1}{\sqrt{\left(1+ \frac{2}{m}\right)^m}},
\end{align*}
\begin{align*}
    \kappa ( \mbP_{\bm{Z}|S=s}, \mbP_{\bm{Z}|S=s} ) = &
    \frac{1}{\left| \frac{1}{m}I_m - \frac{1}{m} \rho^2 \bm{1}_m \bm{1}_m^{\top} + \frac{1}{m}I_m - \frac{1}{m} \rho^2 \bm{1}_m \bm{1}_m^{\top} + I_m \right|^{\frac{1}{2}}} \\
    = & \frac{1}{\sqrt{\left(1+\frac{2}{m}\right)^{m-1} ( 1+\frac{2}{m}- 2 \rho^2 )}}
\end{align*}
and
\begin{align*}
    \kappa ( \mbP_{\bm{Z}}, \mbP_{\bm{Z}|S=s} ) = &
    \frac{1}{\left| \frac{1}{m} I_m + \frac{1}{m}I_m - \frac{1}{m} \rho^2 \bm{1}_m \bm{1}_m^{\top} + I_m \right|^{\frac{1}{2}}} \exp 
    \left( -\frac{1}{2m} \rho^2 s^2 \bm{1}_m^{\top} \left(\frac{1}{m} I_m + \frac{1}{m}I_m - \frac{1}{m} \rho^2 \bm{1}_m \bm{1}_m^{\top} + I_m\right)^{-1} \bm{1}_m \right)\\
    = & \frac{1}{\sqrt{\left(1+\frac{2}{m}\right)^{m-1} ( 1+\frac{2}{m}-  \rho^2 )}} 
    \exp 
    \left( -\frac{1}{2m} \rho^2 s^2 \bm{1}_m^{\top} 
    \left(\frac{m}{m+2} I_m + \frac{\rho^2}{(m+2)\left(\frac{m+2}{m}- \rho^2 \right)}\bm{1}_m \bm{1}_m^{\top} \right)\bm{1}_m \right) \\
    = & \frac{1}{\sqrt{\left(1+\frac{2}{m}\right)^{m-1} ( 1+\frac{2}{m}- \rho^2 )}} 
    \exp 
    \left( -\frac{1}{2m} \rho^2 s^2 m^2  
    \left( \frac{1}{m+2} + \frac{\rho^2}{(m+2)\left(\frac{m+2}{m}- \rho^2 \right)} \right)\right) \\
    = & \frac{1}{\sqrt{\left(1+\frac{2}{m}\right)^{m-1} ( 1+\frac{2}{m}- \rho^2 )}} 
    \exp 
    \left( -\frac{1}{2} \cdot  
    \frac{ \rho^2 s^2 m }{m+2-m \rho^2} \right).     
\end{align*}
Then, we can obtain the true EIPM value by Monte-Carlo simulation similar with (\ref{monte_simul}).


\begin{table*}[h]
    \centering
    \caption{ {\textbf{Simulation results}: Comparison between the proposed estimator, binning estimator and Nadaraya–Watson estimator.
    For each case, the best results (the lowest values of RMSE) are highlighted by \textbf{bold} face.}}
    \begin{tabular}{|c|c||c|c|c|c|c|c|c|c|c|}
        \toprule
        \multirow{2}{*}{$m$} & \multirow{2}{*}{$n$} & \multicolumn{3}{c|}{Binning} & \multicolumn{3}{c|}{Nadaraya–Watson} & \multicolumn{3}{c|}{Proposed $\checkmark$}
        \\
        & & $n_{\textup{bins}}=2$ & $n_{\textup{bins}}=3$ & $n_{\textup{bins}}=4$ & $\gamma = 0.3$ & $\gamma = 0.5$ & $\gamma = 0.7$ & $\gamma = 0.3$ & $\gamma = 0.5$ & $\gamma = 0.7$
        \\
        \midrule
        \multirow{6}{*}{$1$} & 100 & 0.047& 0.037& 0.044& 0.038& 0.054& 0.065& 0.039& \textbf{0.034}& 0.044
        \\
        & 140 & 0.038& 0.034& 0.037& 0.039& 0.054 & 0.066& 0.033& \textbf{0.032}& 0.042
        \\
        & 180 & 0.033& 0.031& 0.031& 0.049& 0.061& 0.072 & 0.028& \textbf{0.027}& 0.041
        \\
        & 220 & 0.029& 0.029& 0.035& 0.055& 0.065& 0.075& 0.027& \textbf{0.023}& 0.035
        \\
        & 260 & 0.03 & 0.025& 0.029& 0.043& 0.055& 0.067 & \textbf{0.024}& 0.027& 0.041
        \\
        & 300 & 0.026& 0.022& 0.024& 0.047 & 0.058& 0.069 & \textbf{0.02} & 0.026& 0.04
        \\
        \midrule
        \multirow{6}{*}{$3$} & 100 & 0.033& 0.049& 0.068& 0.037& 0.049& 0.057& 0.066& 0.03 & \textbf{0.017}
        \\
        & 140 & 0.031& 0.046& 0.059& 0.041& 0.05& 0.057& 0.057& 0.026& \textbf{0.016}
        \\
        & 180 & 0.023& 0.034& 0.046& 0.046& 0.055& 0.061 & 0.043& 0.017& \textbf{0.016}
        \\
        & 220 & 0.021& 0.03 & 0.039& 0.045& 0.053& 0.06& 0.035& \textbf{0.014}& 0.018
        \\
        & 260 & 0.02 & 0.026& 0.036& 0.048& 0.056& 0.062& 0.031& \textbf{0.013}& 0.019
        \\
        & 300 & 0.019& 0.022& 0.029& 0.047& 0.05& 0.06& 0.025& \textbf{0.014}& 0.023
        \\
        \midrule
        \multirow{6}{*}{$10$} & 100 & 0.044& 0.07 & 0.095& 0.044& 0.046& 0.047& 0.098& 0.053& \textbf{0.028}
        \\
        & 140 & 0.035& 0.056& 0.075& 0.044& 0.046& 0.046& 0.078& 0.041& \textbf{0.019}
        \\
        & 180 & 0.028& 0.049& 0.064& 0.045& 0.046& 0.047& 0.065& 0.033& \textbf{0.014}
        \\
        & 220 & 0.023& 0.041& 0.055& 0.046& 0.047& 0.047& 0.055& 0.026& \textbf{0.009}
        \\
        & 260 & 0.02 & 0.038& 0.049& 0.046& 0.046& 0.047& 0.05 & 0.023& \textbf{0.008}
        \\
        & 300 & 0.019& 0.034& 0.045& 0.045& 0.047& 0.048& 0.045& 0.02 & \textbf{0.006}
        \\        
        \bottomrule
    \end{tabular}
    \label{tab:synthetic_multi}
\end{table*}

For $m \in \{1,3,10\}$, we consider $\rho = \frac{1}{3\sqrt{m}}$ for the covariance matrix to ensure it is positive definite.
$n \in \{100, 120, 140, 180, 220, 260, 300\}$ samples are generated from this probabilistic model and the proposed estimator $\hat{\textup{EIPM}}_{\mathcal{V}_{\kappa,1}}^{\gamma} (h(\bm{X}); S)$ is computed using Proposition \ref{pro_estimator_derivation}.
We also consider the binning estimator and Nadaraya–Watson estimator.
We vary the number of bins $n_{bins}$ over $\{ 2, 3, 4 \}$ and consider the bandwidth $\gamma$ in $\{ 0.3, 0.5, 0.7 \}.$
For Nadaraya–Watson estimator, we use $N(\bm{0}_m , \frac{2}{m}I_m)$ for the proposal distribution and $R=1000$.

We randomly generate synthetic datasets 100 times and obtain 100 EIPM estimates.
Resulting RMSEs of the estimates with various $n$ and $m$ are provided in Table \ref{tab:synthetic_multi}.
As a result, we observe that the proposed estimator performed reliably well in all settings. 
Particularly, for high-dimensional $\bm{Z}$, 
we observe that increasing the sample size cannot improve the performance of the Nadaraya-Watson estimator.

\clearpage

\section{Experiments for Real dataset}\label{appen:exp}

\subsection{Datasets}\label{appen:data}
       
The information of the two tabular datasets and two graph datasets used in the numerical studies are summarized in Table \ref{tab:datasets}.
We standardize each input feature in $\bm{X}$ and the sensitive attribute $S$ into $[0, 1]$ by the min-max scaling.

\begin{table}[h]
    \centering
    \caption{
    {\textbf{Dataset information:} Real datasets and corresponding tasks with pre-defined continuous sensitive attributes.}
    }
    \begin{tabular}{|c|c|c|c|c|c|}
        \toprule
        Dataset & Task & Input dimension $d$ & Sample size (Train / Test) & $Y$ & $S$ \\
        \midrule
        \textsc{Adult} & Classification & 101 & 32,561 / 12,661 & Income $> 50\$$k & Age \\
        \textsc{Crime} & Regression & 121 & 1,794 / 200 & Crime ratio & Black group ratio \\
        \textsc{Pokec-z} & Classification & 276 & 33,898 / 16,949 & Working field & Age \\
        \textsc{Pokec-n} & Classification & 265 & 33,284 / 16,643 & Working field & Age \\
        \bottomrule
    \end{tabular}
    \label{tab:datasets}
\end{table}

\textbf{The meaning of unfairness in the datasets}
In \textsc{Adult} dataset, the target variable is `Income  $> 50\$$k' and the sensitive attribute is `Age', and hence a large value  $\Delta \textup{\texttt{GDP}}$ or $\Delta \textup{\texttt{GEO}}$ implies that the prediction model   
decides elderly people as a rich more frequently, which would be unfair in certain situations. For example,
elderly people may be left out of government support for low-income people. 
In \textsc{Crime} dataset, the target variable is `Crime ratio' and
the sensitive attribute is `Black group ratio' of a given community, and hence a large value of $\Delta \textup{\texttt{GDP}}$ implies 
that the prediction model could simply decide a community having a higher ratio of black people as a community of high crime ratio, which would undesirable in various situations.

\textbf{Example of dataset bias}
Fig. \ref{fig:conditional} below shows the observation of the dataset bias on \textsc{Crime} dataset, in which we provide the conditional distributions of three representative features (PctKids2Par = \% of kids in family with two parents, PctPopUnderPov = \% of people under the poverty level and PctHousOccup = \% of housing occupied) with respect to the sensitive attribute (= black group ratio of a given community).
We divide the features by five groups to draw box plots using the 20, 40, 60, 80, 100\% quantiles of the sensitive attribute.
The results clearly show that the features have biases with respect to the sensitive attribute.
In contrast, Fig. \ref{fig:conditional2} below shows the mitigation of bias in learned representation by FREM.
We provide the conditional distributions of three randomly selected features of the learned fair representation by FREM.


\begin{figure*}[ht]
    \centering
    \includegraphics[width=0.6\textwidth]{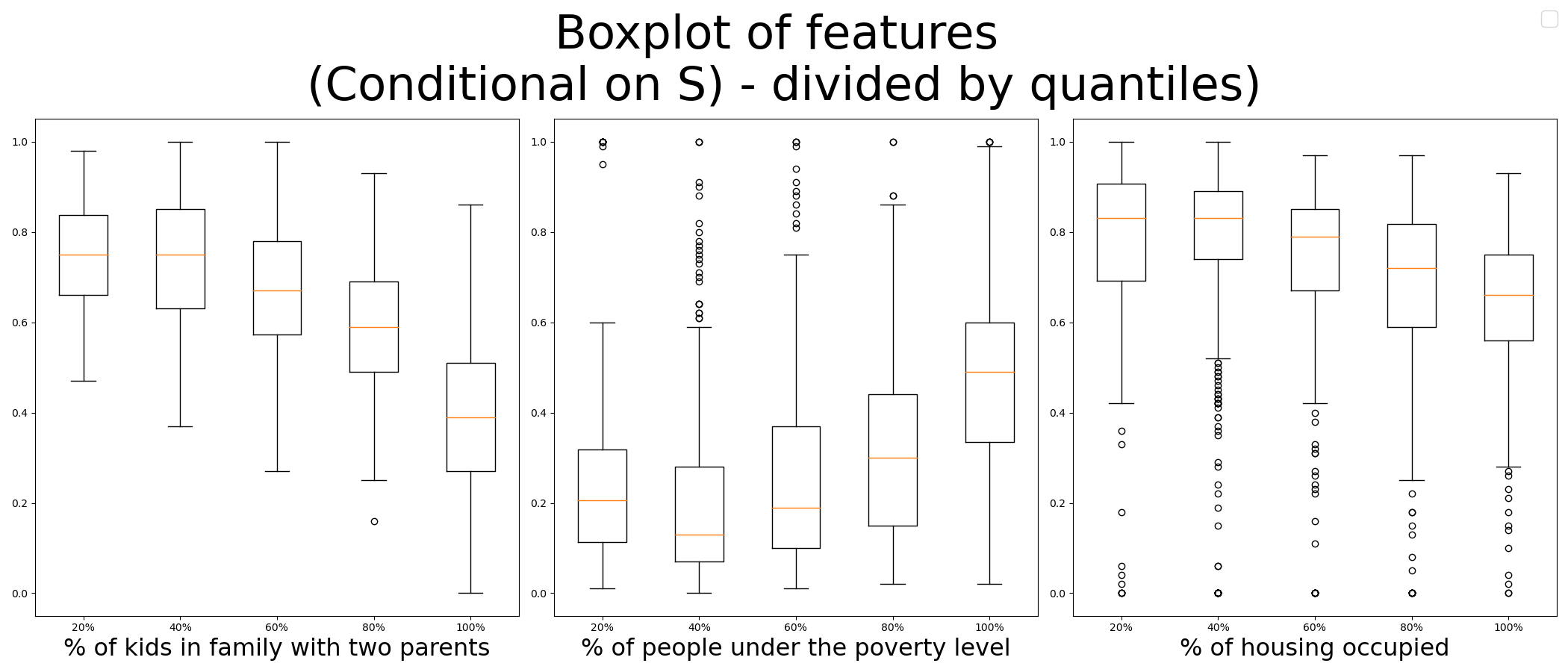}
    \caption{    \textbf{Bias on feature space:} Conditional distributions of three features (PctKids2Par = \% of kids in family with two parents, PctPopUnderPov = \% of people under the poverty level and PctHousOccup = \% of housing occupied) on \textsc{Crime} dataset.
    }
    \label{fig:conditional}
\end{figure*}

\begin{figure*}[ht]
    \centering
    \includegraphics[width=0.6\textwidth]{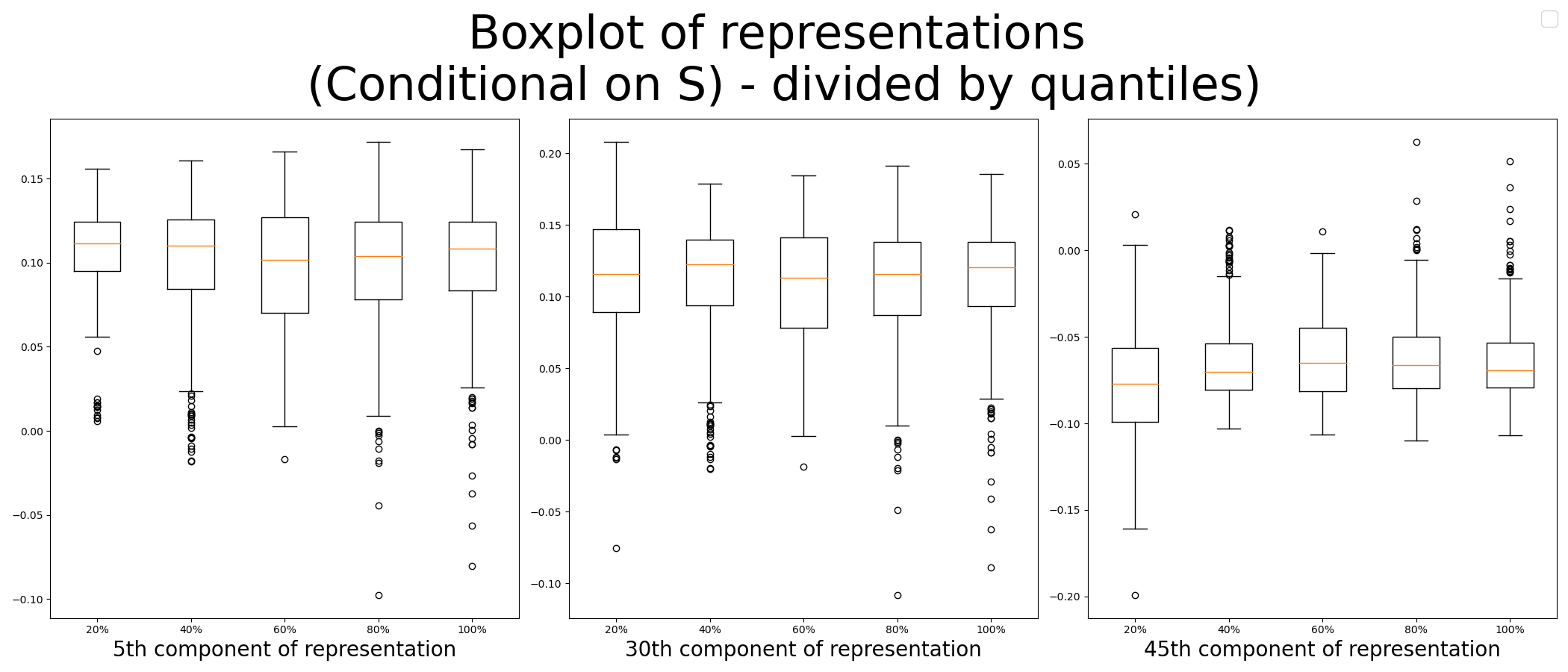}
    \caption{    \textbf{Bias on representation space:}
    Conditional distributions of three randomly selected features of the learned fair representation on \textsc{Crime} dataset.
    }
    \label{fig:conditional2}
\end{figure*}

\clearpage
\subsection{Experimental details}\label{appen:impl}

\subsubsection{Model network}
For tabular datasets, we use a two-layer neural network with the selu activation \cite{NIPS2017_5d44ee6f} and hidden node size $50.$
This network architecture is used in prior works \cite{mary2019fairness, jiang2021generalized}.
The representation of each input is the $50$-dimensional hidden vector extracted from the last layer before the linear prediction head.
For graph datasets, we use the GCN \cite{kipf2017semisupervised} and SGC \cite{pmlr-v97-wu19e} considered in \cite{jiang2021generalized}.

\subsubsection{Training hyperparameters}
For FREM and all baseline methods, the total training epochs is set to be $200.$
The Adam optimizer is used with the learning rate  $0.001$ and weight decay hyperparameter of $0.01.$
We set the batch size $1024$ for \textsc{Adult}, and $200$ for \textsc{Crime} dataset.
For the graph datasets \textsc{Pokec-n} and \textsc{Pokec-z}, we set the batch size $512.$

\subsubsection{Brief introduction about the baselines}\label{appen:ADV}

\begin{itemize}
    \item 
{
Reg-GDP \cite{jiang2021generalized} and Reg-HGR \cite{mary2019fairness} are regularization methods for continuous sensitive attributes.
These methods find a prediction model $g : \cX \to \mathcal{Y}$ that minimizes $$\mathcal{L}_{\textup{sup}}(g)+\lambda \Delta_{n}(g),$$ 
where $\mathcal{L}_{\textup{sup}}(g) = \frac{1}{n} \sum_{i=1}^{n} l(y_{i}, g(\bm{x}_{i}))$ is the task loss, $\Delta_{n}(g)$ is a fairness related regularization term and $\lambda>0$ is the Lagrangian multiplier.
For $\Delta_{n}(g)$, Reg-GDP uses a kernel estimator of $\Delta \texttt{GDP}$ while Reg-HGR uses an estimator of $\Delta \texttt{HGR}$ via a density estimation on a regular square grid.}

    \item 
{
ADV, an adversarial learning approach for fair representation learning for continuous sensitive attributes, 
is a new algorithm developed by modifying the algorithm in \cite{10.1145/3278721.3278779}.
In the method of \cite{10.1145/3278721.3278779}, under the setting where $S$ is binary, the discriminator is trained to predict $S$ using $\hat{Y}(=g(\bm{X}))$.
We apply this method to continuous $S$ as well, where the discriminator tries to predict the real-valued $S$ using the representation $\bm{Z}(=h(\bm{X}))$, and the encoder is trained to make it difficult for $\bm{Z}$ to predict $S.$ 
}

    \item 

{LAFTR \cite{Madras2018LearningAF}, MMD \cite{deka2023mmd} and sIPM-LFR \cite{kim2022learning} are fair representation learning methods for binary sensitive attributes. 
Their aim is to find an encoder $h : \mathcal{X} \to \mathcal{Z}$ such that $h(\bm{X})$ contains most of the information about $\bm{X}$ while ensuring that
$d(\mbP_{h(\bm{X})|S=0}, \mbP_{h(\bm{X})|S=1})$ is small for some deviance measure $d$ between two distributions.
For the deviance measure $d$, LAFTR uses Jensen-Shannon Divergence, while MMD and sIPM-LFR use the  IPM with the RKHS unit ball and $\{ \sigma(\theta^\top \bm{z}+\mu): \theta\in \mathbb{R}^m, \mu\in \mathbb{R}\}$ for the set of discriminator, respectively.
}    
    
\end{itemize}

\subsubsection{Calculation of mutual information (MI) as a fairness measure}\label{sec:appen-fairness_measures}

The MI results (i.e., $\texttt{MI}(\bm{Z}, S)$ and $\texttt{MI}(\hat{Y}, S)$) are calculated by `mutual\_info\_regression' function from the scikit-learn library, which implements the practical approaches from \cite{ross2014mutual}, based on the Kozachenko-Leonenko estimator \cite{kozachenko1987sample}.
Note that the estimator in \cite{ross2014mutual} has been widely used in diverse tasks \cite{9150630, franzmeyer2022learnmatterscrossdomainimitation, Liu_2024}.
    
    To verify the stability of the Kozachenko-Leonenko estimator we use, we evaluate the variation of the estimated values using the bootstrap method, based on the values of $\hat{Y}$ and $S$ from models learned by several algorithms.
    We randomly resample the test data with replacement 1,000 times, then report the (i) average, (ii) standard deviation, and (iii) coefficient of variation (= standard deviation $\div$ average) of the results.
    The results, presented in Table \ref{tab:mi_stable}, show that this estimator is stable, supporting its practical validity as a fairness measure.

    \begin{table*}[h]
        \centering
        \caption{\textbf{Stability of the used MI measure}: Averages, standard deviations, and coefficient of variations of $\texttt{MI} (\hat{Y}, S)$ using 1,000 bootstrap samples.
        Avg = average.
        Std = standard deviation.
        CV = coefficient of variation.
        (Left) \textsc{Adult} dataset. (Right) \textsc{Crime} dataset.
        }
        \begin{tabular}{|c||c|c|c|c|c|c|}
            \toprule
            Results of \texttt{MI}$(\hat{Y}, S)$ & \multicolumn{3}{c|}{\textsc{Adult}} & \multicolumn{3}{c|}{\textsc{Crime}}
            \\
            \midrule
            Algorithm & Avg & Std & CV & Avg & Std & CV
            \\
            \midrule
            MMD & 0.237 & 0.018 & 0.076 & 0.321 & 0.009 & 0.028
            \\
            Reg-GDP & 0.252 & 0.011 & 0.044 & 0.152 & 0.021 & 0.138
            \\
            FREM & 0.212 & 0.020 & 0.094 & 0.076 & 0.004 & 0.053
            \\
            \bottomrule
        \end{tabular}
        \label{tab:mi_stable}
    \end{table*}


\color{black} 

\clearpage
\subsection{Omitted experimental results}\label{appen:exp_results}

\subsubsection{Main results: Fairness-prediction trade-off}

\begin{figure*}[ht]
    \centering
    \fbox{
    \includegraphics[width=0.3\textwidth]{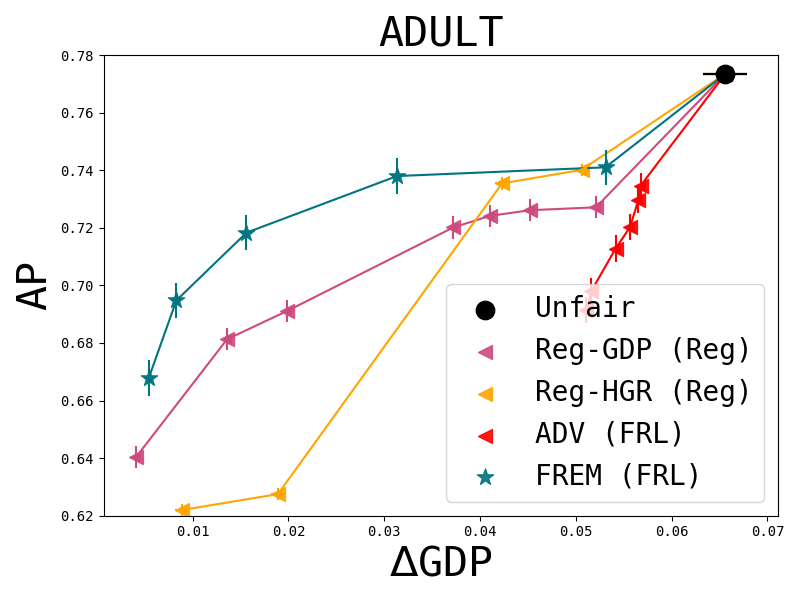}
    }
    \fbox{
    \includegraphics[width=0.3\textwidth]{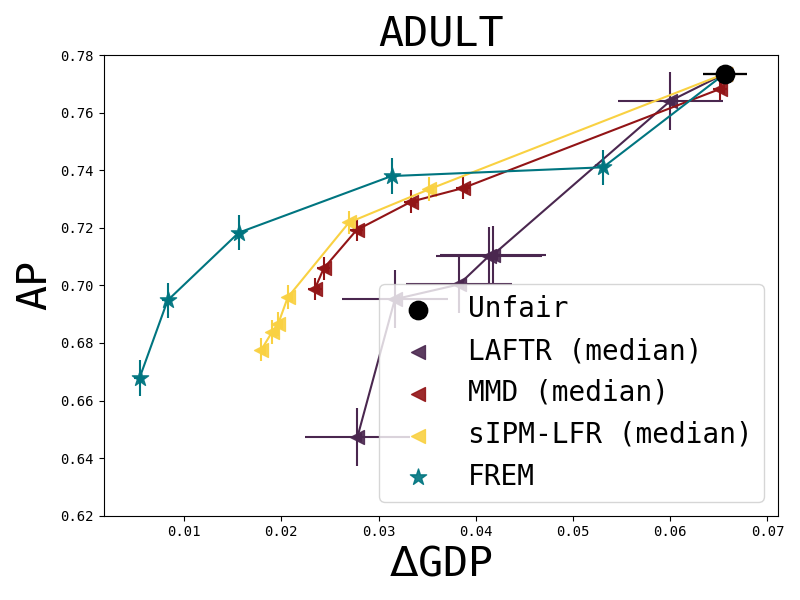}
    \includegraphics[width=0.3\textwidth]{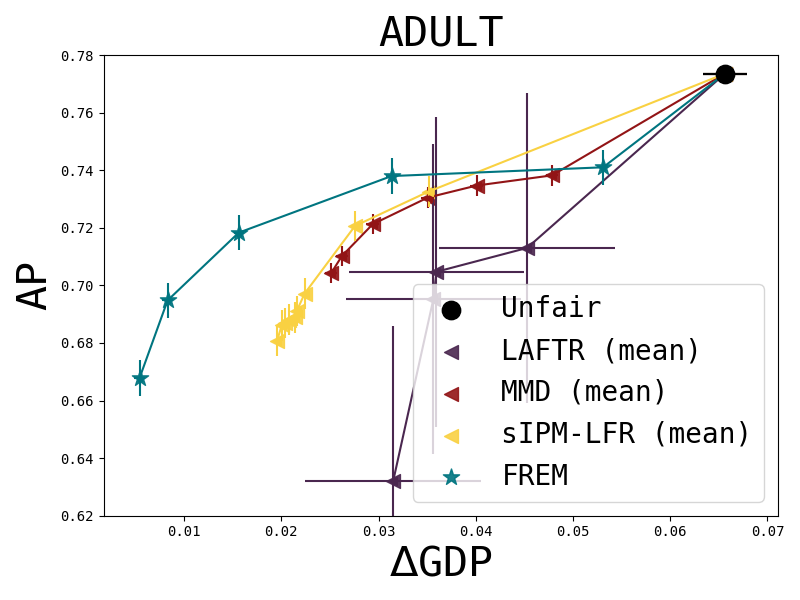}
    }
    \\~\\
    \fbox{
    \includegraphics[width=0.3\textwidth]{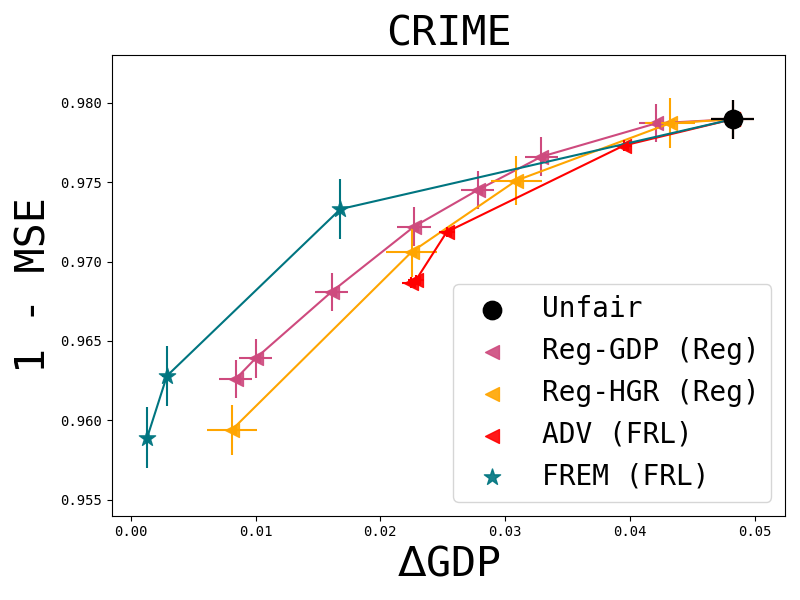}
    }
    \fbox{
    \includegraphics[width=0.3\textwidth]{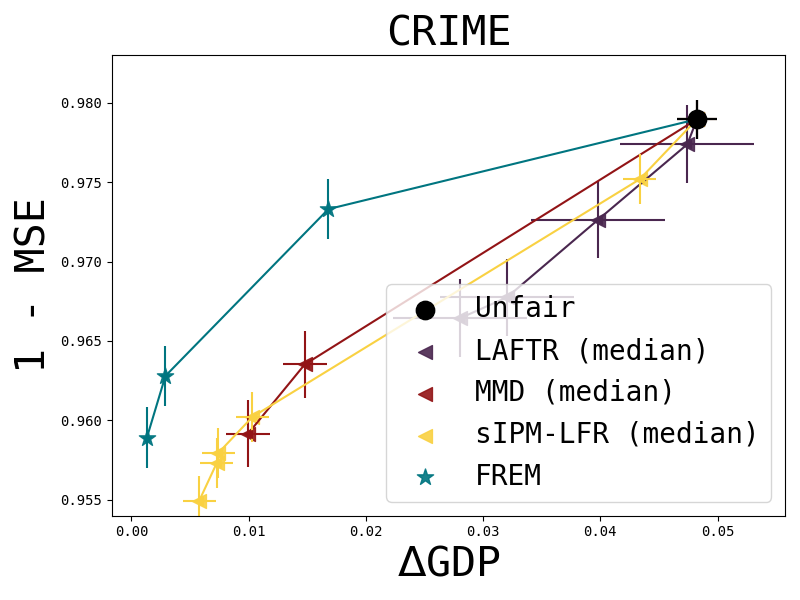}
    \includegraphics[width=0.3\textwidth]{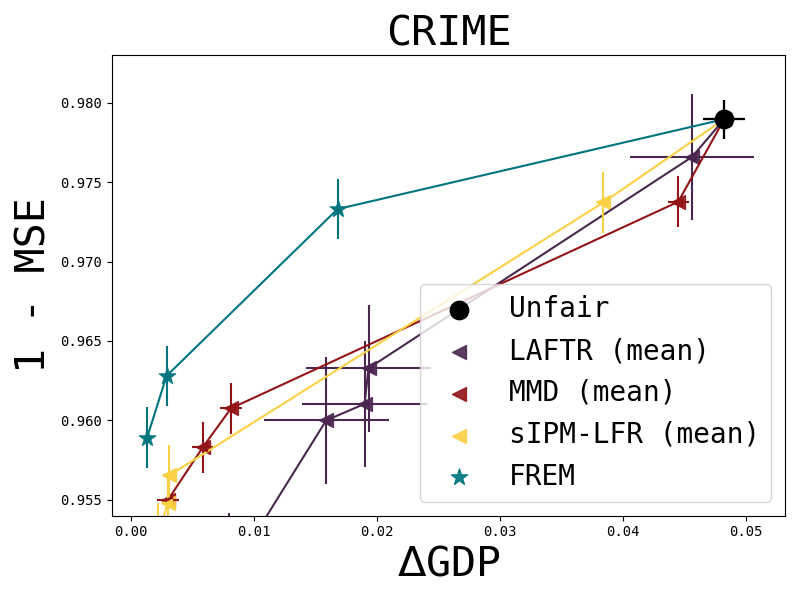}
    }
    \caption{
    Similar to Fig. \ref{fig:tradeoff-dp}.
    \textbf{Demographic Parity}: Pareto-front lines for fairness-prediction trade-off.
    (Top)
    \textsc{Adult} dataset:
    $\Delta \texttt{GDP}$ vs. \texttt{AP}.
    (Bottom)
    \textsc{Crime} dataset:
    $\Delta \texttt{GDP}$ vs. 1 - \texttt{MSE}.
    $\bullet$: Unfair,
    \textcolor{Regkernelcolor}{--$\blacktriangleleft$--}: Reg-GDP,
    \textcolor{orange}{--$\blacktriangleleft$--}: Reg-HGR,
    \textcolor{red}{--$\blacktriangleleft$--}: ADV,
    \textcolor{sIPMcolor}{--$\blacktriangleleft$--}: sIPM-LFR,
    \textcolor{MMDcolor}{--$\blacktriangleleft$--}: MMD,
    \textcolor{LAFTRcolor}{--$\blacktriangleleft$--}: LAFTR,
    \textcolor{Ourcolor}{--$\mathbf{\filledstar}$--}: FREM.
    }
    \label{fig:tradeoff-dp-appen}
\end{figure*}


\begin{table*}[h]
    \centering
    \caption{\textbf{Other fairness measures}: Comparison of algorithms in terms of two additional fairness measures, $\Delta \texttt{HGR-DP}$ and $\texttt{MI}(\hat{Y}, S)$.
    For each fairness measure, the best results are marked by bold face, which are obtained by FREM.
    (Top) \textsc{Adult} dataset. (Bottom) \textsc{Crime} dataset.
    }
    \scalebox{0.95}{
    \begin{tabular}{|c||c|c|c|c|c|c|c|c|c|c|c|}
        \multicolumn{12}{c}{\textsc{Adult}} \\ \midrule \midrule
        Algorithm & Unfair & \multicolumn{2}{c|}{LAFTR \cite{Madras2018LearningAF}} & \multicolumn{2}{c|}{MMD \cite{deka2023mmd}} & \multicolumn{2}{c|}{sIPM-LFR \cite{kim2022learning}} & {Reg-GDP} \cite{jiang2021generalized} & {Reg-HGR} \cite{mary2019fairness} & ADV \cite{10.1145/3278721.3278779} & FREM $\checkmark$ \\
        Binning & - & median & mean & median & mean & median & mean & - & - & - & - \\
        \midrule \midrule
        $\texttt{Acc} \ (\uparrow) $ & 0.847 & 0.814 & 0.822 & 0.818 & 0.824 & 0.825 & 0.825 & 0.826 & 0.757 & 0.827 & 0.827 \\
        \midrule
        $\Delta \texttt{HGR-DP} \ (\downarrow)$ & 0.474 & 0.336 & 0.381 & 0.291 & 0.301 & 0.282 & 0.282 & 0.300 & 0.306 & 0.434 & \textbf{0.172} \\
        $\texttt{MI}(\hat{Y}, S) \ (\downarrow)$ & 0.396 & 0.238 & 0.281 & 0.232 & 0.230 & 0.271 & 0.268 & 0.194 & 0.192 & 0.388 & \textbf{0.175} \\
        \bottomrule
        \multicolumn{12}{c}{}
        \\
        \multicolumn{12}{c}{}
        \\
        \multicolumn{12}{c}{\textsc{Crime}} \\ \midrule \midrule
        Algorithm & Unfair & \multicolumn{2}{c|}{LAFTR \cite{Madras2018LearningAF}} & \multicolumn{2}{c|}{MMD \cite{deka2023mmd}} & \multicolumn{2}{c|}{sIPM-LFR \cite{kim2022learning}} & {Reg-GDP} \cite{jiang2021generalized} & {Reg-HGR} \cite{mary2019fairness} & ADV \cite{10.1145/3278721.3278779} & FREM $\checkmark$ \\
        Binning & - & median & mean & median & mean & median & mean & - & - & - & - \\
        \midrule
        \midrule
        $1 - \texttt{MAE} \ (\uparrow)$ & 0.904 & 0.855 & 0.846 & 0.846 & 0.851 & 0.848 & 0.832 & 0.836 & 0.847 & 0.851 & 0.851 \\
        \midrule
        $\Delta \texttt{HGR-DP} \ (\downarrow)$ & 0.569 & 0.385 & 0.375 & 0.143 & 0.128 & 0.183 & 0.131 & 0.128 & 0.115 & 0.384 & \textbf{0.104} \\
        $\texttt{MI}(\hat{Y}, S) \ (\downarrow)$ & 0.346 & 0.170 & 0.169 & 0.064 & 0.053 & 0.068 & 0.053 & 0.078 & 0.050 & 0.226 & \textbf{0.022} \\
        \bottomrule
    \end{tabular}}
    \label{tab:other}
\end{table*}

\begin{figure*}[ht]
    \centering
    \fbox{
    \includegraphics[width=0.3\textwidth]{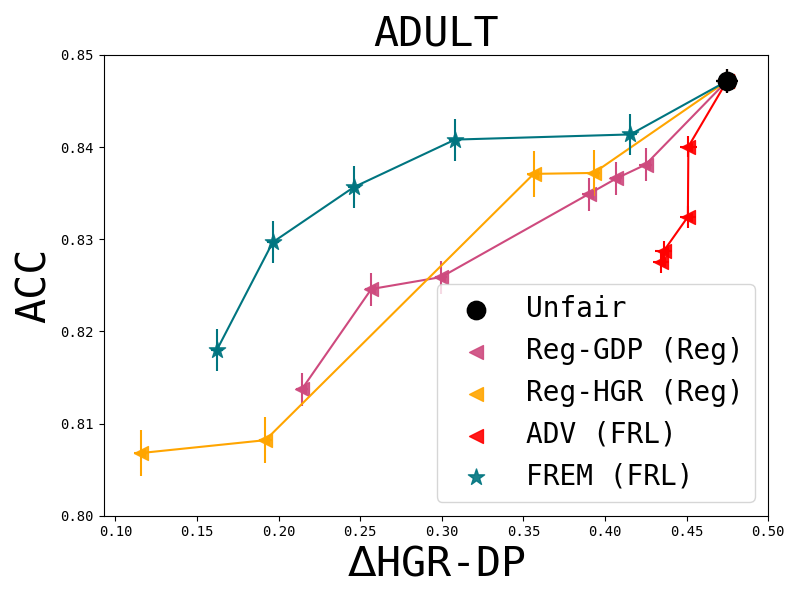}
    }
    \fbox{
    \includegraphics[width=0.3\textwidth]{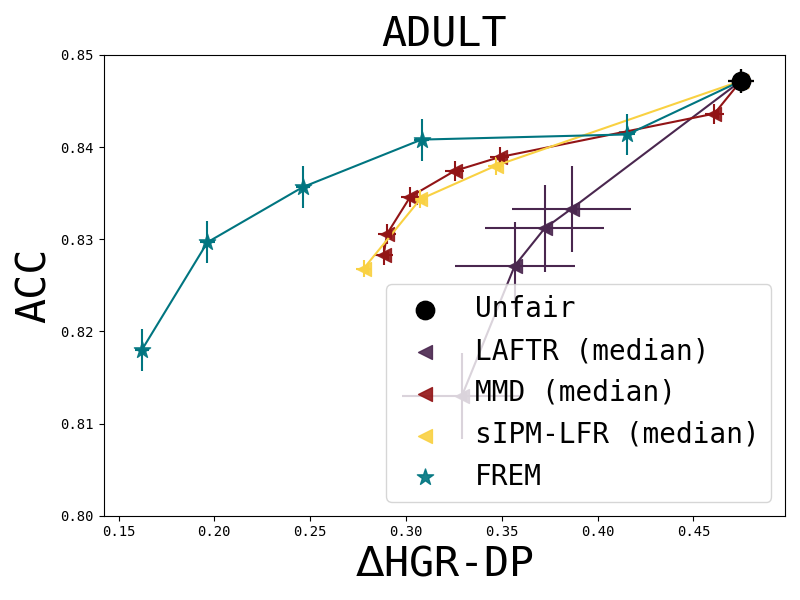}
    \includegraphics[width=0.3\textwidth]{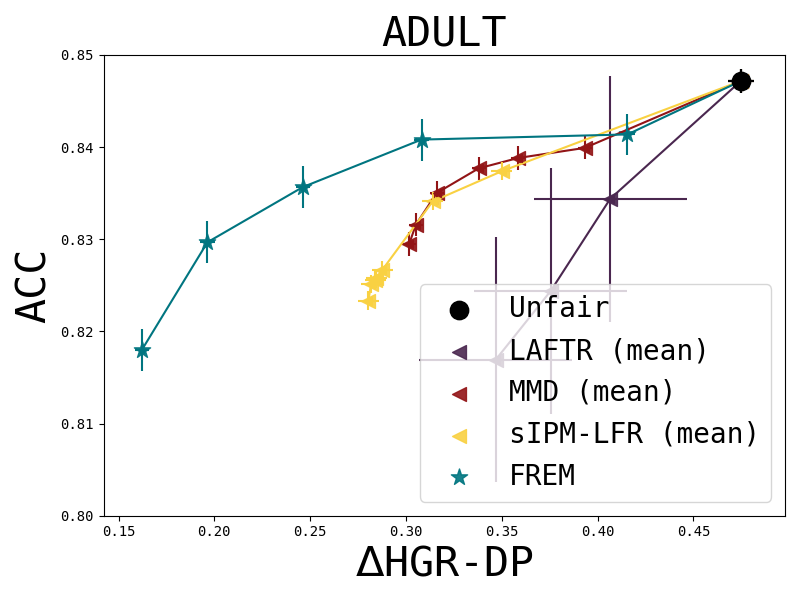}
    }
    \\~\\
    \fbox{
    \includegraphics[width=0.3\textwidth]{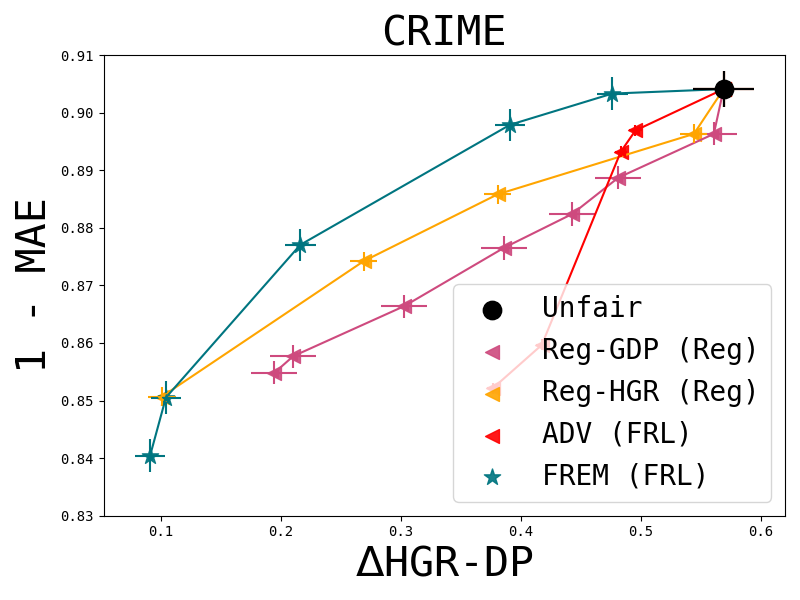}
    }
    \fbox{
    \includegraphics[width=0.3\textwidth]{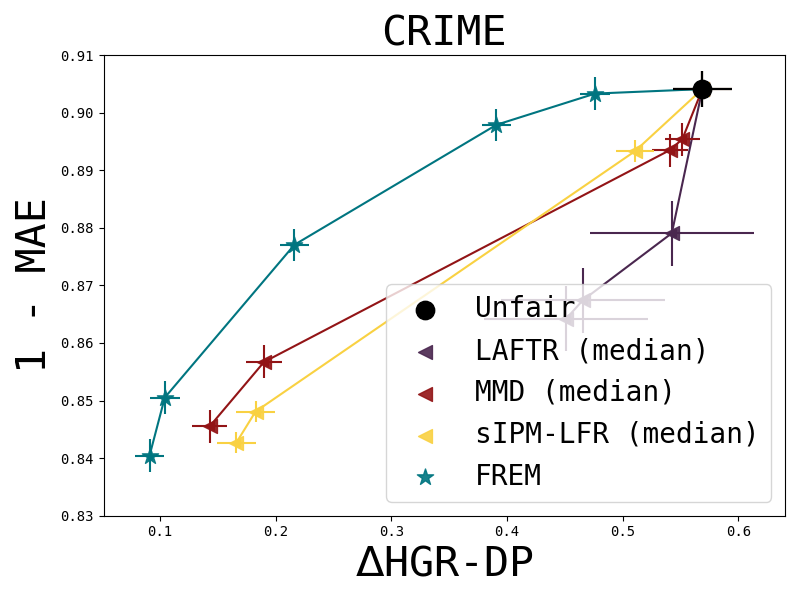}
    \includegraphics[width=0.3\textwidth]{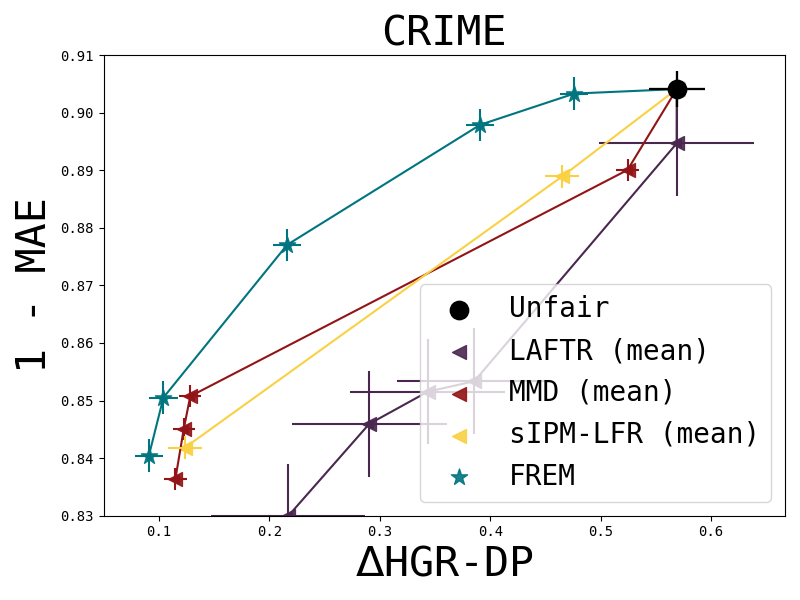}
    }
    \caption{
    \textbf{HGR measure}: Pareto-front lines for fairness-prediction trade-off.
    (Top)
    \textsc{Adult} dataset, $\Delta \texttt{HGR-DP}$ vs. \texttt{ACC}.
    (Bottom)
    \textsc{Crime} dataset, $\Delta \texttt{HGR-DP}$ vs. 1 - \texttt{MAE}.
    $\bullet$: Unfair,
    \textcolor{Regkernelcolor}{--$\blacktriangleleft$--}: Reg-GDP,
    \textcolor{orange}{--$\blacktriangleleft$--}: Reg-HGR,
    \textcolor{red}{--$\blacktriangleleft$--}: ADV,
    \textcolor{sIPMcolor}{--$\blacktriangleleft$--}: sIPM-LFR,
    \textcolor{MMDcolor}{--$\blacktriangleleft$--}: MMD,
    \textcolor{LAFTRcolor}{--$\blacktriangleleft$--}: LAFTR,
    \textcolor{Ourcolor}{--$\mathbf{\filledstar}$--}: FREM.
    }
    \label{fig:tradeoff-hgr}
\end{figure*}

\begin{figure*}[ht]
    \centering
    \fbox{
    \includegraphics[width=0.3\textwidth]{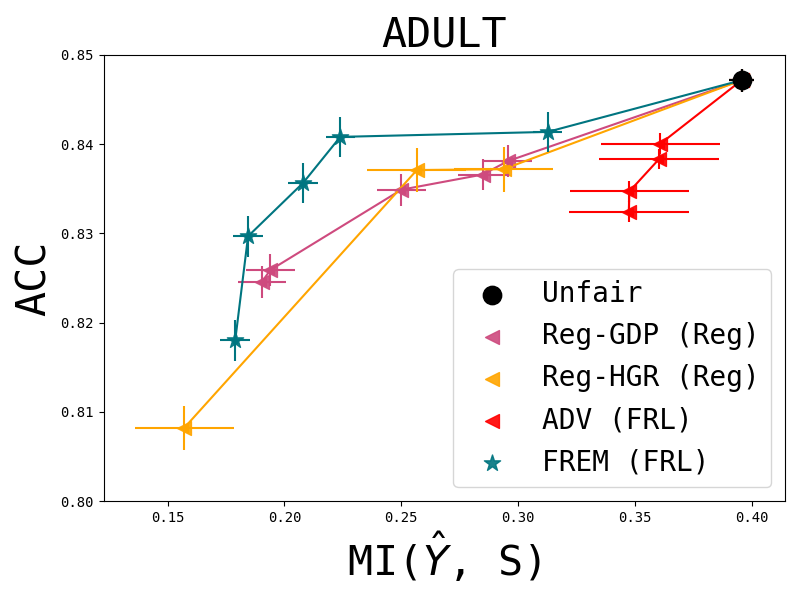}
    }
    \fbox{
    \includegraphics[width=0.3\textwidth]{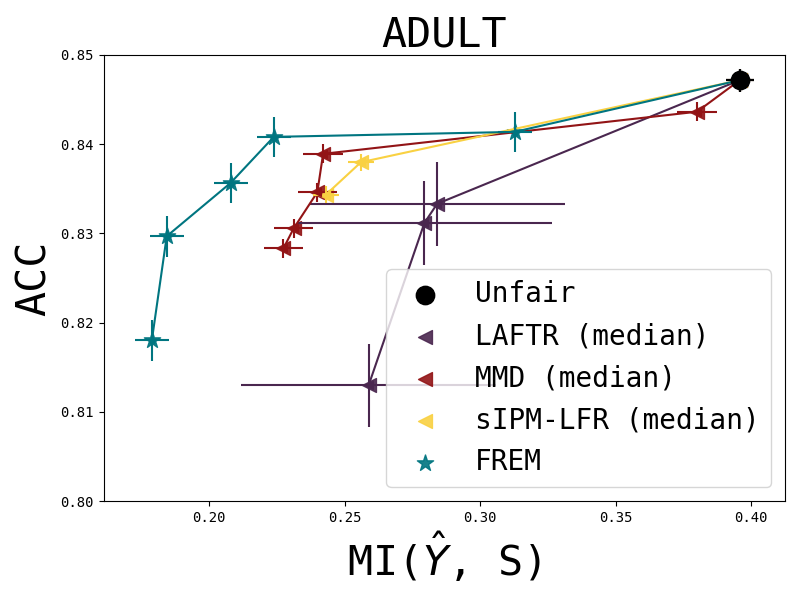}
    \includegraphics[width=0.3\textwidth]{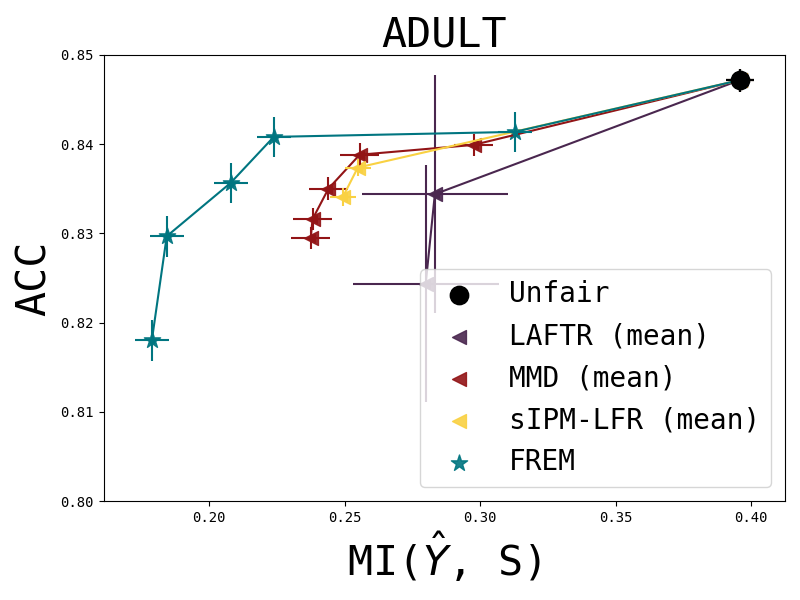}
    }
    \\~\\
    \fbox{
    \includegraphics[width=0.3\textwidth]{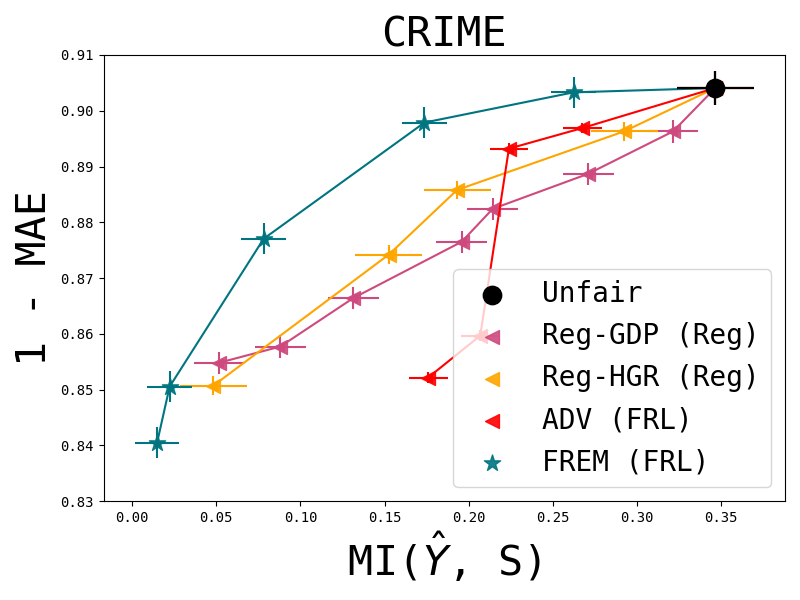}
    }
    \fbox{
    \includegraphics[width=0.3\textwidth]{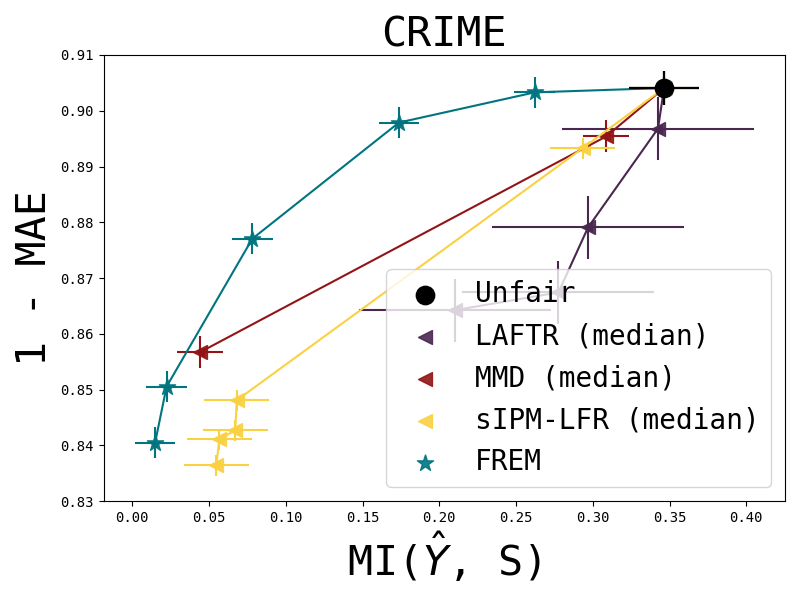}
    \includegraphics[width=0.3\textwidth]{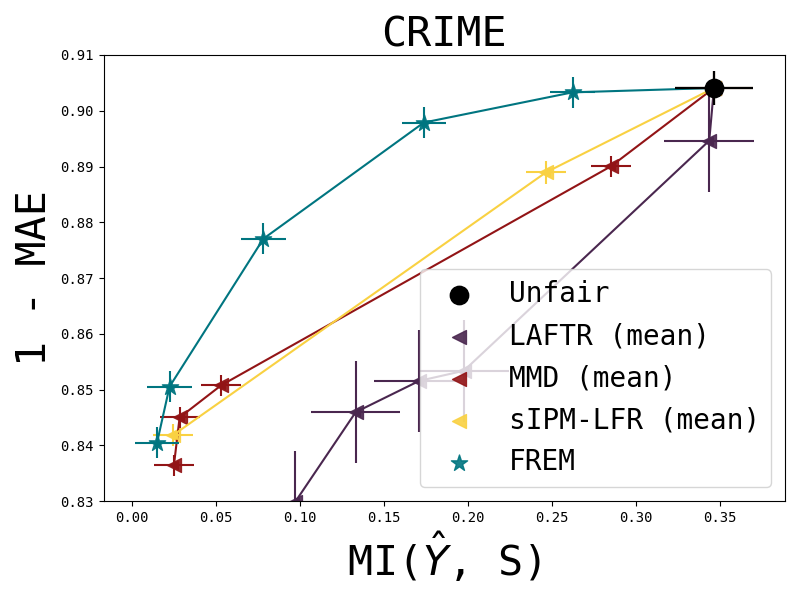}
    }
    \caption{
    \textbf{MI measure}: Pareto-front lines for fairness-prediction trade-off.
    (Top)
    \textsc{Adult} dataset, $\texttt{MI}(\hat{Y}, S)$ vs. \texttt{ACC}.
    (Bottom)
    \textsc{Crime} dataset, $\texttt{MI}(\hat{Y}, S)$ vs. 1 - \texttt{MAE}.
    $\bullet$: Unfair,
    \textcolor{Regkernelcolor}{--$\blacktriangleleft$--}: Reg-GDP,
    \textcolor{orange}{--$\blacktriangleleft$--}: Reg-HGR,
    \textcolor{red}{--$\blacktriangleleft$--}: ADV,
    \textcolor{sIPMcolor}{--$\blacktriangleleft$--}: sIPM-LFR,
    \textcolor{MMDcolor}{--$\blacktriangleleft$--}: MMD,
    \textcolor{LAFTRcolor}{--$\blacktriangleleft$--}: LAFTR,
    \textcolor{Ourcolor}{--$\mathbf{\filledstar}$--}: FREM.
    }
    \label{fig:tradeoff-mi}
\end{figure*}

\begin{figure*}[ht]
    \centering
    \fbox{
    \includegraphics[width=0.3\textwidth]{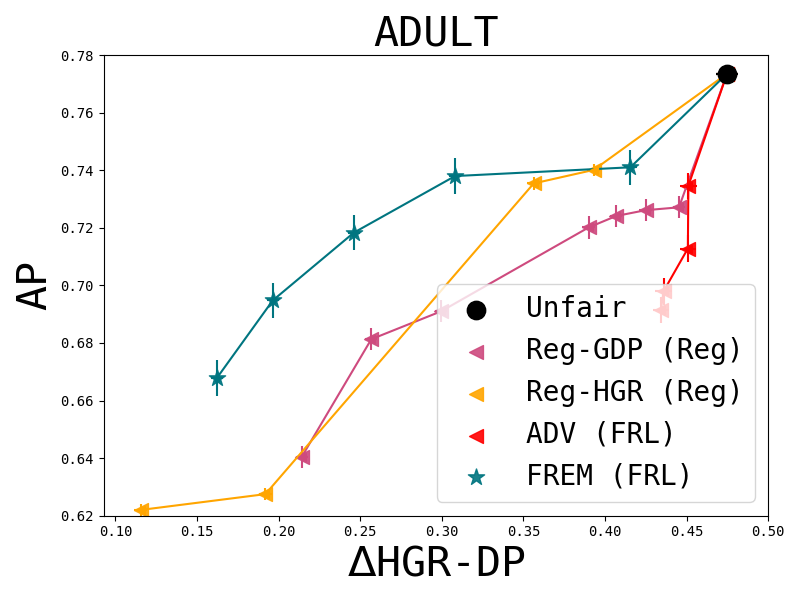}
    }
    \fbox{
    \includegraphics[width=0.3\textwidth]{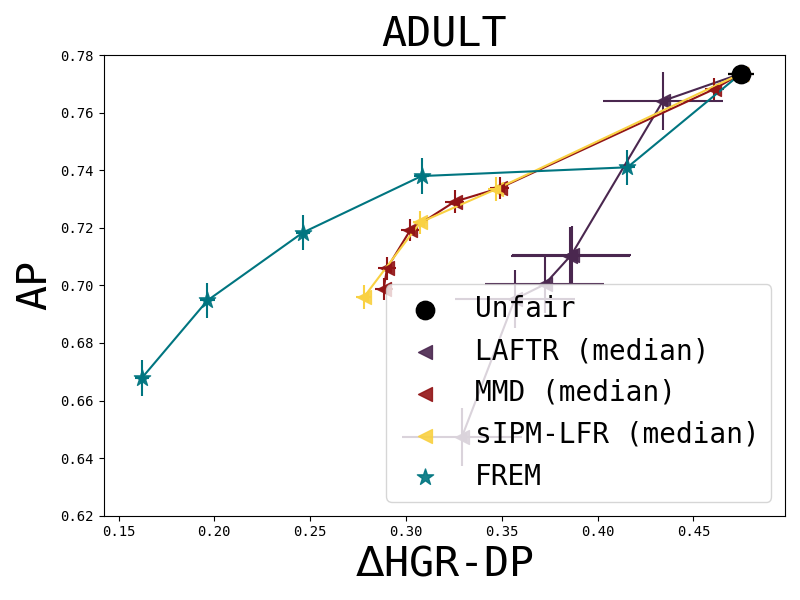}
    \includegraphics[width=0.3\textwidth]{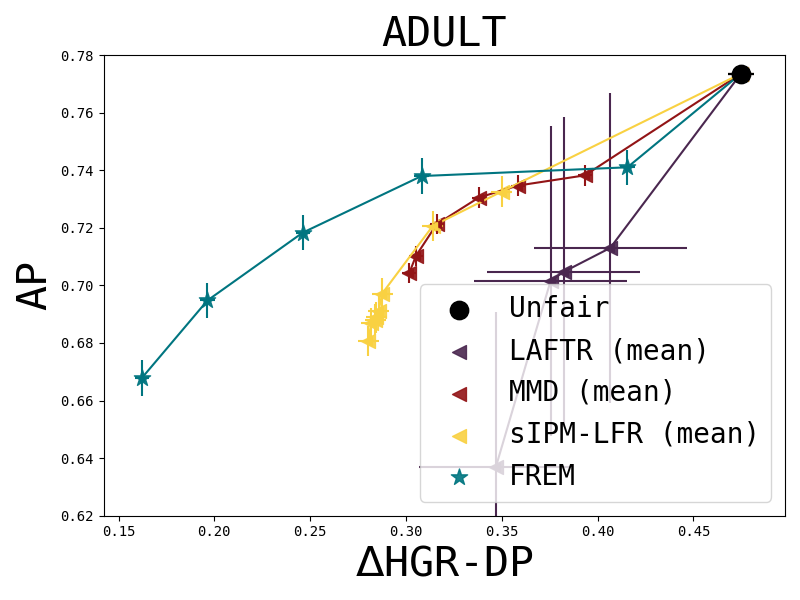}
    }
    \\~\\
    \fbox{
    \includegraphics[width=0.3\textwidth]{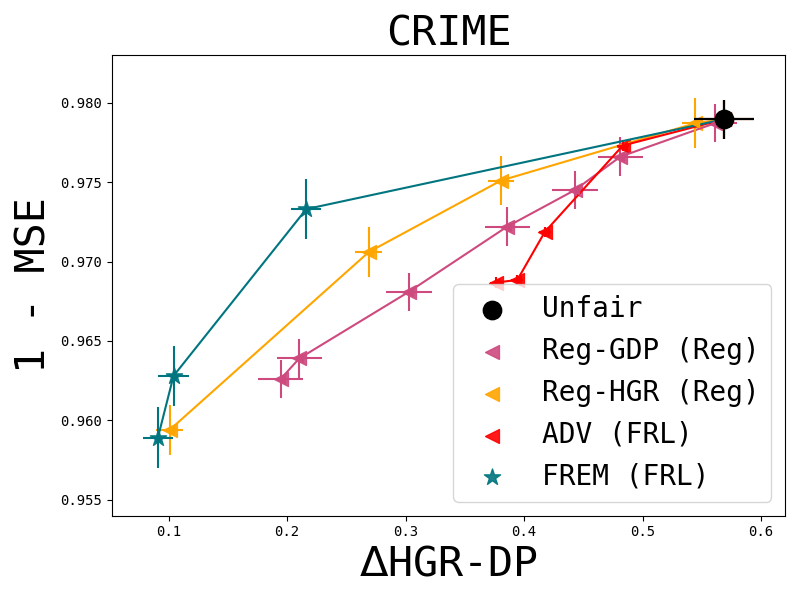}
    }
    \fbox{
    \includegraphics[width=0.3\textwidth]{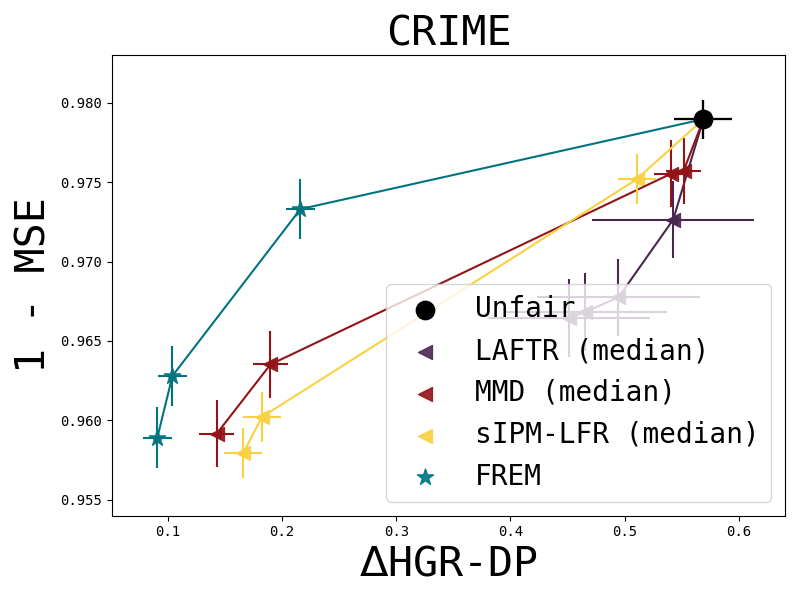}
    \includegraphics[width=0.3\textwidth]{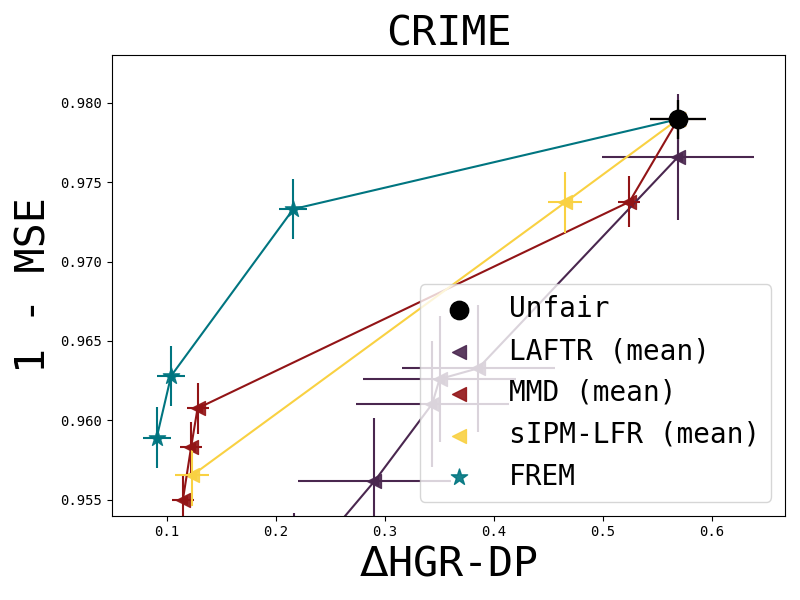}
    }
    \caption{
    \textbf{HGR measure}: Pareto-front lines for fairness-prediction trade-off.
    (Top)
    \textsc{Adult} dataset,
    $\Delta \texttt{HGR-DP}$ vs. \texttt{AP}.
    (Bottom)
    \textsc{Crime} dataset,
    $\Delta \texttt{HGR-DP}$ vs. 1 - \texttt{MSE}.
    $\bullet$: Unfair,
    \textcolor{Regkernelcolor}{--$\blacktriangleleft$--}: Reg-GDP,
    \textcolor{orange}{--$\blacktriangleleft$--}: Reg-HGR,
    \textcolor{red}{--$\blacktriangleleft$--}: ADV,
    \textcolor{sIPMcolor}{--$\blacktriangleleft$--}: sIPM-LFR,
    \textcolor{MMDcolor}{--$\blacktriangleleft$--}: MMD,
    \textcolor{LAFTRcolor}{--$\blacktriangleleft$--}: LAFTR,
    \textcolor{Ourcolor}{--$\mathbf{\filledstar}$--}: FREM.
    }
    \label{fig:tradeoff-hgr2}
\end{figure*}

\begin{figure*}[ht]
    \centering
    \fbox{
    \includegraphics[width=0.3\textwidth]{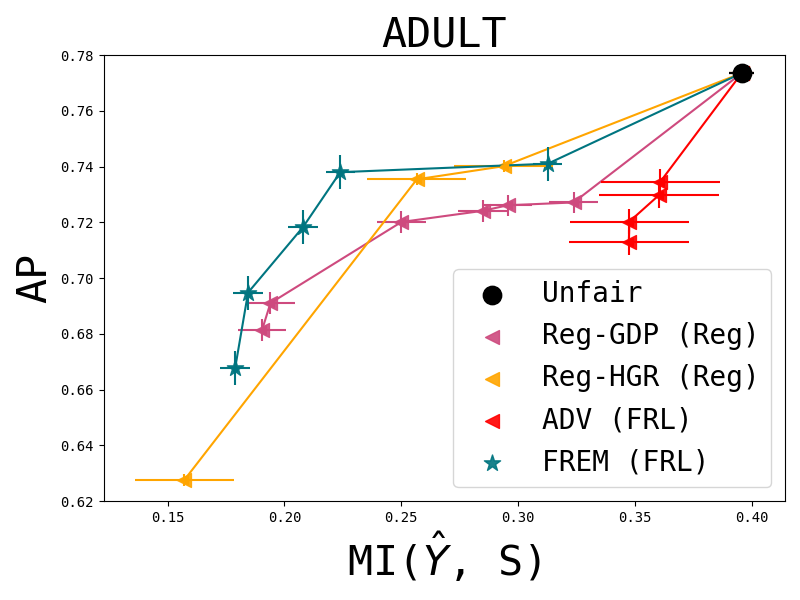}
    }
    \fbox{
    \includegraphics[width=0.3\textwidth]{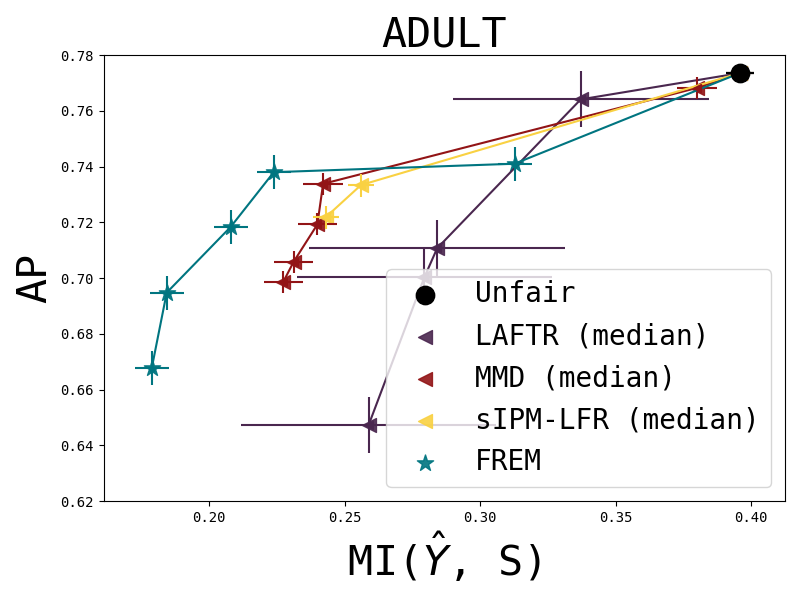}
    \includegraphics[width=0.3\textwidth]{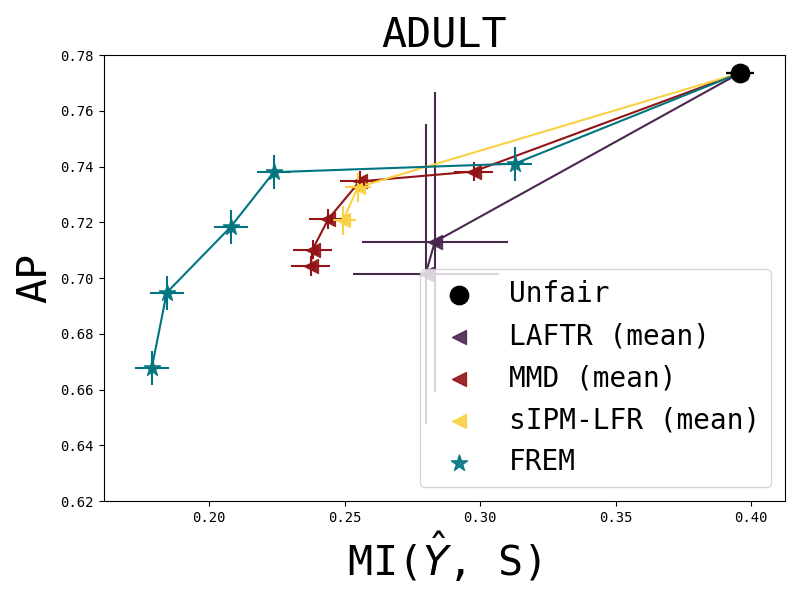}
    }
    \\~\\
    \fbox{
    \includegraphics[width=0.3\textwidth]{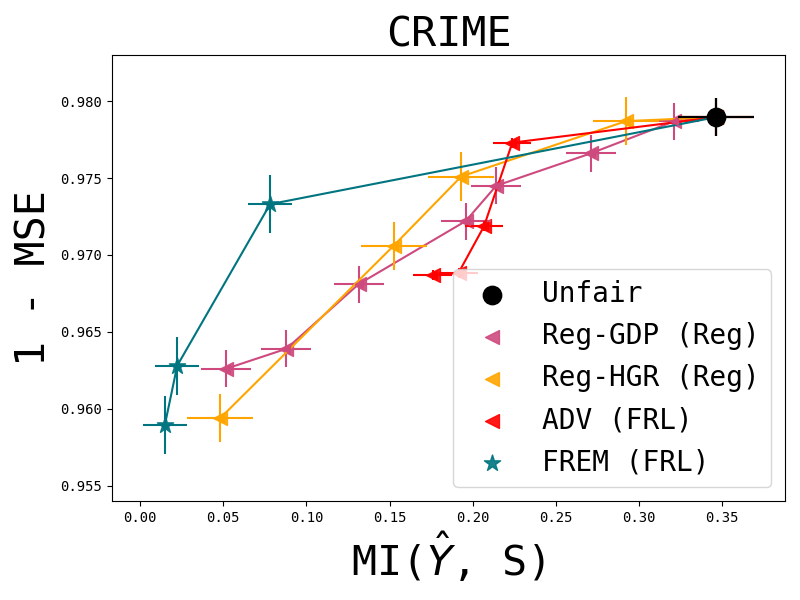}
    }
    \fbox{
    \includegraphics[width=0.3\textwidth]{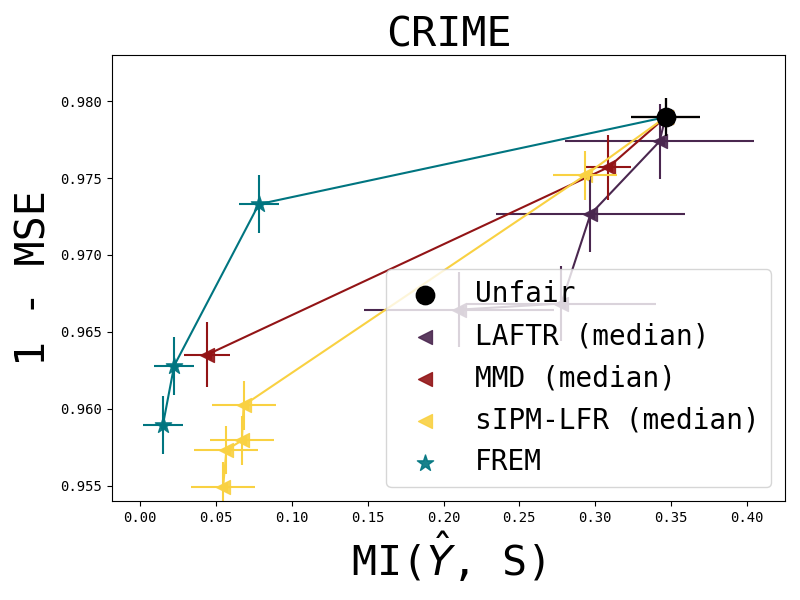}
    \includegraphics[width=0.3\textwidth]{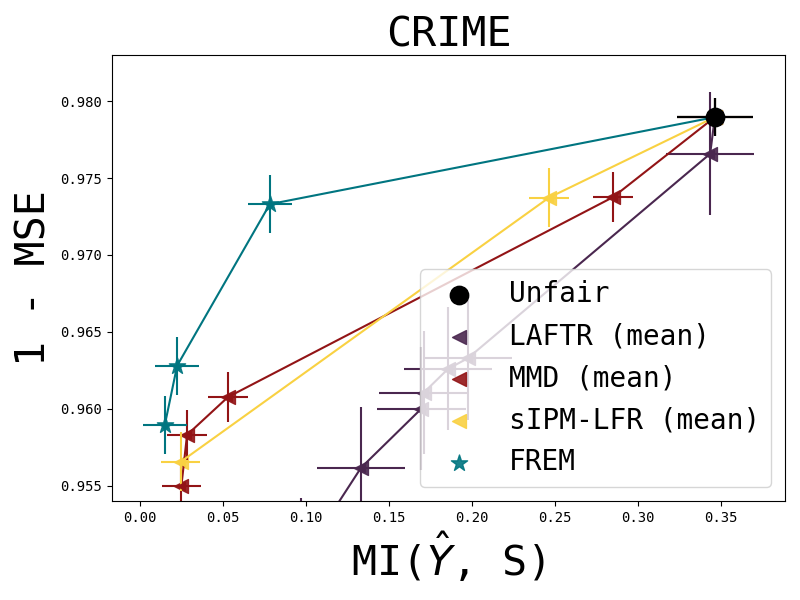}
    }
    \caption{
    \textbf{MI measure}: Pareto-front lines for fairness-prediction trade-off.
    (Top)
    \textsc{Adult} dataset,
    $\Delta \texttt{MI}(\hat{Y}, S)$ vs. \texttt{AP}.
    (Bottom)
    \textsc{Crime} dataset,
    $\Delta \texttt{MI}(\hat{Y}, S)$ vs. 1 - \texttt{MSE}.
    $\bullet$: Unfair,
    \textcolor{Regkernelcolor}{--$\blacktriangleleft$--}: Reg-GDP,
    \textcolor{orange}{--$\blacktriangleleft$--}: Reg-HGR,
    \textcolor{red}{--$\blacktriangleleft$--}: ADV,
    \textcolor{sIPMcolor}{--$\blacktriangleleft$--}: sIPM-LFR,
    \textcolor{MMDcolor}{--$\blacktriangleleft$--}: MMD,
    \textcolor{LAFTRcolor}{--$\blacktriangleleft$--}: LAFTR,
    \textcolor{Ourcolor}{--$\mathbf{\filledstar}$--}: FREM.
    }
    \label{fig:tradeoff-mi2}
\end{figure*}

\clearpage

\begin{figure}[ht]
    \centering
    \includegraphics[width=0.4\textwidth]{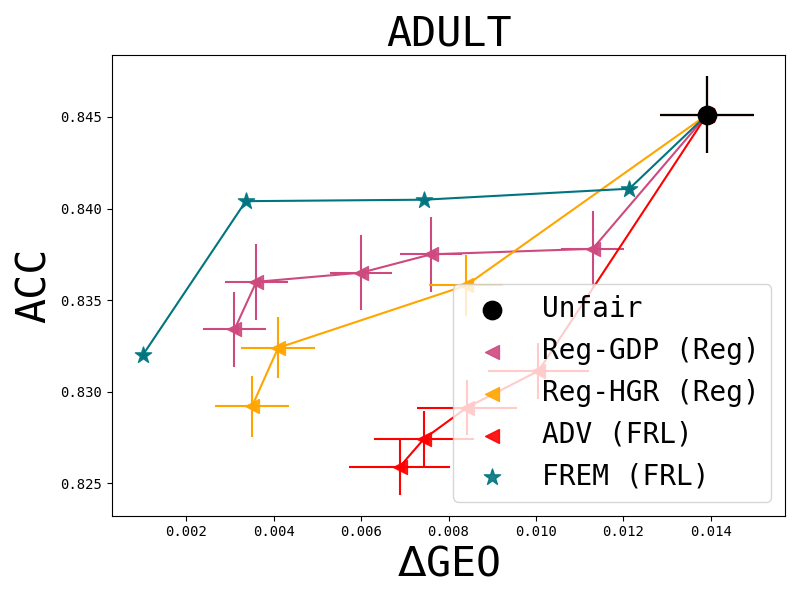}
    \includegraphics[width=0.4\textwidth]{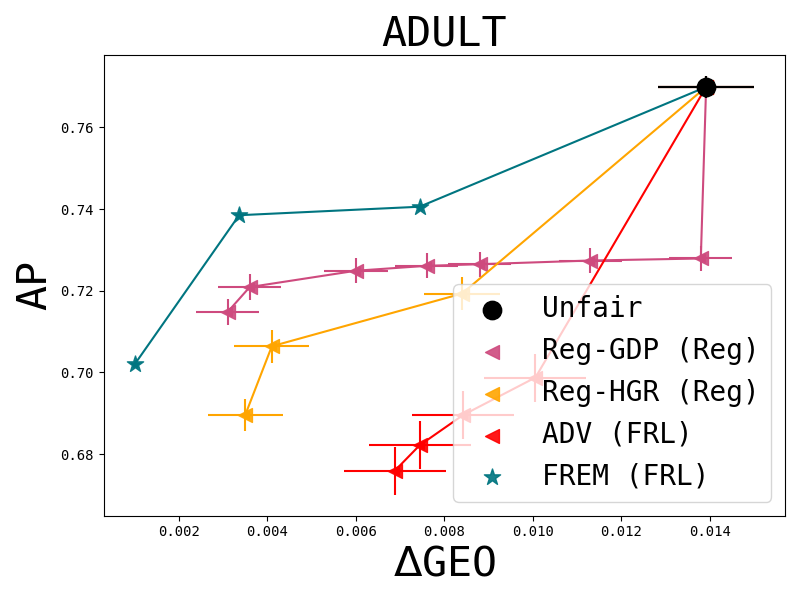}
    \caption{\textbf{Equal Opportunity}: Pareto-front lines for fairness-prediction trade-off on
    \textsc{Adult} dataset:
    (Left) $\Delta \texttt{GEO}$ vs. \texttt{ACC}.
    (Right) $\Delta \texttt{GEO}$ vs. \texttt{AP}.
    $\bullet$: Unfair,
    \textcolor{Regkernelcolor}{--$\blacktriangleleft$--}: {Reg-GDP},
    \textcolor{orange}{--$\blacktriangleleft$--}: {Reg-HGR},
    \textcolor{red}{--$\blacktriangleleft$--}: ADV,
    \textcolor{Ourcolor}{--$\mathbf{\filledstar}$--}: FREM.
    }
    \label{fig:tradeoff-eqopp}
\end{figure}

As described in Appendix \ref{App_EO}, FREM algorithm (for DP) can be modified easily for EO,
which measures the degree of dependency between $\hat{Y}|Y=1$ and $S|Y=1.$
Note that equal opportunity is only defined on a classification task because it requires fairness only for those with $Y=1.$
We present the Pareto-front lines of $\Delta \texttt{GEO}$ vs. \texttt{Acc} and 
$\Delta \texttt{GEO}$ vs. \texttt{AP} in Fig. \ref{fig:tradeoff-eqopp}.
It is clear that FREM outperforms all baseline methods consistently.

\clearpage


\subsubsection{Graph data analysis}\label{sec:appen-graph_data}

We also evaluate the performances of FREM and compare with those of baselines (i.e., Reg-GDP, Reg-HGR, ADV, and MMD with binarized sensitive attributes) on graph datasets.
Fig. \ref{fig:graph_trade_off}, \ref{fig:graph_trade_off2} and \ref{fig:graph_trade_off3} draw the Pareto-front lines for the two graph datasets with various fairness measures.
It depends on the fairness measure being considered, but overall, these results imply that FREM is comparable to the regularization methods (i.e., Reg-GDP and Reg-HGR).
Notably, FREM outperforms the existing FRL methods (i.e., ADV and MMD with binarized $S$), demonstrating that FREM is the most favorable among the FRL methods.

\begin{figure}[ht]
    \centering
    \includegraphics[scale=0.20]{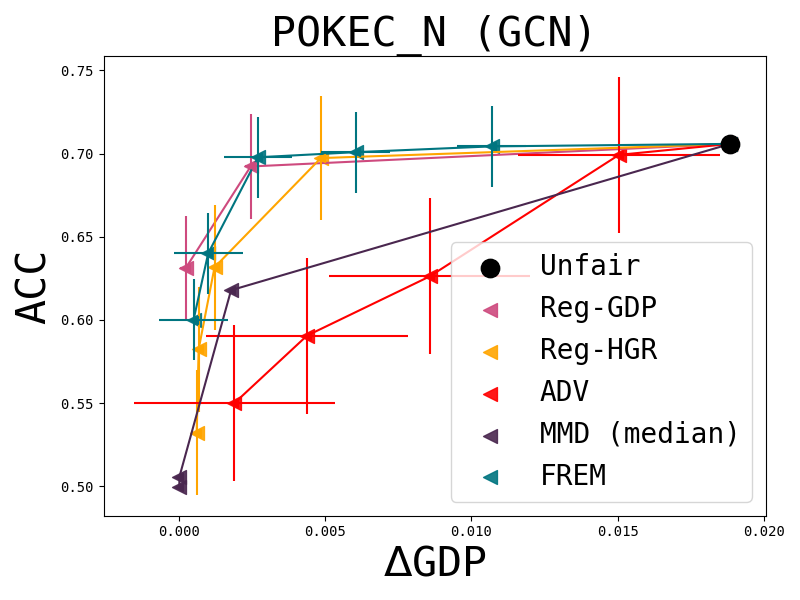}
    \includegraphics[scale=0.20]{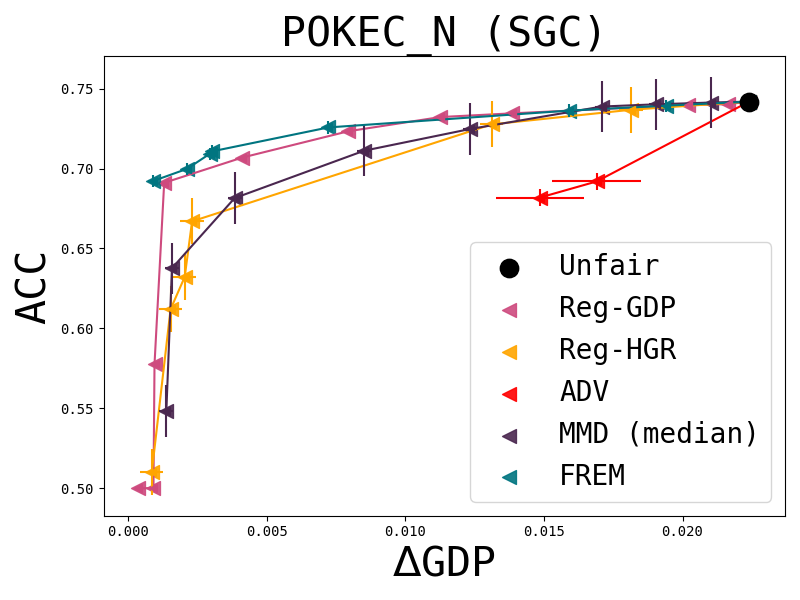}
    \includegraphics[scale=0.20]{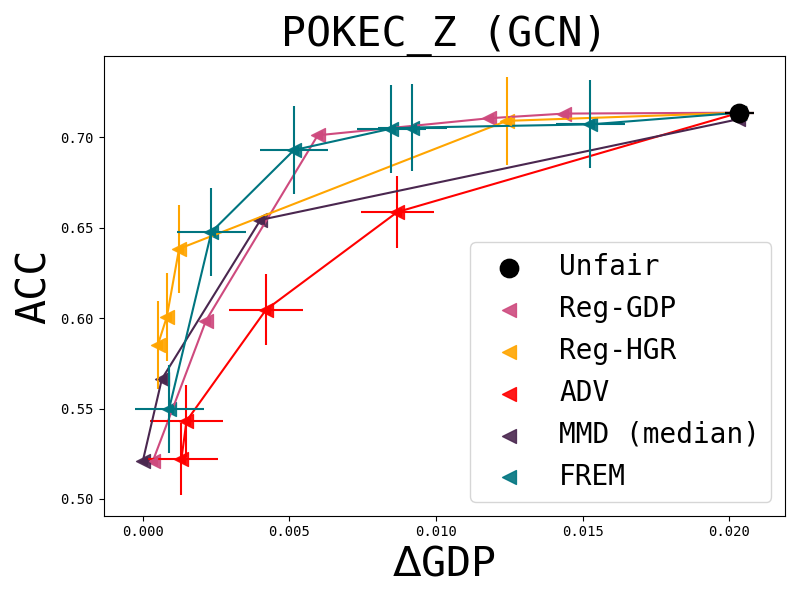}    
    \includegraphics[scale=0.20]{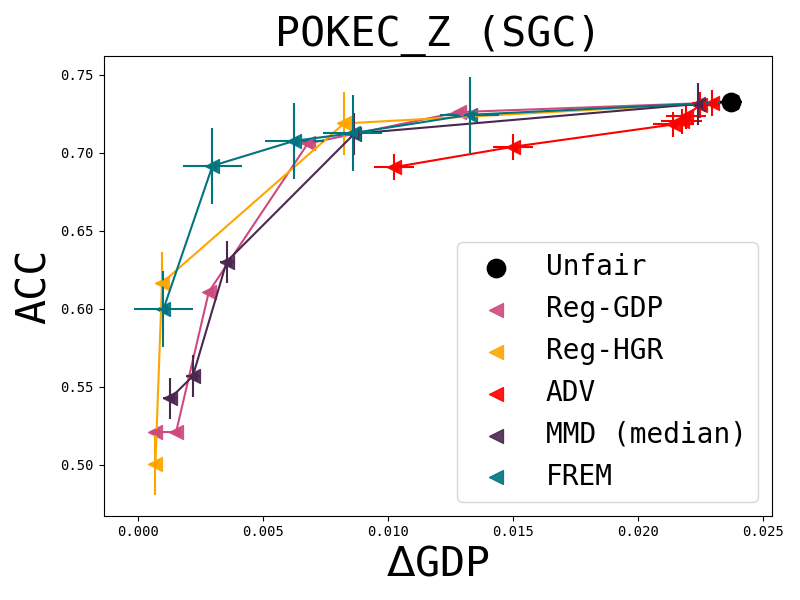}
    \caption{ {\textbf{$\Delta \texttt{GDP}$ vs. \texttt{Acc} trade-offs on graph datasets:}
    (Left two) \textsc{Pokec-n}, (Right two) \textsc{Pokec-z}.
    $\bullet$: Unfair,
    \textcolor{Regkernelcolor}{--$\blacktriangleleft$--}: {Reg-GDP},
    \textcolor{orange}{--$\blacktriangleleft$--}: {Reg-HGR},
    \textcolor{red}{--$\blacktriangleleft$--}: ADV,
    \textcolor{MMDcolor}{--$\blacktriangleleft$--}: MMD,
    \textcolor{Ourcolor}{--$\mathbf{\filledstar}$--}: FREM.
    }}
    \label{fig:graph_trade_off}
\end{figure}

\begin{figure}[ht]
    \centering
    \includegraphics[scale=0.2]{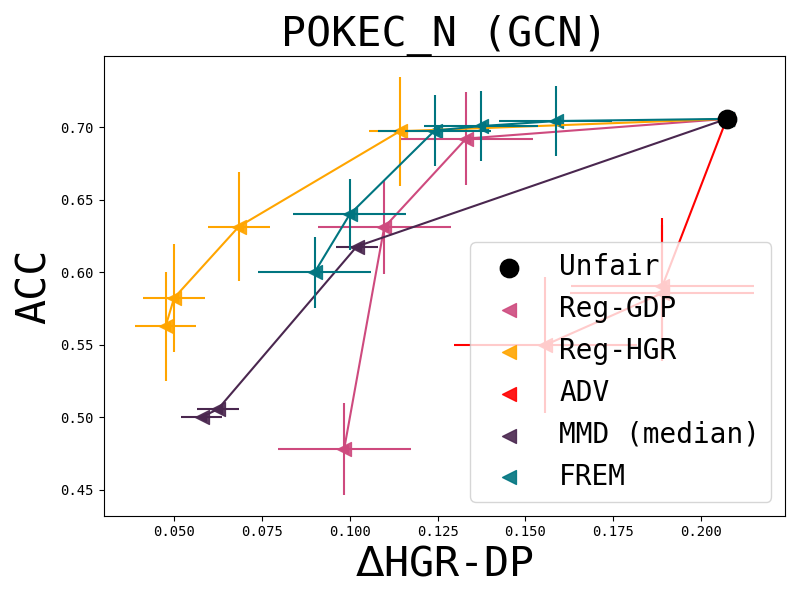}    
    \includegraphics[scale=0.2]{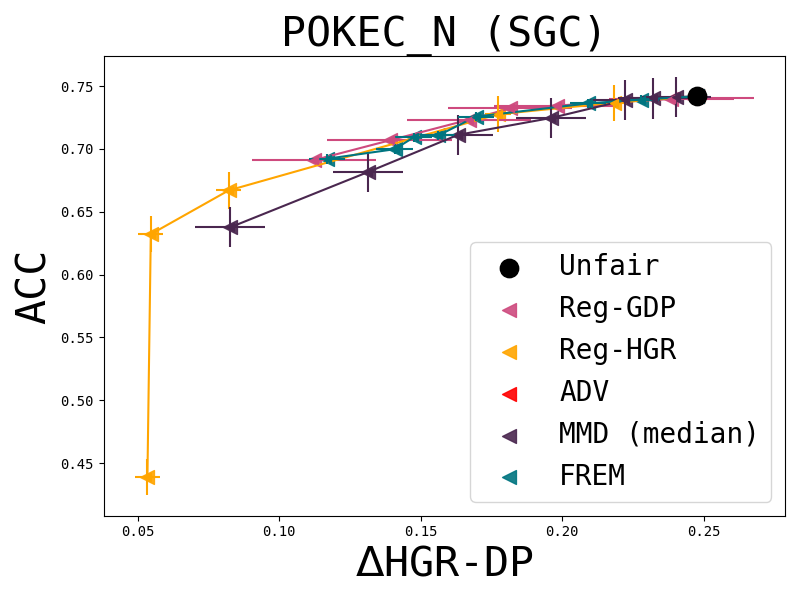}
    \includegraphics[scale=0.2]{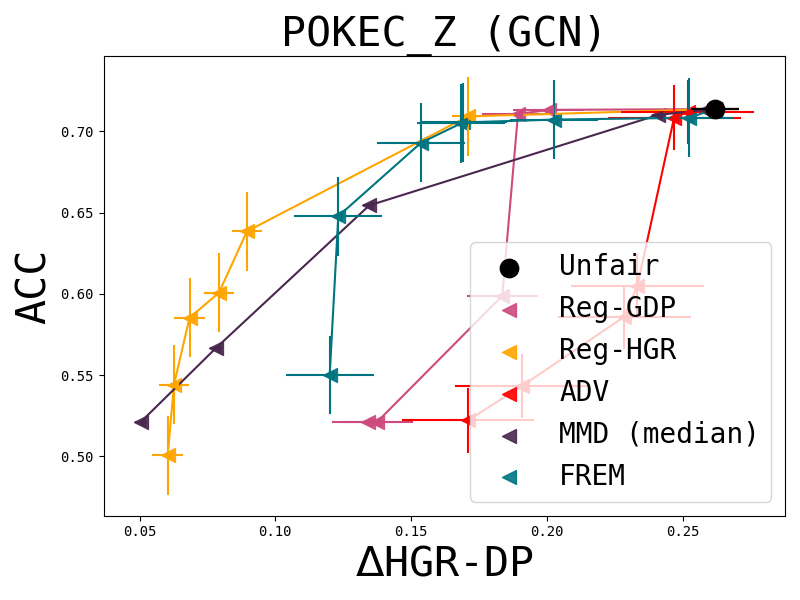}
    \includegraphics[scale=0.2]{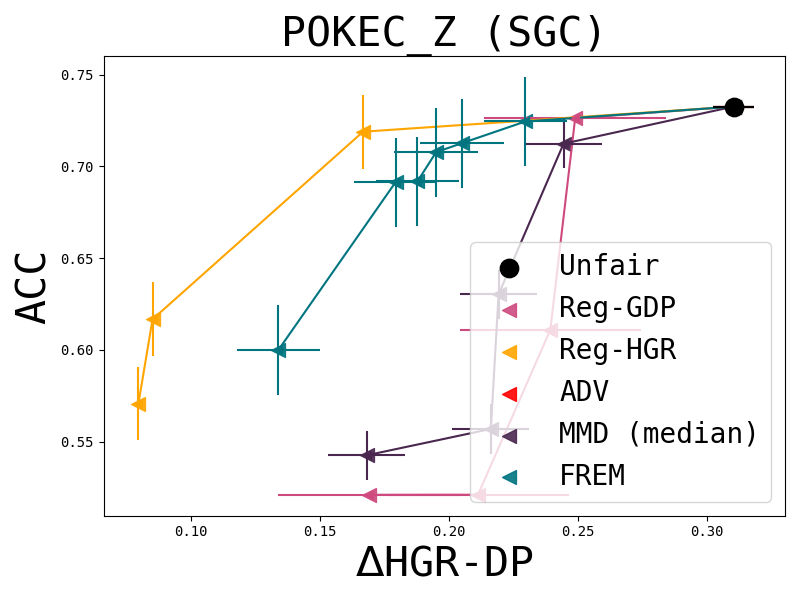}
    \caption{{\textbf{$\Delta \texttt{HGR-DP}$ vs. \texttt{Acc} trade-offs on graph datasets:}
    (Left two) \textsc{Pokec-n}, (Right two) \textsc{Pokec-z}.
    $\bullet$: Unfair,
    \textcolor{Regkernelcolor}{--$\blacktriangleleft$--}: {Reg-GDP},
    \textcolor{orange}{--$\blacktriangleleft$--}: {Reg-HGR},
    \textcolor{red}{--$\blacktriangleleft$--}: ADV,
    \textcolor{MMDcolor}{--$\blacktriangleleft$--}: MMD,
    \textcolor{Ourcolor}{--$\blacktriangleleft$--}: FREM.
    }}
    \label{fig:graph_trade_off2}
\end{figure}

\begin{figure}[ht]
    \centering
    \includegraphics[scale=0.2]{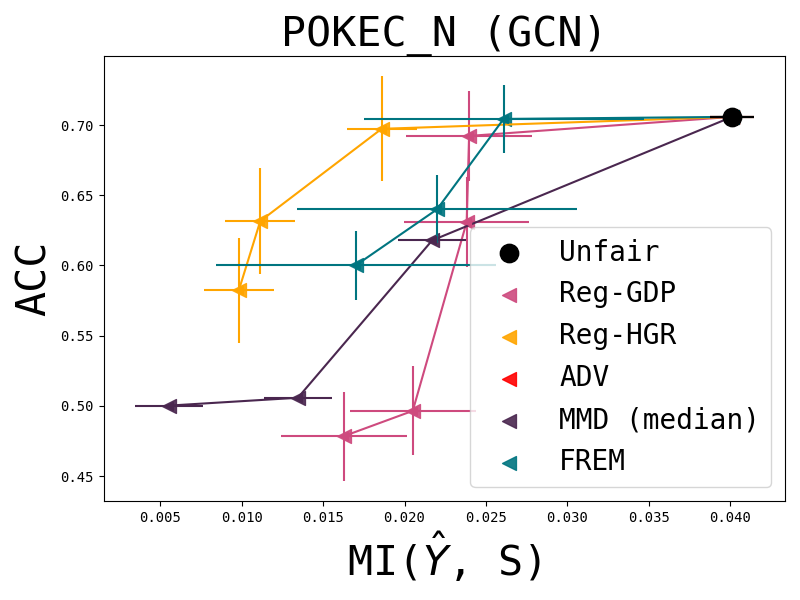}
    \includegraphics[scale=0.2]{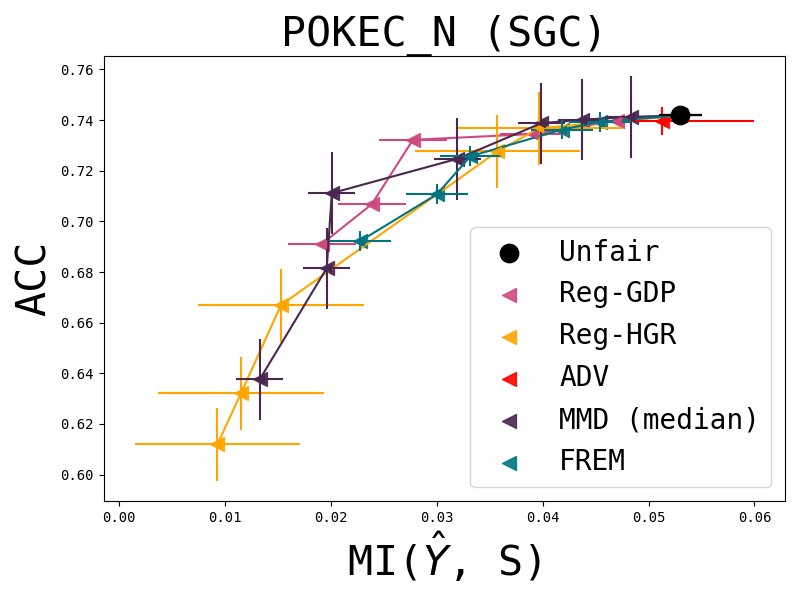}
    \includegraphics[scale=0.2]{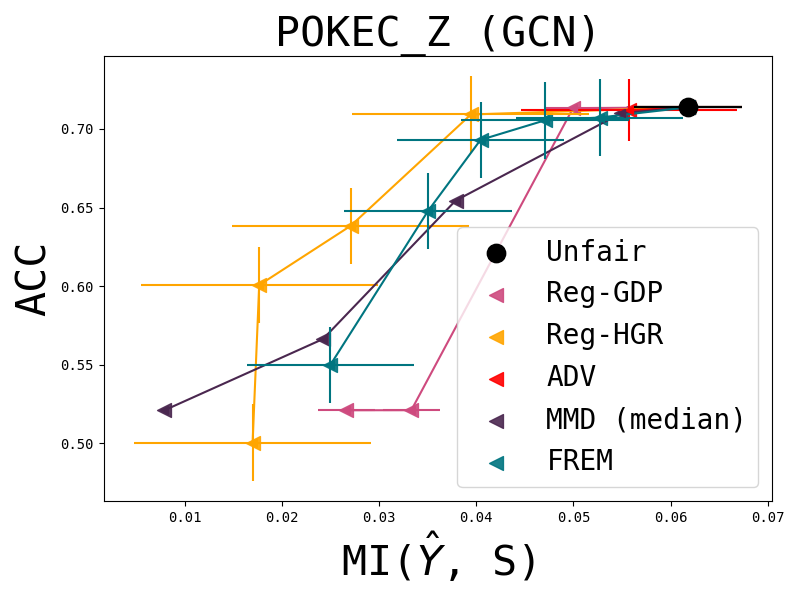}
    \includegraphics[scale=0.2]{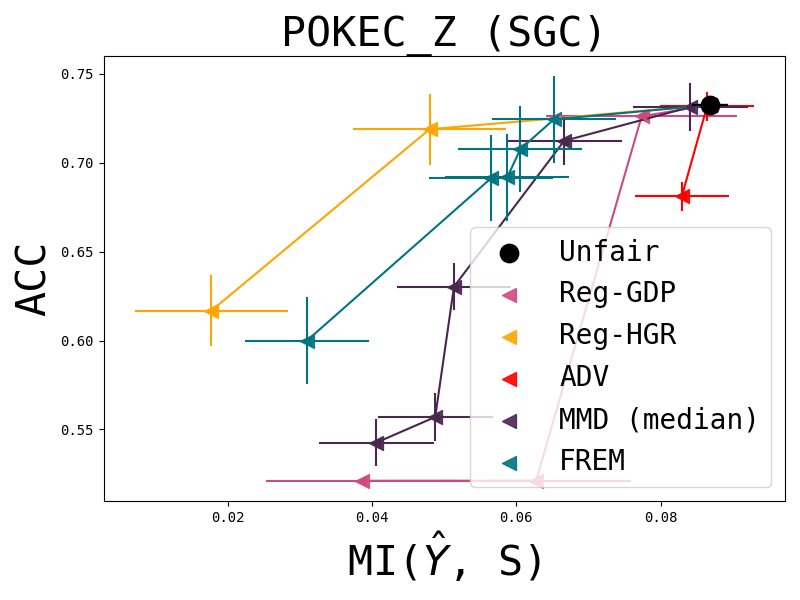}
    \caption{{\textbf{$\texttt{MI}(\hat{Y}, S)$ vs. \texttt{Acc} trade-offs on graph datasets:}
    (Left two) \textsc{Pokec-n}, (Right two) \textsc{Pokec-z}.
    $\bullet$: Unfair,
    \textcolor{Regkernelcolor}{--$\blacktriangleleft$--}: {Reg-GDP},
    \textcolor{orange}{--$\blacktriangleleft$--}: {Reg-HGR},
    \textcolor{red}{--$\blacktriangleleft$--}: ADV,
    \textcolor{MMDcolor}{--$\blacktriangleleft$--}: MMD,
    \textcolor{Ourcolor}{--$\blacktriangleleft$--}: FREM.
    }}
    \label{fig:graph_trade_off3}
\end{figure}

Similar to Fig. \ref{fig:compare_mi_z_s}, we also investigate how fair the learned representation $\bm{Z}$ is, by evaluting the Mutual Information (MI) \cite{MacKay2003} as a measure of fairness of the learned representation $\bm{Z}.$
We compare FREM with the two FRL baselines (i.e., MMD and ADV) in terms of the trade-off between \texttt{Acc} and fairness of the representation (i.e., $\texttt{MI}(\bm{Z}, S)$), whose results are given in Fig. \ref{fig:graph_trade_off4}.
The results clearly show that FREM is generally good at controlling fairness of representations on graph datasets.

\begin{figure}[ht]
    \centering
    \includegraphics[scale=0.25]{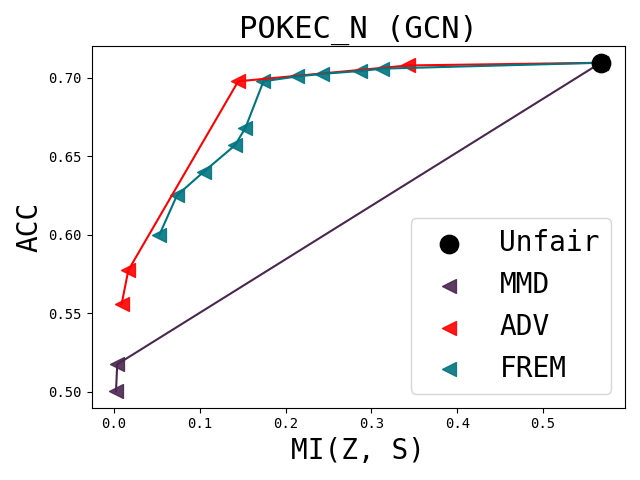}
    \includegraphics[scale=0.25]{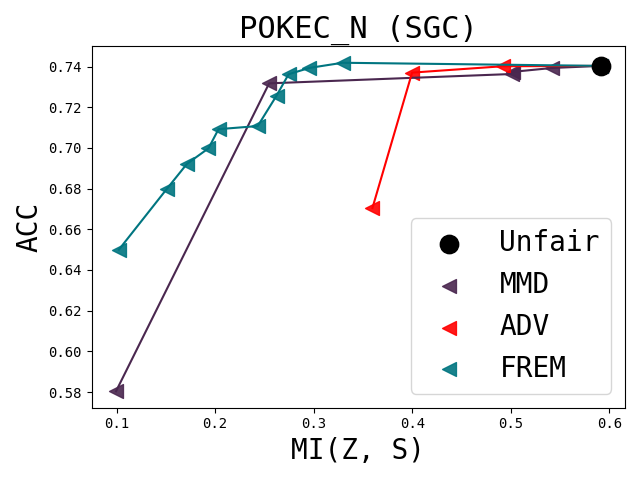}
    \includegraphics[scale=0.25]{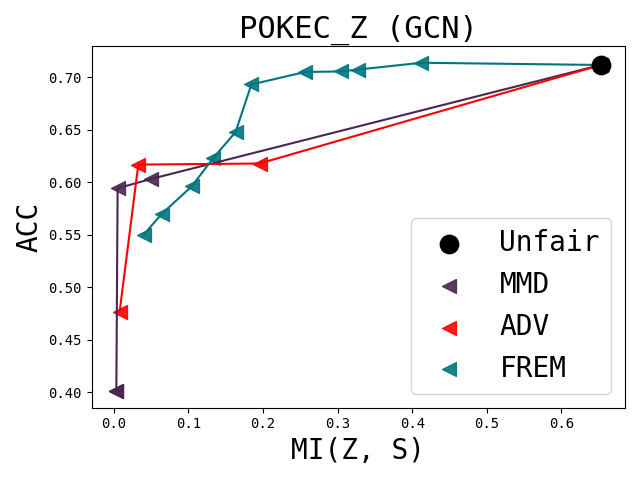}
    \includegraphics[scale=0.25]{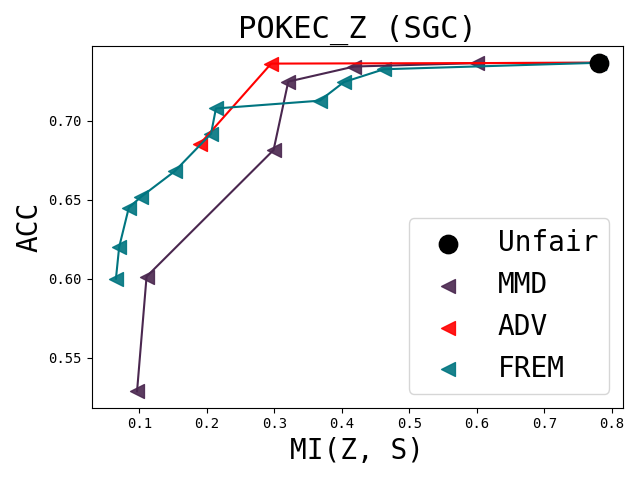}
    \caption{{\textbf{$\texttt{MI}(\bm{Z}, S)$ vs. \texttt{Acc} trade-offs of FREM on graph datasets:}
    (Left two) \textsc{Pokec-n}, (Right two) \textsc{Pokec-z}.
    $\bullet$: Unfair,
    \textcolor{Regkernelcolor}{--$\blacktriangleleft$--}: {Reg-GDP},
    \textcolor{red}{--$\blacktriangleleft$--}: ADV,
    \textcolor{MMDcolor}{--$\blacktriangleleft$--}: MMD,
    \textcolor{Ourcolor}{--$\blacktriangleleft$--}: FREM.
    }}
    \label{fig:graph_trade_off4}
\end{figure}
\color{black} 

\clearpage
\subsubsection{Fairness of the representation: Mutual information between $\bm{Z}$ and $S$}

We also investigate the ability of FREM to eliminate the information of $S$ in $\bm{Z}$ by calculating $\texttt{MI}(\bm{Z}, S),$ whose results are presented in Fig. \ref{fig:mi_z_s} and \ref{fig:mi_z_s2}.  
MI between $\bm{Z}$ and $S$ decreases as the regularization parameter $\lambda$ increases,
which amply demonstrates that
the information of $S$ is being successfully removed from $\bm{Z}$ by FREM.

\begin{figure}[ht]
    \centering
    \includegraphics[scale=0.20]{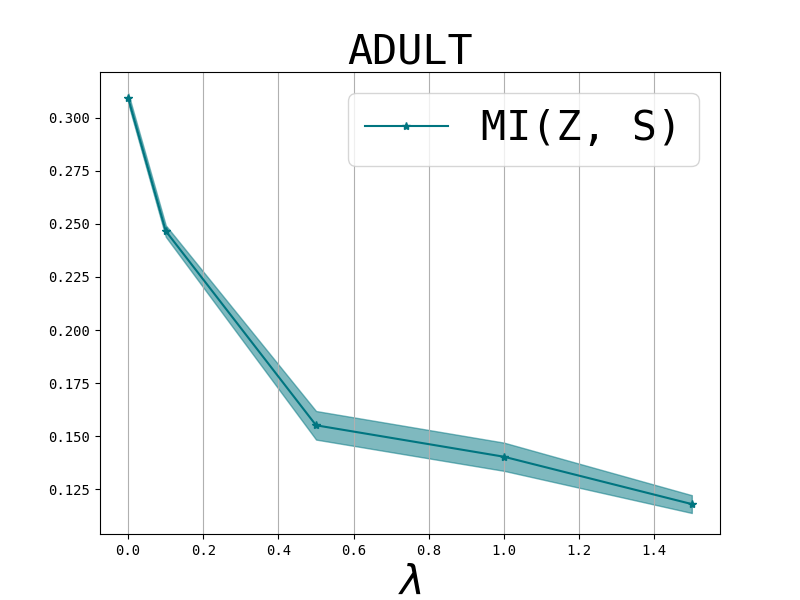}
    \includegraphics[scale=0.20]{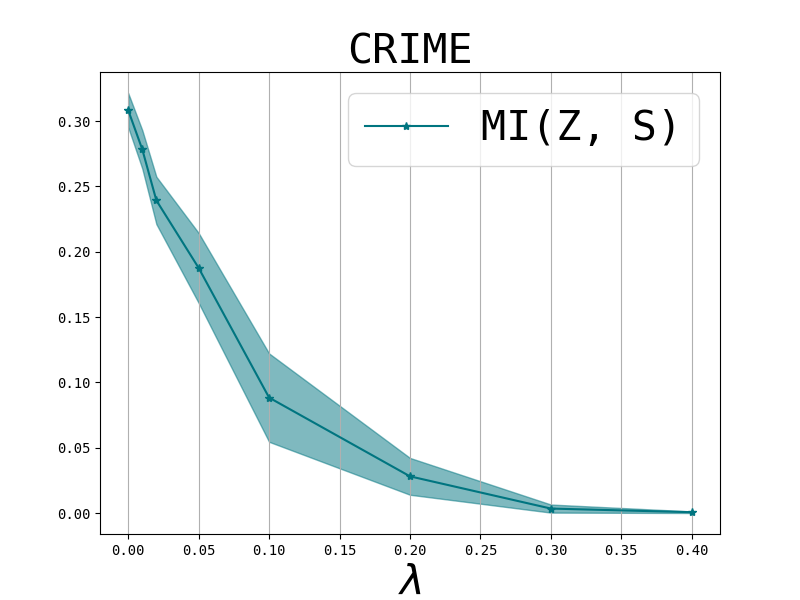}
    \caption{\textbf{Mitigation of bias in representation}: Mutual information between $\bm{Z}$ and $S$ decreases as $\lambda$ increases.
    (Left) \textsc{Adult} (Right) \textsc{Crime}.
    }
    \label{fig:mi_z_s}
\end{figure}

\begin{figure}[ht]
    \centering
    \includegraphics[scale=0.20]{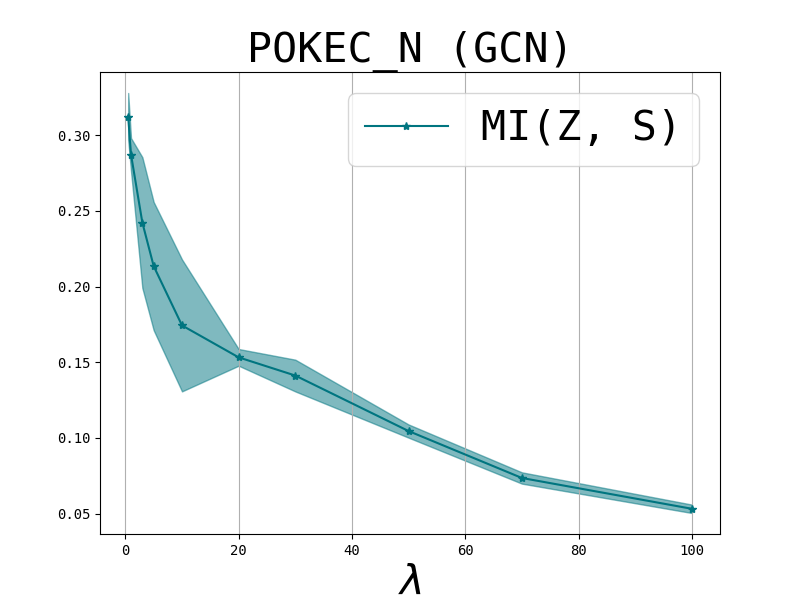}
    \includegraphics[scale=0.20]{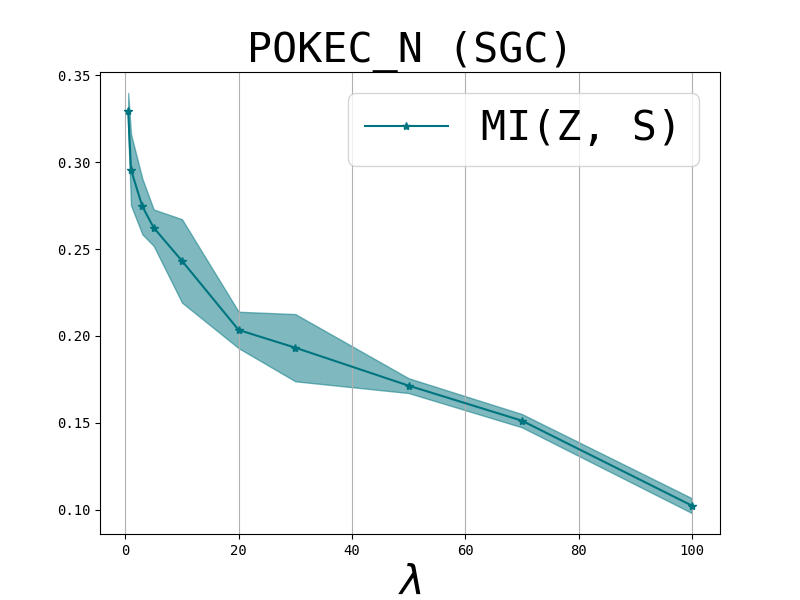}
    \includegraphics[scale=0.20]{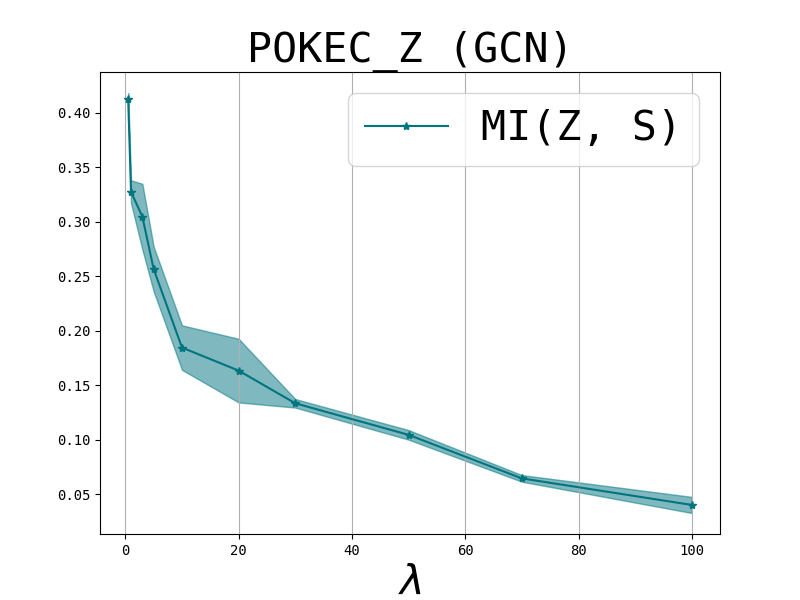}
    \includegraphics[scale=0.20]{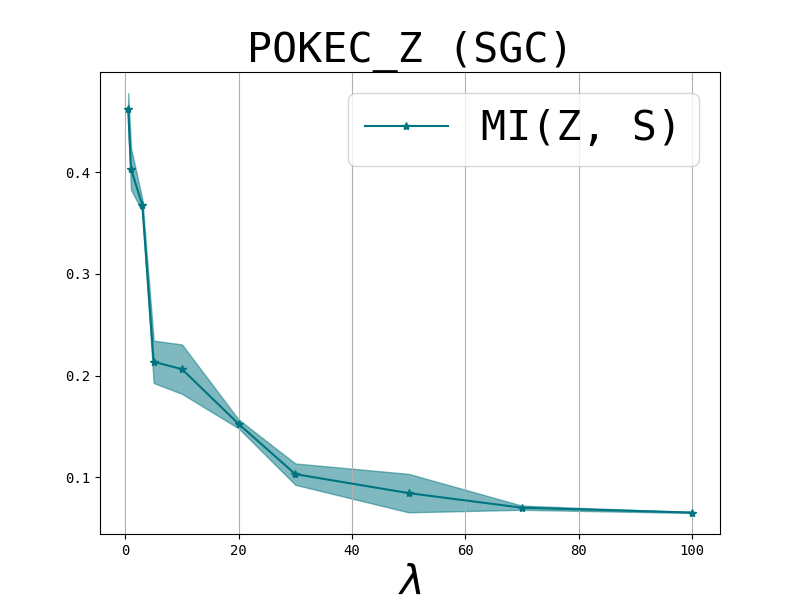}
    \caption{\textbf{Mitigation of bias in representation}: Mutual information between $\bm{Z}$ and $S$ decreases as $\lambda$ increases.
    (Left two) \textsc{Pokec-n} (Right two) \textsc{Pokec-z}.
    }
    \label{fig:mi_z_s2}
\end{figure}

\subsubsection{Additional comparison between FREM and the regularization methods in terms of fairness of representation}\label{frem_vs_reg_mi_z_s}

To empirically validate the generalization ability of FREM to downstream tasks, we compare the fairness of the representations learned by FTM and the two regularization methods (i.e., Reg-GDP and Reg-HGR).
The results presented in Fig. \ref{fig:mi_z_s_reg} (for tabular datasets) and Fig. \ref{fig:mi_z_s_reg2} (for graph datasets) indicate that FREM is the most effective at building fair representations, which can be successfully applied to downstream tasks requiring fairness.
Specifically, FREM achieves better trade-offs between fairness of $\bm{Z}$ and accuracy, when compared to the regularization methods.

\begin{figure}[ht]
    \centering
    \includegraphics[scale=0.27]{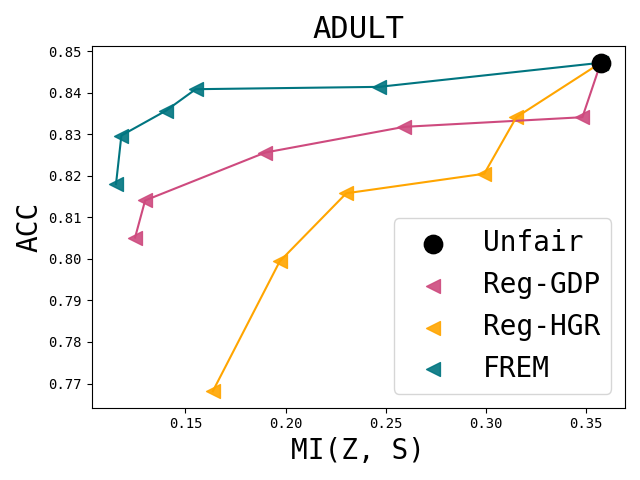}
    \includegraphics[scale=0.27]{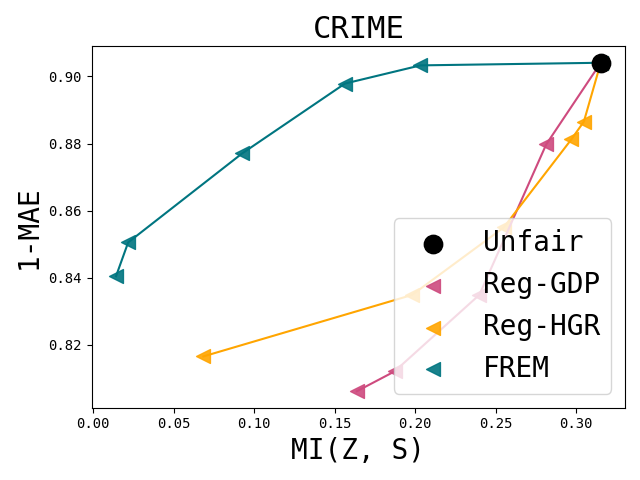}
    \caption{
    \textbf{$\texttt{MI}(\bm{Z}, S)$ vs. \texttt{Acc} trade-offs on tabular datasets:}
    (Left) \textsc{Adult} (Right) \textsc{Crime}.
    $\bullet$: Unfair,
    \textcolor{Regkernelcolor}{--$\blacktriangleleft$--}: {Reg-GDP},
    \textcolor{orange}{--$\blacktriangleleft$--}: {Reg-HGR},
    \textcolor{Ourcolor}{--$\blacktriangleleft$--}: FREM.
    }
    \label{fig:mi_z_s_reg}
\end{figure}

\begin{figure}[ht]
    \centering
    \includegraphics[scale=0.26]{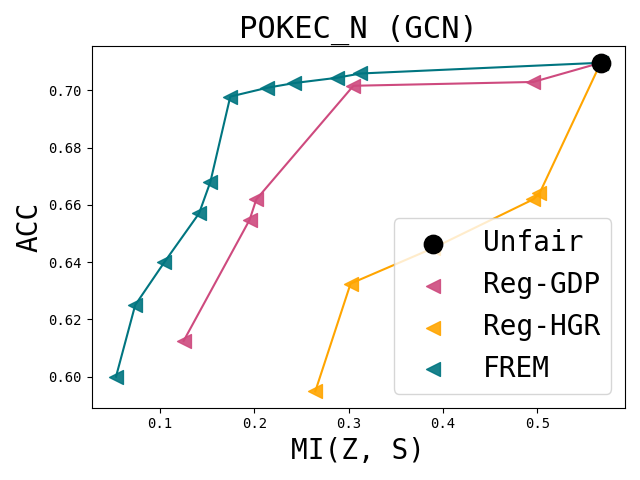}
    \includegraphics[scale=0.26]{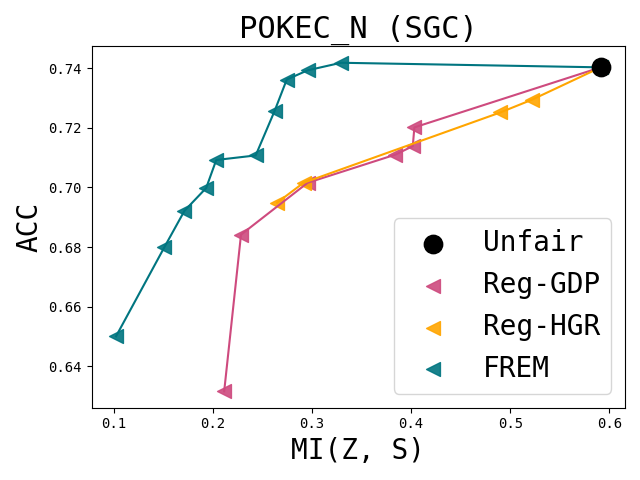}
    \includegraphics[scale=0.26]{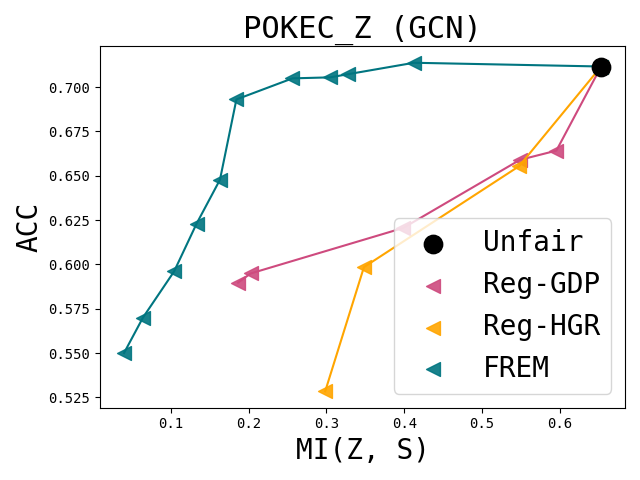}
    \includegraphics[scale=0.26]{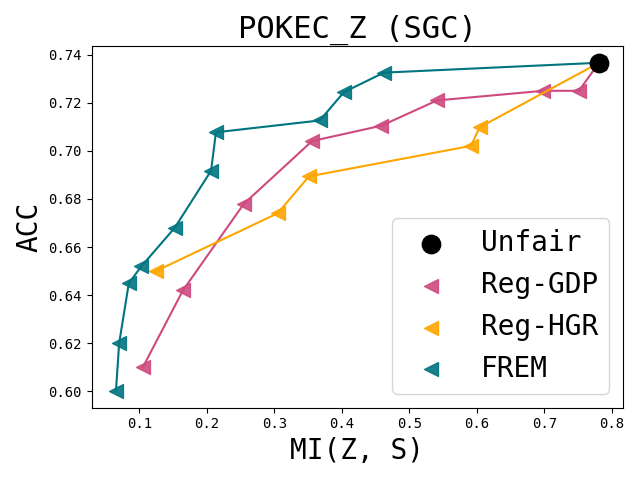}
    \caption{
    \textbf{$\texttt{MI}(\bm{Z}, S)$ vs. \texttt{Acc} trade-offs on graph datasets:}
    (Left two) \textsc{Pokec-n}, (Right two) \textsc{Pokec-z}.
    $\bullet$: Unfair,
    \textcolor{Regkernelcolor}{--$\blacktriangleleft$--}: {Reg-GDP},
    \textcolor{orange}{--$\blacktriangleleft$--}: {Reg-HGR},
    \textcolor{Ourcolor}{--$\blacktriangleleft$--}: FREM.
    }
    \label{fig:mi_z_s_reg2}
\end{figure}

\color{black} 

\clearpage
\subsubsection{Ablation study: choice of kernel functions}
\label{App:abl_ker}

{This section provides empirical results of the ablation studies regarding the choice of kernel functions.}

{
\textbf{(1) Various kernel functions for $K_{\gamma}$:}
As we discuss in Section \ref{subsec_defEIPM_est}, theoretically any kernel function satisfying Assumption \ref{assum_kernel} can be employed for $K_{\gamma}.$
We consider two additional kernels: Triangular and Epanechnikov.
The two kernels are defined by
$K_{\gamma}(s, s') = 1 - \frac{\vert s - s' \vert}{\gamma}$
and
$K_{\gamma}(s, s') = \frac{3}{4} (1 - \frac{(s - s')^{2}}{\gamma}),$
respectively.
For the both kernels, we set the bandwidth as $\gamma = 1.0.$
We compare FREM with (i) RBF, (ii) Triangular, and (iii) Epanechnikov kernels in terms of the fairness-prediction trade-off.
The results are presented in Fig. \ref{fig:vary_kernel}, which show that the three kernels yield similar performances.
Auxillary results with other prediction measures (i.e., \texttt{AP} and \texttt{MSE}) and fairness measures (i.e., $\Delta \texttt{HGR}$ and \texttt{MI}) are given in Fig. \ref{fig:tradeoff-kernel_1} and \ref{fig:tradeoff-kernel_2} below.
}

\begin{figure*}[ht]
    \centering
    \includegraphics[width=0.3\textwidth]{figures/Adult/dp/acc_gdp_w_kernel_kernel.png}
    \includegraphics[width=0.3\textwidth]{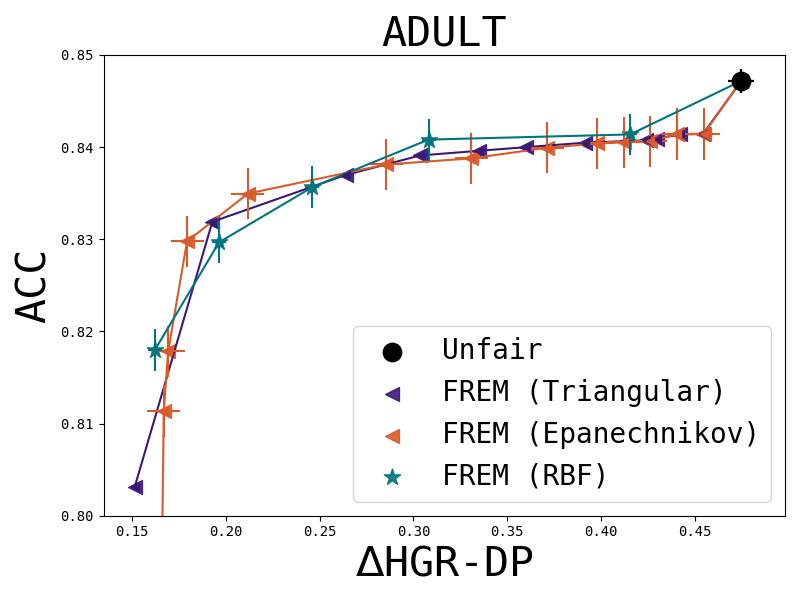}
    \includegraphics[width=0.3\textwidth]{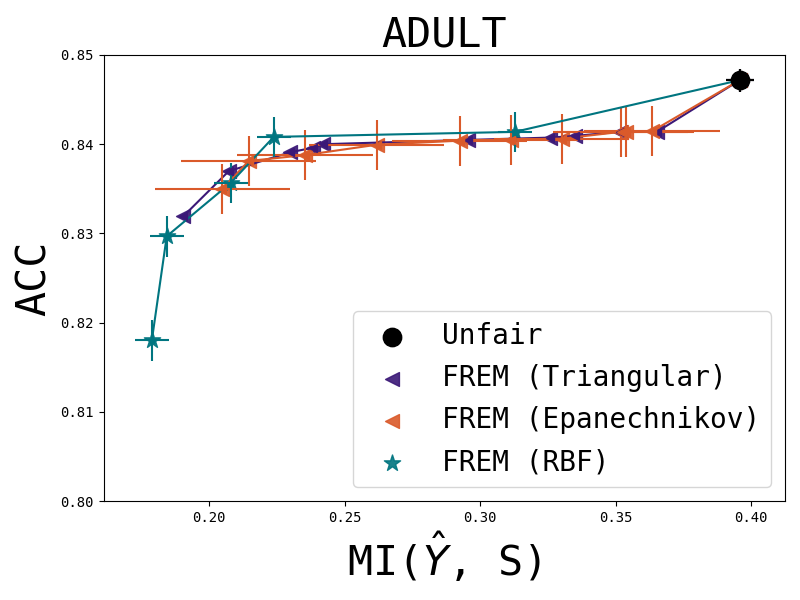}
    \\
    \includegraphics[width=0.3\textwidth]{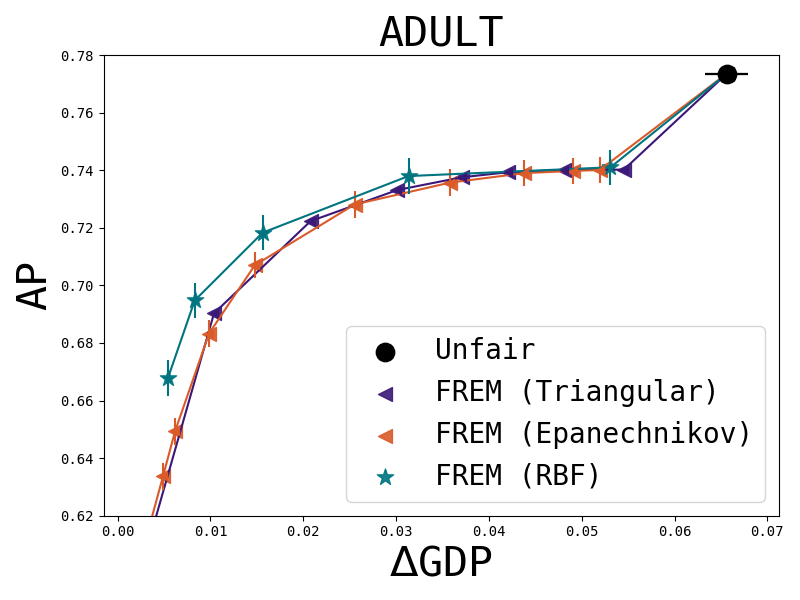}
    \includegraphics[width=0.3\textwidth]{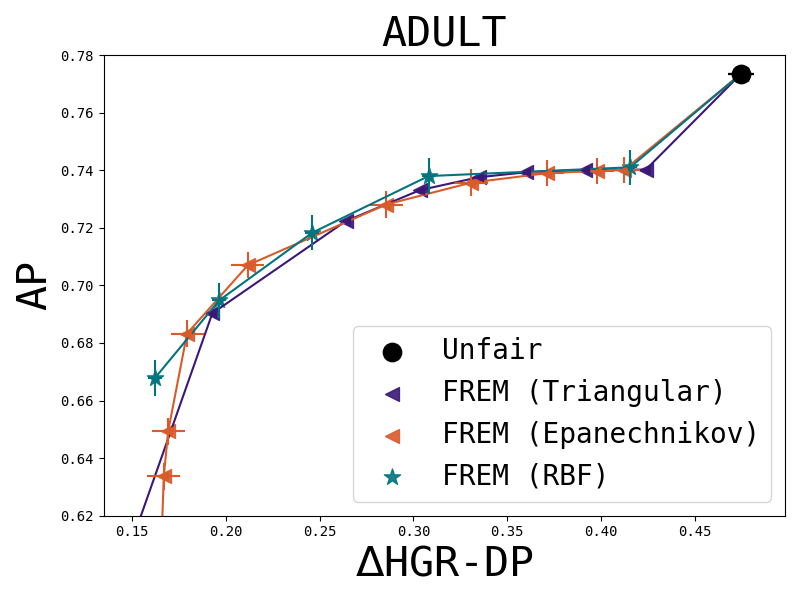}
    \includegraphics[width=0.3\textwidth]{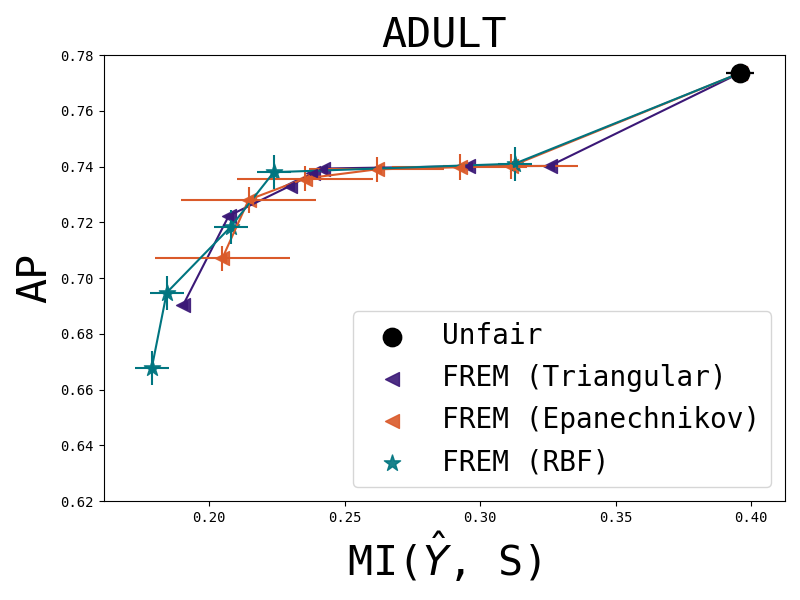}
    \caption{{
    \textbf{FREM with various kernel functions}: 
    Pareto-front lines for fairness-prediction trade-off on \textsc{Adult} dataset with various kernel functions.
    (Top)
    $\Delta \texttt{GDP}$ vs. \texttt{ACC}, $\Delta \texttt{HGR-DP}$ vs. \texttt{ACC} and $\Delta \texttt{MI}(\hat{Y}, S)$ vs. \texttt{ACC}.
    (Bottom)
    $\Delta \texttt{GDP}$ vs. \texttt{AP}, $\Delta \texttt{HGR-DP}$ vs. \texttt{AP} and $\Delta \texttt{MI}(\hat{Y}, S)$ vs. AP.
    $\bullet$: Unfair,
    \textcolor{Trianglecolor}{--$\blacktriangleleft$--}: FREM with Triangular kernel,
    \textcolor{Epanecolor}{--$\blacktriangleleft$--}: FREM with Epanechnikov kernel,
    \textcolor{Ourcolor}{--$\mathbf{\filledstar}$--}: FREM with RBF kernel.
    }}
    \label{fig:tradeoff-kernel_1}
\end{figure*}

\begin{figure*}[ht]
    \centering
    \includegraphics[width=0.3\textwidth]{figures/Crime/dp/mae_gdp_w_kernel_kernel.png}
    \includegraphics[width=0.3\textwidth]{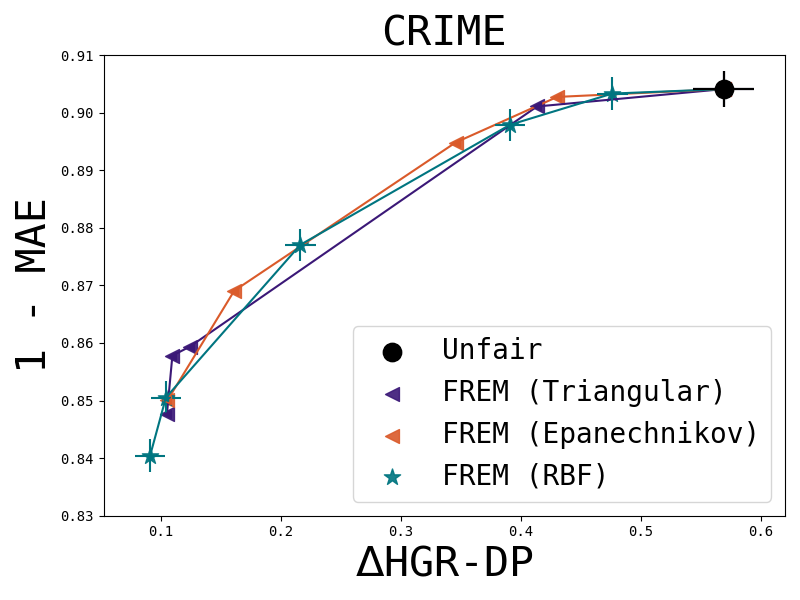}
    \includegraphics[width=0.3\textwidth]{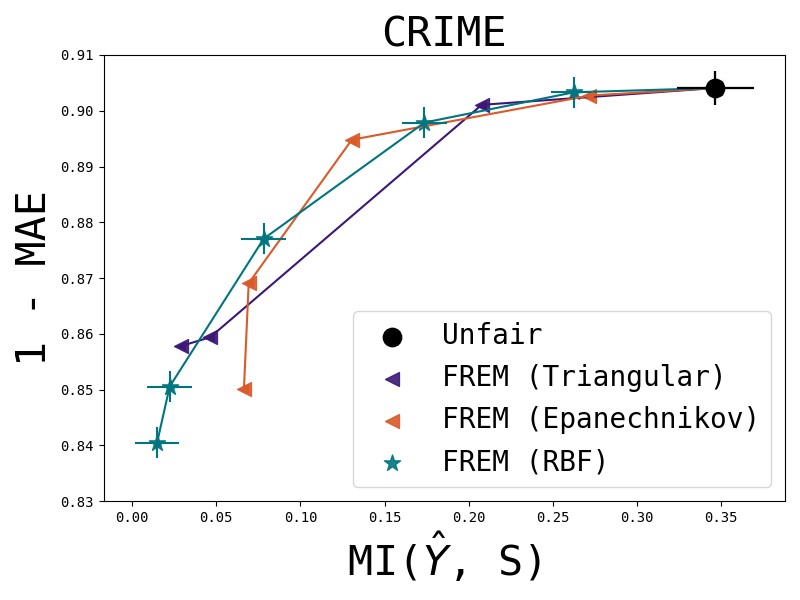}
    \\
    \includegraphics[width=0.3\textwidth]{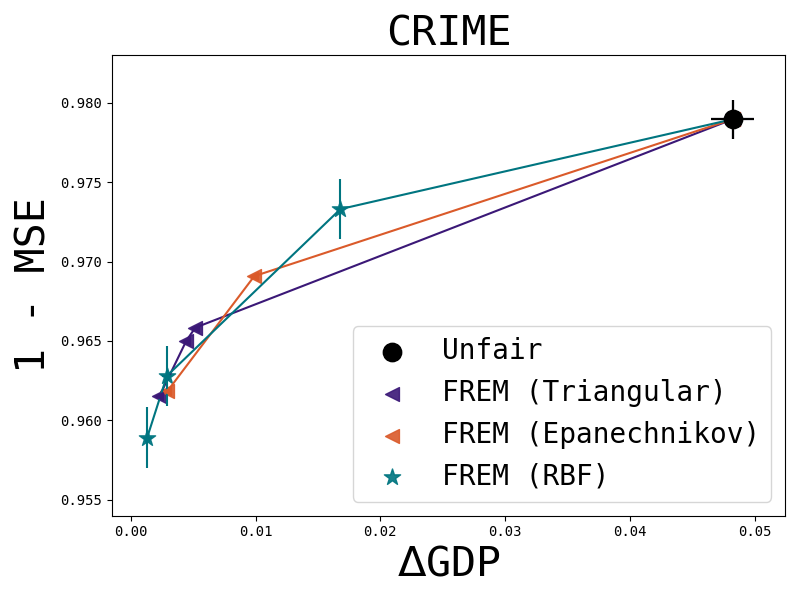}
    \includegraphics[width=0.3\textwidth]{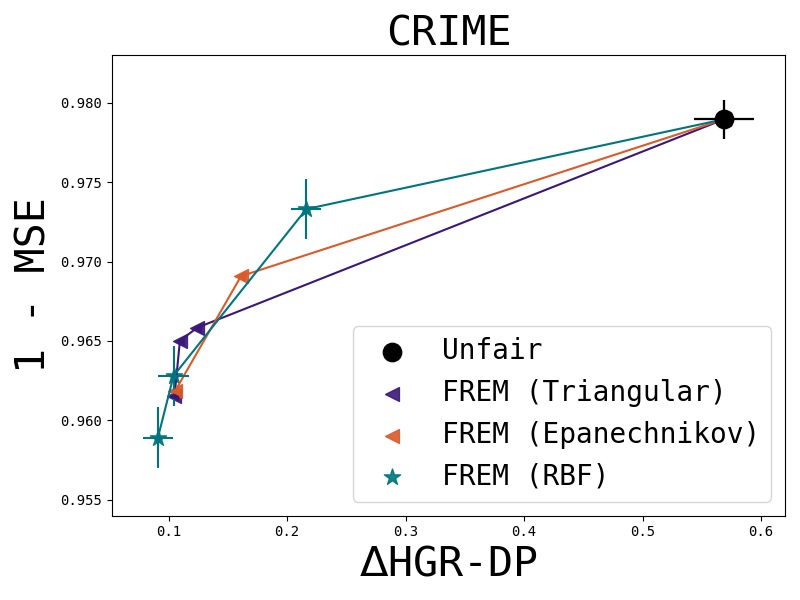}
    \includegraphics[width=0.3\textwidth]{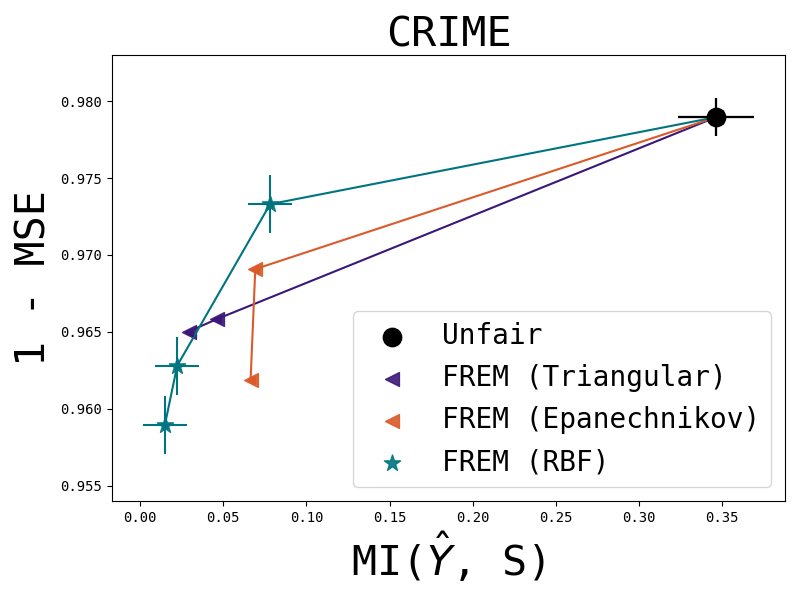}
    \caption{{
    \textbf{FREM with various kernel functions}: 
    Pareto-front lines for fairness-prediction trade-off on \textsc{Crime} dataset with various kernel functions.
    (Top)
    $\Delta \texttt{GDP}$ vs. 1 - \texttt{MAE}, $\Delta \texttt{HGR-DP}$ vs. 1 - \texttt{MAE} and $\Delta \texttt{MI}(\hat{Y}, S)$ vs. 1 - \texttt{MAE}.
    (Bottom)
    $\Delta \texttt{GDP}$ vs. 1 - \texttt{MSE}, $\Delta \texttt{HGR-DP}$ vs. 1 - \texttt{MSE}, and $\Delta \texttt{MI}(\hat{Y}, S)$ vs. 1 - \texttt{MSE}.
    $\bullet$: Unfair,
    \textcolor{Trianglecolor}{--$\blacktriangleleft$--}: FREM with Triangular kernel,
    \textcolor{Epanecolor}{--$\blacktriangleleft$--}: FREM with Epanechnikov kernel,
    \textcolor{Ourcolor}{--$\mathbf{\filledstar}$--}: FREM with RBF kernel.
    }}
    \label{fig:tradeoff-kernel_2}
\end{figure*}

\clearpage
{
\textbf{(2) Kernel function $\kappa$ in MMD:}
In addition, we analyze the impact of kernel used in MMD on the representation space.
Due to computational instability, Epanechnikov kernel for \textsc{Adult} dataset does not provide reasonable performances, and
hence we exclude it from the comparison.
Fig. \ref{fig:tradeoff-kernel_z_1} and \ref{fig:tradeoff-kernel_z_2} below show that the RBF and Triangular
kernels perform similarly, which suggests that FREM is not sensitive to the choice of the kernel in MMD unless
there is any numerical problems.}

\begin{figure*}[ht]
    \centering
    \includegraphics[width=0.3\textwidth]{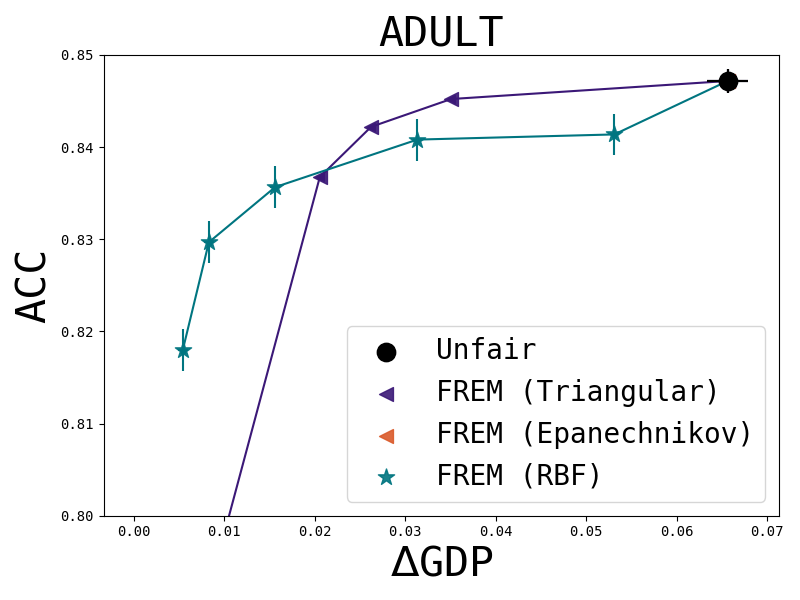}
    \includegraphics[width=0.3\textwidth]{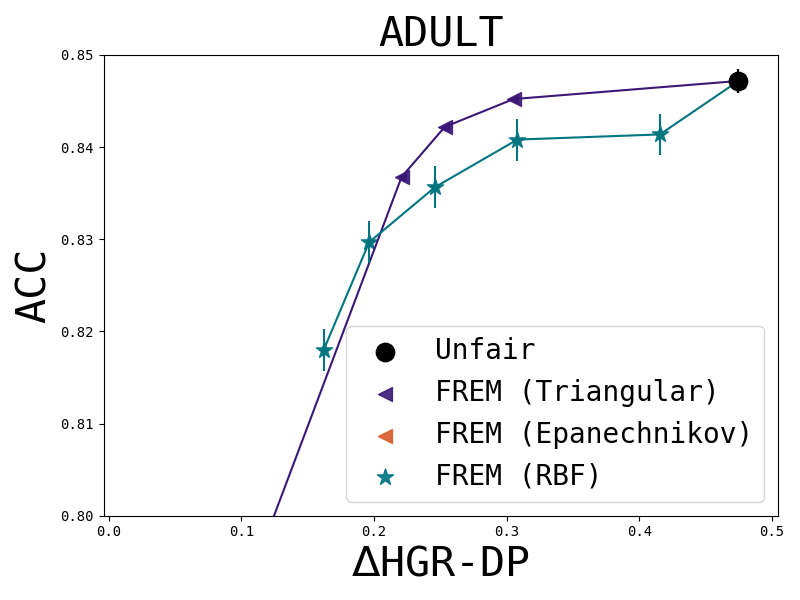}
    \includegraphics[width=0.3\textwidth]{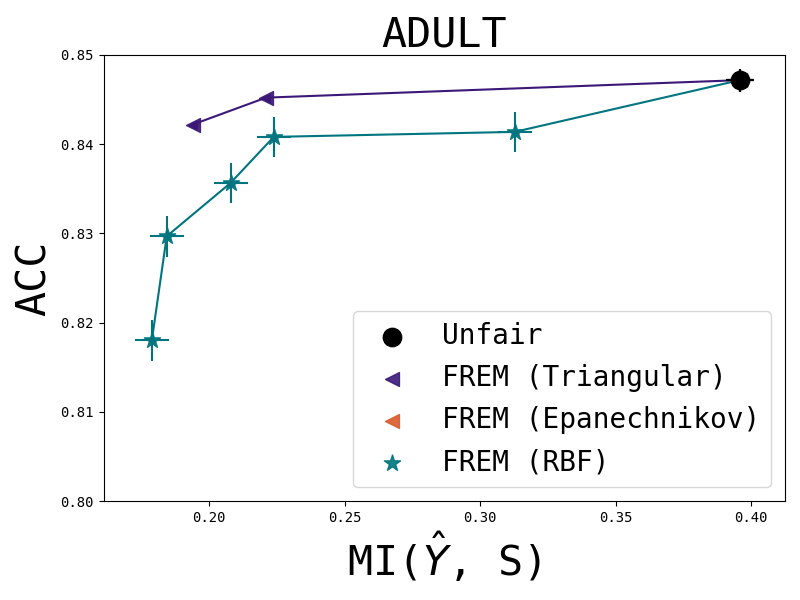}
    \includegraphics[width=0.3\textwidth]{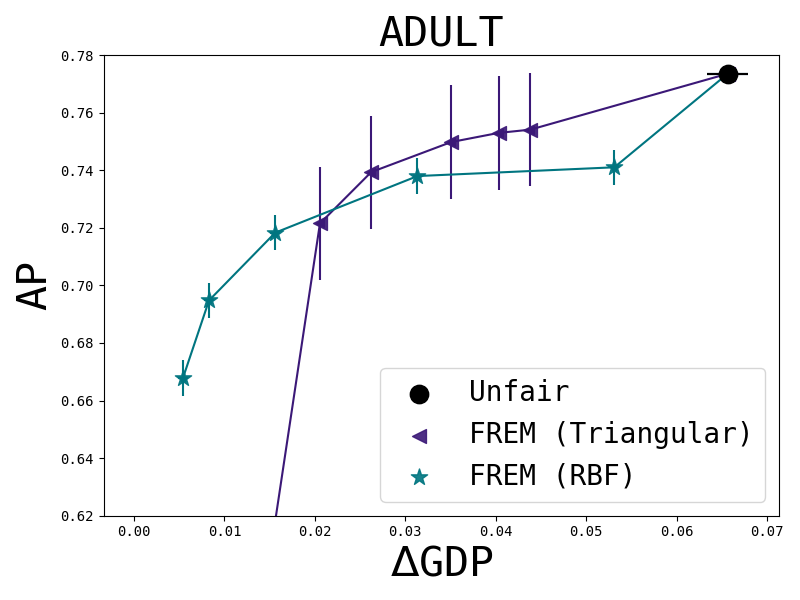}
    \includegraphics[width=0.3\textwidth]{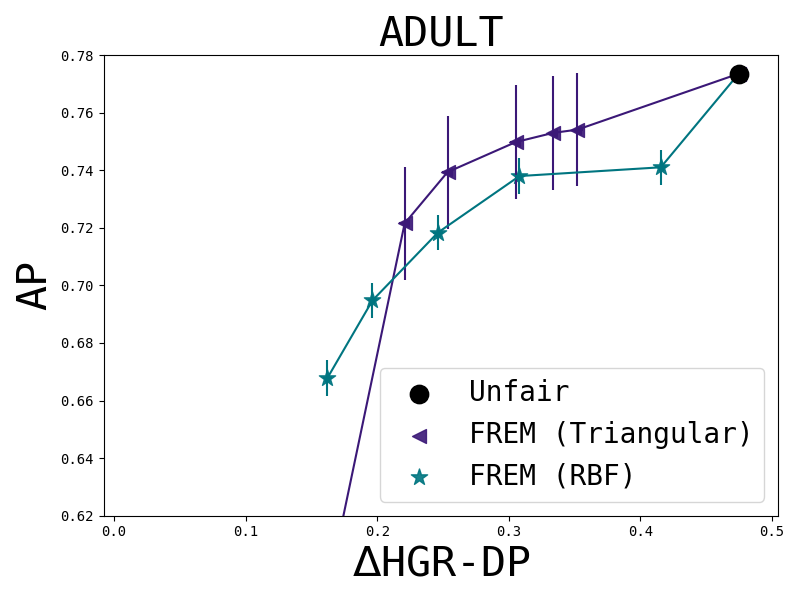}
    \includegraphics[width=0.3\textwidth]{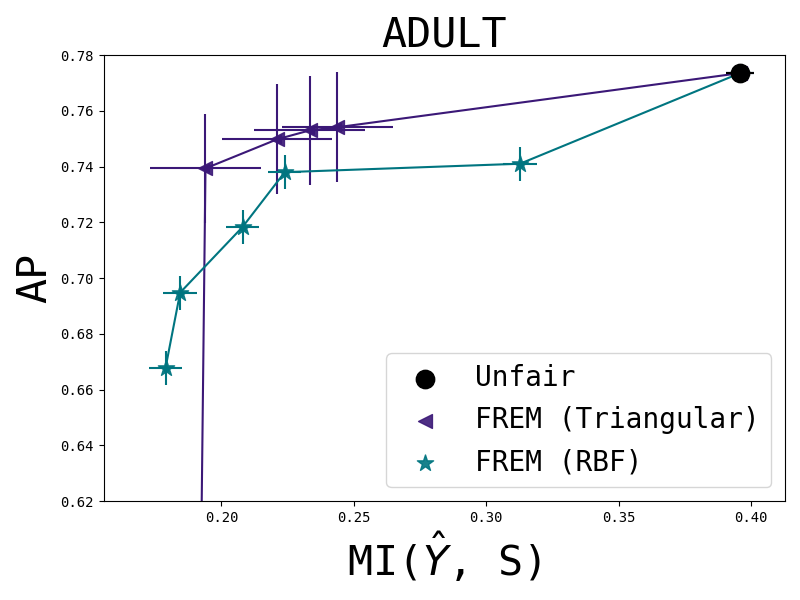}
    \caption{{
    \textbf{Comparison of kernel functions in MMD}: 
    Pareto-front lines for fairness-prediction trade-off on \textsc{Adult} dataset with various kernel functions in MMD.
    $\bullet$: Unfair,
    \textcolor{Trianglecolor}{--$\blacktriangleleft$--}: FREM with Triangular kernel in MMD,
    \textcolor{Ourcolor}{--$\mathbf{\filledstar}$--}: FREM with RBF kernel in MMD.
    }}
    \label{fig:tradeoff-kernel_z_1}
\end{figure*}

\begin{figure*}[ht]
    \centering
    \includegraphics[width=0.3\textwidth]{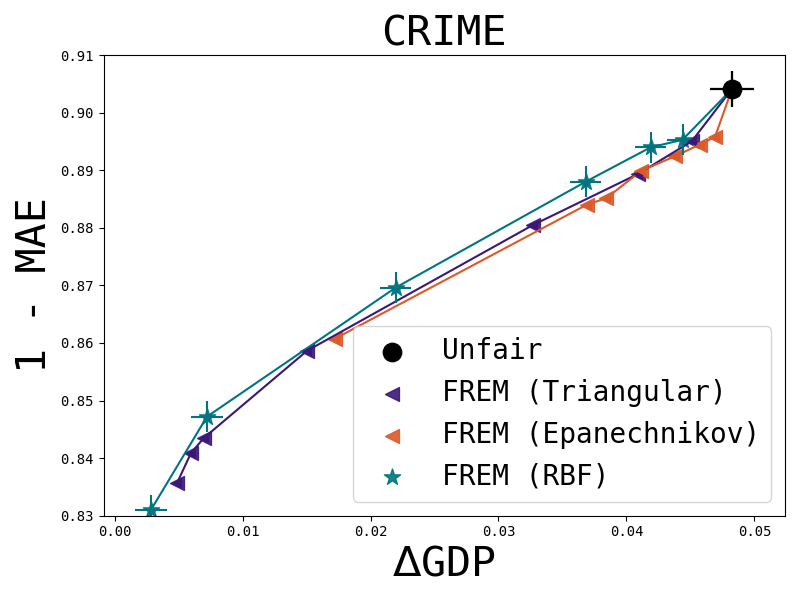}
    \includegraphics[width=0.3\textwidth]{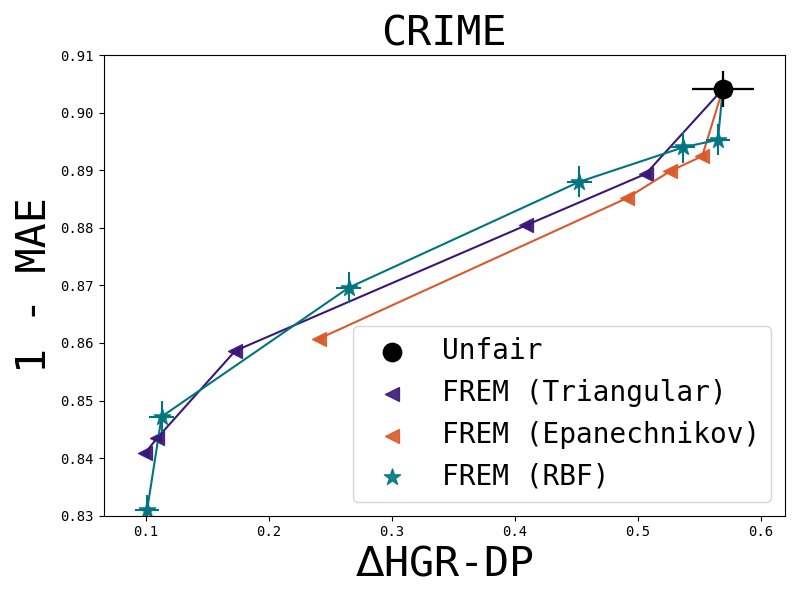}
    \includegraphics[width=0.3\textwidth]{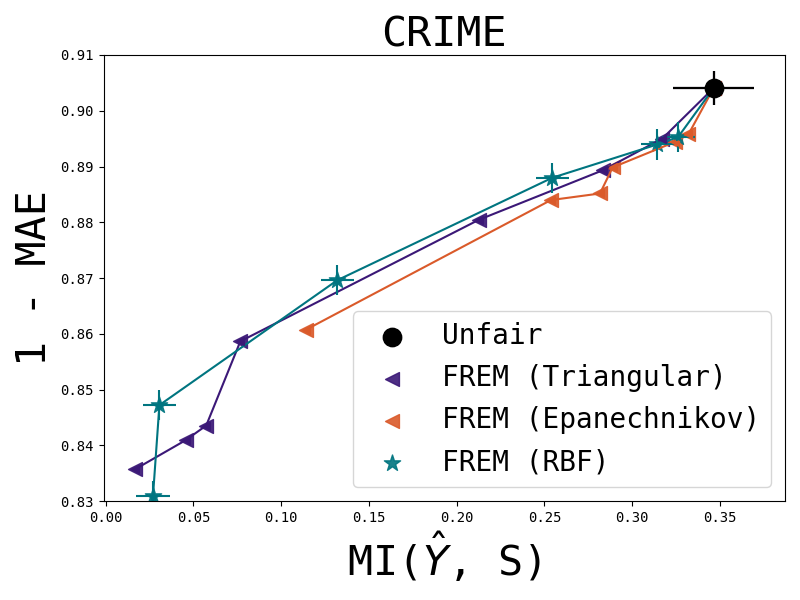}
    \includegraphics[width=0.3\textwidth]{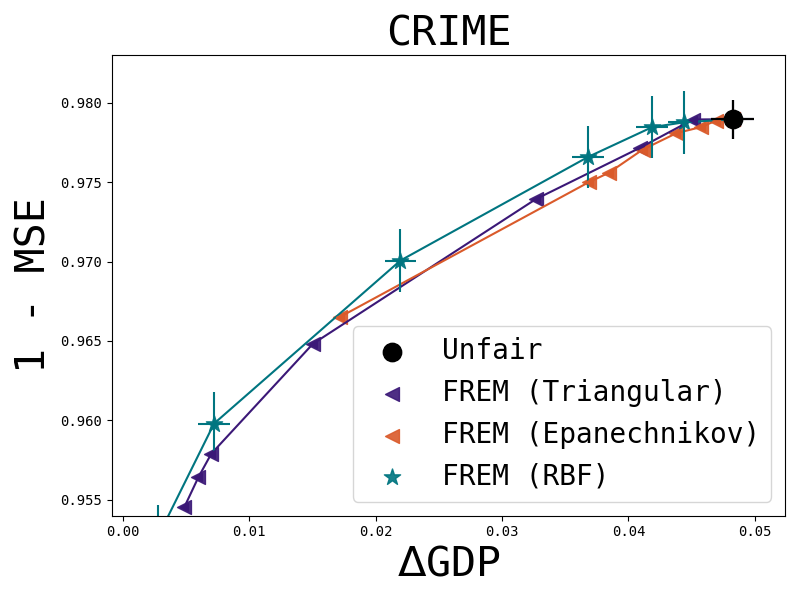}
    \includegraphics[width=0.3\textwidth]{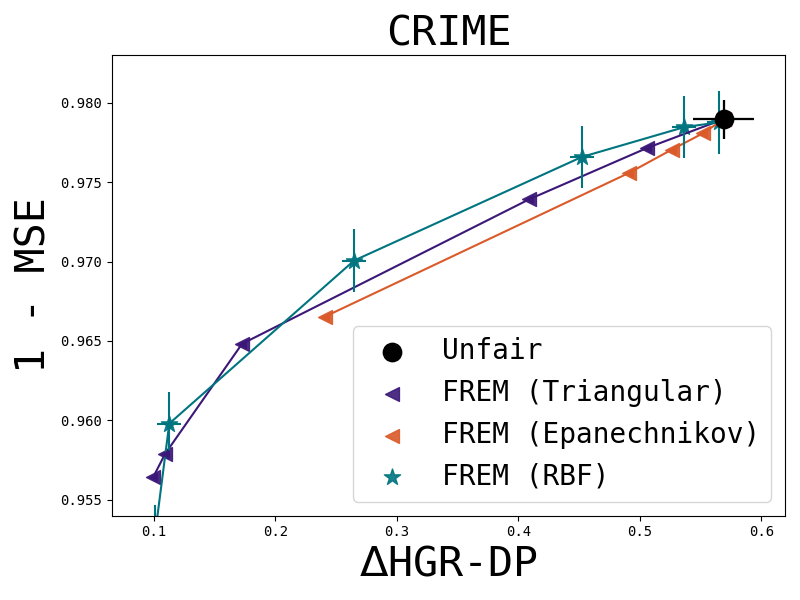}
    \includegraphics[width=0.3\textwidth]{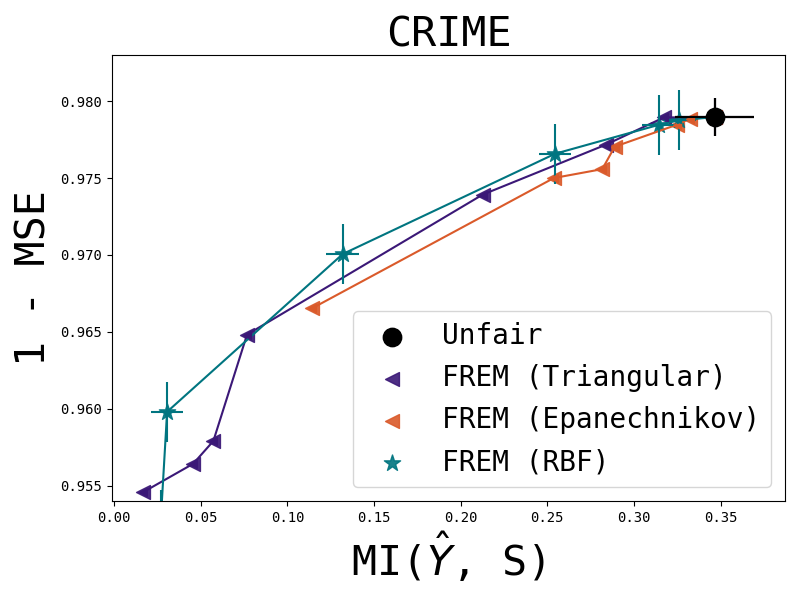}
    \caption{{
    \textbf{Comparison of kernel functions in MMD}: 
    Pareto-front lines for fairness-prediction trade-off on \textsc{Crime} dataset with various kernel functions in MMD.
    $\bullet$: Unfair,
    \textcolor{Trianglecolor}{--$\blacktriangleleft$--}: FREM with Triangular kernel in MMD,
    \textcolor{Epanecolor}{--$\blacktriangleleft$--}: FREM with Epanechnikov kernel in MMD,
    \textcolor{Ourcolor}{--$\mathbf{\filledstar}$--}: FREM with RBF kernel in MMD.
    }}
    \label{fig:tradeoff-kernel_z_2}
\end{figure*}

\clearpage

\textbf{(3) Sensitivity analysis: the scale $\sigma$}
We conduct a sensitivity analysis  regarding $\sigma$  to show the robustness of the choice of $\sigma$ 
to the performance, whose results is presented in Fig. \ref{fig:bandwidths_rep} below.
The result implies that performance of FREM is not sensitive to the choice of 
the scale parameter $\sigma$ in MMD.
Particularly, the choice of $\sigma$ within an appropriate range, such as $[0.8, 1.5]$ yields similar results, which
would be partly because input data are normalized beforehand.
Apparently $\sigma=1$ would be a good choice.

\begin{figure}[ht]
    \centering
    \includegraphics[scale=0.35]{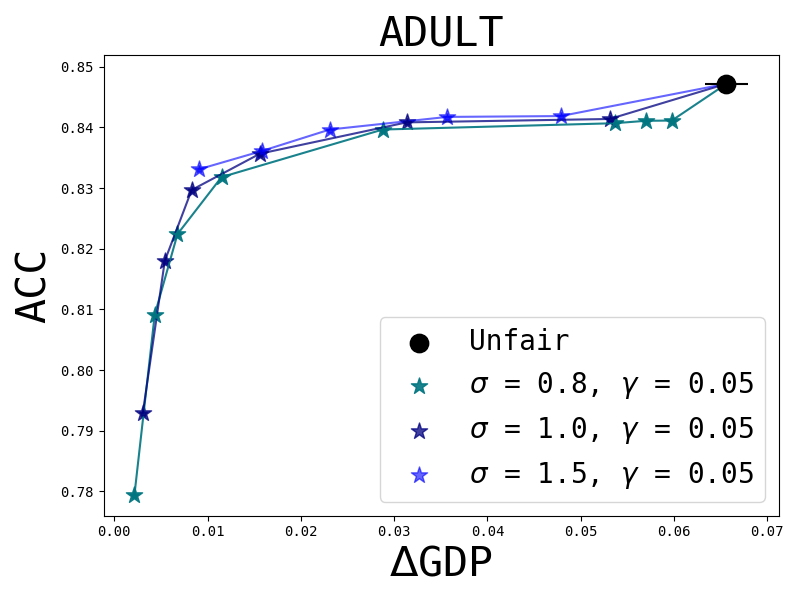}
    \includegraphics[scale=0.35]{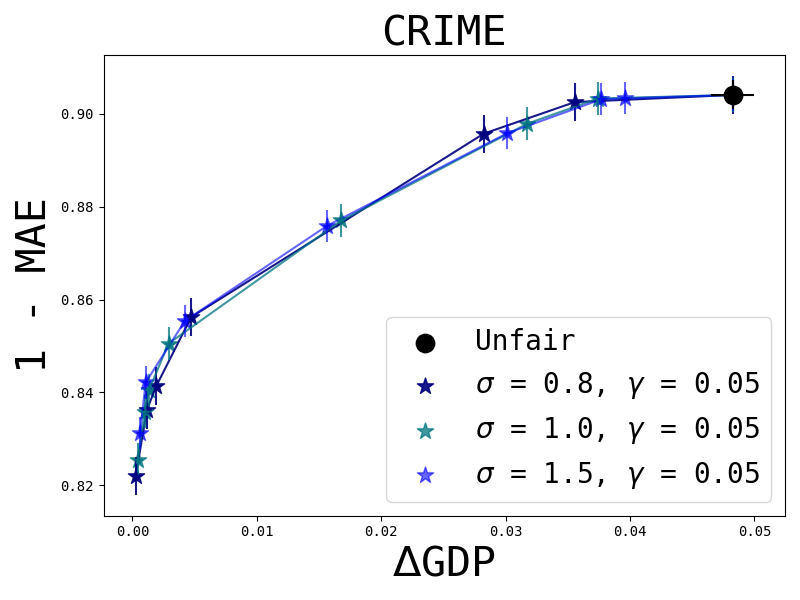}
    \caption{\textbf{Sensitivity analysis of $\sigma$}:
    Pareto-front lines for fairness-prediction trade-off with respect to various values of $\sigma.$
    (Left) \textsc{Adult} dataset: $\sigma \in \{ 0.8, 1.0, 1.5 \}$ for fixed $\gamma = 0.05.$
    (Right) \textsc{Crime} dataset: $\sigma \in \{ 0.8, 1.0, 1.5 \}$ for fixed $\gamma = 0.05.$
    }
    \label{fig:bandwidths_rep}
\end{figure}

\subsubsection{Additional comparison between FREM and Reg-GDP: results on training dataset}\label{frem_vs_gdp}

    This subsection presents the comparison between FREM and Reg-GDP, in terms of training performance.
    We draw pareto-front lines for each method in Fig. \ref{fig:tradeoff-training} (for the tabular datasets) and Fig. \ref{fig:tradeoff-training_graph} (for the graph datasets).
    While Reg-GDP offers lower training losses than FREM on the tabular datasets, FREM and Reg-GDP perform similarly on the graph datasets.
\color{black}

\begin{figure}[ht]
    \centering
    \includegraphics[width=0.35\textwidth]{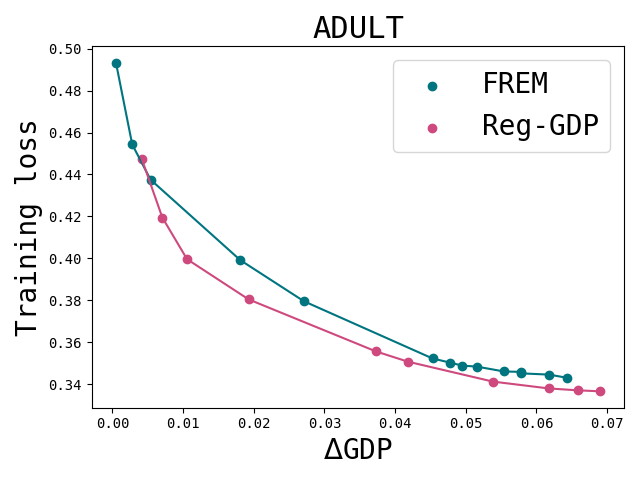}
    \includegraphics[width=0.35\textwidth]{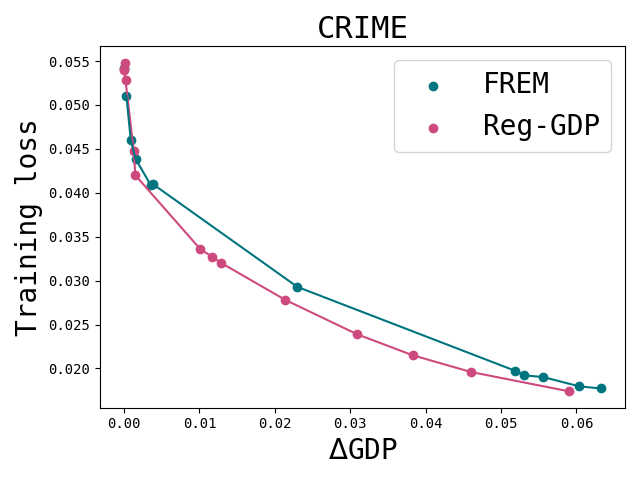}
    \caption{{
    $\Delta \texttt{GDP}$ vs. training loss trade-offs on
    (left) \textsc{Adult} and (right) \textsc{Crime} datasets:
    \textcolor{Regkernelcolor}{$-\bullet-$}: Reg-GDP,
    \textcolor{Ourcolor}{$-\mathbf{\bullet}-$}: FREM.
    }}
    \label{fig:tradeoff-training}
\end{figure}

\begin{figure}[ht]
    \centering
    \includegraphics[width=0.23\textwidth]{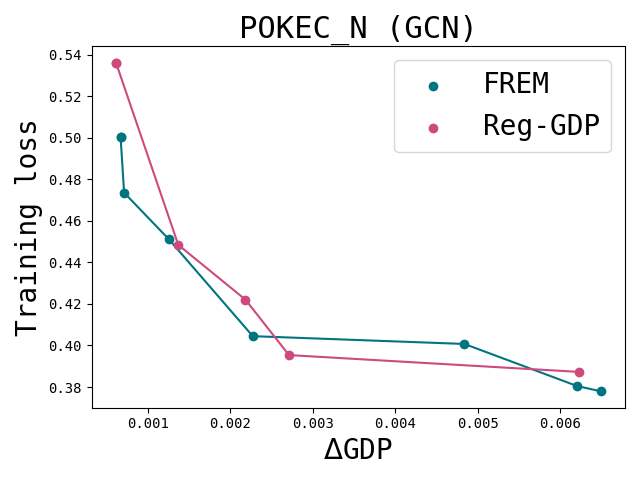}
    \includegraphics[width=0.23\textwidth]{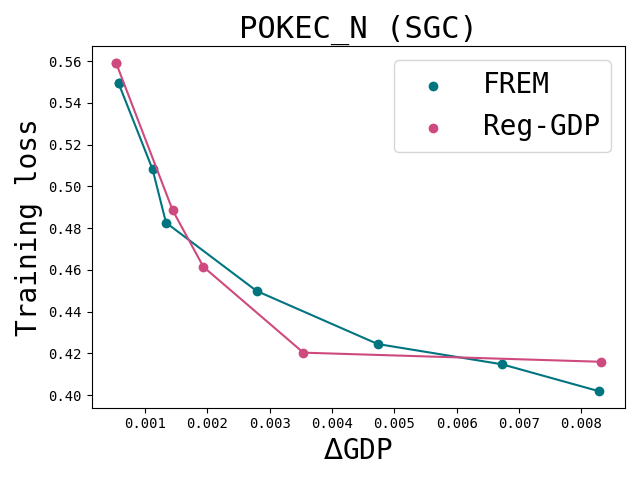}
    \includegraphics[width=0.23\textwidth]{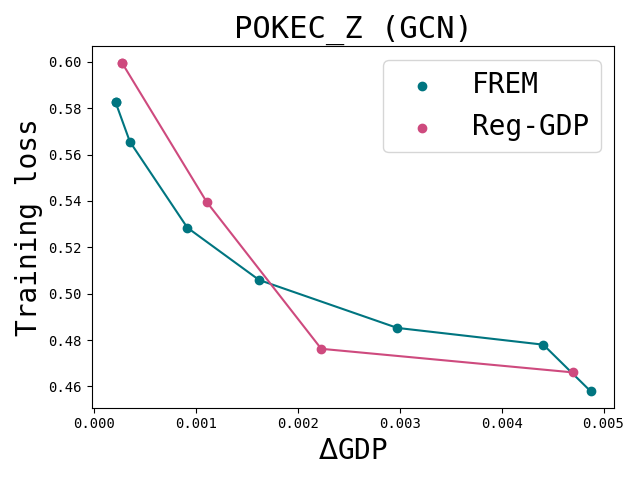}
    \includegraphics[width=0.23\textwidth]{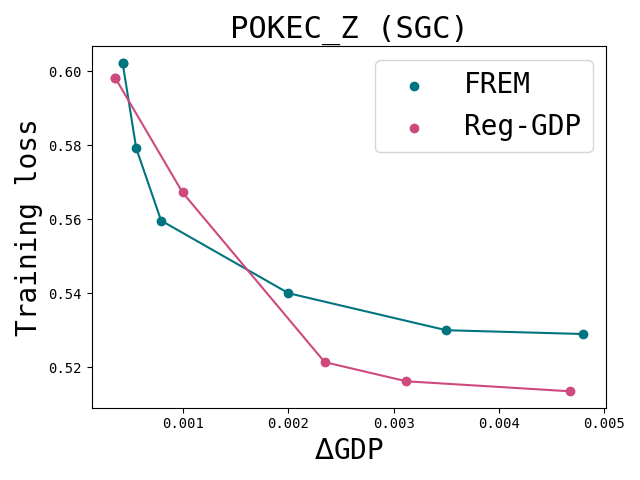}
    \caption{{
    $\Delta \texttt{GDP}$ vs. training loss trade-offs on
    (left two) \textsc{Pokec-n} and (right two) \textsc{Pokec-z} datasets:
    \textcolor{Regkernelcolor}{$-\bullet-$}: Reg-GDP,
    \textcolor{Ourcolor}{$-\mathbf{\bullet}-$}: FREM.
    }}
    \label{fig:tradeoff-training_graph}
\end{figure}

\end{document}